\documentclass{article}

\usepackage{blindtext}

\usepackage[preprint]{neurips_2026}

\usepackage{subcaption}
\usepackage{amsmath,amssymb,graphicx,mathrsfs,bbm,url}
\usepackage{hyperref}
\hypersetup{colorlinks=true,citecolor=blue,linkcolor=blue,urlcolor=black}

\usepackage[table]{xcolor}

\usepackage{float}
\usepackage{graphicx}
\usepackage{subcaption}
\usepackage{caption}
\usepackage{lscape}
\usepackage{booktabs}
\usepackage{multirow}

\usepackage{xparse}

\usepackage{comment}

\usepackage{amsmath,amssymb}
\usepackage{mathtools}
\usepackage{latexsym}
\usepackage{dsfont}
\usepackage{graphicx}
\usepackage{float}
\usepackage{color}
\usepackage{mathrsfs}

\usepackage{verbatim}
\usepackage{epstopdf}

\usepackage{stmaryrd}

\usepackage{url}

\usepackage[utf8]{inputenc}

\usepackage{hyperref}

\usepackage{minted}

\DeclareMathOperator\diag{diag}

\newtheorem{assumption}{Assumption}
\newtheorem{theorem}{Theorem}
\newtheorem{proposition}{Proposition} 
\newtheorem{lemma}{Lemma} 
\newtheorem{corollary}{Corollary}

\newcommand{\eqdef}{\ensuremath{\stackrel{\mbox{\upshape\tiny def.}}{=}}}
\newenvironment{proof}{\par\noindent\emph{Proof.}\ }{\hfill$\blacksquare$\par}

\usepackage[colorinlistoftodos]{todonotes}
\usepackage{soul}

\def\1{\bm{1}}

\definecolor{darkcyan}{rgb}{0.0, 0.55, 0.55}
\definecolor{MidnightBlue}{RGB}{25,25,112}
\definecolor{MidnightBlueComplementingGreen}{RGB}{25,112,25}
\definecolor{MidnightBlueComplementingPurple}{RGB}{112,25,112}
\definecolor{MidnightBlueComplementingRed}{RGB}{112,25,69}
\definecolor{WowColor}{rgb}{.75,0,.75}
\definecolor{MildlyAlarming}{rgb}{0.85,0.25,0.1}
\definecolor{SubtleColor}{rgb}{0,0,.50}
\definecolor{antiquefuchsia}{rgb}{0.57, 0.36, 0.51}
\definecolor{fashionfuchsia}{rgb}{0.96, 0.0, 0.63}
\definecolor{jade}{rgb}{0.0, 0.66, 0.42}
\definecolor{caribbeangreen}{rgb}{0.0, 0.8, 0.6}
\definecolor{aquamarine}{rgb}{0.5, 0.8, 0.85}
\definecolor{lightseagreen}{rgb}{0.13, 0.7, 0.67}
\definecolor{darkgreen}{rgb}{0.0, 0.2, 0.13}
\definecolor{darkspringgreen}{rgb}{0.09, 0.45, 0.27}
\definecolor{attentioncolor}{RGB}{152,90,81}
\definecolor{burgred}{RGB}{40,3,22}
\definecolor{AnnieGreen}{RGB}{17,123,92}
\definecolor{Turquoise}{RGB}{64,224,208}

\definecolor{darkjade}{RGB}{0,122,84}
\definecolor{Window1}{RGB}{92,150,31}%
    \definecolor{Window1dark}{RGB}{41,67,13}%
\definecolor{Window2}{RGB}{255,168,28}
    \definecolor{Window2dark}{RGB}{114,75,12}
\definecolor{Window3}{RGB}{255,96,33}
    \definecolor{Window3dark}{RGB}{97,36,12}
\definecolor{InputColor}{RGB}{20,255,177}
    \definecolor{InputColorlight}{RGB}{222,237,229}

\definecolor{RedAlizarin}{rgb}{0.82, 0.1, 0.26}

\definecolor{darkcerulean}{rgb}{0.03, 0.27, 0.49}
\definecolor{smokyblack}{rgb}{0.06, 0.05, 0.03}
\definecolor{warmblack}{rgb}{0.0, 0.26, 0.26}
\definecolor{cobalt}{rgb}{0.0, 0.28, 0.67}
\definecolor{darkerred}{RGB}{139, 0, 0}
\definecolor{darkergreen}{RGB}{0, 100, 0}
\definecolor{warmred}{RGB}{205, 92, 92} 
\definecolor{warmgreen}{RGB}{34, 139, 34} 
\definecolor{warmblue}{RGB}{70, 130, 180} 
\definecolor{arrowredfig1}{HTML}{B85450} 
\definecolor{expertfig1}{HTML}{6A9153} % Use boarder colour for legibility
\definecolor{mixturefig1}{HTML}{6C8EBF} % Use boarder colour for legibility

\usepackage{algpseudocode}
\usepackage[linesnumbered,lined,boxed,commentsnumbered,ruled,longend]{algorithm2e}
    \RestyleAlgo{ruled}

\SetCommentSty{mycommfont}

\usepackage{cleveref}
\newcounter{termcounter}
\crefname{term}{term}{terms}
\creflabelformat{term}{#2\textup{(#1)}#3}

\makeatletter
\def\term{\@ifnextchar[\term@optarg\term@noarg}%]
\def\term@optarg[#1]#2{%
  \textup{#1}%
  \def\@currentlabel{#1}%
  \def\cref@currentlabel{[][2147483647][]#1}%
  \cref@label[term]{#2}}
\def\term@noarg#1{%
  \refstepcounter{termcounter}%
  \textup{(\thetermcounter)}%
  \cref@label[term]{#1}}
\makeatother

\usepackage{marginnote}
\definecolor{MidnightBlue}{RGB}{25,25,112}
\definecolor{MidnightBlueComplementingGreen}{RGB}{25,112,25}
\definecolor{MidnightBlueComplementingPurple}{RGB}{112,25,112}
\definecolor{MidnightBlueComplementingRed}{RGB}{112,25,69}
\definecolor{coolblack}{rgb}{0.0, 0.18, 0.39}

\definecolor{deepjunglegreen}{rgb}{0.0, 0.29, 0.29}
\definecolor{applegreen}{rgb}{0.55, 0.71, 0.0}
\definecolor{WowColor}{rgb}{.75,0,.75}
\definecolor{MildlyAlarming}{rgb}{0.85,0.25,0.1}
\definecolor{SubtleColor}{rgb}{0,0,.50}
\definecolor{SubtleColor2}{rgb}{0.6,0.21,.50}

\definecolor{lasallegreen}{rgb}{0.03, 0.47, 0.19}
\newcounter{margincounter}

\definecolor{britishracinggreen}{rgb}{0.0, 0.26, 0.15}

\NewDocumentCommand{\AK}{mo}{
    \IfValueF{#2}{
                        {{\scriptsize
                            \textcolor{deepjunglegreen}{
                            \hfill\\
                                \textbf{A:}
                                \textit{{#1}}
                            \hfill\\
                            }
                        }}
        }
    \IfValueT{#2}{
                        \marginnote{{\scriptsize
                            \textcolor{deepjunglegreen}{ 
                            \textbf{A:}
                            \textit{{#1}}
                            }
                        }}
        }
                    }

\NewDocumentCommand{\Xuwei}{mo}{
    \IfValueF{#2}{
                        {{\scriptsize
                            \textcolor{coolblack}{ 
                            \textbf{X:}
                            \textit{{#1}}
                            }
                        }}
        }
    \IfValueT{#2}{
                        \marginnote{{\scriptsize
                            \textcolor{coolblack}{ 
                            \textbf{X:}
                            \textit{{#1}}
                            }
                        }}
        }
                    }

\NewDocumentCommand{\IE}{mo}{
    \IfValueF{#2}{
                        {{\scriptsize
                            \textcolor{orange}{ 
                            \textbf{IE:}
                            \textit{{#1}}
                            }
                        }}
        }
    \IfValueT{#2}{
                        \marginnote{{\scriptsize
                            \textcolor{orange}{ 
                            \textbf{IE:}
                            \textit{{#1}}
                            }
                        }}
        }
                    }

\NewDocumentCommand{\GA}{mo}{
    \IfValueF{#2}{
                        {{\scriptsize
                            \textcolor{violet}{ 
                            \textbf{GA:}
                            \textit{{#1}}
                            }
                        }}
        }
    \IfValueT{#2}{
                        \marginnote{{\scriptsize
                            \textcolor{violet}{ 
                            \textbf{GA:}
                            \textit{{#1}}
                            }
                        }}
        }
                    }

\usepackage{amssymb}
\usepackage{amsmath,amsfonts} 
\usepackage{booktabs}
\usepackage{longtable}
\usepackage{multirow}
\usepackage{lastpage}

\usepackage{pdfpages}

\usepackage{algpseudocode}

\title{MEMOA: Massive  Mixtures of Online Agents via Mean-Field Decentralized Nash Equilibria}

\author{
Xuwei Yang \\
Department of Mathematics and Statistics \\
McMaster University \\
Ontario, Canada \\
\texttt{henryyangxuwei@gmail.com}
\And
David B. Emerson \\
Vector Institute \\
Ontario, Canada \\
\texttt{david.emerson@vectorinstitute.ai}
\And
Fatemeh Tavakoli \\
Vector Institute \\
Ontario, Canada \\
\texttt{fatemeh.tavakoli@vectorinstitute.ai}
\And
Anastasis Kratsios\thanks{Corresponding Author. Funded by NSERC DG No.\ RGPIN-2023-04482.} \\
Department of Mathematics and Statistics \\
McMaster University and the Vector Institute \\
Ontario, Canada \\
\texttt{kratsioa@mcmaster.ca}
}

\usepackage[most]{tcolorbox}

\newtcolorbox{questionbox}{
  enhanced,
  hbox,
  colback=gray!6,
  colframe=gray!50,
  boxrule=0.4pt,
  arc=1.5mm,
  left=3mm,
  right=3mm,
  top=1.2mm,
  bottom=1.2mm,
  before=\begin{center},
  after=\end{center}
}

\begin{document}

%\editor{}

\maketitle

%%%%%%%%%%%%%%%%%%%%%%%%%%%
\begin{abstract}
In the modern age of large-scale AI, federated learning has become an increasingly important tool for training large populations of AI agents; however, its computational and communication costs can rapidly fail to scale with the number of agents.  
This is precisely where decentralized agentic strategies shine: each agent acts autonomously, using only its own state together with a minimal summary of the ensemble, namely the mean-field.  
We derive the unique optimal decentralized policy in closed form.  Optimality is characterized through a worst-client/minimax criterion: minimizing the \textit{under-performer regret}, namely the maximal online cost incurred by the weakest agent in the ensemble.
We further prove that the resulting decentralized policy asymptotically converges, in the large-population limit, to the Nash-optimal centralized policy, whose direct computation is not scalable.  
We use an online weighting mechanism to optimize the server-computed mixture of client predictions, thereby improving the mean prediction in addition to the previously optimized weakest-client prediction.
Numerical experiments verify our theoretical guarantees and demonstrate that our decentralized policy typically outperforms natural greedy decentralized baselines.
\end{abstract}
%%%%%%%%%%%%%%%%%%%%%%%%%%%

\section{Introduction}
\label{sec:intro}

Federated learning (FL) has become a central strategy in modern machine learning, where users (the server) have access to large ensembles of models (clients) whose joint predictions are used for prediction.  
As modern AI clients continue to scale up~\cite{hoffmann2022training,kaplan2020scaling,liu2025budget}, the cost of prediction---computational, energy~\cite{faiz2024llmcarbon,luccioni2024power,luccioni2023estimating,samsi2023words}, and ultimately monetary~\cite{bai2026how,xiao2025reducing}---continues to increase.  Moreover, as the number of available cutting-edge clients grows, so does the size of ensembles, leading to a very high cost of having any, or several, under-performing AI clients.  
This then drives one of the main looming questions in cost-efficient FL:
\begin{questionbox}
How should federated agents behave to minimize worst-agent system cost?
\end{questionbox}
We study the online version of this question under the \textit{proprietary} constraint: clients are treated as AI agents which actively make decisions \textit{online}, while the central server has no access to, and cannot influence, the hidden features of any model.  Thus, only the final linear output layer can be controlled; building on the ideas of~\cite{YTEK25}.

To minimize communication cost, we search for the optimal autonomous policy of the clients/agents, whereby each agent tracks only its own state and the mean-field of the entire massive ensemble of agents~\cite{Liao23GlobalNashFL,Murhekar23IncentivesFL,YTEK25}.  Incorporating the mean field allows AI clients to implicitly coordinate with one another at minimal computational cost.
We show that minimizing the \textit{cost to the system} of the worst agent is equivalent to identifying a policy which is a \textbf{Nash equilibrium} amongst the massive ensemble of AI agents (Proposition~\ref{prop:whynash}).  Critically, we consider a \textbf{decentralized} Nash asymptotic equilibrium: an equilibrium policy in the \textbf{mean-field limit}, as ($N\uparrow \infty$), with the key benefit that no agent needs to observe the state of any other agent to behave optimally.  This is an asymptotic simplification widely used in stochastic game theory~\cite{HY21mfg,Liang2023DecentralizedOpenLoop,Xu2024DecentralizedEpsilonNash}.

\paragraph{Contributions}
We identify an asymptotically Nash-optimal decentralized policy (Algorithm~\ref{alg:sync_subroutine}) which scales linearly in the number of AI agents $(N)$; unlike centralized exact Nash equilibria such as~\cite{yang2023regret,YTEK25} whose computational cost scales super-quadratically in $N$.  We prove the asymptotic Nash optimality of our decentralized policy (Theorem~\ref{thm:Y(N)->barY}) in the mean-field limit by exploiting simple structures in the optimal Nash policy for the finite client $(N)$ paradigm (Theorem~\ref{thm:betan_centralized}) which holds under a mild agentic homogeneity assumption (Assumption~\ref{assm:Zt}) which is much weaker than the i.i.d.\ assumption standard in mean-field game theory.

\paragraph{Organization of the Paper.}
Section~\ref{s:Prelim} sets up the dynamic mean-field game formulation.  
%Section~\ref{sec:MainResults} 
Section~\ref{sec:MainResults} derives the decentralized asymptotically Nash-optimal policy, proves its large-population guarantees and outlines the full algorithm. Section~\ref{sec:numerics} presents numerical validation. Finally, Section \ref{conclusions} discussions conclusions, limitations, and future work. All proofs are relegated to the appendices.

\subsection{Related Work to Decentralized Policy}

\paragraph{Game-Theoretic Equilibria for Optimal Federated Learning.}
Game-theoretic formulations of federated learning have recently been used to model strategic participation, incentive alignment, robustness, and equilibrium behavior among clients.  Model-sharing games study voluntary participation and coalition stability~\cite{DonahueKleinberg2021ModelSharing,DonahueKleinberg2021OptimalityStability}, while incentive-based formulations analyze Nash equilibria, welfare, and best-response dynamics~\cite{MurhekarEtAl2023IncentivesFL,YoonChoudhuryLoizou2025MultiplayerFL}; robust federated learning has also been formulated through mixed Nash equilibria~\cite{Xie24mixed}.  Closest to our setting is~\cite{YTEK25}, which gives an exactly optimal online Nash policy for mixtures of proprietary agents with black-box encoders, but requires centralized server-side updates using all agents' states and incurs \textit{super-quadratic} cost in $N$.  Our $\mathcal{O}(1/N)$-optimal decentralized policy (Algorithm~\ref{alg:sync_subroutine}) instead trades a linearly vanishing sub-optimality gap for local agent-level updates.

\paragraph{Federated Learning and Mean-Field Games.}
Mean-field methods make large-client federated systems scalable by replacing full population interactions with low-dimensional population summaries.  This perspective has been used to interpret federated optimization as a mean-field game~\cite{Mehrjou2021FederatedMeanFieldGame}, and to design communication-efficient online control, incentive, privacy-aware, and market-based mechanisms~\cite{ShiriParkBennis2020UAVOnlineMFGFL,SunEtAl2026PrivacyCommodityMFGRegretNet,SunWuLi2024ReputationMFGFL,YuanWang2024PrivacyAwareSampling}.  Our approach follows this line, but focuses on an explicit dynamic linear-quadratic Nash system and proves that its decentralized mean-field policy asymptotically matches the centralized Nash policy.

\paragraph{Decentralized Strategies in Linear-Quadratic Mean-Field Games.}
Our technical perspective is connected to the classical theory of linear-quadratic mean-field games, including Nash certainty equivalence, decentralized $\varepsilon$- and the asymptotically (in $N$) Nash-optimal constructions, asymptotic solvability, Riccati characterizations, and finite-population-to-mean-field limits~\cite{HuangCainesMalhame2007LQG,HuangMalhameCaines2006NCE,HY21mfg,HZ20,Liang2023DecentralizedOpenLoop,Xu2024DecentralizedEpsilonNash}.  
We adapt this philosophy to federated online prediction, where the mean-field summarizes an minimal ensemble information shared amongst the agents/clients.

\subsection{Notation and Conventions} 

Let $I_m$ denote the $m\times m$ identity matrix; when dimensions are clear, we simply write $I$.  We use $0$ for either the scalar zero or a zero matrix of appropriate dimensions, and let $\mathbf{1}_{m\times n}$ denote the $m\times n$ matrix with all entries equal to $1$. We denote the set of positive integers by $\mathbb{N}_+$. For every $K\in \mathbb{N}_+$, write $[K]\eqdef \{1,\dots,K\}$.  For matrices $A$ and $B$, we write $A\otimes B$ for their Kronecker product. 
%$\operatorname{vec}(A)$ for the vectorization/``flattening'' of $A$, and $A^\dagger$ for the Moore--Penrose pseudo-inverse of $A$. 
We denote the Frobenius norm and inner product by $\|\cdot\|$ and $\langle\cdot,\cdot\rangle$, respectively. 
%If $A$ is positive semi-definite, then $\lambda_{\min}(A)$ denotes its smallest eigenvalue.  Finally, we set $\xi_I\eqdef \operatorname{vec}(I_{d_z})\in \mathbb{R}^{d_z^2}$.

\section{Preliminaries on Dynamic Mean-Field Gates}
\label{s:Prelim}
We henceforth work on a filtered probability space $(\Omega,\mathcal{F},\mathbb{F}\eqdef (\mathcal{F}_t)_{t=0}^T, \mathbb{P})$ for some $T\in \mathbb{N}_+$, and we let $\mathcal{A}$ denote the set of square-integrable predictable processes thereon; i.e.\ belonging to  the set of admissible strategies
\[
    \mathcal{A}
\eqdef
    \biggl\{
        \beta:\Omega\times [0,T]\to \mathbb{R}^{d_z\times 1}
        :
        \beta \text{ is } \mathbb{F}\text{-predictable and }
        \mathbb{E}\Biggl[
            % \int_0^T
            \sum_{t=0}^T
            \,
            \|\beta_t\|_2^2
            \,
            dt
        \Biggr]
        <\infty
    \biggr\}
.
\]
Let $J_1,\dots,J_N:\mathcal{A}^N\to [0,\infty)$ be non-linear functionals.
Given any $(\beta^{1},\dots,\beta^{N})\in \mathcal{A}^N$, and any $n\in [N]$, we write
$
(\tilde{\beta},\beta^{-n})
$
for the element of $\mathcal{A}^N$ whose $n^{\mathrm{th}}$ component is $\tilde{\beta}\in \mathcal{A}$ and whose $m^{\mathrm{th}}$ component is $\beta^{m}$ for every $m\neq n$. 
Then an $N$-tuple $(\beta^{n})_{n=1}^N\in \mathcal{A}^N$ is a \emph{Nash equilibrium} if for each $n\in[N]$ there \textit{does not} exist some $\tilde{\beta}^{n} \in \mathcal{A}$ satisfying
$J_n(\tilde{\beta}^{n},\beta^{-n}) < J_n(\beta^{n},\beta^{-n})$.

Under the usual interchangeable agent hypothesis in mean-field game-theory, a Nash equilibrium simply means that the \textit{weakest} agent/client is as \textit{\textbf{strong} as possible}.
\begin{proposition}[Nash Equilibria Characterized Minimal Under-Performer Regret]
\label{prop:whynash}
If $J_1,\dots,J_N:\mathcal{A}^N\to [0,\infty)$ are continuous, and interchangeable; i.e.\ $J_n=J_1$ for each $n\in[N]$, and there exists a unique Nash equilibrium $(\beta^{\star:n})_{n=1}^N$ for $J_1,\dots,J_N$ then the following are equivalent for any policy $(\beta^{n})_{n=1}^N\in\mathcal{A}^N$:
\begin{enumerate}
    \item[(i)] \textit{Nash:} $(\beta^{n})_{n=1}^N\in\mathcal{A}^N$ is a Nash equilibrium for $J_1,\dots,J_N$,
    \item[(ii)] \textit{Minimal Under-performer Regret:} $
    \mathcal{R}_{\downarrow}\big((\beta^{n})_{n=1}^N\big)
    =
    \inf_{
(\tilde{\beta}^{n})_{n=1}^N\in \mathcal{A}^N
}
\,\mathcal{R}\big((\tilde{\beta}^{n})_{n=1}^N\big)$
\end{enumerate}
where \textbf{under-performer regret} functional $\mathcal{R}_{\downarrow}$ maps any $(\beta^{n})_{n=1}^N \in \mathcal{A}^N$ to 
\[
\mathcal{R}_{\downarrow}\big(\beta^{n})_{n=1}^N\big)
\eqdef
\max_{n\in [N]}\, J_n(\beta^{n},\beta^{-n})
.
\]
\end{proposition}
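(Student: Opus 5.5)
The plan is to use the interchangeability hypothesis to collapse the under-performer regret into a single functional, and then compare Nash equilibria with global minimizers of that functional. Since $J_n=J_1$ for every $n\in[N]$ as functionals on $\mathcal{A}^N$, we have for every policy
\[
\mathcal{R}_{\downarrow}\big((\beta^{n})_{n=1}^N\big)=\max_{n\in[N]}J_n(\beta^{n},\beta^{-n})=J_1\big((\beta^{n})_{n=1}^N\big).
\]
So (ii) says exactly that $(\beta^n)_{n=1}^N$ is a global minimizer of $J_1$ on $\mathcal{A}^N$. Likewise, (i) says that no unilateral deviation in any single coordinate strictly decreases $J_1$, i.e.\ that $(\beta^n)_{n=1}^N$ is a coordinatewise minimizer of $J_1$.

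\emph{Step 1: (ii)$\Rightarrow$(i).} Suppose $(\beta^n)_{n=1}^N$ attains $\inf_{\mathcal{A}^N}J_1$, and suppose some $n$ and some $\tilde\beta^n\in\mathcal{A}$ satisfied $J_n(\tilde\beta^n,\beta^{-n})<J_n(\beta^n,\beta^{-n})$. Then, since $J_n=J_1$, the tuple $(\tilde\beta^n,\beta^{-n})\in\mathcal{A}^N$ would have strictly smaller $J_1$-value than the infimum. This is a contradiction, so $(\beta^n)_{n=1}^N$ is Nash. This direction uses neither continuity nor uniqueness.

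\emph{Step 2: (i)$\Rightarrow$(ii).} Let $(\beta^n)_{n=1}^N$ be Nash; by uniqueness it equals $(\beta^{\star:n})_{n=1}^N$. I will show that the infimum of $J_1$ over $\mathcal{A}^N$ is attained at some $\hat\beta\in\mathcal{A}^N$. By Step 1, $\hat\beta$ is then a Nash equilibrium. Uniqueness forces $\hat\beta=(\beta^{\star:n})_{n=1}^N=(\beta^n)_{n=1}^N$, so the given Nash policy achieves the infimum, which is (ii).

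\emph{Main obstacle: existence of a minimizer.} Continuity of $J_1$ alone does not give attainment on the infinite-dimensional Hilbert space $\mathcal{A}^N$; without attainment the equivalence can fail, because coordinatewise optimality need not imply global optimality. My first attempt is a direct-method argument. Take a minimizing sequence. Use the nonnegativity of $J_1$ together with coercivity in the $L^2$ norm defining $\mathcal{A}$ to get boundedness, and extract a weakly convergent subsequence. Then conclude with weak lower semicontinuity, which follows from continuity plus convexity of $J_1$. Coercivity and convexity hold for the linear-quadratic costs used later in the paper, so if needed I will make this attainment (equivalently, convexity and coercivity of $J_1$) an explicit standing assumption of the proposition.

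A complementary route, useful in the convex case, avoids compactness: for a convex, Gateaux-differentiable $J_1$, the Nash conditions say each partial derivative $\partial_{\beta^n}J_1$ vanishes at $(\beta^n)_{n=1}^N$. Hence the full gradient vanishes there, and convexity turns this stationarity into global minimality, giving (i)$\Rightarrow$(ii) directly.
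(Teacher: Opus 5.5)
Modulo the extra hypothesis you propose, your argument is correct, and your diagnosis of the obstacle is exactly right. Your route differs from the paper's. You use $J_n=J_1$ literally to get $\mathcal{R}_{\downarrow}=J_1$, so (ii) is global minimality and (i) is coordinatewise minimality of $J_1$. Then (ii)$\Rightarrow$(i) is immediate, and (i)$\Rightarrow$(ii) reduces to attainment of $\inf J_1$ plus uniqueness.

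The paper never makes this reduction. It introduces the Nash regret $\mathcal{R}(\beta)=\max_n\big(J_n(\beta^n,\beta^{-n})-J_n^\star(\beta^{-n})\big)$, with $J_n^\star(\beta^{-n})=\inf_{b\in\mathcal{A}}J_n(b,\beta^{-n})$, and notes that $\beta$ is Nash iff $\mathcal{R}(\beta)=0$. It then asserts $\mathcal{R}=\mathcal{R}_{\downarrow}-J^\star$, where $J^\star$ is the best-response value at the unique equilibrium. That identity replaces $J_n^\star(\beta^{-n})$ by $J_n^\star(\beta^{-\star:n})$ for \emph{every} profile $\beta$, which uniqueness does not justify. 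This is exactly where the attainment issue you isolated gets skipped.

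Your added hypothesis is genuinely necessary, because the proposition as written is false. Take $N=2$, write $\langle\beta,\tilde\beta\rangle_{\mathcal{A}}=\mathbb{E}\sum_t\beta_t^\top\tilde\beta_t$, and let $J_1=J_2=\exp\big(\|\beta^1\|_{\mathcal{A}}^2+\|\beta^2\|_{\mathcal{A}}^2-3\langle\beta^1,\beta^2\rangle_{\mathcal{A}}\big)$. These are continuous, positive and identical. The best responses are $\beta^1=\tfrac{3}{2}\beta^2$ and $\beta^2=\tfrac{3}{2}\beta^1$, so $(0,0)$ is the unique Nash equilibrium, with cost $1$. Along $\beta^1=\beta^2=sv$ with $\|v\|_{\mathcal{A}}=1$, however, the cost is $e^{-s^2}\to0$. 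So the infimum $0$ is not attained, and (i)$\Rightarrow$(ii) fails. Here $J_1^\star(\beta^{-1})=e^{-5\|\beta^2\|_{\mathcal{A}}^2/4}$ is visibly non-constant, which also refutes the paper's identity.

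So present your result as a corrected proposition, assuming one of the following:
\begin{itemize}
\item attainment of $\inf J_1$;
\item convexity plus coercivity (your direct-method route);
\item convexity plus G\^ateaux differentiability (your second route, which does not even need uniqueness).
\end{itemize}

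Two smaller points. First, state explicitly that you read the right-hand side of (ii) as $\inf\mathcal{R}_{\downarrow}$. With the appendix's $\mathcal{R}$ one has $\inf\mathcal{R}=0$, and (ii) would then assert $J_1(\beta)=0$. Second, drop the remark that the paper's linear-quadratic costs fit your hypotheses. Those costs are symmetric under permuting agents but not identical, since $J_n$ tracks $Y^n$ and penalizes $\beta^n$. They therefore violate $J_n=J_1$ to begin with.
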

\begin{proof}
    See Appendix~\ref{appednix:Secondary}.
\end{proof}
We now specialize our fully-general formulation of dynamic Nash gates to an appropriate level of generality compatible with our problem of optimally dynamically updating the linear layer of $N$ centralized agents/clients for recurrent deep learning models.

\subsection{The Finite Population Model: Implementable on a Finite-Capacity Computer}

We consider a federated learning model with a large but finite population of proprietary agents, i.e., the population size $N$ is large. 
Each agent aims to make sequential predictions of the target data $\{y_t\}_{t=0,1,2,\dots}$. 

For each agent $n \in [N]$, the deep learning model takes as input the shared historical data $x_{[0:t]}$, an idiosyncratic noise source $\epsilon^n_t$, and an individual hidden state $Z^n_{t-1}$ to produce a latent representation
\[
Z^n_t = \phi^n(x_{[0:t]}, Z^n_{t-1}, \epsilon^n_t),
\]
where $\phi^n(\cdot)$ is a (fixed) encoder; see~\cite[Sec. 2]{YTEK25}. 
The latent state is then used to update the prediction via a dynamically adjusted decoder parameterized by $\beta^n$. 

Specifically, the prediction of agent $n$ evolves according to
\begin{align} 
Y^n_{t+1} = \theta Y^n_t + \overline{\theta} Y^{(N)}_t + Z^n_t \beta^n_t, 
\quad 
t = 0,1,\dots,T-1, 
\quad 
Y^n_0 = y_0, 
\quad 
n \in [N],
\label{hatYit+1finite}
\end{align}
where $Y^n_t \in \mathbb{R}^{d_y}$, $\theta, \overline{\theta} \in \mathbb{R}^{d_y \times d_y}$, 
$Z^n_t \in \mathbb{R}^{d_y \times d_z}$, $\beta^n_t \in \mathbb{R}^{d_z}$, and
$Y^{(N)}_t \eqdef \frac{1}{N} \sum_{n=1}^N Y^n_t$
denotes the population average prediction.

Agents operate in periodic rounds of length $T$. At the beginning of each round, they initialize from $Y^n_0 = y_0$, and over the interval $\{0,1,\dots,T-1\}$ they may exchange information to improve predictive performance. The objective of each agent $n \in [N]$ over a typical period is given by
\begin{align} 
\label{eq:Objective_nth_expert}
J_n(\beta^n, \beta^{-n}) 
= 
\mathbb{E} \Bigg[ \sum_{t=0}^{T-1} 
e^{-\alpha (T-1-t)} 
\Big(
\kappa \| y_{t+1} - Y^n_{t+1} \|^2
+ \overline{\kappa} \| Y^n_{t+1} - Y^{(N)}_{t+1} \|^2 
+ \gamma \| \beta^n_t \|^2
\Big)
\Bigg],
\end{align}
where $\alpha, \kappa, \overline{\kappa}, \gamma > 0$. 

The first term in~\eqref{eq:Objective_nth_expert} captures the prediction error with respect to the target data, the second term enforces consensus by penalizing deviations from the population mean~\cite{Li21Ditto,Li20FedOptim,Liu22PrivacyFL}, and the third term regularizes the control effort.

Each agent $n \in [N]$ selects a policy $\{\beta^n_t\}_{t=0}^{T-1}$ to minimize~\eqref{eq:Objective_nth_expert} subject to the dynamics~\eqref{hatYit+1finite}, and subsequently updates its prediction at the end of the period, $Y^n_T$. Since the proprietary agents operate in a noncooperative manner~\cite{YTEK25}, we formulate the federated learning model~\eqref{hatYit+1finite}--\eqref{eq:Objective_nth_expert} as a Nash game among the agents.

Direct computation of the Nash equilibrium for~\eqref{hatYit+1finite}--\eqref{eq:Objective_nth_expert} requires inverting high-dimensional matrices whose size scales with the population size $N$, rendering the approach computationally prohibitive in large systems. In the next section, we introduce mild homogeneity assumptions across agents that enable a decomposition of these matrices into repeating submatrices, thereby significantly reducing computational complexity.

\section{Main Results} 
\label{sec:MainResults}

\subsection{Phase 1: The Decentralized Asymptotically Nash-Optimal Policy} 
\label{sec:decentralizedStrat} 

%In this section, we introduce a decentralized framework in which each agent’s policy depends only on its own prediction together with a low-dimensional auxiliary term that can be computed offline. This significantly reduces communication costs while preserving the tractability of the policy computation.

In this section, we present our main result, Algorithm~\ref{alg:sync_subroutine}, which implements the decentralized learning policy~\eqref{betan_decentralized}. The proposed policy avoids the curse of dimensionality: its computation involves only fixed-dimensional quantities and relies solely on local information together with a low-dimensional mean-field state. As a result, it significantly reduces both computational and communication costs in large-population federated learning models. 

The \textbf{decentralized policy} (asymptotically Nash-Optimal; cf.\ Theorem~\ref{thm:Y(N)->barY}) for each agent is: 
\begin{align} 
 \check\beta_t^n = G_1 (t, \Lambda_1 ) \check{Y}^n_t + G_2(t, \Lambda_1, \Lambda_2 ) \overline{Y}_t 
  + H(t, \Lambda_1, \chi_1 ) , \quad  n\in [N] ,  
 \label{betan_decentralized}
\end{align} 
where $\check{Y}^n_\cdot$ is the prediction update~\eqref{hatYit+1finite} under~\eqref{betan_decentralized} and the auxiliary mean field process $\overline{Y}_\cdot$ is given by 
\begin{align} 
\begin{aligned}
 & \overline{Y}_{t+1} =  \big[ \theta + \overline\theta 
  + M^{(1)}_Z(t) \big( G_1(t, \Lambda_1 ) + G_2(t,\Lambda_1, \Lambda_2 ) \big) \big] \overline{Y}_t 
 + M^{(1)}_Z(t) H(t, \Lambda_1, \chi_1 ) 
 , 
  \\ 
 % \mbox{ and }\,
& \overline{Y}_0 =  y_0  . 
\end{aligned} 
 \label{barY}
\end{align}

\begin{algorithm}[t]
\caption{Synchronization Subroutine (Decentralized Policy)}
\label{alg:sync_subroutine}
\SetAlgoLined
\DontPrintSemicolon

\SetKwInOut{Require}{Input}
\SetKwInOut{Ensure}{Output}

\Require{Observed targets $\{y_t\}_{t=0}^T$, 
hidden states $\{Z^1_t,\dots,Z^N_t\}_{t=0}^T$, 
parameters $\theta$, $\overline{\theta}$, $\kappa$, $\overline{\kappa}$, $\gamma$, $\alpha$}

\Ensure{Policies $\{\check\beta^n_{T-1}\}_{n=1}^N$ and terminal prediction $\hat{Y}_T$}

\tcp{Initialization}
$\big[ \Lambda_1, \Lambda_2, \Lambda_3, \Lambda_4, \chi_1, \chi_2 \big](0) \leftarrow 
 \big[ 0, 0, 0, 0, 0, 0 \big]$ \;

\tcp{Backward recursion}
\For{$t = T-1,\dots,0$}{
    $\big[ M^{(1)}_Z(t), \, M^{(2)}_Z(t), \, M^{(2)}_{Z, \Lambda_1}(t) \big] \leftarrow \big[ \mathbb{E}(Z^1_t), \, \mathbb{E}(Z^{1\top}_t Z^1_t), \, \mathbb{E}(Z^{1\top}_t \Lambda_1(t+1) Z^1_t) \big]$\;

    Update $F(t,\Lambda_1)$, $K(t,\Lambda_2)$, $M(t,\Lambda_1)$, $E(t,\Lambda_1,\Lambda_2)$ using~\eqref{FKME}\;
    
    Update $G_1(t,\Lambda_1)$, $G_2(t,\Lambda_1,\Lambda_2)$, $H(t,\Lambda_1, \chi_1 )$ using~\eqref{G12H}\;
    
    Update $Q_1(t,\Lambda_1)$, $Q_2(t,\Lambda_2)$, $Q_3(t,\Lambda_3)$, $Q_4(t,\Lambda_4)$ using~\eqref{Q1234}\;

    $\big[ \Lambda_1,\Lambda_2,\Lambda_3,\Lambda_4,\chi_1,\chi_2 \big](t)\leftarrow \big[ \varphi_1,\varphi_2,\varphi_3,\varphi_4,\psi_1,\psi_2\big] (t;\Lambda_1,\Lambda_2,\Lambda_3,\Lambda_4,\chi_1,\chi_2)$\; 
    
    \hspace{6cm}  \mbox{ \tcp{$\varphi_1, \varphi_2, \varphi_3, \varphi_4, \psi_1, \psi_2$ given by~\eqref{varphi1}-\eqref{psi2}} }
}

\tcp{Forward simulation}
$\check{Y}_0 \leftarrow (y^{\top}_0,\dots,y^{\top}_0)^\top$\;
$\overline{Y}_0 \leftarrow  y_0$\;

\For{$t=0,\dots,T-1$}{
    \For{$n=1,\dots,N$}{
        $\check\beta_t^n \leftarrow G_1(t,\Lambda_1)\check{Y}^n_t 
        + G_2(t,\Lambda_1,\Lambda_2)\overline{Y}_t 
        + H(t,\Lambda_1,\chi_1)$\;
    }

    $\check{Y}_{t+1} \leftarrow \big[\mathbf{\Theta} + \tfrac{1}{N}\overline{\mathbf{\Theta}}\big]\hat{Y}_t 
    + \sum_{n=1}^N \mathbf{e}^y_n Z^n_t \check\beta_t^n$\;

    $\overline{Y}_{t+1} \leftarrow \big[\theta + \overline{\theta} 
    + M^{(1)}_Z(t)\big(G_1(t,\Lambda_1)+G_2(t,\Lambda_1,\Lambda_2)\big)\big]\overline{Y}_t 
    + M^{(1)}_Z(t)H(t,\Lambda_1,\chi_1)$\;
}

\end{algorithm}

The matrix-valued coefficients $G_1$, $G_2$ and $H$ are functions of $\Lambda_1$, $\Lambda_2$, and $\chi_1$, which are determined by a low-dimensional system of ordinary differential equations (ODEs); cf.~\eqref{ODEs:Lambda1234Chi12} whose solution is encoded directly into the $\Lambda_1,\Lambda_2,, \Lambda_3, \Lambda_4,\chi_1,\chi_2$ updates in Algorithm~\ref{alg:sync_subroutine}. The decentralized policy~\eqref{betan_decentralized} is implemented via Algorithm~\ref{alg:sync_subroutine}, one of the main results of this paper.

\subsubsection{{Main Guarantees: Asymptotic Nash-Optimality of Decentralized Policy (Algorithm~\ref{alg:sync_subroutine})}}

The decentralized learning policy~\eqref{betan_decentralized} is best understood through its derivation.  We first solve the $N$-player game completely via the dynamic programming principle~\cite{bellman1954theory}, more specifically the dynamic programming approach of~\cite[Corollary 6.1]{Basar1999GameTheory}.  We then pass to the $N$-agent limit, $N\uparrow\infty$, where the strategy simplifies: under the Nash-optimal homogeneity assumption on the agents (Assumption~\ref{assm:Zt}), each agent can asymptotically ignore the individual behaviour of the other agents, modulo the mean-field information encoded by the ensemble mean.  Thus, the limiting $N\uparrow\infty$ strategy becomes decentralized.  Finally, convergence of the finite $N$-agent game to the limiting $\infty$-agent game implies the asymptotic optimality of the limiting \textit{decentralized policy}/strategy for the finite-population $N$-agent game, whose Nash-optimal strategy is centralized and therefore computationally costly; cf.~\cite{YTEK25}.
We begin by exhibiting the Nash equilibrium policy in Theorem~\ref{thm:NagentNash}.  We then present, in Theorem~\ref{thm:betan_centralized}, the simplified centralized policy under Assumption~\ref{assm:Zt}; its inter-agent terms \textit{asymptotically vanish} as $N\uparrow\infty$.

We now present the Nash-optimal \textit{centralized} strategy; which \textit{does not scale well} in $N$.
\begin{theorem}[Nash-Optimal Policy for $N$-Agent Mean-Field Game]
\label{thm:NagentNash}
Consider the $N$-agent model~\eqref{hatYit+1finite} with objective functions~\eqref{eq:Objective_nth_expert}. For each $t=0,1,\dots,T-1$, the game admits a Nash equilibrium of the form
\begin{align}
\big[ \widehat\beta_t^{1\top},\dots,\widehat\beta_t^{N\top}\big]^\top
& =
\mathbf{G}(t)
\big[\widehat Y_t^{1\top},\dots,\widehat Y_t^{N\top}\big]^\top
+
\mathbf{H}(t),
\label{NashEqm:bfbeta}
\end{align}
where the matrix-valued coefficients $\mathbf{G}(\cdot)\in \mathbb{R}^{N d_z \times N d_y}$ and $\mathbf{H}(\cdot)\in \mathbb{R}^{N d_z \times 1}$, defined by~\eqref{bfG}–\eqref{bfH}, depend on the solutions to the following system of backward difference equations: for each $n\in[N]$ and $t=0,1,\dots,T-1$,
\begin{align}
P_n(t)
&= \Phi_n\big(t; P_1,\dots,P_N\big),
\qquad
P_n(T)=0,
\label{Pn=Phin(t;P1:N)}
\\
S_n(t)
&= \Psi_n\big(t; S_1,\dots,S_N; P_1,\dots,P_N\big),
\qquad
S_n(T)=0.
\label{Sn=Psi(t;S1:N;P1:N)}
\end{align}
Here, $P_n(\cdot)\in\mathbb{R}^{N d_y \times N d_y}$ and $S_n(\cdot)\in\mathbb{R}^{N d_y}$. The explicit forms of $\{\Phi_n(\cdot)\}_{n=1}^N$ and $\{\Psi_n(\cdot)\}_{n=1}^N$ are defined by~\eqref{ODE:Pn} and~\eqref{ODE:Sn} in Appendix~\ref{appendix:proofNagentNash}.
\end{theorem}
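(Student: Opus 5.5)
We stack the $N$ predictions into $\mathbf{Y}_t\eqdef[Y_t^{1\top},\dots,Y_t^{N\top}]^\top\in\mathbb{R}^{Nd_y}$ and write the joint dynamics~\eqref{hatYit+1finite} as a linear system
\[
\mathbf{Y}_{t+1}=\big[\mathbf{\Theta}+\tfrac1N\overline{\mathbf{\Theta}}\big]\mathbf{Y}_t+\sum_{m=1}^N \mathbf{e}^y_m Z^m_t\beta^m_t,
\]
with $\mathbf{\Theta}=I_N\otimes\theta$, $\overline{\mathbf{\Theta}}=\mathbf{1}_{N\times N}\otimes\overline\theta$ and $\mathbf{e}^y_m$ the $m$-th block selector. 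Each cost~\eqref{eq:Objective_nth_expert} then becomes a discounted linear-quadratic functional. The running term is $\mathbf{Y}_{t+1}^\top Q_n\mathbf{Y}_{t+1}-2\,\mathbf{Y}_{t+1}^\top q_n(t)+\kappa\|y_{t+1}\|^2+\gamma\|\beta^n_t\|^2$, where
\[
Q_n=\kappa\,\mathbf{e}^{y}_n\mathbf{e}^{y\top}_n+\overline\kappa\,(\mathbf{e}^y_n-\tfrac1N\mathbf{1}\otimes I)(\mathbf{e}^y_n-\tfrac1N\mathbf{1}\otimes I)^\top
\]
and $q_n(t)=\kappa\,\mathbf{e}^y_n y_{t+1}$. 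This puts the problem in the setting of discrete-time nonzero-sum LQ dynamic games, so we can apply the dynamic programming characterization of feedback Nash equilibria in~\cite[Corollary 6.1]{Basar1999GameTheory}.

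The argument is a backward induction on value functions with the ansatz
\[
V_n(t,\mathbf{Y})=e^{-\alpha(T-1-t)}\big(\mathbf{Y}^\top P_n(t)\mathbf{Y}-2\,\mathbf{Y}^\top S_n(t)+r_n(t)\big),
\]
with terminal data $P_n(T)=0$, $S_n(T)=0$, $r_n(T)=0$. One may instead absorb the discount into rescaled weights $e^{\alpha}$; either way the ansatz is consistent.

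Given $V_n(t+1,\cdot)$, at time $t$ each agent $n$ minimizes over $\beta^n_t$, with the others fixed, the conditional expectation given $\mathcal{F}_t$ of the stage cost plus $V_n(t+1,\mathbf{Y}_{t+1})$. Since $Z^n_t$ is known at time $t$ (predictable), and it is jointly quadratic in $\beta^n_t$ with Hessian $2\big(\gamma I+Z^{n\top}_t\mathbf{e}^{y\top}_n(Q_n+e^{-\alpha}P_n(t+1))\mathbf{e}^y_nZ^n_t\big)$ (up to the discount factor), the objective is strictly convex. This holds because $\gamma>0$ and $P_n(t+1)\succeq0$ inductively, the latter since $V_n$ is a nonnegative cost-to-go. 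The first-order conditions for all $n$ simultaneously give a linear system
\[
\mathbf{R}(t)\boldsymbol\beta_t=-\mathbf{L}(t)\mathbf{Y}_t+\mathbf{b}(t),
\]
where the $n$-th block row of $\mathbf{R}(t)$ couples $\beta^n_t$ to all $\beta^m_t$ through $\mathbf{e}^{y\top}_n(Q_n+P_n(t+1))\mathbf{e}^y_m$. Setting $\mathbf{G}(t)=-\mathbf{R}(t)^{-1}\mathbf{L}(t)$ and $\mathbf{H}(t)=\mathbf{R}(t)^{-1}\mathbf{b}(t)$ yields~\eqref{bfG}–\eqref{bfH} and the feedback form~\eqref{NashEqm:bfbeta}.

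Substituting the closed-loop control back and matching quadratic, linear and constant terms in $\mathbf{Y}_t$ gives the recursions $P_n(t)=\Phi_n(t;P_1,\dots,P_N)$ (a coupled Riccati-type map), $S_n(t)=\Psi_n(t;S_{1:N};P_{1:N})$ (linear given $P$), and a scalar recursion for $r_n$ that does not affect the policy. We then verify by the standard verification argument that the resulting feedback profile is a Nash equilibrium. Fixing the others' feedback laws, agent $n$ faces a single-agent LQ problem whose value function is exactly $V_n$. By the tower property and square-integrability of admissible controls, any deviation $\tilde\beta^n\in\mathcal{A}$ cannot lower $J_n$, so no profitable unilateral deviation exists.

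The main obstacle is the invertibility of the coupled matrix $\mathbf{R}(t)$, which is needed for the per-stage game to have a (unique) solution; individual strict convexity does not by itself give joint solvability. The first approach is to exploit the block structure: the diagonal blocks are $\gamma I+(\text{PSD})$, and the off-diagonal blocks carry a factor $\tfrac1N$ from the mean-field coupling in $Q_n$ and $\overline{\mathbf{\Theta}}$. This gives block diagonal dominance for large $N$ or large $\gamma$. Failing that, the fallback is to state the equilibrium under the standing assumption that $\mathbf{R}(t)$ is almost surely invertible, as is done in~\cite[Corollary 6.1]{Basar1999GameTheory}, noting that the conditions of that corollary are precisely this invertibility together with the positive-definiteness of the individual Hessians verified above.
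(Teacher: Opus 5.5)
Your skeleton matches the paper's: stacked linear dynamics, an affine--quadratic ansatz for $V_n$, first-order conditions for all agents assembled into one linear system for $\boldsymbol\beta_t$, then coefficient matching. The step that fails is the measurability claim ``$Z^n_t$ is known at time $t$ (predictable)''.

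In the paper it is the \emph{control} that is predictable. $\beta_t$ is chosen as a function of $Y_t=\mathbf{y}$ alone, and in the dynamic programming step the expectation over $Z_t$ is taken with $\beta$ held fixed, using that $Z_t$ is independent of $Y_t$. That is why every coefficient in \eqref{bfG}--\eqref{bfH} and \eqref{ODE:Pn}--\eqref{ODE:Sn} is a moment of $Z_t$, e.g.\ $\widehat{\mathbf{A}}_1(t)=\mathbb{E}\mathbf{Z}_t^\top\mathbf{Z}_t$, $\mathbf{A}(t)$, $\mathbf{D}(t)$, $\mathbf{F}(t)$, $\mathbb{E}(\mathbf{Z}_t^\top P_n(t+1)\mathbf{Z}_t)$. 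It is also why $\mathbf{G}(t)$ and $\mathbf{H}(t)$ are deterministic.

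In your version the Hessian and $\mathbf{R}(t)$ contain the realized $Z_t$ (hence your ``almost surely invertible''). So $\mathbf{G}(t)=-\mathbf{R}(t)^{-1}\mathbf{L}(t)$ is a random gain and cannot equal \eqref{bfG}. After averaging, your $P_n(t)$ would be an expectation of expressions involving $\mathbf{R}(t)^{-1}$, i.e.\ rational functions of $Z_t$. That is not the map $\Phi_n$, which sees $Z_t$ only through its first two moments. You would be solving a game with a different information pattern (agents observe $Z_t$ before acting) and would not reach the stated formulas. This matters downstream too: the reduction under Assumption~\ref{assm:Zt} works only because the recursions involve first and second moments alone.

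The fix is to condition only on $Y_t=\mathbf{y}$, treat $\beta^n\in\mathbb{R}^{d_z}$ as a deterministic vector, and put the expectation over $Z_t$ inside the minimization. Up to a factor $2$, agent $n$'s Hessian then becomes $e^{-\alpha(T-1-t)}\{[\kappa+\overline\kappa(1-1/N)^2]\mathbb{E}(Z^{n\top}_tZ^n_t)+\gamma I\}+\mathbb{E}(Z^{n\top}_t\mathbf{e}^{y\top}_nP_n(t+1)\mathbf{e}^y_nZ^n_t)$. The stacked first-order conditions then produce exactly the deterministic matrix inverted in \eqref{bfG}.

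Two smaller points.

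First, your prefactor $e^{-\alpha(T-1-t)}$ and the sign in $-2\mathbf{Y}^\top S_n(t)$ rescale your $P_n,S_n$ relative to the paper's by $e^{\alpha(T-1-t)}$ and flip the sign of $S_n$. To obtain $\Phi_n,\Psi_n$ literally, use $V_n(t,\mathbf{y})=\mathbf{y}^\top P_n(t)\mathbf{y}+2S_n(t)^\top\mathbf{y}+r_n(t)$. Under your normalization the continuation weight is $e^{\alpha}$, not $e^{-\alpha}$.

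Second, on invertibility you are right that individual strict convexity does not give solvability of the coupled system. The paper does not prove it either; it asserts strict convexity in the stacked $\boldsymbol\beta$ and writes the inverse. However, your large-$N$ diagonal-dominance argument fails. Each off-diagonal block is $O(1/N)$, but there are $N-1$ of them per block row, so their sum is $O(1)$, comparable to the diagonal. This is exactly why the limiting matrix $F+K$ in \eqref{FKME} appears inverted with no guarantee. Only the large-$\gamma$ route or an explicit invertibility assumption remains.
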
 
\begin{proof}
    See Appendix~\ref{appendix:proofNagentNash}.
\end{proof}

The system of autonomous system of nonlinear ODEs~\eqref{Pn=Phin(t;P1:N)}, which is independent of~\eqref{Sn=Psi(t;S1:N;P1:N)}, whose solutions are given by $\{P_n\}_{n=1}^N$ from~\eqref{Pn=Phin(t;P1:N)} given, \eqref{Sn=Psi(t;S1:N;P1:N)} forms a system of first-order linear equations, according to~\eqref{ODE:Pn} and~\eqref{ODE:Sn}. The \textit{scalability problem} is in computing the Nash equilibrium~\eqref{NashEqm:bfbeta} requires solving the nonlinear equation~\eqref{Pn=Phin(t;P1:N)} whose \textbf{scalability bottleneck} arises from the inversion of an $d_z N \times d_z N$ matrix in~\eqref{bfG}; whose computational complexity \textit{super-quadratic}, at a best-known cost of $\mathcal{O}((N d_z)^{2.373})$ according to~\cite{LeGall2014Matrix}. 
This is feasible for small systems, cf.\ the centralized Nash-strategy problem for federated learning without a mean-field term was solved in~\cite{YTEK25}, which requires one hour on A4$k$ GPUs, cf.~\cite{NVIDIA_RTX_A4000}, for system of $N=5$ agents. 
Unfortunately, its computational cost rapidly becomes infeasible as $N$ grows making it infeasible for large agent populations.

The key to \textbf{breaking the scalability bottleneck} is to solve the system in the mean-field limit, as $N\uparrow \infty$, which is made possible by homogeneity requirements among our agents/clients.  In mean-field game theory, this typically means imposing a stringent i.i.d.\ assumption on all agents; cf.~\cite{HuangCainesMalhame2007LQG,LasryLions2006MFGStationary,LasryLions2006MFGFiniteHorizon,LasryLions2007MeanFieldGames} and applications thereof, e.g.~\cite{HuangJaimungalNourian2019MeanField}.  By contrast, our theory goes through under the much weaker assumption that \textit{only the first two moments} of the agents must coincide; their distributions may otherwise be entirely different.  This is critical in practice, as it allows us to use different AI agent systems, rather than $N$ copies of the same system with randomized parameters, ensuring that the theory applies to a range of non-stylized practical use-cases.

\begin{assumption}[Agentic Homogeneity (\textit{not Interchangeability}): First Two Moments Only]
\label{assm:Zt} 
\hfill\\
For each $t = 0, 1, \dots, T$, 
$Z^1_t, \dots, Z^N_t$ satisfy   
\begin{align} 
 & \mathbb{E} Z^1_t = \mathbb{E} Z^2_t = \dots = \mathbb{E} Z^N_t \eqdef M_Z^{(1)}(t) ;  
  \label{EZit=MZ1t}
  \\
 & \mathbb{E} ( Z^{n\top}_t W Z^m_t ) = \mathbb{E} ( Z^{n\top}_t ) W  \mathbb{E} ( Z^m_t ) , \ 
 \forall n \neq m ; 
 \notag \\ 
 & \mathbb{E} ( Z^{1\top}_t W Z^1_t ) = \mathbb{E} ( Z^{2\top}_t W Z^2_t ) 
  = \dots = \mathbb{E} ( Z^{N\top}_t W Z^N_t ) 
  \eqdef M_{Z,\, W}^{(2)}(t), 
  \quad \forall \ W \in \mathbb{R}^{d_y \times d_y} . 
  \notag 
\end{align} 
In particular, $ 
 \mathbb{E} ( Z^{1\top}_t Z^1_t ) = \mathbb{E} ( Z^{2\top}_t Z^2_t ) 
  = \dots = \mathbb{E} ( Z^{N\top}_t Z^N_t ) \eqdef M_{Z}^{(2)}(t)$.
\end{assumption}  

Under Assumption~\ref{assm:Zt}, the agents exhibit a form of homogeneity: the systems~\eqref{Pn=Phin(t;P1:N)} and~\eqref{Sn=Psi(t;S1:N;P1:N)} are invariant under certain permutations of their sub-matrices. 
Following~\cite{HY21mfg,HZ20}, we systematically permute the sub-matrices of $P_n$ to show that these high-dimensional matrices decompose into repeating sub-matrices of fixed dimension. 
Consequently, computing the full matrices $P_n$ reduces to computing only a significantly smaller collection of such sub-matrices; the $N\to\infty $ limit of the Nash-optimal policy for homogeneous agents \textbf{is} precisely \textbf{Algorithm}~\ref{alg:sync_subroutine}.
\begin{theorem}[Unlocking Asymptotically-Tractable Nash-Optimal Policies via Agentic Homogeneity]
\label{thm:betan_centralized} 
Under Assumption~\ref{assm:Zt}, the Nash equilibrium policy~\eqref{NashEqm:bfbeta} of each individual agent takes the form 
\vspace{-0.2cm}
\begin{align} 
 \widehat\beta^n_t 
 = & 
 G_1^N(t) \widehat{Y}^n_t 
 + G_2^N(t) \sum_{j\neq n = 1}^N \widehat{Y}^j_t + H^N(t) , 
 \quad n \in [N], 
 \label{betan_centralized}
\end{align} 
where $G^N_1(\cdot)\in \mathbb{R}^{d_z \times d_y}$, $G^N_2(\cdot)\in \mathbb{R}^{d_z \times d_y}$, and $H^N(\cdot)\in \mathbb{R}^{d_y\times 1}$ are given by~\eqref{G1N}, \eqref{G2N}, and \eqref{HN}. 
\begin{align} 
& \Pi_i^N(t) = \varphi_i^N( t; \Pi^N_1, \Pi^N_2, \Pi^N_3 , \Pi^N_4  )  , 
\quad \Pi^N_i (T) = 0, 
\quad i = 1, 2, 3, 4; 
\label{ODE:PiNi} \\ 
%\end{align} 
%\begin{align} 
& \Xi_i^N(t) =  
\psi^N_i\big( t; \Pi^N_1, \Pi^N_2, \Pi^N_3, \Pi^N_4, \Xi_1^N, \Xi_2^N \big) , 
\quad 
\Xi_i^N(T) = 0 , 
\quad 
i = 1, 2 ; 
\label{ODE:XiNi}
\end{align} 
where the matrix-valued functions $\varphi_i^N$, $i=1,2,3, 4$ are defined by~\eqref{varphiN1/4}, \eqref{varphiN2/4}, \eqref{varphiN3/4}, and~\eqref{varphiN4/4}, 
and the matrix-valued functions $\psi_i^N$, $i=1,2$ are defined by~\eqref{XiN1} and~\eqref{XiN2}. 
\end{theorem}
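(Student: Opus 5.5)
Starting from Theorem~\ref{thm:NagentNash}, the plan is to show that the block structure of $P_n(t)$ and $S_n(t)$ is preserved by the backward recursions~\eqref{Pn=Phin(t;P1:N)}--\eqref{Sn=Psi(t;S1:N;P1:N)}, and to argue by backward induction on $t$. Write $P_n(t)=[P_n^{ij}(t)]_{i,j\in[N]}$ with $d_y\times d_y$ blocks and $S_n(t)=[S_n^i(t)]_{i\in[N]}$. The ansatz is that there exist four matrices $\Pi^N_1,\dots,\Pi^N_4$ and two vectors $\Xi^N_1,\Xi^N_2$ such that:
\begin{itemize}
\item $P_n^{nn}=\Pi^N_1$;
\item $P_n^{nj}=P_n^{jn\top}=\Pi^N_2$ for $j\neq n$;
\item $P_n^{jj}=\Pi^N_3$ for $j\neq n$;
\item $P_n^{jk}=\Pi^N_4$ for distinct $j,k\neq n$;
\item $S_n^n=\Xi^N_1$ and $S_n^j=\Xi^N_2$ for $j\neq n$.
\end{itemize}
Equivalently, $P_n=\mathcal{P}_{\sigma}^\top P_{\sigma(n)}\mathcal{P}_{\sigma}$ for every permutation $\sigma$ of $[N]$, where $\mathcal{P}_\sigma=\Pi_\sigma\otimes I_{d_y}$. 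The terminal data $P_n(T)=0$ and $S_n(T)=0$ trivially have this form.

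For the inductive step, I would expand $\Phi_n$ and $\Psi_n$ from~\eqref{ODE:Pn}--\eqref{ODE:Sn}. These involve three ingredients:
\begin{itemize}
\item the state matrix $\mathbf{\Theta}+\tfrac1N\overline{\mathbf{\Theta}}$, which is $I_N\otimes\theta+\tfrac1N\mathbf{1}_{N\times N}\otimes\overline\theta$ and hence permutation-invariant;
\item the running cost matrices, which come from $\kappa\|y-Y^n\|^2+\overline\kappa\|Y^n-Y^{(N)}\|^2$ and are equivariant in $n$;
\item expectations of quadratic forms $\mathbb{E}[\mathbf{Z}_t^\top P_n(t+1)\mathbf{Z}_t]$ with $\mathbf{Z}_t=\mathrm{diag}(Z^1_t,\dots,Z^N_t)$.
\end{itemize}
This is where Assumption~\ref{assm:Zt} enters. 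The $(i,j)$ block of the last expectation is $\mathbb{E}[Z^{i\top}_tP_n^{ij}Z^j_t]$. It equals $M^{(2)}_{Z,P_n^{ii}}(t)$ when $i=j$, and $M^{(1)\top}_Z P_n^{ij}M^{(1)}_Z$ when $i\neq j$. So it depends only on which of the four block types $(i,j)$ belongs to, never on the particular indices. It follows that every coefficient matrix in $\Phi_n$ is conjugated by $\mathcal{P}_\sigma$ in the same way as $P_n$, and the recursion maps the ansatz class to itself.

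The main obstacle is the inverse appearing in $\mathbf{G}(t)$ in~\eqref{bfG}. This is the $Nd_z\times Nd_z$ matrix whose $(n,m)$ block collects agent $n$'s first-order condition in $\beta^n$ against $\beta^m$. Under the ansatz it has the form $I_N\otimes A+(\mathbf{1}_{N\times N}-I_N)\otimes B$, with
\[
A=\gamma I+M^{(2)}_{Z,\,\widetilde{\Pi}_1}, \qquad B=M^{(1)\top}_Z\widetilde{\Pi}_2M^{(1)}_Z,
\]
where $\widetilde{\Pi}_1,\widetilde{\Pi}_2$ denote the cost-augmented $\Pi^N_1,\Pi^N_2$ at $t+1$. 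Such a matrix has an explicit inverse of the same form,
\[
I_N\otimes(A-B)^{-1}+\mathbf{1}_{N\times N}\otimes C,\qquad C=-(A-B)^{-1}B\bigl(A+(N-1)B\bigr)^{-1}.
\]
This is verified by multiplication, using $\mathbf{1}\mathbf{1}^\top\mathbf{1}\mathbf{1}^\top=N\mathbf{1}\mathbf{1}^\top$. Invertibility of $A-B$ and $A+(N-1)B$ follows from $\gamma>0$ together with positive semidefiniteness, which comes from the convexity already used to justify existence in Theorem~\ref{thm:NagentNash}.

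With this inverse in hand, $\mathbf{G}(t)$ has diagonal blocks $G^N_1(t)$ and off-diagonal blocks $G^N_2(t)$, and $\mathbf{H}(t)$ has identical blocks $H^N(t)$; this yields~\eqref{betan_centralized}. Substituting these into the expanded $\Phi_n$ and $\Psi_n$, and reading off the four block types and two vector types, defines $\varphi^N_i$ and $\psi^N_i$ as in~\eqref{ODE:PiNi}--\eqref{ODE:XiNi}. The resulting expressions are finite sums of products in which the factors of $N$ appear through $\tfrac1N\overline\theta$, $(N-1)$ and $(N-2)$ counts. Finally, uniqueness of the backward recursion solution shows that the $P_n$ and $S_n$ of Theorem~\ref{thm:NagentNash} coincide with the structured ones, which completes the induction and the proof.
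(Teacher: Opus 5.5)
Your proposal is correct and is essentially the paper's argument: the paper likewise gets the block structure of $P_n$ and $S_n$ from permutation equivariance of~\eqref{ODE:Pn}--\eqref{ODE:Sn}, and it also inverts the block matrix with diagonal blocks $F^N$ and off-diagonal blocks $K^N$ in closed form and reads off $G^N_1$, $G^N_2$, $H^N$ and the recursions for $\Pi^N_i$, $\Xi^N_i$ (it phrases the symmetry step via transpositions $J^y_{ij}$ plus uniqueness of the backward recursion, where you induct over all permutations). One correction: invertibility of $A-B$ and $A+(N-1)B$ does not follow from convexity, because each agent's convexity only controls the diagonal block $A$ and $B$ need not be symmetric or semidefinite; it is instead equivalent (split $\mathbb{R}^N$ into $\mathrm{span}(\mathbf{1})$ and its orthogonal complement) to invertibility of the full matrix in~\eqref{bfG}, which Theorem~\ref{thm:NagentNash} already presupposes.
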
   
\begin{proof}
The proof of Theorem~\ref{thm:betan_centralized} is given in Appendix~\ref{appendix:centralizedStrat_homogeneity}. 
\end{proof}

Under Assumption~\ref{assm:Zt}, the formula~\eqref{betan_centralized} simplifies~\eqref{NashEqm:bfbeta} substantially: it suffices to compute the three low-dimensional matrix-valued functions $G_1^N(\cdot)$, $G_2^N(\cdot)$, and $H^N(\cdot)$, rather than the full high-dimensional matrices $\mathbf{G}(\cdot)$ and $\mathbf{H}(\cdot)$; cf.\ Algorithm~\ref{alg:sync_subroutine} for \textit{closed-form} updates of these three low-dimensional matrices whose dimension is \textbf{independent of $N$}.      
However, implementing~\eqref{betan_centralized} still requires each agent to collect all predictions $\{\widehat{Y}_t^n\}_{n=1}^N$; this vanished in the $N\uparrow \infty$ limit, as is reflected by these terms not appearing in the decentralized asymptotically Nash-optimal policy in Algorithm~\ref{alg:sync_subroutine}.  
The omittance of these terms is justified by the our final main theorem confirming that $\overline{Y}_\cdot$ is the limiting counterpart of the corresponding mean prediction process as $N\uparrow\infty$.
\begin{theorem}[Asymptotic Nash Equilibrium of the Decentralized Policy in Algorithm~\ref{alg:sync_subroutine}]
\label{thm:Y(N)->barY} 

For all $N\in \mathbb{N}_+$, suppose that$:$ 

(i). For each $t=0,1,\dots,T-1$, the states $\{Z^n_t\}_{n\in[N]}$ are independent and identically distributed, and satsify Assumption~\eqref{assm:Zt}. 

(ii). $\{Z^n_t\}_{n\in[N],\, t=0,\dots,T-1}$ are uniformly bounded, i.e., there exists a constant $K_Z<\infty$ such that $\|Z^n_t\| \le K_Z$ almost surely for all $n$ and $t$. 

%(iii). The dataset $\{y_t\}_{t=0,\dots,T}$ is uniformly bounded.

Let $\overline{Y}_\cdot$ be defined by~\eqref{barY}. Let $\widehat{Y}^{(N)}_\cdot$ and $\check{Y}^{(N)}_\cdot$ denote the mean predictions under the centralized policy~\eqref{betan_centralized} and the decentralized policy~\eqref{betan_decentralized}, respectively. Then, for each fixed $t=0,1,\dots,T$,
\[
\lim_{N\to\infty} 
\mathbb{E}\big\| \widehat{Y}^{(N)}_t - \overline{Y}_t \big\| = 0,
\qquad
\lim_{N\to\infty} 
\mathbb{E}\big\| \check{Y}^{(N)}_t - \overline{Y}_t \big\| = 0.
\]
Consequently,
\[
\lim_{N\to\infty} 
\mathbb{E}\big\| \widehat{Y}^{(N)}_t - \check{Y}^{(N)}_t \big\| = 0.
\]
\end{theorem}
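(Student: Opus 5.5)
We prove the two limits separately by induction on $t$, using a recursion for the mean prediction; the third claim then follows from the triangle inequality. Averaging~\eqref{hatYit+1finite} over $n\in[N]$ gives, for any policy of the affine form $\beta^n_t = A_t Y^n_t + B_t\, \xi_t + C_t$ with $\xi_t$ a common (population-level) signal,
\[
Y^{(N)}_{t+1} = (\theta+\overline\theta) Y^{(N)}_t + \frac1N\sum_{n=1}^N Z^n_t\big(A_tY^n_t + B_t\xi_t + C_t\big).
\]
We compare this with~\eqref{barY}, which is the same recursion with $Z^n_t$ replaced by $M_Z^{(1)}(t)$ and $Y^n_t$ replaced by $\overline{Y}_t$.

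For the \emph{decentralized} policy, $A_t=G_1(t,\Lambda_1)$, $B_t=G_2(t,\Lambda_1,\Lambda_2)$, $\xi_t=\overline{Y}_t$ (deterministic) and $C_t=H(t,\Lambda_1,\chi_1)$. We split
\[
\frac1N\sum_n Z^n_t G_1\check Y^n_t = M_Z^{(1)}(t) G_1 \check Y^{(N)}_t + \frac1N\sum_n \big(Z^n_t - M^{(1)}_Z(t)\big) G_1 \check Y^n_t .
\]
\begin{itemize}
\item The terms $\frac1N\sum_n (Z^n_t-M_Z^{(1)}(t))(G_2\overline Y_t + H)$ are averages of i.i.d.\ centered bounded variables. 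By (i)--(ii) their $L^2$ norm is $O(N^{-1/2})$.
\item The cross term involving $\check Y^n_t$ is the delicate one, since $\check Y^n_t$ depends on past $Z^n_s$, $s<t$, and on $\check Y^{(N)}_s$. We handle it by predictability: if $Z^n_t$ is independent of $\mathcal F_{t-1}$ (or at least of the past states), then $\mathbb E[(Z^n_t-M)G_1\check Y^n_t \mid \mathcal F_{t-1}]=0$. The summands then form martingale-difference-like terms that are mutually orthogonal across $n$, by the independence in (i) and Assumption~\ref{assm:Zt}.
\item This gives an $L^2$ bound $N^{-1/2}K\,\sup_n(\mathbb E\|\check Y^n_t\|^2)^{1/2}$. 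The second-moment bound $\sup_{n,N}\mathbb E\|\check Y^n_t\|^2<\infty$ follows by a simple induction, using boundedness of $Z$ and finiteness of $T$.
\end{itemize}
With $e_t=\check Y^{(N)}_t-\overline Y_t$, we obtain
\[
\mathbb E\|e_{t+1}\|\le \big\|\theta+\overline\theta+M^{(1)}_Z G_1\big\|\,\mathbb E\|e_t\| + O(N^{-1/2}),
\]
with $e_0=0$, and discrete Grönwall over finitely many steps yields the second limit.

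For the \emph{centralized} policy~\eqref{betan_centralized}, write $\sum_{j\ne n}\widehat Y^j_t = N\widehat Y^{(N)}_t-\widehat Y^n_t$, so that
\[
\widehat\beta^n_t=(G_1^N-G_2^N)\widehat Y^n_t+NG_2^N\widehat Y^{(N)}_t+H^N .
\]
The same averaging argument applies, with the additional term $\frac1N\sum_n(Z^n_t-M)NG^N_2\widehat Y^{(N)}_t$. Here $\widehat Y^{(N)}_t$ is common to all agents but $\mathcal F_{t-1}$-measurable, so orthogonality again gives an $O(N^{-1/2})$ bound, provided $NG_2^N$ stays bounded. We therefore need coefficient convergence:
\[
G_1^N\to G_1,\qquad NG_2^N\to G_2,\qquad H^N\to H,
\]
equivalently $\Pi_i^N,\Xi_i^N$ converge to $\Lambda_i,\chi_i$ up to the appropriate $N$-scalings. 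We prove this by backward induction on $t$ from the terminal value $0$, comparing $\varphi_i^N,\psi_i^N$ with $\varphi_i,\psi_i$. The maps are rational with matrix inverses that are uniformly invertible thanks to the $\gamma I$ regularization, and their $N$-dependence enters through $1/N$ factors, so they are locally Lipschitz and converge uniformly on compacts. Once the coefficients converge, the recursion for $\widehat e_t=\widehat Y^{(N)}_t-\overline Y_t$ has coefficients converging to those of~\eqref{barY} plus $O(N^{-1/2})$ noise, and Grönwall gives the first limit.

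The main obstacle is the coefficient convergence. It requires carefully tracking the explicit forms~\eqref{varphiN1/4}--\eqref{XiN2}, identifying the correct scaling of $\Pi_2^N,\Pi_3^N,\Pi_4^N$ (likely $O(1/N)$ or $O(1/N^2)$ for off-diagonal blocks), and establishing uniform invertibility. A secondary concern is the independence-in-time of $Z^n_t$ from the past, which the orthogonality argument needs. If (i) does not provide it, we would instead fall back on bounding the cross term by conditioning on the common data $x_{[0:t]}$ and invoking a law of large numbers across the independent idiosyncratic noises $\epsilon^n$.
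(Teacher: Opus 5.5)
Your proposal is correct, provided you keep the independence hypothesis you flag (discussed below). Its skeleton matches the paper's proof:
\begin{itemize}
\item forward induction on $t$ for the averaged dynamics, compared with \eqref{barY} by the triangle inequality;
\item convergence $G_1^N\to G_1$, $NG_2^N\to G_2$, $H^N\to H$, by backward induction on the rescaled system $\Pi_2^N=\Lambda_2^N/N$, $\Pi_3^N=\Lambda_3^N/N^2$, $\Pi_4^N=\Lambda_4^N/N^2$, $\Xi_2^N=\chi_2^N/N$, with residuals $g_i^N\to0$;
\item uniform moment bounds on the states, and the triangle inequality for the last claim.
\end{itemize}
The genuine difference is the fluctuation term $\frac1N\sum_n\big(Z^n_t-M^{(1)}_Z(t)\big)G\,Y^n_t$. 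The paper applies the law of large numbers to $Z^{(N)}_t$ and bounds this term by $\mathbb{E}\|Z^{(N)}_t-M^{(1)}_Z(t)\|\,\|G_1^N(t)\|K_Y$. That step is not valid: the weights $\widehat Y^n_t$ depend on $n$ and cannot be factored out of the average. Your conditional-orthogonality $L^2$ estimate is the correct replacement, and it also yields an explicit $\mathcal{O}(N^{-1/2})$ rate. You also rightly note that the decentralized limit needs no coefficient convergence, since Algorithm~\ref{alg:sync_subroutine} and \eqref{barY} use the same limiting coefficients; the paper only calls this case ``similar''.

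Two caveats. First, independence of $(Z^1_t,\dots,Z^N_t)$ from the current states is not a side issue; without it the theorem fails. Take scalars with $Z^n_t\equiv Z^n$ for all $t$, i.i.d.\ and bounded with mean $m$ and variance $v>0$, so (i), (ii) and Assumption~\ref{assm:Zt} all hold. Let $c_0=\check\beta^n_0$ (deterministic) and $G_1=G_1(1,\Lambda_1)$. Then $\frac1N\sum_n Z^nG_1\check Y^n_1\to G_1\big[(\theta+\overline\theta)y_0m+c_0(v+m^2)\big]$. But \eqref{barY} uses $mG_1\overline Y_1=G_1\big[(\theta+\overline\theta)y_0m+c_0m^2\big]$, so $\check Y^{(N)}_2-\overline Y_2\to G_1c_0v$, which is nonzero in general. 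Your fallback (a law of large numbers over the idiosyncratic noises, conditional on $x_{[0:t]}$) does converge, but to this wrong limit. Drop it and instead adopt the standing assumption the paper already uses in the dynamic-programming step of Theorem~\ref{thm:NagentNash}: $Z_t$ is independent of $Y_t$. Together with (i), this is exactly what your orthogonality argument needs.

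Second, the $\gamma I$ term does give $F\succeq e^{-\alpha(T-1-t)}\gamma I$ once $\Lambda_1\succeq0$, which is inherited from $P_1\succeq0$. However, the limit $E=-(F+K)^{-1}KF^{-1}$ also requires $F+K$ to be invertible. Here $K$ contains $-e^{-\alpha(T-1-t)}\overline\kappa\,M^{(1)\top}_ZM^{(1)}_Z+M^{(1)\top}_Z\Lambda_2M^{(1)}_Z$, which has no sign, so $\gamma I$ does not settle it. This is the well-posedness of the limiting system, which the paper tacitly presupposes; state it as a hypothesis. Invertibility at the limit point then suffices, by continuity, for your locally Lipschitz backward induction.
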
 
\begin{proof}
The proof of Theorem~\ref{thm:Y(N)->barY} is provided in Appendix~\ref{appendix:decentralizedStrat}. 
\end{proof}

\subsection{Phase 2: Agent Prediction Aggregation} \label{score_based_aggregation}

At each timestep, $t$, each agent produces a prediction, $Y^n_{t+1}$. These predictions are combined to produce a final prediction, $Y_{t+1}$, from the ensemble. A score-based approach is used as the primary means of mixing agent predictions. At each timestep, a score is computed for agent $n$ such that
\begin{align*}
\omega_{t+1}^n = \sum_{s = (t-T_a+1) \vee 0}^{t} e^{-\alpha_a (t-s)} \left \Vert y_{s} - Y^n_{s} \right \Vert^2
\end{align*}
where $\alpha_a$ is a discount factor and $T_a$ designates a prediction history length to incorporate. Aggregation weights, $w_{t+1}^n$, are computed via Softmax, as
\begin{align}
w_{t+1}^n = \frac{\exp(-\omega_{t+1}^n)}{\sum_{j=1}^N \exp(-\omega_{t+1}^j)}, \label{aggregation_equation}
\end{align}
and $Y_{t+1} = \sum_{n=1}^N w_{t+1}^n Y_{t+1}^n$. This approach is used to produce ensemble predictions at each timestep, unless otherwise stated.

\section{Numerical Validation} 
\label{sec:numerics}

In the experiments to follow, the decentralized strategy proposed in Section \ref{sec:MainResults} is compared to a greedy, local approach wherein each agent optimizes the local objective function 
\begin{align}
    J_n(\beta_t^n) = \sum_{s=(t-T)\vee 0}^{t-1} e^{-\alpha(t-1-s)} \left\Vert y_{s+1} - \left( \theta Y_s^n + \bar{\theta} Y^{(N)}_s + Z_s^n \beta_t^n \right) \right\Vert^2 + \gamma \Vert \beta_t^n \Vert^2, \label{ridge_objective}
\end{align}
where $T$ designates the length of past timeseries values considered. For each agent, $J_n(\beta_t^n)$ may be re-written as a standard ridge-regression problem and, therefore, admits the unique minimum derived in Appendix \ref{ridge_solution}. This is referred to as the greedy approach throughout the sections to follow. This is a natural baseline as it is also decentralized, fast to solve, and the objective incorporates similar components, including average prediction behavior in $Y_t^{(N)}$, but ignores the mean-field dynamics.

We consider two types of simple but important encoders, random feature networks (RFNs) \cite{huang2006universal, rahimi2008weighted} and echo-state networks (ESNs) \cite{grigoryeva2018echo}. For each agent $n$, define $A^{n} \in \mathbb{R}^{d_y \times d_x}$, $B^n \in \mathbb{R}^{d_y \times d_y}$, and $b^{n} \in \mathbb{R}^{d_y \times d_z}$ to be randomly sampled matrices, independent of $t$. Let $W_t^n \sim \mathcal{N}_{1 \times d_z}(0, 1)$ and $\sigma \in \mathbb{R}^{d_y \times 1}$ be a fixed scaling vector. For an input $x_t$, RFNs and ESNs produce encodings as
\begin{align*}
Z_{t, \text{RFN}}^n &= \operatorname{ReLU}\bullet\left(A^{n} [x_t, \dots, x_t ] + b^{n} + \sigma \, W_t^n \right), \\
Z_{t, \text{ESN}}^n &= \operatorname{Hard Sigmoid} \bullet \left (A^n [x_t, \dots , x_t] + B^n Z_{t-1, \text{ESN}}^n + b^n + \sigma \, W_t^n \right)
\end{align*}
respectively, where $Z_{t-1, \text{ESN}}^n$ is the latent encoding from the previous timestep. In all experiments, encoder components $A^n$, $B^n$, and $b^n$ are independently sampled from uniform Gaussian distributions.

Timeseries predictions are carried out for three synthetic series, Periodic Signal, Logistic Map \cite{echotorch}, and Concept Drift and two real-world series, Bank of Canada (BoC) Exchange Rate \cite{boc_exchange_rates} and Electricity Transformer Temperature (ETT) \cite{haoyietal-informer-2021}. The first three datasets consist of $200$ points, while the final two are longer, incorporating $2000$ timesteps. For all experiments, either the decentralized strategy or the greedy approach are applied at every timestep. Hyperparameter sweeps are conducted for each dataset independently to find the best parameters for both optimization strategies. To test the generalizability of these parameters and approaches, two validation series are drawn from the BoC and ETT datasets and performance is measured when applying the best parameters found during the sweeps for the analogous non-validation sets. More details on the datasets is found in Appendix \ref{datasets_metrics} and a comprehensive list of all hyperparameters and the best performing values appears in Appendix \ref{details_hyperparameters}. The appendix also contains a discussion of additional experimental details.

Regardless of agent-side optimization strategy, predictions are aggregated using the weighted-averaging in Equation \eqref{aggregation_equation}. Performance is quantified via root-mean squared error (RMSE). RMSE is computed for aggregated agent predictions, $Y_t$, along with the RMSE of the worst performing individual agent as well as bottom $20\%$ agents predictions over each timeseries. Reported RMSE is the average of three separate runs with different random seeds. Additional details are in Appendix \ref{datasets_metrics}.\footnote{All experimental code is found at: \url{https://anonymous.4open.science/r/MEMOE-723E}}

\begin{table}[H]%[ht!]
\caption{Average RMSE for aggregated RFN model predictions using the greedy and decentralized (Nash) strategies. Numbers in parentheses indicate the number of agents in the mixture.}
\small
\centering
\begin{tabular}{llllllll}
\toprule
Strategy & Periodic & Logistic & Concept & BoC & BoC Val. & ETT & ETT Val. \\
\midrule
Greedy (25)  & 9.9400e-1 & 2.5242e-1 & \textbf{2.4029e-1} & 1.0340e-2 & 6.5883e-3 & 2.6527e-2 & 5.2746e-2  \\
Nash (25)  & \textbf{9.1705e-1} & \textbf{1.9725e-1} & 2.5965e-1 & \textbf{6.9134e-3} & \textbf{5.0013e-3} & \textbf{2.6478e-2} & \textbf{5.1787e-2} \\
\midrule
Greedy (100) & 9.8567e-1  & 2.5047e-1  & \textbf{2.4333e-1}  & 1.0033e-2  & 7.0897e-3  & \textbf{2.6253e-2}  & 5.1658e-2 \\
Nash (100) & \textbf{9.1876e-1} & \textbf{1.9763e-1} & 2.5949e-1 & \textbf{6.8740e-3} & \textbf{5.0071e-3} & 2.6560e-2 & \textbf{5.1581e-2} \\
\midrule
Greedy (500) & 9.8158e-1 & 2.3814e-1 & \textbf{2.4695e-1} & 8.8927e-3 & 6.8933e-3 & \textbf{2.6293e-2} & 5.1817e-2 \\
Nash (500) & \textbf{9.1856e-1} & \textbf{1.9770e-1} & 2.6151e-1 & \textbf{6.9314e-3} & \textbf{5.0123e-3} & 2.6512e-2 & \textbf{5.1382e-2} \\
\bottomrule
\end{tabular}
\label{rfn_overall_results}
\end{table}

\subsection{Numerical Results} \label{main_results}

The results of timeseries prediction using RFN models for all datasets are shown in Table \ref{rfn_overall_results}. The decentralized (Nash) strategy produces better predictions for a majority of datasets considered, regardless of the size of the agent pool, sometimes by a large margin. For example, with $500$ agents, the strategy produces an RMSE that is $22$\% smaller than that of the greedy approach. 
% The Logistic Map RMSE is reduced by $17$\% when using the proposed strategy. 
For both datasets where the decentralized algorithm does not outperform the greedy approach, it is competitive. The algorithm also shows good generalization for the validation timeseries of the BoC Exchange and ETT datasets. Figure \ref{fig:esn_overlaid_prediction_comparison} displays predictions using the decentralized and greedy approaches for the Logistic Map and BoC Validation (restricted to the final 200 points for visualization) datasets.

\begin{figure}[ht!]
\centering
\includegraphics[width=0.48\linewidth]{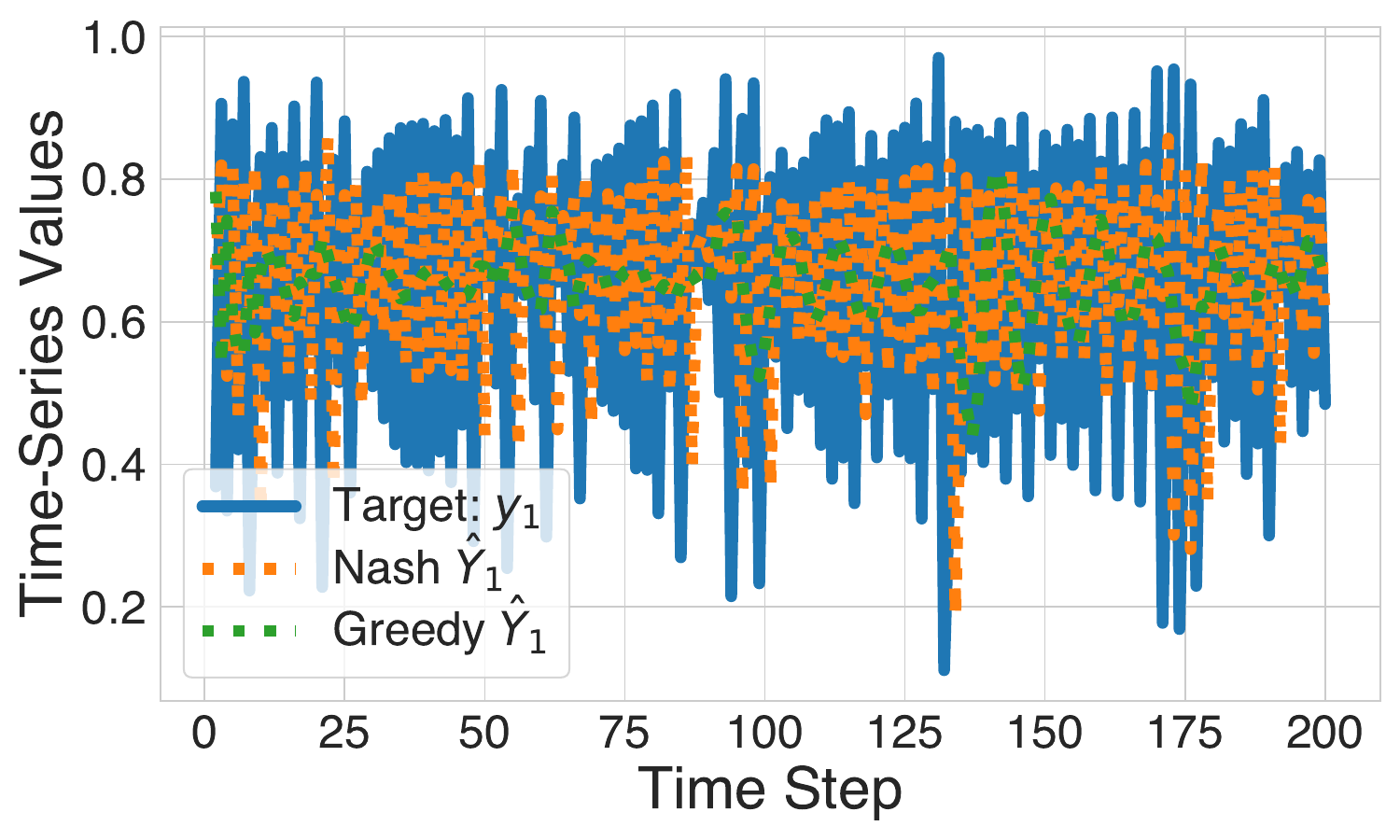}
\hfill
\includegraphics[width=0.48\linewidth]{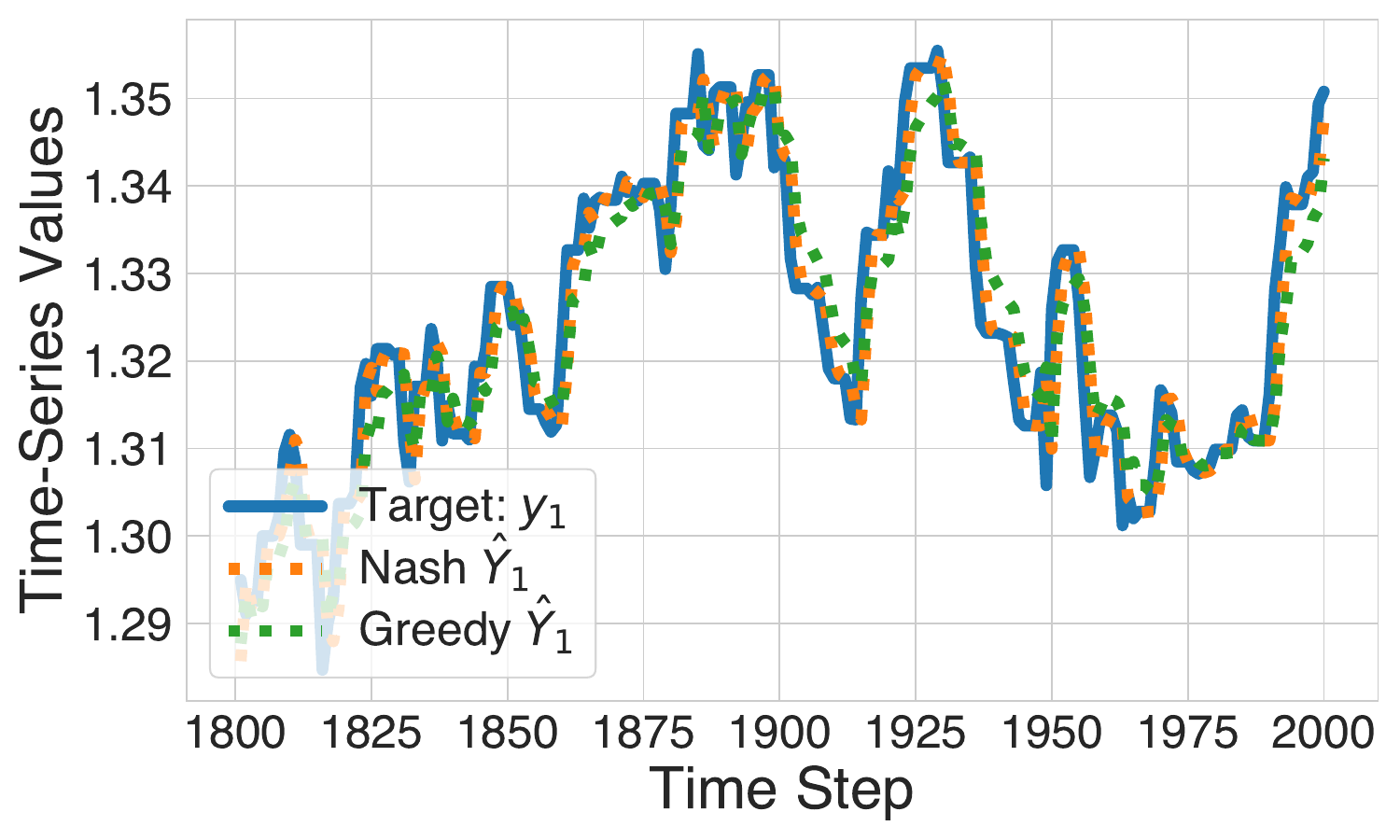}
\caption{Aggregated predictions compared with targets for RFN models with and without the decentralized strategy for the Logistic Map (left) and BoC Validation (right) timeseries.}
\label{fig:esn_overlaid_prediction_comparison}
\end{figure}

\begin{table}[ht!]
\caption{Average RMSE for aggregated ESN model predictions using the greedy and decentralized (Nash) strategies. Numbers in parentheses indicate the number of agents in the mixture.}
\small
\centering
\begin{tabular}{llllllll}
\toprule
Strategy & Periodic & Logistic & Concept & BoC & BoC Val. & ETT & ETT Val. \\
\midrule
Greedy (25)  & \textbf{8.5018e-1} & 2.5874e-1 & \textbf{2.0167e-1} & 7.1707e-3 & 5.1937e-3 & \textbf{2.6365e-2} & \textbf{5.1312e-2} \\
Nash (25)  & 1.0568e-0 & \textbf{1.9017e-1} & 2.4700e-1 & \textbf{6.7218e-3} & \textbf{4.8770e-3} & 2.6400e-2 & 5.1509e-2 \\
\midrule
Greedy (100) & \textbf{8.4452e-1} & 2.4312e-1 & \textbf{2.0690e-1} & 7.0577e-3 & 5.0340e-3 & \textbf{2.6357e-2} & \textbf{5.1246e-2} \\
Nash (100) & 1.0574e-0 & \textbf{1.9000e-1} & 2.5258e-1 & \textbf{6.7283e-3} & \textbf{4.8749e-3} & 2.6393e-2 & 5.1399e-2 \\
\midrule
Greedy (500) & \textbf{8.4862e-1} & 2.5557e-1 & \textbf{2.0770e-1} & 7.0293e-3 & 5.0490e-3 & 2.6358e-2 & \textbf{5.1216e-2} \\
Nash (500) & 1.0568e-0 & \textbf{1.8829e-1} & 2.5366e-1 & \textbf{6.7110e-3} & \textbf{4.8677e-3} & \textbf{2.6318e-2} & 5.1533e-2 \\
\bottomrule
\end{tabular}
\label{esn_overall_results}
\end{table}

Average RMSE results for experiments with ESN models are exhibited in Table \ref{esn_overall_results}. Overall performance between the two approaches is more mixed. The approach still outperforms the greedy approach on the Logistic Map, BoC Exchange, and BoC Exchange Validation datasets. It also remains competitive for the ETT series, eventually outperforming the greedy algorithm with $500$ agents. The dynamics of these results, and those for RFNs, are further investigated below.

Table \ref{worst_agents_metric} reports average RMSE of the single-worst performing agents for different settings. The results reinforce the theoretical expectation from Proposition \ref{prop:whynash} that the decentralized strategy yields pools of agents with better worst-case performance than that of the greedy strategy. That is, in nearly all settings, the worst-case agent performance when using the decentralized strategy is better, at times by a large margin, than that of the greedy approach. For RFNs, the only exception is the validation series of ETT with $500$ clients. For ESNs, there are a few additional instances where the greedy approach outperforms the decentralized strategy in this measure, but they are a minority. Appendix \ref{worst_performing_agents_study} further reports results for the bottom $20\%$ of agents, confirming the same trend.

\begin{table}[ht!]
\caption{Average RMSE for the worst agent using the greedy and decentralized (Nash) strategies for RFN (top) and ESN (bottom) models. Numbers in parentheses indicate the size of the agent pool.}
\small
\centering
\begin{tabular}{llllllll}
\toprule
Strategy & Periodic & Logistic & Concept & BoC & BoC Val. & ETT & ETT Val. \\
% \midrule
% Greedy (25) & 2.0991e-0  & 5.4231e-1  & 1.4334e-0  & 4.9302e-1  & 5.9375e-2  & 7.0084e-2  & \textbf{1.0578e-1} \\
% Nash (25) & \textbf{1.1321e-0}  & \textbf{3.2320e-1} & \textbf{3.5209e-1} & \textbf{2.1468e-2} & \textbf{1.5304e-2} & \textbf{5.4614e-2} & 1.1358e-1 \\
\midrule
Greedy (100) & 3.0682e-0  & 1.3444e-0  & 3.5085e+8  & 7.1508e-1  & 1.8677e-1  & 1.1308e-1  & 7.6078e-1 \\
Nash (100) & \textbf{1.3013e-0} & \textbf{3.4573e-1} & \textbf{3.7250e-1} & \textbf{2.4519e-2} & \textbf{1.7592e-2} & \textbf{5.7215e-2} & \textbf{1.2296e-1} \\
\midrule
Greedy (500) & 4.5723e-0  &  1.8720e-0  & 1.4425e-0  & 1.0749-0  & 3.4287e-1  & 1.7743e-1  & \textbf{1.2579e-1} \\
Nash (500) & \textbf{2.6089e-0} & \textbf{4.0196e-1} & \textbf{4.0190e-1} & \textbf{2.3189e-2} & \textbf{1.7076e-2} & \textbf{9.4500e-2} & 1.7696e-1 \\
\midrule
% \midrule
% Greedy (25) & \textbf{9.7359e-1}  & 3.3974e-1  & 1.0091e-0  & 1.4314e-2  & 2.9362e-2  & 3.5069e-2  & \textbf{5.5004e-2} \\
% Nash (25) & {1.1236e-0}  & \textbf{3.1879e-1} & \textbf{3.2196e-1} & \textbf{8.8190e-3} & \textbf{6.3557e-3} & \textbf{3.1017e-2} & 6.2458e-2 \\
\midrule
Greedy (100) & 2.7267e-0  & 7.0511e-1  & 1.1630e-0  & 7.4834e-2  & 4.9163e-2  & 4.1654e-2  & \textbf{6.0443e-2} \\
Nash (100) & \textbf{1.1581e-0}  & \textbf{4.4232e-1} & \textbf{3.2001e-1} & \textbf{8.8388e-3} & \textbf{6.3672e-3} & \textbf{3.2685e-2} & 6.4507e-2 \\
\midrule
Greedy (500) & \textbf{1.1035e-0}  & \textbf{4.5678e-1}  & 1.2358e-0  & 1.0128e-1  & 5.7545e-2  & \textbf{5.0294e-2}  & \textbf{8.1408e-2} \\
Nash (500) & 1.1789e-0  & 5.5784e-1 & \textbf{3.3075e-1} & \textbf{9.2607e-3} & \textbf{6.6757e-3} & 5.9513e-2 & 8.6272e-2 \\

\bottomrule
\end{tabular}
\label{worst_agents_metric}
\end{table}

The Concept Drift dataset is an interesting outlier in these results. The greedy approach is best for both model types. While the performance gap for ESN models is pronounced, the worst-case analysis also shows that the weakest agent in the greedy setting is poor. Noting this, we ablate the score-based aggregation and consider uniformly averaged predictions in Table \ref{uniform_averages}. In this setting, the decentralized strategy outperforms the greedy approach, implying that the greedy algorithm relies on performant aggregation for accurate predictions for this dataset. Based on the ablation in Appendix \ref{ablation_results}, Concept Drift is also the only dataset that uniformly benefits from setting $\bar{\theta} = 0.0$.

\begin{table}[ht!]
\caption{Average RMSE for uniformly averaged ESN predictions using the greedy and decentralized (Nash) strategies. Numbers in parentheses indicate the number of agents in the mixture.}
\small
\centering
\begin{tabular}{llllllll}
\toprule
Aggregation & Periodic & Logistic & Concept & BoC & BoC Val. & ETT & ETT Val. \\
% \midrule
% Greedy (25) & \textbf{8.4655e-1} & 2.4287e-1 & 2.7066e-1 & 7.1477e-3 & 5.0540e-3 & \textbf{2.6371e-2} & \textbf{5.1322e-2} \\
% Nash (25)  & {1.0565e-0} & \textbf{1.9781e-1} & \textbf{2.5308e-1} & \textbf{6.7218e-3} & \textbf{4.8771e-3} & {2.6396e-2} & {5.1499e-2} \\
% Score (25)  & 1.0568e-0 & \textbf{1.9017e-1} & \textbf{2.4700e-1} & \textbf{6.7218e-3} & \textbf{4.8770e-3} & 2.6400e-2 & 5.1509e-2 \\
\midrule
Greedy (100) & \textbf{8.3525e-1} & 2.3992e-1 & 2.9787e-1 & 6.9683e-3 & 5.0563e-3 & \textbf{2.6354e-2} & \textbf{5.1256e-2} \\
Nash (100) & {1.0572e-0} & \textbf{1.9770e-1} & \textbf{2.5835e-1} & \textbf{6.7283e-3} & \textbf{4.8749e-3} & {2.6389e-2} & {5.1389e-2} \\
% Score (100) & 1.0574e-0 & \textbf{1.9000e-1} & \textbf{2.5258e-1} & \textbf{6.7283e-3} & \textbf{4.8749e-3} & 2.6393e-2 & 5.1399e-2 \\
\midrule
Greedy (500) & \textbf{8.3125e-1} & 2.3943e-1 & 2.8935e-1 & 6.9753e-3 & 5.0397e-3 & {2.6359e-2} & \textbf{5.1218e-2} \\
Nash (500) & {1.0561e-0} & \textbf{1.9937e-1} & \textbf{2.5981e-1} & \textbf{6.7110e-3} & \textbf{4.8677e-3} & \textbf{2.6317e-2} & {5.1527e-2} \\
% Score (500) & 1.0568e-0 & \textbf{1.8829e-1} & \textbf{2.5366e-1} & \textbf{6.7110e-3} & \textbf{4.8677e-3} & 2.6318e-2 & 5.1533e-2 \\
\bottomrule
\end{tabular}
\label{uniform_averages}
\end{table}

\section{Conclusions, Limitations, and Future Work} \label{conclusions}

In this paper, we develop a decentralized learning policy for large-scale FL systems with proprietary agents. The proposed policy relies only on local information together with minimal shared statistics, thereby significantly reducing both computational and communication costs. We further establish that the decentralized policy constitutes an asymptotic Nash equilibrium as the number of agents grows to infinity. Numerical experiments on both synthetic and real-world datasets demonstrate that our approach outperforms natural greedy decentralized baselines under a worst-agent (minimax) performance criterion. A key limitation of the proposed framework is the homogeneity assumption imposed on the agents, which may be restrictive in practical FL environments with heterogeneous agentic models. An important direction for future work is to quantify the performance degradation under heterogeneity and to develop robust decentralized policies that explicitly account for agent-level variability. 

\bibliographystyle{plain}
\bibliography{Refs}

\begin{thebibliography}{10}

\bibitem{bai2026how}
L.~Bai et~al.
\newblock How do {AI} agents spend your money? {Analyzing} and predicting token consumption in agentic coding tasks.
\newblock {\em arXiv preprint arXiv:2604.22750}, 2026.

\bibitem{boc_exchange_rates}
{Bank of Canada}.
\newblock Historical noon and closing rates, Feb 2025.

\bibitem{Basar1999GameTheory}
Tamer Ba{\c{s}}ar and Geert~Jan Olsder.
\newblock {\em Dynamic Noncooperative Game Theory}.
\newblock SIAM: Society for Industrial and Applied Mathematics, Philadelphia, PA, 2nd edition, 1999.

\bibitem{bellman1954theory}
Richard Bellman.
\newblock The theory of dynamic programming.
\newblock {\em Bulletin of the American Mathematical Society}, 60(6):503--515, 1954.

\bibitem{DonahueKleinberg2021ModelSharing}
Kate Donahue and Jon Kleinberg.
\newblock Model-sharing games: Analyzing federated learning under voluntary participation.
\newblock In {\em Proceedings of the AAAI Conference on Artificial Intelligence}, 2021.

\bibitem{DonahueKleinberg2021OptimalityStability}
Kate Donahue and Jon Kleinberg.
\newblock Optimality and stability in federated learning: A game-theoretic approach.
\newblock In {\em Advances in Neural Information Processing Systems}, 2021.

\bibitem{faiz2024llmcarbon}
Ahmad Faiz, Sotaro Kaneda, Ruhan Wang, Rita Osi, Prateek Sharma, Fan Chen, and Lei Jiang.
\newblock Llmcarbon: Modeling the end-to-end carbon footprint of large language models.
\newblock In {\em International Conference on Learning Representations}, 2024.

\bibitem{grigoryeva2018echo}
Lyudmila Grigoryeva and Juan-Pablo Ortega.
\newblock Echo state networks are universal.
\newblock {\em Neural Networks}, 108:495--508, 2018.

\bibitem{GrimmettStirzaker01Probability}
Geoffrey~R Grimmett and David~R Stirzaker.
\newblock {\em Probability and Random Processes}.
\newblock Oxford University Press, May 2001.

\bibitem{HastieTibshiraniFriedman_2001ESLBook}
Trevor Hastie, Robert Tibshirani, and Jerome Friedman.
\newblock {\em The elements of statistical learning}.
\newblock Springer Series in Statistics. Springer-Verlag, New York, 2001.
\newblock Data mining, inference, and prediction.

\bibitem{hoffmann2022training}
Jordan Hoffmann, Sebastian Borgeaud, Arthur Mensch, Elena Buchatskaya, Trevor Cai, Eliza Rutherford, Diego de~Las~Casas, Lisa~Anne Hendricks, Johannes Welbl, Aidan Clark, et~al.
\newblock Training compute-optimal large language models.
\newblock {\em arXiv preprint arXiv:2203.15556}, 2022.

\bibitem{huang2006universal}
Guang-Bin Huang, Lei Chen, and Chee-Kheong Siew.
\newblock Universal approximation using incremental constructive feedforward networks with random hidden nodes.
\newblock {\em IEEE transactions on neural networks}, 17(4):879--892, 2006.

\bibitem{HuangCainesMalhame2007LQG}
Minyi Huang, Peter~E. Caines, and Roland~P. Malham{\'e}.
\newblock Large-population cost-coupled {LQG} problems with nonuniform agents: Individual-mass behavior and decentralized {$\varepsilon$}-{Nash} equilibria.
\newblock {\em IEEE Transactions on Automatic Control}, 52(9):1560--1571, 2007.

\bibitem{HuangMalhameCaines2006NCE}
Minyi Huang, Roland~P. Malham{\'e}, and Peter~E. Caines.
\newblock Large population stochastic dynamic games: Closed-loop {McKean--Vlasov} systems and the {Nash} certainty equivalence principle.
\newblock {\em Communications in Information and Systems}, 6(3):221--252, 2006.

\bibitem{HY21mfg}
Minyi Huang and Xuwei Yang.
\newblock Linear quadratic mean field game: Decentralized ${O(1/N)}$ {Nash} equilibria.
\newblock {\em Journal of Systems Science and Complexity}, 34(5):2003--2035, 2021.

\bibitem{HZ20}
Minyi Huang and Mengjie Zhou.
\newblock Linear quadratic mean field games: Asymptotic solvability and relation to the fixed point approach.
\newblock {\em IEEE Transactions on Automatic Control}, 65(4):1397--1412, 2020.

\bibitem{HuangJaimungalNourian2019MeanField}
Xuancheng Huang, Sebastian Jaimungal, and Mojtaba Nourian.
\newblock Mean-field game strategies for optimal execution.
\newblock {\em Applied Mathematical Finance}, 26(2):153--185, 2019.

\bibitem{kaplan2020scaling}
Jared Kaplan, Sam McCandlish, Tom Henighan, Tom~B. Brown, Benjamin Chess, Rewon Child, Scott Gray, Alec Radford, Jeffrey Wu, and Dario Amodei.
\newblock Scaling laws for neural language models.
\newblock {\em arXiv preprint arXiv:2001.08361}, 2020.

\bibitem{LasryLions2006MFGStationary}
Jean-Michel Lasry and Pierre-Louis Lions.
\newblock Jeux {\`a} champ moyen. i -- le cas stationnaire.
\newblock {\em Comptes Rendus Math{\'e}matique}, 343(9):619--625, 2006.

\bibitem{LasryLions2006MFGFiniteHorizon}
Jean-Michel Lasry and Pierre-Louis Lions.
\newblock Jeux {\`a} champ moyen. ii -- horizon fini et contr{\^o}le optimal.
\newblock {\em Comptes Rendus Math{\'e}matique}, 343(10):679--684, 2006.

\bibitem{LasryLions2007MeanFieldGames}
Jean-Michel Lasry and Pierre-Louis Lions.
\newblock Mean field games.
\newblock {\em Japanese Journal of Mathematics}, 2(1):229--260, 2007.

\bibitem{LeGall2014Matrix}
Fran\c{c}ois Le~Gall.
\newblock Powers of tensors and fast matrix multiplication.
\newblock In {\em Proceedings of the 39th International Symposium on Symbolic and Algebraic Computation}, ISSAC '14, page 296–303, New York, NY, USA, 2014. Association for Computing Machinery.

\bibitem{Li21Ditto}
Tian Li, Shengyuan Hu, Ahmad Beirami, and Virginia Smith.
\newblock Ditto: Fair and robust federated learning through personalization.
\newblock In {\em Proceedings of the 38th International Conference on Machine Learning}, volume 139 of {\em Proceedings of Machine Learning Research}, pages 6357--6368. PMLR, 2021.

\bibitem{Li20FedOptim}
Tian Li, Anit~Kumar Sahu, Manzil Zaheer, Maziar Sanjabi, Ameet Talwalkar, and Virginia Smith.
\newblock Federated optimization in heterogeneous networks.
\newblock In {\em Proceedings of Machine Learning and Systems}, volume~2, pages 429--450, 2020.

\bibitem{Liang2023DecentralizedOpenLoop}
X.~Liang and P.~E. Caines.
\newblock Decentralized open-loop strategies of linear quadratic mean field games.
\newblock In {\em IFAC-PapersOnLine}, volume~56, pages 11464--11469. Elsevier, 2023.

\bibitem{Liao23GlobalNashFL}
Xinting Liao, Chaochao Chen, Weiming Liu, Pengyang Zhou, Huabin Zhu, Shuheng Shen, Weiqiang Wang, Mengling Hu, Yanchao Tan, and Xiaolin Zheng.
\newblock Joint local relational augmentation and global {Nash} equilibrium for federated learning with non-iid data.
\newblock In {\em Proceedings of the 31st ACM International Conference on Multimedia}, MM '23, page 1536–1545, New York, NY, USA, 2023. Association for Computing Machinery.

\bibitem{liu2025budget}
T.~Liu et~al.
\newblock Budget-aware tool-use enables effective agent scaling.
\newblock {\em arXiv preprint arXiv:2511.17006}, 2025.

\bibitem{Liu22PrivacyFL}
Ziyu Liu, Shengyuan Hu, Zhiwei~Steven Wu, and Virginia Smith.
\newblock On privacy and personalization in cross-silo federated learning.
\newblock In {\em Proceedings of the 36th International Conference on Neural Information Processing Systems}, NIPS '22, Red Hook, NY, USA, 2022. Curran Associates Inc.

\bibitem{luccioni2024power}
Alexandra~Sasha Luccioni, Yacine Jernite, and Emma Strubell.
\newblock Power hungry processing: Watts driving the cost of ai deployment?
\newblock In {\em Proceedings of the 2024 ACM Conference on Fairness, Accountability, and Transparency (FAccT '24)}, pages 85--99, Rio de Janeiro, Brazil, 2024. ACM.

\bibitem{luccioni2023estimating}
Alexandra~Sasha Luccioni, Sylvain Viguier, and Anne-Laure Ligozat.
\newblock Estimating the carbon footprint of bloom, a 176b parameter language model.
\newblock {\em Journal of Machine Learning Research}, 24(253):1--15, 2023.

\bibitem{Mehrjou2021FederatedMeanFieldGame}
Arash Mehrjou.
\newblock Federated learning as a mean-field game.
\newblock {\em arXiv preprint arXiv:2107.03770}, 2021.

\bibitem{MurhekarEtAl2023IncentivesFL}
Aniket Murhekar, Milind Tambe, Kai Wang, and Yevgeniy Vorobeychik.
\newblock Incentives in federated learning: Equilibria, dynamics, and mechanisms for welfare maximization.
\newblock In {\em Advances in Neural Information Processing Systems}, 2023.

\bibitem{Murhekar23IncentivesFL}
Aniket Murhekar, Zhuowen Yuan, Bhaskar Ray~Chaudhury, Bo~Li, and Ruta Mehta.
\newblock Incentives in federated learning: Equilibria, dynamics, and mechanisms for welfare maximization.
\newblock In A.~Oh, T.~Naumann, A.~Globerson, K.~Saenko, M.~Hardt, and S.~Levine, editors, {\em Advances in Neural Information Processing Systems}, volume~36, pages 17811--17831. Curran Associates, Inc., 2023.

\bibitem{NVIDIA_RTX_A4000}
{NVIDIA Corporation}.
\newblock {NVIDIA RTX A4000 Graphics Card}.
\newblock \url{https://www.nvidia.com/en-us/products/workstations/rtx-a4000/}, 2026.
\newblock Accessed: 2026-05-02.

\bibitem{rahimi2008weighted}
Ali Rahimi and Benjamin Recht.
\newblock Weighted sums of random kitchen sinks: Replacing minimization with randomization in learning.
\newblock {\em Advances in neural information processing systems}, 21, 2008.

\bibitem{samsi2023words}
Siddharth Samsi, Dan Zhao, Joseph McDonald, Baolin Li, Adam Michaleas, Michael Jones, William Bergeron, Jeremy Kepner, Devesh Tiwari, and Vijay Gadepally.
\newblock From words to watts: Benchmarking the energy costs of large language model inference.
\newblock In {\em 2023 IEEE High Performance Extreme Computing Conference}, pages 1--9, 2023.

\bibitem{echotorch}
Nils Schaetti.
\newblock Echotorch: Reservoir computing with pytorch.
\newblock \url{https://github.com/nschaetti/EchoTorch}, 2018.

\bibitem{ShiriParkBennis2020UAVOnlineMFGFL}
Hamed Shiri, Jihong Park, and Mehdi Bennis.
\newblock Communication-efficient massive {UAV} online path control: Federated learning meets mean-field game theory.
\newblock {\em IEEE Transactions on Communications}, 2020.

\bibitem{SunEtAl2026PrivacyCommodityMFGRegretNet}
Sun et~al.
\newblock Privacy as commodity: {MFG-RegretNet} for large-scale privacy trading in federated learning.
\newblock {\em arXiv preprint arXiv:2603.28329}, 2026.

\bibitem{SunWuLi2024ReputationMFGFL}
Sun, Wu, and Li.
\newblock Reputation-aware incentive mechanism of federated learning: A mean field game approach.
\newblock {\em arXiv preprint}, 2024.

\bibitem{xiao2025reducing}
Yuan-An Xiao, Pengfei Gao, Chao Peng, and Yingfei Xiong.
\newblock Reducing cost of llm agents with trajectory reduction.
\newblock {\em arXiv preprint arXiv:2509.23586}, 2025.

\bibitem{Xie24mixed}
Wanyun Xie, Thomas Pethick, Ali Ramezani-Kebrya, and Volkan Cevher.
\newblock Mixed {Nash} for robust federated learning.
\newblock {\em Transactions on Machine Learning Research}, 2024.

\bibitem{Xu2024DecentralizedEpsilonNash}
Zhenhui Xu and Tielong Shen.
\newblock Decentralized $\epsilon$-{Nash} strategy for linear quadratic mean field games using a successive approximation approach.
\newblock {\em Asian Journal of Control}, 26(2):565--574, 2024.

\bibitem{yang2023regret}
Xuwei Yang, Anastasis Kratsios, Florian Krach, Matheus Grasselli, and Aurelien Lucchi.
\newblock {Synchronizing pretrained kernel regressors with applications to American option pricing}.
\newblock {\em Frontiers of Mathematical Finance}, 8:23--77, March 2026.

\bibitem{YTEK25}
Xuwei Yang, Fatemeh Tavakoli, David~B. Emerson, and Anastasis Kratsios.
\newblock Online federation for mixtures of proprietary agents with black-box encoders.
\newblock {\em arXiv preprint arXiv:2505.00216}, 2025.

\bibitem{YoonChoudhuryLoizou2025MultiplayerFL}
Taeho Yoon, Sayak~Ray Chowdhury, and Nicolas Loizou.
\newblock Multiplayer federated learning: Reaching equilibrium with communication-efficient algorithms.
\newblock {\em arXiv preprint arXiv:2501.08263}, 2025.

\bibitem{YuanWang2024PrivacyAwareSampling}
Yuan and Wang.
\newblock A game-theoretic framework for privacy-aware client sampling in federated learning.
\newblock {\em arXiv preprint arXiv:2412.05636}, 2024.

\bibitem{haoyietal-informer-2021}
Haoyi Zhou, Shanghang Zhang, Jieqi Peng, Shuai Zhang, Jianxin Li, Hui Xiong, and Wancai Zhang.
\newblock Informer: Beyond efficient transformer for long sequence time-series forecasting.
\newblock In {\em The Thirty-Fifth {AAAI} Conference on Artificial Intelligence, {AAAI} 2021, Virtual Conference}, volume~35, pages 11106--11115. {AAAI} Press, 2021.

\end{thebibliography}

\appendix 

\section{Proof of Secondary Results in Preliminaries Section}
\label{appednix:Secondary}
% \section{Strongest Weakest-Agent Characterization of Nash Equilibria}
% \label{s:nash_cash}

%\begin{proof}
\emph{{Proof of Proposition~\ref{prop:whynash}}.}
We assume that the agents are interchangeable ($J_1=\dots=J_N$).  
For each $n\in [N]$, upon defining
the $n^{th}$ agent's optimal value $
J_n^{\star}(\beta^{-n})\eqdef \inf_{\beta\in \mathcal{A}}\, J_n(\beta,\beta^{-n})$, 
we have
\begin{equation}
\label{eq:equilibrium2}
J_n(\beta^{n},\beta^{-n}) = J_n^{\star}(\beta^{-n})
\quad\mbox{and}\quad
J_n(\tilde{\beta}^{n},\beta^{-n}) \ge J_n^{\star}(\beta^{-n})
\end{equation}
for any $\tilde{\beta}^{n}\in \mathcal{A}$. If $\tilde{\beta}^{n}$ is not an optimal response for the $n^{th}$ agent relative to $\beta^{-n}$, then the latter inequality is strict.
Equivalently, define the ``systemic regret-type'' functional over $\mathcal{A}^N$ as sending any $(\beta^{n})_{n=1}^N\in \mathcal{A}^N$ to the non-negative value
\[
\mathcal{R}\big((\beta^{n})_{n=1}^N\big)
\eqdef
\max_{n\in [N]}\,
\Big(
    J_n(\beta^{n},\beta^{-n}) - J_n^{\star}(\beta^{-n})
\Big)
\]
then $(\beta^{n})_{n=1}^N$ is a Nash equilibrium if and only if
$
\mathcal{R}\big((\beta^{n})_{n=1}^N\big) =0
$. 
Note that, by construction, $\mathcal{R}$ takes non-negative values.  Since, in general, the benchmark values $\{J_n^{\star}(\beta^{-n})\}_{n=1}^N$ may be non-zero and need not be readily accessible; thus, we will instead consider the ``worst-agent's cost'' functional defined by
\begin{equation}
\label{eq:unormalized_regret}
\begin{aligned}
\mathcal{R}_{\downarrow}:\mathcal{A}^N &\rightarrow [0,\infty)
\\
(\beta^{n})_{n=1}^N
& \mapsto
\max_{n\in [N]}\, J_n(\beta^{n},\beta^{-n})
.
\end{aligned}
\end{equation}
If the Nash equilibrium $(\beta^{\star:n})_{n=1}^N$ is unique, then $J_n^{\star}\eqdef J_n^{\star}(\beta^{-\star:n})$ does not depend on the choice of the Nash strategies of all other agents as there is exactly one, namely $(\beta^{\star:n})_{n=1}^N$.  If, in addition,
$
J^{\star}\eqdef J_1^{\star}=J_n^{\star}
$
for all $n\in [N]$ (which holds under the agent's interchangeability), then
\[
\mathcal{R}\big((\beta^{n})_{n=1}^N\big)
=
\max_{n\in [N]}\,
\Big(
    J_n(\beta^{n},\beta^{-n}) - J^{\star}
\Big)
=
\Big(
    \max_{n\in [N]}
    \,
    J_n(\beta^{n},\beta^{-n})
\Big)
- J^{\star}
\]
and so, the minimizers of~\eqref{eq:unormalized_regret} coincide with those of $\mathcal{R}$.  Thus, assuming that a Nash equilibrium exists, any $(\beta^{n})_{n=1}^N\in\mathcal{A}^N$ is a Nash equilibrium if and only if it minimizes $\mathcal{R}$; i.e.\ if and only if
\[
\mathcal{R}\big((\beta^{n})_{n=1}^N\big)
=
\inf_{
(\tilde{\beta}^{n})_{n=1}^N\in \mathcal{A}^N
}
\,\mathcal{R}\big((\tilde{\beta}^{n})_{n=1}^N\big)
\]
showing the claim.
\hfill$\blacksquare$ 
%\end{proof}

\section{Proof of the Nash Equilibrium}
\label{appendix:proofNagentNash}

This section is devoted to proving Theorem~\ref{thm:NagentNash}. To facilitate the proof, we denote 
\begin{align} 
& Y_t = [ Y^{1\top}_t, \dots, Y^{N\top}_t ]^\top \in \mathbb{R}^{N d_y \times 1} , 
\notag \\ 
& \mathbf{1}_z  = \mathbf{1}_{1\times N} \otimes I_{d_z} 
 = \big[ I_{d_z}, \dots, I_{d_z} \big]
\in \mathbb{R}^{d_z \times d_z N} , 
 \quad 
  \mathbf{1}_{y} = \mathbf{1}_{1\times N} \otimes I_{d_y}
  = \big[ I_{d_y}, \dots, I_{d_y} \big]
  \in \mathbb{R}^{d_y \times d_y N} ,  
 \notag \\  
 &  \{ e_n \}_{n=1}^N: \, \mbox{the standard basis (column vectors) of $\mathbb{R}^N$},  
 \notag \\ 
 &  \mathbf{e}^y_n = e_n \otimes I_{d_y} = [0, \dots, I_{d_y} , \dots, 0]^\top \in \mathbb{R}^{d_y N\times d_y}, 
 \quad 
  \mathbf{e}^z_n = e_n \otimes I_{d_z} = [0, \dots, I_{d_z} , \dots, 0]^\top
  \in \mathbb{R}^{d_z N\times d_z} , 
 \notag \\ 
& \Theta_n =  e_n^\top  \otimes \theta 
 = \big[ 0, \dots, \theta, \dots, 0 \big]
\in \mathbb{R}^{d_y \times d_y N},  
 \quad 
 \Theta  = \mathbf{1}_{1\times N} \otimes 
  \theta 
  =  \big[\theta, \dots, \theta  \big] \in \mathbb{R}^{d_y \times N d_y}  , 
 \quad 
 \notag \\ 
& \overline\Theta = \mathbf{1}_{1\times N}\otimes \bar\theta
  =  \big[ \, \overline\theta, \dots, \overline\theta \,  \big]
 \in \mathbb{R}^{d_y \times d_y N} ,  
 \notag \\ 
 & \mathbf{\Theta} = [ \Theta_1^\top, \dots, \Theta_N^\top ]^\top
  = I_{N} \otimes \theta = \diag[\theta, \dots, \theta ] \in \mathbb{R}^{d_y N \times d_y N}  ,  
 \quad 
 \notag \\ 
& \overline{\mathbf{\Theta}} =
 \mathbf{1}_{N\times 1} \otimes \overline\Theta 
 = \mathbf{1}_{N\times N} \otimes \bar\theta 
 = 
 \begin{bmatrix} 
 \bar\theta , &  \dots, & \bar\theta \\
 \vdots & \ddots & \vdots \\ 
 \bar\theta , &  \dots, & \bar\theta 
 \end{bmatrix} 
 \in \mathbb{R}^{d_y N \times d_y N} , 
 \notag \\ 
 & 
 \mathbf{Z}_t = \diag[Z^1_t, \dots, Z^n_t] 
 = \sum_{n=1}^N \mathbf{e}^y_n Z^n_t 
  \mathbf{e}^{z\top}_n \in \mathbb{R}^{d_y N \times d_z N } . 
 \notag \\ 
 & 
 \mathbf{\beta}=\big[\beta^{1\top}, \dots, \beta^{N\top} \big]^\top . 
 \notag 
\end{align} 

Then~\eqref{hatYit+1finite} can be written as 
\begin{align} 
 & Y^n_{t+1} = [ \Theta_n  + (1/N) \overline\Theta ] Y_t + Z^n_t \beta^n_t , 
 \quad 
 n \in [N] ,  
 \notag \\ 
 & Y^{(N)}_{t+1} 
 = (\theta + \bar\theta ) Y^{(N)}_t + \frac{1}{N}  \sum_{n=1}^N Z^n_t \beta^n_t 
 %\notag \\ 
 %& \qquad 
 = (1/N)(\Theta +  \overline\Theta ) Y_t 
 + (1/N) \mathbf{1}_y \mathbf{Z}_t  \boldsymbol\beta_t , 
 \notag \\ 
 & Y_{t+1} =  [ \mathbf\Theta + (1/N) \overline{\mathbf\Theta} ] Y_t 
 + \sum_{n=1}^N \mathbf{e}^y_n Z^n_t \beta_t^n 
 %\notag \\ 
 %& \qquad 
 = [ \mathbf\Theta + (1/N) \overline{\mathbf\Theta} ] Y_t 
 +  \mathbf{Z}_t \boldsymbol{\beta}_t . 
 \notag 
\end{align}

The matrix-valued coefficients $\mathbf{G}(\cdot)\in \mathbb{R}^{N d_z \times N d_y}$ 
and $\mathbf{H}(\cdot)\in \mathbb{R}^{d_z \times 1}$ in the Nash equilibrium policy~\eqref{NashEqm:bfbeta} are defined by 
\begin{align} 
\mathbf{G}(t) 
= & 
- \Big\{ e^{-\alpha(T-1-t)}   
 \big[  \big( \kappa + \overline\kappa (1-1/N) \big) \widehat{\mathbf{A}}_1(t) 
  + \big( \overline\kappa (1-1/N)(-1/N) \big) \widehat{\mathbf{A}}_2(t) 
 +  \gamma I \big]  
 + \mathbf{A}(t)  
\Big\}^{-1} \cdot \notag \\ 
& \Big\{ e^{-\alpha(T-1-t)} \big[ (\kappa + \overline{\kappa}(1-1/N) ) \mathbf{D}(t) 
  - \overline{\kappa}(1-1/N)(1/N) \widehat{\mathbf{D}}(t) 
  + (\kappa/N) \overline{\mathbf{D}} 
 \, \big] + \mathbf{B}(t) \Big\} ,  
\label{bfG} \\ 
\mathbf{H}(t ) 
= & - \Big\{ e^{-\alpha(T-1-t)}   
 \big[  \big( \kappa + \overline\kappa (1-1/N) \big) \widehat{\mathbf{A}}_1(t) 
  + \big( \overline\kappa (1-1/N)(-1/N) \big) \widehat{\mathbf{A}}_2(t) 
 +  \gamma I \big]  
 + \mathbf{A}(t)  
\Big\}^{-1} \cdot  \notag \\ 
 &\big[  e^{-\alpha(T-1-t)} (-\kappa) \mathbf{F}(t) 
 + \mathbf{C}(t)  \big] , 
\label{bfH} 
\end{align} 
where 
\allowdisplaybreaks
\begin{align} 
\begin{cases} 
  \mathbf{A}(t) = \mathbb{E} \big\{ \big[ 
  P_1(t+1) \mathbf{e}^y_1 Z^1_t, 
 \dots, 
  P_N(t+1) \mathbf{e}^y_N Z^N_t
 \big]^\top  
  \mathbf{Z}_t \big\} ,   
  \\ 
  \widehat{\mathbf{A}}_1(t) =  
   \mathbb{E} \big\{ \diag[ Z^{1\top}_t Z^1_t, \dots, Z^{N\top}_t Z^N_t ] 
   \big\} 
   = \mathbb{E} \mathbf{Z}_t^\top \mathbf{Z}_t, 
   \\ 
  \widehat{\mathbf{A}}_2(t) 
 = 
 \mathbb{E} \big\{ \big[ Z^{1}_t, \dots, Z^N_t \big]^\top 
  \mathbf{1}_y \mathbf{Z}_t
 \big\} , 
  \\  
  \mathbf{B}(t) = \mathbb{E} \big[ P_1(t+1) \mathbf{e}^y_1 Z^1_t, \dots, P_N(t+1) \mathbf{e}^y_N Z^N_t \big]^\top 
  (\boldsymbol{\Theta} + \overline{\boldsymbol{\Theta}}/N ) ,  \\ 
   \mathbf{D}(t) =  \mathbb{E} \big[  \Theta_1^\top Z^1_t , \dots,  \Theta_N^\top Z^N_t \big]^\top   
  = 
  \mathbb{E} \mathbf{Z}_t^\top \boldsymbol{\Theta} ,
   \\ 
   \widehat{\mathbf{D}}(t) = 
  \mathbb{E} \big[ Z^1_t, \dots, Z^N_t \big]^\top \Theta  ,
   \\ 
    \overline{\mathbf{D}}(t) =  \mathbb{E} \big[ Z^1_t, \dots, Z^N_t \big]^\top \overline{\Theta} ,    
   \\ 
   \mathbf{C}(t) = \mathbb{E} \big[  S_1(t+1)^\top \mathbf{e}^y_1 Z^1_t , \dots, S_N(t+1)^\top \mathbf{e}^y_N Z^N_t \big]^\top , 
   \\ 
   \mathbf{F}(t) = 
   \mathbb{E} \big[ Z^1_t, \dots, Z^N_t \big]^\top   y_{t+1}  ,  
  \end{cases}
  \label{bfABCDF}
\end{align} 
and $\{P_n(\cdot)\}_{n=1}^N$ and $\{S_n(\cdot)\}_{n=1}^N$ are determined by the system~\eqref{Pn=Phin(t;P1:N)} and~\eqref{Sn=Psi(t;S1:N;P1:N)} 
\begin{align} 
\begin{aligned} 
 & \Phi_n(t; P_1, \dots, P_n ) \\ 
 \eqdef & \mathbf{G}^\top(t) 
\mathbf{Q}_n(t) \mathbf{G}(t) 
+ \mathbf{G}^\top(t) \mathbf{L}_n(t) 
+ \mathbf{L}_n^\top(t) 
\mathbf{G}(t) 
\\ 
& + e^{-\alpha(T-1-t)} \big[ \kappa \big( \Theta_n + \overline\Theta/N \big)^\top  
\big( \Theta_n + \overline\Theta/N \big)
+ \overline\kappa \big( \Theta_n - \Theta/N \big)^\top \big( \Theta_n - \Theta/N \big) \big] 
 \\
& + ( \boldsymbol{\Theta} + \overline{\boldsymbol{\Theta}}/N )^\top P_n(t+1) 
( \boldsymbol{\Theta} + \overline{\boldsymbol{\Theta}}/N ) , 
\quad t = 0, 1, \dots, T-1 , 
\end{aligned} 
\label{ODE:Pn}
\end{align} 
and 
\begin{align}
\begin{aligned} 
 & \Psi_n(t; S_1,\dots,S_N; P_1, \dots, P_n )  \\
  \eqdef & \mathbf{G}^\top(t) \mathbf{Q}_n(t) \mathbf{H}(t) 
 + \mathbf{G}^\top(t) \big[ - e^{-\alpha(T-1-t)} \kappa \mathbf{e}^z_n \mathbb{E}Z_t^{n\top} y_{t+1}  
  + \mathbb{E} \mathbf{Z}_t^\top S_n(t+1)
 \big] 
  \\ 
 & +   \big\{ e^{-\alpha(T-1-t)} \big[ \kappa ( \Theta_n + \overline\Theta/N )^\top \mathbb{E} Z^n_t \mathbf{e}^{z\top}_n 
 + \overline\kappa (\Theta_n - \Theta/N )^\top 
  \mathbb{E}  \big( Z^n_t \mathbf{e}_n^{z\top} - \mathbf{1}_y \mathbf{Z}_t /N \big)  
 \big] 
  \\ 
 & \qquad  + ( \mathbf{\Theta} + \overline{\mathbf{\Theta}}/N )^\top P_n(t+1) \mathbb{E}\mathbf{Z}_t 
 \big\} \mathbf{H}(t) 
  \\ 
 &  - e^{-\alpha(T-1-t)} \kappa (\Theta_n + \overline\Theta/N )^\top y_{t+1} 
  +   (\mathbf{\Theta} + \overline{\Theta}/N )^\top S_n(t+1)  , 
  \quad t = 0, 1, \dots, T-1, 
\end{aligned} 
 \label{ODE:Sn}
\end{align} 
where $\mathbf{Q}_n(t) \in \mathbb{R}^{Nd_z\times Nd_z}$ and 
$\mathbf{L}_n(t)\in\mathbb{R}^{Nd_z\times Nd_y}$ are given by 
\begin{align} 
\mathbf{Q}_n(t) = & 
 e^{-\alpha(T-1-t)} \Big\{ \kappa \mathbf{e}_n^z \mathbb{E}(Z_t^{n \top} Z^n_t) \mathbf{e}_n^{z \top} 
 + \overline\kappa \mathbb{E}\big[ \big( Z^n_t \mathbf{e}_n^{z\top} - \mathbf{1}_y \mathbf{Z}_t/N \big)^\top  \big( Z^n_t \mathbf{e}_n^{z\top} - \mathbf{1}_y \mathbf{Z}_t/N \big) \big] 
 + \gamma \mathbf{e}^z_n \mathbf{e}^{z\top}_n
 \Big\}
\notag \\ 
& +     
   \mathbb{E}( \mathbf{Z}_t^\top P_n(t+1)  \mathbf{Z}_t )  
 , 
  \label{bfQn} \\
%\end{align} 
%\begin{align} 
\mathbf{L}_n(t) 
= &
 e^{-\alpha(T-1-t)}  \big[ \kappa \mathbf{e}^z_n \mathbb{E} Z_t^{n \top} (\Theta_n - \overline\Theta /N) 
  + \overline\kappa \mathbb{E}\big( Z^n_t \mathbf{e}_n^{z\top} - \mathbf{1}_y \mathbf{Z}_t / N \big)^\top 
  (\Theta_n - \Theta/N)
 \big]  
 \notag \\ 
 & + \mathbb{E}\mathbf{Z}_t^\top P_n(t+1) ( \mathbf{\Theta} + \overline{\mathbf{\Theta}}/N ) . 
\label{bfLn}
\end{align}

%\begin{proof} 
\emph{Proof of Theorem~\ref{thm:NagentNash}.}
The proof is carried out in the similar manner as~\cite[Theorem 3]{YTEK25}. 
Define $\{V_n(t, \mathbf{y})\}_{n=1}^N$ as the value functions of the agents at time $t$, given $Y_t=\mathbf{y}$. The value functions at time $t=T$ with $Y_T=\mathbf{y}$ satify $V_n(T,\mathbf{y})=0$ and take the form 
\begin{align} 
V_n(T, \mathbf{y} ) =  \mathbf{y}^\top P_n(T) \mathbf{y} 
 + 2 S_n^\top(T) \mathbf{y} + r_n(T) , 
 \quad n \in [N] . 
\notag 
\end{align} 
with $P_n(T)=0$ and $S_n(T)=0$, for $n\in[N]$. 

Assume by induction that for some $t\in \{ 1, \dots, T-1 \}$ and for all $s\in\{t+1, \dots, T\}$, the value functions 
$\{V_n(s, \mathbf{y})\}_{n=1}^N$ at time $s$ with $\hat{Y}_s=\mathbf{y}$ take the affine-quadratic form 
\begin{align} 
V_n(s, \mathbf{y} ) =  \mathbf{y}^\top P_n(s) \mathbf{y} 
 + 2 S_n^\top(s) \mathbf{y} + r_n(s) , 
 \quad n = 1, \dots, N , 
\notag 
\end{align} 
with $P_n(s)$ and $S_n(s)$ determined by~\eqref{ODE:Pn} and~\eqref{ODE:Sn} on the time span $s\in\{t+1, \dots, T\}$. 
We show that at time $t$ with $Y_t=\mathbf{y}$, the equilibrium policy is given by 
$\mathbf{\beta}_t = \big[ \beta_t^{1\top}, \dots, \beta_t^{N\top} \big]^\top = \mathbf{G}(t) \mathbf{y}
 + \mathbf{H}(t)$, and the value functions $\{V_n(t, \mathbf{y})\}_{n=1}^N$ take the affine-quadratic form 
 \begin{align} 
V_n(t, \mathbf{y}) = \mathbf{y}^\top P_n(t) \mathbf{y} + 2 S_n(t)^\top \mathbf{y} + r_n(t), 
\quad n \in [N], 
\label{Vn(t,y):QuadraticAffine}
 \end{align} 
with $P_n(t)$ and $S_n(t)$ determined by~\eqref{ODE:Pn} and~\eqref{ODE:Sn}, respectively.

By the dynamic programming principle and the induction hypothesis, 
$\{V_n(t, \mathbf{y} )\}_{n=1}^N$ satisfy the system of dynamic programming equations:  
\allowdisplaybreaks
\begin{align} 
V_n(t, \mathbf{y} ) 
= & \min_{\beta^n} \mathbb{E} \Big\{  e^{-\alpha (T-1-t)} 
 \Big[ \kappa \big\| y_{t+1} -  ( \Theta_n + \overline\Theta / N  ) \mathbf{y} -  Z^n_t \beta^n   \big\|^2 
 \notag \\ 
& \hspace{3.5cm}  + \overline{\kappa} \big\| (\Theta_n - \Theta/N ) \mathbf{y} 
    +  \big(  Z^n_t \beta^n -  (1/N) \sum_{m=1}^N Z^m_t \beta^m \big)  \big\|^2
  + \gamma \| \beta^n \|^2   \Big]   \notag \\ 
  & \hspace{5.5cm} 
  +   V_n \Big( t+1, \big( \mathbf\Theta +  \overline{\mathbf\Theta}/N \big) \mathbf{y} 
 + \sum_{m=1}^N \mathbf{e}^y_m Z^m_t \beta^m \Big) 
   \Big| Y_t = \mathbf{y}  \Big\} \notag \\ 
   %%%%% 
  = & \min_{\beta^n} \mathbb{E} 
  \Big\{   e^{-\alpha (T-1-t)} 
 \Big[ \kappa \big\| y_{t+1} -   ( \Theta_n + \overline\Theta / N  ) \mathbf{y} -  Z^n_t \beta^n   \big\|^2 
 \notag \\ 
& \hspace{3.5cm} 
+ \overline{\kappa} \big\| (\Theta_n - \Theta/N ) \mathbf{y} 
    +  \big( Z^n_t \beta^n -  (1/N) \sum_{m=1}^N Z^m_t \beta^m \big)  \big\|^2
  + \gamma \| \beta^n \|^2   \Big]   \notag \\ 
 & \hspace{1cm}  +   \Big[ \big( \mathbf\Theta + \overline{\mathbf\Theta}/N \big) \mathbf{y} 
 + \sum_{m=1}^N \mathbf{e}^y_m Z^m_t \beta^m    \Big]^\top P_n(t+1) 
  \Big[ \big( \mathbf\Theta + \overline{\mathbf\Theta}/N \big) \mathbf{y} 
 + \sum_{m=1}^N \mathbf{e}^y_m Z^m_t \beta^m    \Big] \notag \\ 
 &   \hspace{3.5cm}  + 2    S_n(t+1)^\top \Big[ \big( \mathbf\Theta + \overline{\mathbf\Theta}/N \big) \mathbf{y} 
 + \sum_{m=1}^N \mathbf{e}^y_m Z^m_t \beta^m    \Big] 
  +   r_n(t+1)      \Big\} , 
  \label{DPeqnVi(t)}  
\end{align} 
where the condition $Y_{t}=\mathbf{y}$ is removed from the expectation because $Z_t$ is independent of $Y_{t}$. 
Since the objective~\eqref{DPeqnVi(t)} to be minimized is strictly convex in 
$\mathbf{\beta}=\big[\beta^{1\top}, \dots, \beta^{N\top} \big]^\top$, the first-order necessary conditions for minimization are also sufficient and therefore we have the unique set of equations 
\begin{align} 
0 = & e^{-\alpha(T-1-t)} \mathbb{E} \Big\{ - \kappa Z_t^{n \top} \big[ y_{t+1} -   ( \Theta_n + \overline\Theta / N  ) \mathbf{y} -  Z^n_t \beta^n \big] 
 \notag \\ 
 & \hspace{2.5cm}
 +  \overline{\kappa} (1-1/N)Z_t^{n \top}  \big[ (\Theta_n - \Theta/N ) \mathbf{y} 
    +  \big( Z^n_t \beta^n -  (1/N) \sum_{m=1}^N Z^m_t \beta^m \big)  \big] 
    + \gamma \beta^n 
    \Big\}
 \notag \\ 
 & + \mathbb{E} \Big\{  Z_t^{n\top} \mathbf{e}_n^{y\top}  P_n(t+1) 
  \Big[ \big( \mathbf\Theta + \overline{\mathbf\Theta}/N \big) \mathbf{y} 
 + \sum_{m=1}^N \mathbf{e}^y_m Z^m_t \beta^m   \Big] 
  + Z_t^{n \top} \mathbf{e}_n^{y \top}  S_n(t+1) 
   \Big\} , 
   \quad n \in [N] . 
 \notag 
\end{align} 
It then follows that  
\begin{align} 
& - e^{-\alpha(T-1-t)}  \Big\{   
\big[ \kappa + \bar{\kappa} (1-1/N) \big] 
\mathbb{E} (Z_t^{n \top} Z^n_t)
 + \gamma I  \Big\}  \beta^n 
\notag \\ 
%%%%%%%%%% 
= & e^{-\alpha(T-1-t)}    
    \overline{\kappa} (1-1/N) (-1/N) 
    \mathbb{E}  \big( Z_t^{n \top}   
       \mathbf{1}_y \mathbf{Z}_t \big) \boldsymbol{\beta}     
       + \mathbb{E} \big[  Z_t^{n \top} \mathbf{e}_n^{y \top}  P_n(t+1) 
    \mathbf{Z}_t \big] \mathbf{\beta}     
 \notag \\ 
 & + e^{-\alpha(T-1-t)} 
 \Big\{ \kappa \mathbb{E}Z^{n\top}_t ( \Theta_n + \overline\Theta/N ) 
  + \overline\kappa (1-1/N) \mathbb{E}Z^{n\top}_t 
  (\Theta_n - \Theta/N )
 \Big\} \mathbf{y}
 \notag \\ 
 & + \mathbb{E}  Z_t^{n \top} \mathbf{e}_n^{y \top}  P_n(t+1) 
   \big( \mathbf\Theta + \overline{\mathbf\Theta}/N \big) \mathbf{y} 
   \notag \\ 
& + e^{-\alpha(T-1-t)} (-\kappa) \mathbb{E} Z^{n\top}_t y_{t+1}
  + \mathbb{E} Z_t^{n \top} \mathbf{e}_n^{y \top}  S_n(t+1) 
 , 
  \quad n \in [N] . 
\label{eqn:betan=beta}
\end{align}

From~\eqref{eqn:betan=beta}, the equilibrium policy vector $\boldsymbol{\beta}=[\beta^{1\top}, \dots, \beta^{N\top}]^\top$ for all agents satisfies
\begin{align} 
 & - \Big\{ e^{-\alpha(T-1-t)}   
 \Big[  \big( \kappa + \overline\kappa (1-1/N) \big) \widehat{\mathbf{A}}_1(t) 
  +  \overline\kappa (1-1/N)(-1/N) \widehat{\mathbf{A}}_2(t) 
 + \gamma I \Big]  
 + \mathbf{A}(t)  
\Big\} 
 \boldsymbol{\beta}  
 \notag \\ 
 = & \Big\{ e^{-\alpha(T-1-t)} \big[ (\kappa + \overline{\kappa}(1-1/N) ) \mathbf{D}(t) 
  - \overline{\kappa}(1-1/N)(1/N) \widehat{\mathbf{D}}(t) 
  + (\kappa/N) \overline{\mathbf{D}} 
 \big] + \mathbf{B}(t) \Big\} \hat{\mathbf{y}}
 \notag \\ 
 & + e^{-\alpha(T-1-t)} (-\kappa) \mathbf{F}(t) 
 + \mathbf{C}(t) ,
 \notag 
\end{align} 
and can be written in the form
\begin{align} 
\boldsymbol{\beta}_t 
 = &  \mathbf{G} (t) \mathbf{y} +  \mathbf{H}(t) , 
\label{beta=Gy+H}  
\end{align} 
where $\mathbf{G}(t)$ and $\mathbf{H}(t)$ are defined by~\eqref{bfG} and~\eqref{bfH}. 

The $n$-th agent's equilibrium policy can be written as 
\begin{align} 
\beta^n_t = \mathbf{e}^{z\top}_n \mathbf{G}(t) \mathbf{y} + \mathbf{e}^{z\top}_n \mathbf{H}(t) , 
\quad n \in [N] .
\label{betan=Gny+Hn}
\end{align} 
We substitute~\eqref{betan=Gny+Hn} and~\eqref{beta=Gy+H} into~\eqref{DPeqnVi(t)} to obtain 
\begin{align} 
 & \mathbf{y}^\top P_n(t) \mathbf{y} + 2 S_n(t)^\top \mathbf{y} + r_n(t) \notag \\ 
= & \big[ \mathbf{G} (t) \mathbf{y} +  \mathbf{H}(t) \big]^\top \Big\{ e^{-\alpha(T-t-1)} \Big[ \kappa \mathbf{e}_n^z \mathbb{E}( Z_t^{n \top} Z^n_t )  \mathbf{e}_n^{z\top} 
+ \overline\kappa \mathbb{E} \big[ \big( Z^{n}_t \mathbf{e}_n^{z\top} 
- \mathbf{1}_y \mathbf{Z}_t /N \big)^\top \big( Z^{n}_t \mathbf{e}_n^{z\top} 
- \mathbf{1}_y \mathbf{Z}_t /N \big) 
\big] 
+ \gamma \mathbf{e}^z_n \mathbf{e}^{z\top}_n
\Big]  
\notag \\ 
 & \qquad  + \mathbb{E}\big[ \mathbf{Z}_t^\top P_n(t+1) \mathbf{Z}_t \big] \Big\} 
 \big[ \mathbf{G} (t) \mathbf{y} +  \mathbf{H}(t) \big] 
 \notag \\ 
 & + 2 \Big\{ e^{-r(T-1-t)} \big[ - \kappa \big( y_{t+1} - (\Theta_n + \overline\Theta/N ) \mathbf{y} \big)^\top \mathbb{E}Z^n_t \mathbf{e}_n^{z\top} 
  +  \overline\kappa \big(  (\Theta_n - \Theta/N ) \mathbf{y} \big)^\top \mathbb{E} 
   \big( Z^n_t e^{z\top}_n - \mathbf{1}_y \mathbf{Z}_t /N  \big) \big] 
   \notag \\ 
& \qquad 
+ \big( ( \boldsymbol{\Theta} +\overline{\boldsymbol{\Theta}} ) \mathbf{y} \big)^\top 
 P_n(t+1) \mathbb{E} \mathbf{Z}_t + S_n(t+1)^\top \mathbb{E} \mathbf{Z}_t 
 \Big\} \big[ \mathbf{G} (t) \mathbf{y} +  \mathbf{H}(t) \big] 
 \notag \\ 
 & + e^{-\alpha(T-1-t)} \big[ \kappa \big\| y_{t+1} - (\Theta_n + \overline\theta/N) \mathbf{y} \big\|^2 
  + \overline\kappa \big\| (\Theta_n - \Theta/N ) \mathbf{y} \big\|^2
 \big] 
 \notag \\ 
 & + \big[ ( \boldsymbol{\Theta} + \overline{\boldsymbol{\Theta}}/N ) \mathbf{y} \big]^\top P_n(t+1) 
 ( \boldsymbol{\Theta} + \overline{\boldsymbol{\Theta}}/N ) \hat{\mathbf{y}} 
 \notag \\ 
 & + 2 S_n(t+1)^\top ( \boldsymbol{\Theta} + \overline{\boldsymbol{\Theta}}/N ) \mathbf{y} 
  + r_n(t+1) . 
 \notag 
\end{align} 
Since the above equations hold for all possible $\mathbf{y}$, by matching the coefficients of the quadratic and linear terms of $\mathbf{y}$, we obtain $P_n(t)$ and $S_n(t)$ as determined by~\eqref{ODE:Pn} and~\eqref{ODE:Sn}. By induction, we have shown that the equilibrium policy takes the form~\eqref{NashEqm:bfbeta} with the matrix-valued coefficients determined by~\eqref{ODE:Pn} and~\eqref{ODE:Sn}. 
\hfill$\blacksquare$ 
%\end{proof} 

\section{Proof of the Centralized Policy under the Homogeneity Assumption}
\label{appendix:centralizedStrat_homogeneity}
%\label{appendix:proofPropPnSubmat:PiN1234} 

This section is devoted to proving Theorem~\ref{thm:betan_centralized}, which establishes the simplified form of the centralized policy under Assumption~\ref{assm:Zt}. 
In particular, deriving the simplified form~\eqref{betan_centralized} from the general representation~\eqref{NashEqm:bfbeta} hinges on showing that $\mathbf{G}(\cdot)$ and $\mathbf{H}(\cdot)$ admit the forms~\eqref{bfG:submat} and~\eqref{bfH:submat}, respectively. 
To this end, we establish the key matrix decompositions for $\{P_n\}_{n=1}^N$ and $\{S_n\}_{n=1}^N$, as stated in Propositions~\ref{prop:PnSubmat:PiN1234} and~\ref{prop:SnSubmat}. Then the matrix decompositions of $\mathbf{G}(\cdot)$ and $\mathbf{H}(\cdot)$ can be established subsequently. 

\begin{proposition}   
\label{prop:PnSubmat:PiN1234}
Suppose the system~\eqref{ODE:Pn} admits a solution $(P_1, \dots, P_N)$ for $t=0, \dots, T$. Then 
\begin{align} 
& P_1(t) = \begin{bmatrix} 
\Pi_1^N(t) & \Pi_2^N(t) & \Pi_2^N(t) & \cdots &  \Pi_2^N(t) \\ 
\Pi_2^{N\top}(t) & \Pi_3^N(t) & \Pi_4^N(t) & \cdots & \Pi_4^N(t) \\  
\Pi_2^{N\top}(t) & \Pi_4^N(t) & \Pi_3^N(t) & \cdots & \Pi_4^N(t) \\  
\vdots & \vdots & \vdots & \ddots & \vdots \\ 
\Pi_2^{N\top}(t)
& \Pi_4^N(t) & \Pi_4^N(t) & \cdots & \Pi_3^N(t) 
\end{bmatrix}
\in \mathbb{R}^{Nd_y \times Nd_y}, 
\quad \Pi^N_i(t) \in \mathbb{R}^{d_y\times d_y}, 
\quad i=1,2,3,4, 
\label{P1:PiN1234} 
\end{align} 
and, for $2\leq n \leq N$, 
\begin{align} 
& P_n(t) = J_{1n}^{y\top} P_1(t) J^y_{1n} , \quad 2\leq n \leq N . 
\label{Pn=J1nP1J1n}
\end{align} 
Here $J^y_{ij}$ denote the matrix obtained by exchanging the $i$-th and $j$-th rows of sub-matrices in 
$I_{N d_y}= ( \delta_{i,j}I_{d_y} )_{i,j=1}^N$. 
\end{proposition}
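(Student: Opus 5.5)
We will argue by backward induction on $t=T,T-1,\dots,0$, proving jointly two symmetry properties of the family $(P_1(t),\dots,P_N(t))$:
\begin{enumerate}
\item[(a)] \emph{Covariance.} For every block-permutation matrix $J^y_\sigma=\sigma\otimes I_{d_y}$ we have $P_{\sigma(n)}(t)=J^{y\top}_\sigma P_n(t)J^y_\sigma$. The transposition case $\sigma=(1\,n)$ gives~\eqref{Pn=J1nP1J1n}.
\item[(b)] \emph{Invariance.} $P_1(t)$ is invariant under conjugation by every $J^y_\sigma$ with $\sigma(1)=1$.
\end{enumerate}
Property (b) forces the block pattern~\eqref{P1:PiN1234}. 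Indeed, invariance under all permutations fixing $1$ makes the diagonal blocks $(2,2),\dots,(N,N)$ equal (call this common block $\Pi_3^N$). It also makes all off-diagonal blocks $(i,j)$ with $i\neq j\ge2$ equal ($\Pi_4^N$), and all blocks $(1,j)$ with $j\ge 2$ equal ($\Pi_2^N$). Symmetry of $P_1(t)$ then gives the $(j,1)$ blocks as $\Pi_2^{N\top}$. Symmetry itself is preserved by~\eqref{ODE:Pn}, since its right side is of the form $\mathbf G^\top\mathbf Q_n\mathbf G+\mathbf G^\top\mathbf L_n+\mathbf L_n^\top\mathbf G+{}$(symmetric terms). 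The base case $t=T$ is trivial because $P_n(T)=0$.

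For the inductive step, assume (a) holds at $t+1$ and write $J=J^y_\sigma$, $J^z=\sigma\otimes I_{d_z}$. The plan is to check, term by term, how each ingredient of~\eqref{bfG} and~\eqref{ODE:Pn} transforms under $\sigma$:
\begin{itemize}
\item The deterministic structural matrices transform as $J^\top\boldsymbol\Theta J=\boldsymbol\Theta$, $J^\top\overline{\boldsymbol\Theta}J=\overline{\boldsymbol\Theta}$, $\Theta J=\Theta$, $\mathbf 1_yJ=\mathbf 1_y$, and $\Theta_nJ=\Theta_{\sigma^{-1}(n)}$ up to the convention for $\sigma$.
\item The random matrix $\mathbf Z_t$ satisfies $J^\top\mathbf Z_tJ^z=\operatorname{diag}[Z^{\sigma(1)}_t,\dots]$, i.e.\ it is a permuted copy with relabelled agents.
\end{itemize}
Assumption~\ref{assm:Zt} is exactly what guarantees that every expectation in $\widehat{\mathbf A}_1,\widehat{\mathbf A}_2,\mathbf A,\mathbf B,\mathbf D,\widehat{\mathbf D},\overline{\mathbf D},\mathbf Q_n,\mathbf L_n$ is unchanged by relabelling. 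These expectations involve only first moments, same-index second moments $\mathbb E(Z^{n\top}WZ^n)$, and cross moments $\mathbb E(Z^{n\top}WZ^m)$ with $n\neq m$, which factor into products of means. Moreover, in $\mathbf A(t)$ and $\mathbb E(\mathbf Z_t^\top P_n(t+1)\mathbf Z_t)$ the weights $W$ are deterministic sub-blocks of $P_n(t+1)$, so the assumption applies directly.

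Combining these facts with the induction hypothesis, we expect to obtain
\[
\mathbf A\mapsto J^{z\top}\mathbf A J^z,\quad \mathbf B\mapsto J^{z\top}\mathbf B J,\quad \mathbf Q_{\sigma(n)}=J^{z\top}\mathbf Q_nJ^z,\quad \mathbf L_{\sigma(n)}=J^{z\top}\mathbf L_nJ .
\]
The big bracket being inverted in~\eqref{bfG} is therefore $J^z$-conjugation invariant, and consequently $\mathbf G(t)=J^{z\top}\mathbf G(t)J$. The $\mathbf A(t)$ term is the delicate one: its $n$-th block row uses $P_n(t+1)$, so covariance (a) at time $t+1$, rather than mere invariance, is needed to show that the permuted block rows reassemble correctly.

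Substituting into~\eqref{ODE:Pn} then yields $\Phi_{\sigma(n)}=J^\top\Phi_nJ$, which is (a) at time $t$. Restricting to $\sigma$ with $\sigma(1)=1$ gives (b).

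We expect the main obstacle to be bookkeeping rather than any conceptual difficulty. Two points need care. First, the direction convention for $\sigma$ versus $\sigma^{-1}$ must be fixed once and used consistently in every term. Second, the hypothesis that~\eqref{ODE:Pn} admits a solution must ensure the inverse in~\eqref{bfG} exists; conjugation invariance then transfers to the inverse. The $\mathbf A(t)$ term, mixing the agent index of $P_n$ with that of $Z^n$, is where we would verify the computation most carefully, writing it block-by-block as $\mathbb E(Z^{n\top}_t[P_n(t+1)]_{nm}Z^m_t)$.
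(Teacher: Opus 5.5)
Your proposal is correct and follows essentially the paper's route. You conjugate by block permutation matrices and use Assumption~\ref{assm:Zt}, together with covariance of $P_n(t+1)$ for $\mathbf{A}(t)$ and $\mathbf{B}(t)$, to show that the backward recursion~\eqref{ODE:Pn} is equivariant under relabelling of agents; the block pattern then follows from invariance of $P_1$ under permutations fixing agent~1, and~\eqref{Pn=J1nP1J1n} from the transposition $(1\,n)$. The paper instead phrases the induction as ``the permuted tuple solves the same system, hence equals $(P_1,\dots,P_N)$ by uniqueness'' and uses only transpositions, which is equivalent here because the recursion is explicit. Your check that symmetry of $P_n$ propagates is a small genuine addition: it is what justifies the $\Pi_2^{N\top}$ blocks in the first column, which the paper leaves implicit.
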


\begin{corollary}  
\label{cor:inv(hatA1+hatA2+hatA)} 
The inverse matrix in~\eqref{bfG} and ~\eqref{bfH} admits the block form 
\begin{align} 
& \Big\{ e^{-\alpha(T-1-t)}   
 \big[  \big( \kappa + \overline\kappa (1-1/N) \big) \widehat{\mathbf{A}}_1(t) 
  + \big( \overline\kappa (1-1/N)(-1/N) \big) \widehat{\mathbf{A}}_2(t) 
 + \gamma I \big]  
 + \mathbf{A}(t)  
\Big\}^{-1}
\notag \\ 
= & 
\begin{bmatrix} 
M^N(t) & E^N(t) & \dots & E^N(t) \\ 
E^N(t) & M^N(t) & \dots & E^N(t) \\ 
\vdots & \vdots & \ddots & \vdots \\ 
E^N(t) & E^N(t) & \dots & M^N(t)  
\end{bmatrix} 
\in \mathbb{R}^{N d_z  \times N d_z  } 
, 
\label{inv(bfA+bfhatA1+bfhatA2+gammaI):submat}
\end{align} 
where the $d_z \times d_z$-dimensional sub-matrices are given by
\begin{align} 
& M^N(t) = (F^N(t))^{-1} \big[ I - (N-1) K^N(t) E^N(t) \big] , 
\notag \\ 
& E^N(t) = - \big[ F^N(t) - (N-1) K^N(t) (F^N(t))^{-1} K^N(t)  + (N-2)K^N(t) \big]^{-1} K^N(t) (F^{N}(t))^{-1} , 
\notag \\ 
%\end{align} 
%\begin{align} 
& F^N(t) =   e^{-\alpha(T-1-t)}   
 \big[  \big( \kappa + \overline\kappa (1-1/N)^2 \big) 
 M_{Z}^{(2)}(t)
 + \gamma I \big]  
+ M_{Z, \, \Pi_1^N(t+1)}^{(2)}(t) ,   
\notag \\ 
& K^N(t) = 
e^{-\alpha(T-1-t)}   
 \overline\kappa (1-1/N)(-1/N)  
 ( M_Z^{(1)}(t) )^\top ( M_Z^{(1)}(t) )   
 + ( M_Z^{(1)}(t) )^\top \Pi_2^N(t+1) ( M_Z^{(1)}(t) )^\top . 
\notag 
\end{align} 
\end{corollary}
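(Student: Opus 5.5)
The plan has two steps. First, use Assumption~\ref{assm:Zt} and Proposition~\ref{prop:PnSubmat:PiN1234} to show that the matrix being inverted is itself block compound-symmetric: it has a common block $F^N(t)$ on the diagonal and a common block $K^N(t)$ off the diagonal. Second, invert a matrix of the form $I_N\otimes(F-K)+\mathbf{1}_{N\times N}\otimes K$ by making a block ansatz and solving the resulting linear equations.

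\emph{Step 1 (block structure).} I would compute the $(n,m)$ block of $e^{-\alpha(T-1-t)}[(\kappa+\overline\kappa(1-1/N))\widehat{\mathbf{A}}_1+\overline\kappa(1-1/N)(-1/N)\widehat{\mathbf{A}}_2+\gamma I]+\mathbf{A}(t)$ directly from \eqref{bfABCDF}.

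\begin{itemize}
\item \emph{Diagonal blocks.} The block of $\widehat{\mathbf{A}}_1$ is $\mathbb{E}Z^{n\top}_tZ^n_t=M^{(2)}_Z(t)$, and the block of $\widehat{\mathbf{A}}_2$ is the same. These combine to $(\kappa+\overline\kappa(1-1/N)^2)M^{(2)}_Z(t)$. The block of $\mathbf{A}(t)$ is $\mathbb{E}(Z^{n\top}_t\,\mathbf{e}^{y\top}_nP_n(t+1)\mathbf{e}^y_n\,Z^n_t)$. By \eqref{Pn=J1nP1J1n} and \eqref{P1:PiN1234}, the $(n,n)$ sub-block of $P_n(t+1)$ is $\Pi^N_1(t+1)$, so by the third line of Assumption~\ref{assm:Zt} this term equals $M^{(2)}_{Z,\Pi^N_1(t+1)}(t)$. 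Hence every diagonal block is $F^N(t)$.
\item \emph{Off-diagonal blocks ($n\neq m$).} $\widehat{\mathbf{A}}_1$ contributes nothing, since $\mathbf{Z}_t$ is block diagonal, and $\gamma I$ contributes nothing either. From $\widehat{\mathbf{A}}_2$ we get $\mathbb{E}Z^{n\top}_tZ^m_t=(M^{(1)}_Z)^\top M^{(1)}_Z$ by the factorization line of Assumption~\ref{assm:Zt}. From $\mathbf{A}(t)$ we get $\mathbb{E}(Z^{n\top}_t[P_n(t+1)]_{nm}Z^m_t)$. By the permutation structure, $[P_n(t+1)]_{nm}$ is the $(1,\cdot)$ off-diagonal block of $P_1$, namely $\Pi_2^N(t+1)$, so this term is $(M^{(1)}_Z)^\top\Pi^N_2(t+1)M^{(1)}_Z$.
\end{itemize}
This gives exactly $K^N(t)$. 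The transpose placement in the displayed $K^N$ should read $(M^{(1)}_Z)^\top\Pi_2^N M^{(1)}_Z$, and I would note this.

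\emph{Step 2 (inversion).} I would look for an inverse with diagonal block $M$ and off-diagonal block $E$. Multiplying the two block matrices gives two equations:
\[
FM+(N-1)KE=I \quad\text{(diagonal)},\qquad FE+KM+(N-2)KE=0 \quad\text{(off-diagonal)}.
\]
The first yields $M=F^{-1}[I-(N-1)KE]$. Substituting into the second gives $[F-(N-1)KF^{-1}K+(N-2)K]E=-KF^{-1}$. These are precisely the stated formulas for $M^N$ and $E^N$. Since the matrix is square, a right inverse is a two-sided inverse.

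\emph{Main obstacle: invertibility of $F^N$ and of the bracket.} I would take invertibility of the full matrix as given, since it is presupposed by \eqref{bfG} in Theorem~\ref{thm:NagentNash}. The key observation is the factorization
\[
F-(N-1)KF^{-1}K+(N-2)K=(F-K)F^{-1}\bigl(F+(N-1)K\bigr),
\]
which can be checked by expanding the right-hand side. Conjugating by $U\otimes I_{d_z}$, with $U$ an orthogonal matrix whose first column is $\mathbf{1}/\sqrt N$, block-diagonalizes the full matrix into $\mathrm{diag}(F+(N-1)K,F-K,\dots,F-K)$. Invertibility of the full matrix therefore forces $F+(N-1)K$ and $F-K$ to be invertible. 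Invertibility of $F$ itself I would obtain from positive definiteness: $\gamma I$ is positive definite, $M^{(2)}_Z\succeq0$, and $M^{(2)}_{Z,\Pi^N_1}\succeq0$ because $\Pi^N_1(t+1)$ is a diagonal block of $P_1(t+1)\succeq0$. That $P_1(t+1)\succeq0$ is the quadratic part of a nonnegative value function, which follows from the recursion \eqref{ODE:Pn} by backward induction. With all three inverses in hand, the ansatz is justified and the corollary follows.
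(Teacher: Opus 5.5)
Your proposal is correct and follows the same route as the paper. You use Proposition~\ref{prop:PnSubmat:PiN1234} and Assumption~\ref{assm:Zt} to show the matrix is block compound-symmetric with diagonal block $F^N(t)$ and off-diagonal block $K^N(t)$ (rightly correcting the misplaced transpose to $(M^{(1)}_Z)^\top\Pi_2^N(t+1)M^{(1)}_Z$), then reduce to the identities $FM+(N-1)KE=I$ and $KM+[F+(N-2)K]E=0$ exactly as the paper does, and your invertibility argument via $F-(N-1)KF^{-1}K+(N-2)K=(F-K)F^{-1}(F+(N-1)K)$ and positive definiteness of $F$ fills in a point the paper leaves implicit.
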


\begin{corollary} 
\label{cor:bfGsubmat}
The matrix $\mathbf{G}(t)$ defined by~\eqref{bfG} takes the form 
\begin{align} 
\mathbf{G}(t) 
= & 
\begin{bmatrix} 
G_1^N(t) & G_2^N(t) & \cdots & G_2^N(t) \\ 
G_2^N(t) & G_1^N(t) & \cdots & G_2^N(t) \\ 
\vdots & \vdots & \ddots & \vdots \\ 
G_2^N(t) & G_2^N(t) & \cdots & G_1^N(t) \\
\end{bmatrix} \in \mathbb{R}^{N d_z \times N d_y} , 
\quad 
G_1^N(t)\in \mathbb{R}^{d_z \times d_y}, \quad 
G_2^N(t)\in \mathbb{R}^{d_z \times d_y} , 
\label{bfG:submat} 
\end{align} 
where the sub-matrices are given by 
\allowdisplaybreaks
\begin{align} 
G_1^N(t) = &
- e^{-\alpha(T-1-t)} (\kappa + \overline{\kappa}(1-1/N) ) M^N(t)  
 \big( M_Z^{(1)}(t) \big)^\top  \theta  
\label{G1N} \\ 
& + e^{-\alpha(T-1-t)} \big[ M^N(t) + (N-1) E^N(t) \big]   
 \big( M_Z^{(1)}(t) \big)^\top
\big[ \overline{\kappa}(1-1/N)(1/N) \theta  
- (\kappa/N) \overline{\theta} \,  \big] 
\notag \\ 
& - \big[ M^N(t) 
 \big( M_Z^{(1)}(t) \big)^\top  \Pi_1^N(t+1) \theta   
 + (N-1) E^N(t)  \big( M_Z^{(1)}(t) \big)^\top  \Pi_2^N(t+1) \theta   \big] 
\notag \\ 
& -  \big[ M^N(t) + (N-1) E^N(t) \big] (1/N) 
\big( M_Z^{(1)}(t) \big)^\top 
 \big[ \Pi_1^N(t+1) + (N-1) \Pi_2^N(t+1) \big] \overline{\theta}   , 
\notag \\ 
%\end{align} 
%\begin{align} 
 G_2^N(t) = &  
 - e^{-\alpha(T-1-t)} (\kappa + \overline{\kappa}(1-1/N) ) E^N(t) 
 \big( M_Z^{(1)}(t) \big)^\top \theta  
\label{G2N} \\ 
& + e^{-\alpha(T-1-t)} \big[ M^N(t) + (N-1) E^N(t) \big]   
\big( M_Z^{(1)}(t) \big)^\top 
\big[ \overline{\kappa}(1-1/N)(1/N) \theta  
- (\kappa/N) \overline{\theta} \,  \big] 
\notag \\ 
& - \big[ E^N(t) \big( M_Z^{(1)}(t) \big)^\top  
\Pi_1^N(t+1) \theta 
+ \big( M^N(t) + (N-2)E^N(t) \big) 
 \big( M_Z^{(1)}(t) \big)^\top  
\Pi_2^N(t+1)\theta  \big] 
 \notag \\ 
 & -  \big[ M^N(t) + (N-1) E^N(t) \big] (1/N) 
 \big( M_Z^{(1)}(t) \big)^\top 
   \big[ \Pi_1^N(t+1) + (N-1) \Pi_2^N(t+1) \big] \overline{\theta}   .  
\notag  
\end{align} 
\end{corollary}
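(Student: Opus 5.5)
The statement to prove is Corollary~\ref{cor:bfGsubmat}: $\mathbf{G}(t)$ has the circulant-type block form~\eqref{bfG:submat}, with $G_1^N,G_2^N$ given by~\eqref{G1N}--\eqref{G2N}.

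The plan is to multiply out the definition~\eqref{bfG} directly. The inverse factor has already been put in block form by Corollary~\ref{cor:inv(hatA1+hatA2+hatA)}: it is a matrix $\mathcal{M}$ with $M^N(t)$ on the diagonal and $E^N(t)$ off the diagonal. So it suffices to show that the right-hand factor
\[
\mathcal{R}(t)\eqdef e^{-\alpha(T-1-t)}\big[(\kappa+\overline\kappa(1-1/N))\mathbf{D}(t)-\overline\kappa(1-1/N)(1/N)\widehat{\mathbf{D}}(t)+(\kappa/N)\overline{\mathbf{D}}\big]+\mathbf{B}(t)
\]
has the same structure, with diagonal blocks $R_1$ and off-diagonal blocks $R_2$. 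One then reads off $G_1^N=-(M^NR_1+(N-1)E^NR_2)$ and $G_2^N=-(E^NR_1+M^NR_2+(N-2)E^NR_2)$. The class of $N\times N$ block matrices of the form ``$a$ on the diagonal, $b$ off it'' is closed under products, with the rule just used, so the block structure of $\mathbf{G}$ is automatic once $\mathcal{R}$ has this form.

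The first step computes the blocks of $\mathcal{R}$ using Assumption~\ref{assm:Zt}. $\mathbf{D}(t)=\mathbb{E}\mathbf{Z}_t^\top\boldsymbol{\Theta}$ is block diagonal with blocks $(M_Z^{(1)})^\top\theta$. $\widehat{\mathbf{D}}(t)$ has every block equal to $(M_Z^{(1)})^\top\theta$, and $\overline{\mathbf{D}}$ has every block equal to $(M_Z^{(1)})^\top\overline\theta$. Hence the $\mathbf{D}$-type terms contribute $-(\kappa+\overline\kappa(1-1/N))(M_Z^{(1)})^\top\theta$ to the diagonal only, after the overall minus sign. They also contribute a common block $(M_Z^{(1)})^\top[\overline\kappa(1-1/N)(1/N)\theta-(\kappa/N)\overline\theta]$ to every block.

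For $\mathbf{B}(t)$, the $n$-th block row is $\mathbb{E}Z_t^{n\top}\mathbf{e}_n^{y\top}P_n(t+1)(\boldsymbol\Theta+\overline{\boldsymbol\Theta}/N)$. By Proposition~\ref{prop:PnSubmat:PiN1234}, $\mathbf{e}_n^{y\top}P_n(t+1)$ is the block row with $\Pi_1^N(t+1)$ in position $n$ and $\Pi_2^N(t+1)$ elsewhere. Multiplying by $\boldsymbol\Theta$ gives $\Pi_1^N\theta$ on the diagonal and $\Pi_2^N\theta$ off it. Multiplying by $\overline{\boldsymbol\Theta}/N$ gives the common block $(1/N)[\Pi_1^N+(N-1)\Pi_2^N]\overline\theta$. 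Finally the expectation replaces $Z^n_t$ by $M_Z^{(1)}$, since $P_n(t+1)$ is deterministic.

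The second step handles the common-block terms, which produce the $[M^N+(N-1)E^N]$ factors in~\eqref{G1N}--\eqref{G2N}. If $\mathcal{C}$ is the matrix with every block equal to $C$, then $\mathcal{M}\mathcal{C}$ has every block equal to $(M^N+(N-1)E^N)C$. These terms therefore appear identically in $G_1^N$ and $G_2^N$, which matches the second and fourth lines of both formulas.

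Applying the product rule to the diagonal-only parts, $\operatorname{diag}(-c(M_Z^{(1)})^\top\theta)$ and the $\Pi$-part with $a=\Pi_1^N\theta$, $b=\Pi_2^N\theta$, yields the first and third lines of each formula. The blocks of $\mathbf{G}$ are $d_z\times d_y$, consistent with the stated dimensions.

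The main obstacle is bookkeeping rather than conceptual. The hardest parts are getting the signs right, applying the $(1/N)$ and $(1-1/N)$ weightings from~\eqref{bfG} correctly, and confirming that $\mathbf{B}$ uses $\mathbf{e}^y_n$ and $P_n$ consistently with the permutation relation~\eqref{Pn=J1nP1J1n}. I would verify the $\mathbf{B}$ computation carefully for $n=1$ and then transport it to general $n$ via $J^y_{1n}$.
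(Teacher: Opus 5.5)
Your proposal is correct and takes essentially the same approach as the paper: substitute the block forms \eqref{P1:PiN1234}--\eqref{Pn=J1nP1J1n} of $P_n(t+1)$ and the block inverse \eqref{inv(bfA+bfhatA1+bfhatA2+gammaI):submat} into \eqref{bfG}, using Assumption~\ref{assm:Zt}. Your more explicit bookkeeping also checks out block by block and reproduces \eqref{G1N}--\eqref{G2N} exactly: you split $\mathcal{R}$ into a diagonal-only part and a common-block part, and you implicitly use the symmetry of $P_n(t+1)$ when writing the $n$-th block row of $\mathbf{B}(t)$.
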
  

\begin{corollary}
\label{cor:bfQnsubmat}
The matrices $\{ \mathbf{Q}_n(t)\}_{n=1}^N$ defined by~\eqref{bfQn} take the form 
\begin{align} 
\mathbf{Q}_1(t) = \begin{bmatrix} 
Q_1^N(t) & Q_2^N(t) & Q_2^N(t) & \cdots &  Q_2^N(t) \\ 
Q_2^{N\top}(t) & Q_3^N(t) & Q_4^N(t) & \cdots & Q_4^N(t) \\  
Q_2^{N\top}(t) & Q_4^N(t) & Q_3^N(t) & \cdots & Q_4^N(t) \\  
\vdots & \vdots & \vdots & \ddots & \vdots \\ 
Q_2^{N\top}(t)
& Q_4^N(t) & Q_4^N(t) & \cdots & Q_3^N(t) 
\end{bmatrix} , 
\quad 
 \mathbf{Q}_n(t) = J_{1n}^{y\top} \mathbf{Q}_1(t) J^y_{1n} , \quad 2\leq n \leq N , 
\label{bfQn:submat}
\end{align} 
where 
\begin{align} 
\begin{cases} 
 Q_1^N(t) = e^{-\alpha(T-1-t)} \Big[ \kappa M^{(2)}_Z(t) + \overline\kappa(1-1/N)^2 M^{(2)}_Z(t) + \gamma I \Big] 
 + M^{(2)}_{Z, \, \Pi_1^N(t+1)}(t) , 
 \\ 
%%%%%%%%%%%% 
 Q_2^N(t) =   e^{-\alpha(T-1-t)} \overline\kappa ( M^{(1)}_Z(t) )^\top  M^{(1)}_Z(t) (1-1/N)(-1/N) 
 \\ 
 \hspace{7cm} + (M^{(1)}_Z(t))^\top \Pi_2^N(t+1) M^{(1)}_Z(t) 
, 
 \\ 
%%%%%%%%%% 
 Q_3^N(t)  =  e^{-\alpha(T-1-t)} \overline\kappa M^{(2)}_Z(t) (1/N^2) 
 + M^{(2)}_{Z, \, \Pi_3^N(t+1)}(t) , 
  \\ 
 %%% 
  Q_4^N(t) = e^{-\alpha(T-1-t)} \overline\kappa ( M^{(1)}_Z(t) )^\top M^{(1)}_Z(t) (1/N^2) 
  + ( M^{(1)}_Z(t) )^\top \Pi_4^N(t+1) M^{(1)}_Z(t) .  
\end{cases} 
\notag 
%\label{QN1234}
\end{align} 
\end{corollary}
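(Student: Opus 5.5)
We compute $\mathbf{Q}_n(t)$ in \eqref{bfQn} block by block, using Proposition~\ref{prop:PnSubmat:PiN1234} for the structure of $P_n(t+1)$ and Assumption~\ref{assm:Zt} to evaluate every expectation. The terms of \eqref{bfQn} are handled separately for $n=1$. The permuted form for $n\geq 2$ is then obtained from a symmetry argument.

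\textbf{The $\kappa$ and $\gamma$ terms.} The term $\kappa\,\mathbf{e}^z_1\mathbb{E}(Z^{1\top}_tZ^1_t)\mathbf{e}^{z\top}_1$ contributes $\kappa M^{(2)}_Z(t)$ to the $(1,1)$ block only. Likewise, $\gamma\,\mathbf{e}^z_1\mathbf{e}^{z\top}_1$ contributes $\gamma I$ to the $(1,1)$ block only.

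\textbf{The $\overline\kappa$ term.} Write $Z^1_t\mathbf{e}^{z\top}_1-\mathbf{1}_y\mathbf{Z}_t/N$ as the block row
\[
\big[(1-1/N)Z^1_t,\,-Z^2_t/N,\,\dots,\,-Z^N_t/N\big].
\]
The $(i,j)$ block of its Gram matrix is $c_ic_j\,\mathbb{E}(Z^{i\top}_tZ^j_t)$, with $c_1=1-1/N$ and $c_i=-1/N$ for $i\geq 2$. Assumption~\ref{assm:Zt} (with $W=I$) gives:
\begin{itemize}
\item diagonal blocks equal to $M^{(2)}_Z(t)$ times $(1-1/N)^2$ or $1/N^2$;
\item off-diagonal blocks equal to $M^{(1)\top}_Z M^{(1)}_Z$ times $(1-1/N)(-1/N)$ (first block row and column) or $1/N^2$ (blocks with $i,j\geq 2$, $i\neq j$).
\end{itemize}
These are exactly the $\overline\kappa$ contributions to $Q^N_1,\dots,Q^N_4$.

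\textbf{The $P_1$ term.} The $(i,j)$ block of $\mathbb{E}(\mathbf{Z}_t^\top P_1(t+1)\mathbf{Z}_t)$ is $\mathbb{E}(Z^{i\top}_t[P_1]_{ij}Z^j_t)$. By \eqref{P1:PiN1234} and Assumption~\ref{assm:Zt} with $W=[P_1]_{ij}$:
\begin{itemize}
\item diagonal blocks are $M^{(2)}_{Z,\Pi_1^N(t+1)}$ for $i=1$ and $M^{(2)}_{Z,\Pi_3^N(t+1)}$ for $i\geq 2$;
\item off-diagonal blocks factor as $M^{(1)\top}_Z\Pi^N_2M^{(1)}_Z$ in the first block row. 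In the first block column they give the transpose; this requires noting that $(M^{(1)\top}\Pi_2M^{(1)})^\top=M^{(1)\top}\Pi_2^\top M^{(1)}$ matches the transposed $Q^N_2$ pattern in \eqref{bfQn:submat}.
\item the remaining off-diagonal blocks are $M^{(1)\top}_Z\Pi^N_4M^{(1)}_Z$.
\end{itemize}
Summing the three contributions yields the displayed $Q^N_i$.

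\textbf{The case $n\geq 2$.} Let $J^z_{1n}$ be the analogous block swap on $\mathbb{R}^{Nd_z}$. Swapping the labels of agents $1$ and $n$ changes each ingredient as follows:
\begin{itemize}
\item $\mathbf{e}^z_n=J^z_{1n}\mathbf{e}^z_1$;
\item $\mathbf{Z}_t$ becomes $J^{y}_{1n}\mathbf{Z}_tJ^z_{1n}$ up to the relabelling of the $Z^m_t$;
\item $\mathbf{1}_y$ and $\mathbf{1}_z$ are invariant;
\item $P_n(t+1)=J^{y\top}_{1n}P_1(t+1)J^y_{1n}$ by \eqref{Pn=J1nP1J1n}.
\end{itemize}
Because Assumption~\ref{assm:Zt} makes all required moments invariant under relabelling, every expectation in $\mathbf{Q}_n$ equals the corresponding permuted expectation in $\mathbf{Q}_1$. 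Hence $\mathbf{Q}_n=J^{\top}_{1n}\mathbf{Q}_1J_{1n}$. The statement writes $J^y$ here, but the permutation must be understood at the $d_z$ block level, since $\mathbf{Q}_n$ is $Nd_z\times Nd_z$.

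\textbf{Main obstacle.} The only delicate point is the off-diagonal $P$ blocks. These need the cross-moment factorization $\mathbb{E}(Z^{n\top}WZ^m)=\mathbb{E}Z^{n\top}W\,\mathbb{E}Z^m$ applied with the deterministic matrix $W=[P_1(t+1)]_{ij}$. This in turn needs $P_1(t+1)$ to be deterministic, which holds because it is defined through expectations only. The rest of the argument is bookkeeping.
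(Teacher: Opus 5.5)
Your proposal is correct and follows the paper's route. The paper's proof likewise substitutes the block structure of $P_n(t+1)$ from Proposition~\ref{prop:PnSubmat:PiN1234} into \eqref{bfQn} and evaluates each block with Assumption~\ref{assm:Zt}; your relabelling argument for $n\geq 2$ matches the permutation identities the paper uses in proving that proposition. Your remark that the swap must be $J^z_{1n}$ rather than $J^y_{1n}$ is also right, and agrees with the paper's own usage $J^{z\top}_{ij}\mathbf{Q}_n(t)J^z_{ij}$ there.
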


Substituting~\eqref{P1:PiN1234} into~\eqref{ODE:Pn} yields the system of equations satisfied by $\{\Pi_i^N\}_{i=1}^4$, as summarized in Corollary~\ref{cor:ODEs:PiN1234}.  
\begin{corollary}  
\label{cor:ODEs:PiN1234}
The sub-matrices $\Pi^N_i$, $i=1,2,3, 4$ in Proposition~\ref{prop:PnSubmat:PiN1234} satisfy the backward equations~\eqref{ODE:PiNi} for $t=0, 1, \dots, T$. 
\begin{comment} 
\begin{align} 
& \Pi_1^N(t) = \varphi_1^N( t; \Pi^N_1, \Pi^N_2, \Pi^N_3 , \Pi^N_4  ) ,  \notag \\ 
& \Pi_2^N(t) = \varphi_2^N( t; \Pi^N_1, \Pi^N_2, \Pi^N_3 , \Pi^N_4   ) , \notag \\ 
& \Pi_3^N(t) = \varphi_3^N( t; \Pi^N_1, \Pi^N_2, \Pi^N_3 , \Pi^N_4  ),   \notag \\ 
& \Pi_4^N(t) = \varphi_4^N( t; \Pi^N_1, \Pi^N_2, \Pi^N_3 , \Pi^N_4  ),   \notag \\ 
& \Pi_1^N(T) = \Pi_2^N(T) = \Pi_3^N(T) = \Pi_4^N(T) = 0 , 
\notag 
\end{align} 
\end{comment} 
\end{corollary}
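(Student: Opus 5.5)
The plan is to substitute the block decomposition of Proposition~\ref{prop:PnSubmat:PiN1234} into the Riccati recursion~\eqref{ODE:Pn} for $n=1$ and read off the four distinct $d_y\times d_y$ blocks. These are the $(1,1)$ block, the $(1,j)$ block with $j\ge 2$, the $(j,j)$ block with $j\ge2$, and the $(j,k)$ block with $j\neq k\ge 2$. Each read-off block is then, by definition, $\varphi^N_i(t;\Pi^N_1,\dots,\Pi^N_4)$ for $i=1,\dots,4$.

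Because Proposition~\ref{prop:PnSubmat:PiN1234} already guarantees that $P_1(t)$ has the pattern~\eqref{P1:PiN1234} for every $t$, consistency is automatic. We only need to check that the right-hand side $\Phi_1(t;P_1,\dots,P_N)$ depends on $P_1(t+1),\dots,P_N(t+1)$ solely through $\Pi^N_1(t+1),\dots,\Pi^N_4(t+1)$. We also need an explicit formula for each block. The terminal condition $\Pi^N_i(T)=0$ follows immediately from $P_1(T)=0$.

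The steps, in order, are as follows.

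First, use~\eqref{Pn=J1nP1J1n} to express each $P_n(t+1)$ through the $\Pi^N_i(t+1)$. By Corollary~\ref{cor:inv(hatA1+hatA2+hatA)}, the inverse in~\eqref{bfG} is then block-circulant with blocks $M^N,E^N$. By Corollary~\ref{cor:bfGsubmat}, $\mathbf{G}(t)$ is block-circulant with blocks $G^N_1,G^N_2$. By Corollary~\ref{cor:bfQnsubmat}, $\mathbf{Q}_1(t)$ has the four-block pattern with blocks $Q^N_1,\dots,Q^N_4$. All of these depend only on $\Pi^N_i(t+1)$ and on the moments in Assumption~\ref{assm:Zt}.

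Second, compute $\mathbf{L}_1(t)$ from~\eqref{bfLn} in the same way. Using $\Theta_1-\overline\Theta/N$, $\Theta_1-\Theta/N$, and $\mathbb{E}\mathbf{Z}_t=I_N\otimes M^{(1)}_Z(t)$, it has a two-type block structure: the first block row is distinct, and the remaining block rows share a common diagonal/off-diagonal pattern.

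Third, multiply out the terms of~\eqref{ODE:Pn}: $\mathbf{G}^\top\mathbf{Q}_1\mathbf{G}$, $\mathbf{G}^\top\mathbf{L}_1+\mathbf{L}_1^\top\mathbf{G}$, the explicit $\kappa,\overline\kappa$ terms, and $(\mathbf{\Theta}+\overline{\mathbf{\Theta}}/N)^\top P_1(t+1)(\mathbf{\Theta}+\overline{\mathbf{\Theta}}/N)$. For this we use the elementary rule that products of matrices having the patterns "special first index, then permutation-symmetric on indices $2,\dots,N$" again have that pattern. The resulting blocks are polynomials in $N$ with coefficients in $G^N_1,G^N_2,Q^N_i,\Pi^N_i(t+1),\theta,\overline\theta$. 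For example, $\overline{\mathbf{\Theta}}=\mathbf{1}_{N\times N}\otimes\overline\theta$ produces sums such as $\Pi^N_1+(N-1)\Pi^N_2$ and $\Pi^N_3+(N-2)\Pi^N_4$ in the relevant blocks.

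Fourth, define $\varphi^N_i$ as the resulting $(1,1)$, $(1,2)$, $(2,2)$ and $(2,3)$ blocks, matching~\eqref{varphiN1/4}--\eqref{varphiN4/4}.

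The main obstacle is the bookkeeping in the third step, especially $\mathbf{G}^\top\mathbf{Q}_1\mathbf{G}$. Here $\mathbf{G}$ is fully circulant but $\mathbf{Q}_1$ singles out index $1$. The $(j,k)$ block for $j,k\ge2$ therefore collects contributions from index $1$, from $j$, from $k$, and from the $N-3$ remaining indices.

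To keep this manageable, I would first prove a small lemma. It records the blocks of $A^\top B C$ where $A$ and $C$ are circulant with blocks $(a_1,a_2)$, $(c_1,c_2)$, and $B$ has the four-block pattern $(b_1,\dots,b_4)$. Each of the four blocks is given as an explicit sum with multiplicities $1$, $N-2$, $N-3$, and so on. The same lemma then handles the $P_1(t+1)$ term, with $A=C=\mathbf{\Theta}+\overline{\mathbf{\Theta}}/N$, which is circulant with blocks $\theta+\overline\theta/N$ and $\overline\theta/N$. Applying the lemma term by term yields the $\varphi^N_i$ directly.
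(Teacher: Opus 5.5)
Your plan is correct and is essentially the paper's proof: the paper likewise sets $n=1$ in~\eqref{ODE:Pn}, substitutes the block forms~\eqref{P1:PiN1234}, \eqref{bfG:submat} and~\eqref{bfQn:submat}, and defines $\varphi^N_1,\dots,\varphi^N_4$ as the resulting $(1,1)$, $(1,2)$, $(2,2)$ and $(2,3)$ blocks. Your auxiliary lemma on products of matrices that are invariant under permutations of the indices $2,\dots,N$ is a cleaner way to organize the same bookkeeping, which the paper carries out implicitly when it writes out the explicit formulas.
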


By Corollary~\ref{cor:ODEs:PiN1234}, computing the high-dimensional $\{ P_n \}_{n=1}^N$ from~\eqref{ODE:Pn} reduces to computing the low-dimensional $\{ \Pi_i^N\}_{i=1}^4$ from~\eqref{ODE:PiNi}, which involves inverting only $d_z\times d_z$ matrices with time complexity of $\mathcal{O}((d_z)^{2.373})$ according to~\cite{LeGall2014Matrix}.

Analogous to~\eqref{prop:PnSubmat:PiN1234}, the sequence $\{S_n\}_{n=1}^N$ can be decomposed into repeating sub-matrices, as formalized in Proposition~\ref{prop:SnSubmat}. 
\begin{proposition} 
\label{prop:SnSubmat}
Suppose the system~\eqref{ODE:Pn} admits a solution $(P_1, \dots, P_N)$ for $t=0, 1, \dots, T$. 
Then the system~\eqref{ODE:Sn} has a solution $(S_1, \dots, S_N)$ satisfying 
\begin{align} 
& S_1(t) = 
\big[ \Xi_1^{N\top}(t) , \,  \Xi_2^{N\top}(t) , \, \dots, \,  \Xi_2^{N\top}(t) \big]^\top 
\in \mathbb{R}^{Nd_y \times 1}  , 
\quad \Xi^N_1(t) \in \mathbb{R}^{d_y\times 1}, 
\quad \Xi^N_2(t) \in \mathbb{R}^{d_y\times 1} , 
\label{S1:submat} 
\end{align} 
and, for $2\leq n \leq N$, 
\begin{align}
& S_n(t) = J_{1n}^{y\top} S_1(t)  . 
\label{Sn=J1nS1}
\end{align} 
Here $\{J^y_{ij}\}_{1\leq i,j \leq N}$ are as defined in Proposition~\ref{prop:PnSubmat:PiN1234}.  
%denote the matrix obtained by exchanging the $i$-th and $j$-th rows of sub-matrices in $I_{N d_y}= ( \delta_{i,j}I_{d_y} )_{i,j=1}^N$. 
\end{proposition}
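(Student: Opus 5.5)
Since \eqref{ODE:Sn} is a linear backward recursion in $(S_1,\dots,S_N)$ once $(P_1,\dots,P_N)$ is fixed, I would argue by backward induction on $t$. The claim is that the block structure \eqref{S1:submat}--\eqref{Sn=J1nS1} is preserved by the map $\Psi_n$. At $t=T$ we have $S_n(T)=0$, which trivially has the form \eqref{S1:submat} with $\Xi^N_1(T)=\Xi^N_2(T)=0$ and satisfies \eqref{Sn=J1nS1}. Suppose the structure holds at $t+1$. We must show that the right-hand side $\Psi_1(t;\cdot)$ has first block $\Xi^N_1(t)$ and all remaining blocks equal to a common $\Xi^N_2(t)$. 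We must also show that $\Psi_n = J^{y\top}_{1n}\Psi_1$.

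\textbf{Step 1: invariance under $J_{1n}$.} For the permutation covariance I would use the structural facts already available.
\begin{itemize}
\item Proposition~\ref{prop:PnSubmat:PiN1234} gives $P_n = J^{y\top}_{1n}P_1J^y_{1n}$, and Corollary~\ref{cor:bfQnsubmat} gives the same for $\mathbf{Q}_n$.
\item Corollary~\ref{cor:bfGsubmat} shows that $\mathbf{G}(t)$ commutes with every block permutation: $J^{z\top}_{1n}\mathbf{G}J^y_{1n}=\mathbf{G}$, with $J^z$ the analogous $d_z$-block permutation.
\item The same argument as Corollary~\ref{cor:inv(hatA1+hatA2+hatA)} should give that $\mathbf{H}(t)$ has the form $\mathbf{1}_{N\times1}\otimes H^N(t)$. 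Here $\mathbf{F}(t)=\mathbf{1}_{N\times 1}\otimes M^{(1)}_Z(t)^\top y_{t+1}$ by \eqref{EZit=MZ1t}. Under the induction hypothesis, $\mathbf{C}(t)$ has $n$-th block $M^{(1)}_Z(t)^\top \mathbf{e}^{y\top}_nS_n(t+1) = M^{(1)}_Z(t)^\top\Xi^N_1(t+1)$, independent of $n$.
\item Finally, $\boldsymbol\Theta$ and $\overline{\boldsymbol\Theta}$ commute with $J^y_{1n}$, $\Theta_nJ^y_{1n}=\Theta_1$, $\Theta J^y_{1n}=\Theta$, and $\mathbb{E}\mathbf{Z}_t = I_N\otimes M^{(1)}_Z(t)$ is permutation-invariant.
\end{itemize}
Substituting these into \eqref{ODE:Sn} term by term gives $\Psi_n=J^{y\top}_{1n}\Psi_1$. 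For example, $\mathbf{G}^\top\mathbf{Q}_n\mathbf{H}=\mathbf{G}^\top J^{y\top}_{1n}\mathbf{Q}_1J^{y}_{1n}\mathbf{H}=J^{y\top}_{1n}\mathbf{G}^\top\mathbf{Q}_1\mathbf{H}$, using $J_{1n}\mathbf{H}=\mathbf{H}$.

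\textbf{Step 2: block structure of $S_1(t)$.} The same bookkeeping applies with transpositions $J_{ij}$ for $i,j\ge2$. Each such $J_{ij}$ fixes agent $1$, leaves $P_1,\mathbf{Q}_1,\Theta_1,\mathbf{e}^z_1$ and $S_1(t+1)$ invariant, and commutes with $\mathbf{G},\mathbf{H},\mathbb{E}\mathbf{Z}_t$. Hence $J^{y\top}_{ij}\Psi_1=\Psi_1$, so blocks $2,\dots,N$ of $S_1(t)$ coincide. Naming these blocks gives $\Xi^N_1(t),\Xi^N_2(t)$, which by construction satisfy a low-dimensional recursion $\psi^N_i$ as in \eqref{ODE:XiNi}. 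Since the linear recursion determines $S_n(t)$ uniquely from $S_\cdot(t+1)$, this exhibits the solution, and the induction closes.

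\textbf{Main obstacle.} The main difficulty is verifying the exact commutation relations for the mixed terms. These are the $\overline\kappa$ term $\mathbb{E}(Z^n_t\mathbf{e}^{z\top}_n-\mathbf{1}_y\mathbf{Z}_t/N)$ and the products involving $\mathbf{1}_y$ (note $\mathbf{1}_yJ^y_{ij}=\mathbf{1}_y$). They also include establishing the structure $\mathbf{H}=\mathbf{1}\otimes H^N$, which relies on the permutation-invariance of the inverse matrix in Corollary~\ref{cor:inv(hatA1+hatA2+hatA)}. I would handle all of these uniformly through a lemma: if $\Pi$ is any block permutation matrix, the data $(\mathbf{Z}_t)$ are exchangeable at the level of first and second moments by Assumption~\ref{assm:Zt}, so every expectation in \eqref{bfABCDF}, \eqref{bfQn} and \eqref{bfLn} transforms covariantly under $\Pi$.
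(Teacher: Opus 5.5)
Your argument is correct, but it is organized differently from the paper's proof.

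\textbf{The paper's route.} It follows the template of Proposition~\ref{prop:PnSubmat:PiN1234}. For a transposition $J_{ij}$, it shows that the whole permuted family built from $S^\dagger_n=J^{y\top}_{ij}S_n$ (with the $i$-th and $j$-th entries swapped) again solves \eqref{ODE:Sn}. This requires the covariance identity $J^{z\top}_{ij}\mathbf{H}(t;S;P)=\mathbf{H}(t;S^\dagger;P^\dagger)$, which it obtains by tracking how $\mathbf{C}(t)$ transforms. It then concludes by uniqueness, first for $2\le i<j\le N$ and then for $J_{1j}$. The form $\mathbf{H}=\mathbf{1}_{N\times1}\otimes H^N$ is only extracted afterwards, in Corollary~\ref{cor:bfHsubmat}.

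\textbf{Your route.} You run a stepwise backward induction and use the already-established block forms of $P_n$, $\mathbf{Q}_n$, $\mathbf{G}$ and the inverse matrix as black boxes. The hypothesis at $t+1$ makes every block of $\mathbf{C}(t)$ equal to $\big(M^{(1)}_Z(t)\big)^\top\Xi^N_1(t+1)$. Hence $\mathbf{H}(t)=\mathbf{1}_{N\times1}\otimes H^N(t)$ falls out inside the induction step, which proves Corollary~\ref{cor:bfHsubmat} along the way. Everything else in $\Psi_n$ involves only $\mathbb{E}\mathbf{Z}_t=I_N\otimes M^{(1)}_Z(t)$ and the fixed matrices $\Theta_n,\Theta,\overline\Theta,\boldsymbol\Theta,\overline{\boldsymbol\Theta}$. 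Their behavior under $J_{1n}$ and $J_{ij}$ is immediate.

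\textbf{Comparison.} Your route buys economy. The general moment-covariance lemma you name as the main obstacle is not really needed in your scheme, because second moments enter only through matrices whose block structure is already known. Likewise, ``uniqueness'' reduces to the trivial fact that the recursion determines $S(t)$ from $S(t+1)$. The paper's route buys uniformity with the $P$-argument.

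\textbf{A notational slip.} Since $\mathbf{Q}_n\in\mathbb{R}^{Nd_z\times Nd_z}$, the relation should read $\mathbf{Q}_n=J^{z\top}_{1n}\mathbf{Q}_1J^z_{1n}$; the paper's Corollary~\ref{cor:bfQnsubmat} has the same typo. Your sample computation should then use $\mathbf{G}^\top J^z_{1n}=J^y_{1n}\mathbf{G}^\top$ together with $J^z_{1n}\mathbf{H}=\mathbf{H}$.
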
 

\begin{corollary}  
\label{cor:bfHsubmat}
The coefficients $\mathbf{H}(t)$ defined by~\eqref{bfH} take the form 
\begin{align} 
 \mathbf{H}(t) 
= & \begin{bmatrix} H^N(t) \\ \vdots \\ H^N(t) \end{bmatrix} 
\in \mathbb{R}^{Nd_z \times 1} , 
\quad 
H^N(t)\in \mathbb{R}^{d_z \times 1} , 
\label{bfH:submat}
\end{align} 
where the sub-matrices are given by 
\allowdisplaybreaks
\begin{align} 
 H^N(t) = & - [ M^N(t) + (N-1) E^N(t) ] 
 \big( M_Z^{(1)}(t) \big)^\top 
 \big[ e^{-\alpha(T-1-t)} (-\kappa) y_{t+1}
  +  \Xi_1^N(t+1) \big] . 
 \label{HN}  
\end{align} 
\end{corollary}  

\begin{corollary}    
\label{cor:ODEs:XiN12}
The sub-matrices $\Xi_i^N$, $i=1, 2$ in Proposition~\ref{prop:SnSubmat} satisfy the backward equations~\eqref{ODE:XiNi} for $t=0, 1, \dots, T$.  
\end{corollary}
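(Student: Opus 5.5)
The statement to prove is Corollary~\ref{cor:ODEs:XiN12}: the sub-vectors $\Xi_1^N,\Xi_2^N$ of Proposition~\ref{prop:SnSubmat} satisfy the closed backward system~\eqref{ODE:XiNi}. I will obtain it by direct substitution, in the same way Corollary~\ref{cor:ODEs:PiN1234} follows from Proposition~\ref{prop:PnSubmat:PiN1234}. By Proposition~\ref{prop:SnSubmat}, $S_1(t)=[\Xi_1^{N\top}(t),\Xi_2^{N\top}(t),\dots,\Xi_2^{N\top}(t)]^\top$, and every other $S_n$ is a block permutation of $S_1$. It therefore suffices to read off the first block row and the second block row of the recursion~\eqref{ODE:Sn} for $n=1$. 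The $n=1$ case carries all the information, because the permutation relations $J_{1n}^{y\top}$ intertwine~\eqref{ODE:Sn} for different $n$.

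The concrete steps, in order, are as follows.

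\textbf{Step 1 (known block structures).} I insert the block forms already established: $P_1(t+1)$ from~\eqref{P1:PiN1234}, $\mathbf{G}(t)$ from Corollary~\ref{cor:bfGsubmat}, $\mathbf{Q}_1(t)$ from Corollary~\ref{cor:bfQnsubmat}, and $\mathbf{H}(t)=\mathbf{1}_{N\times1}\otimes H^N(t)$ from Corollary~\ref{cor:bfHsubmat}.

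\textbf{Step 2 (moments under Assumption~\ref{assm:Zt}).} I evaluate the remaining expectations in~\eqref{ODE:Sn}. These are $\mathbb{E}\mathbf{Z}_t=I_N\otimes M_Z^{(1)}(t)$, the term $\mathbb{E}Z^1_t\mathbf{e}_1^{z\top}$, and $\mathbb{E}(Z^1_t\mathbf{e}_1^{z\top}-\mathbf{1}_y\mathbf{Z}_t/N)$. The last one equals $[(1-1/N)M_Z^{(1)},-M_Z^{(1)}/N,\dots,-M_Z^{(1)}/N]$. I also use the explicit forms $\Theta_1+\overline\Theta/N$, $\Theta_1-\Theta/N$ and $\mathbf{\Theta}+\overline{\mathbf{\Theta}}/N$. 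Each of these is of the form ``first block distinguished, remaining blocks equal'', or of the form ``$I_N\otimes a+\mathbf{1}_{N\times N}\otimes b$''.

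\textbf{Step 3 (block arithmetic).} I multiply out each of the five groups of terms in~\eqref{ODE:Sn}. The class of matrices with one distinguished index and the rest exchangeable is closed under products, so every product again has the shape $[u_1^\top,u_2^\top,\dots,u_2^\top]^\top$. Sums such as $\mathbf{1}_{1\times N}$ times a column produce factors $u_1+(N-1)u_2$. These factors are to be expressed through $\Xi_1^N(t+1)+(N-1)\Xi_2^N(t+1)$ and $\Pi_1^N+(N-1)\Pi_2^N$, and similarly for $\Pi_3^N,\Pi_4^N$.

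\textbf{Step 4 (identify $\psi_i^N$).} I collect the first block and define it to be $\psi_1^N$, which matches~\eqref{XiN1}. I collect any non-first block and define it to be $\psi_2^N$, which matches~\eqref{XiN2}. Finally I check that the second, third, $\dots$ blocks all coincide, which confirms consistency with Proposition~\ref{prop:SnSubmat}. The terminal condition $\Xi_i^N(T)=0$ is immediate from $S_1(T)=0$.

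The main obstacle is Step 3, specifically the term $\mathbf{G}^\top(t)\mathbf{Q}_1(t)\mathbf{H}(t)$ and the $P_1(t+1)\mathbb{E}\mathbf{Z}_t\mathbf{H}(t)$ term. Here one must track how the circulant-type $\mathbf{G}$ (with blocks $G_1^N,G_2^N$) interacts with the arrow-type $\mathbf{Q}_1$ (with blocks $Q_1^N,\dots,Q_4^N$).

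My approach to this obstacle is to first compute $\mathbf{Q}_1\mathbf{H}$, which reduces to the column $[(Q_1^N+(N-1)Q_2^N)H^N;\,(Q_2^{N\top}+Q_3^N+(N-2)Q_4^N)H^N;\dots]$. Then I apply $\mathbf{G}^\top$ blockwise. This keeps every intermediate product as a two-type vector, so the bookkeeping stays $N$-independent in form, with $N$ entering only through scalar coefficients.
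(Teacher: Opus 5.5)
Your proposal is correct and follows the paper's own argument: set $n=1$ in \eqref{ODE:Sn}, substitute the block forms of $P_1$, $\mathbf{G}$, $\mathbf{Q}_1$ and $\mathbf{H}$ together with the moments from Assumption~\ref{assm:Zt}, and read off the first block and a generic non-first block. One caution for when you carry out the arithmetic: the first block of $(\mathbf{\Theta}+\overline{\mathbf{\Theta}}/N)^\top P_1(t+1)\mathbb{E}\mathbf{Z}_t\mathbf{H}(t)$ is $(\theta+\overline\theta/N)^\top\big[\Pi_1^N(t+1)+(N-1)\Pi_2^N(t+1)\big]M_Z^{(1)}(t)H^N(t)+\frac{N-1}{N}\,\overline\theta^\top\big[\Pi_2^{N\top}(t+1)+\Pi_3^N(t+1)+(N-2)\Pi_4^N(t+1)\big]M_Z^{(1)}(t)H^N(t)$, whereas \eqref{XiN1} prints the coefficient $1/N$ instead of $\frac{N-1}{N}$ on the second term. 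This is a typo in the paper rather than a flaw in your method, and it vanishes as $N\to\infty$ under the scalings \eqref{LambdaN1234=PiN1234}, so it does not affect \eqref{psi1}; your claim that the first block ``matches \eqref{XiN1}'' should be read modulo this correction.
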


With the above results, particularly~\eqref{bfG:submat} and~\eqref{bfH:submat}, it is straight forward to prove Theorem~\ref{thm:betan_centralized}.  

\emph{Proof of Theorem~\ref{thm:betan_centralized}.}
%\begin{proof} 
Substituting~\eqref{bfG:submat} and~\eqref{bfH:submat} into~\eqref{NashEqm:bfbeta}, we have  
\begin{align} 
 \widehat\beta^n_t = & \mathbf{e}^{z\top}_n \mathbf{G}(t) \big[ \widehat{Y}^{1\top}_t, \dots, \widehat{Y}^{N\top}_t \big]^\top  
  + \mathbf{e}^{z\top}_n \mathbf{H}(t) , 
 \notag \\ 
 = & 
 G_1^N(t) \widehat{Y}^n_t 
 + G_2^N(t) \sum_{j\neq n = 1}^N \widehat{Y}^j_t + H^N(t) , 
 \quad n \in [N], 
 \notag 
\end{align} 
which is~\eqref{betan_centralized}. 
%\end{proof}
\hfill$\blacksquare$

Now we prove Proposition~\ref{prop:PnSubmat:PiN1234}. We establish a series of preliminary results through the following lemmas. 
\begin{lemma} 
For any $i,j, n \in \{ 1, \dots, N\}$ and any $t=0,1,\dots, T-1$, we have 
\begin{align} 
& J^{z\top}_{ij} \mathbb{E}\big[ \big( Z^n_t \mathbf{e}_n^{z\top} - \mathbf{1}_y \mathbf{Z}_t/N \big)^\top  \big( Z^n_t \mathbf{e}_n^{z\top} - \mathbf{1}_y \mathbf{Z}_t/N \big) \big] J^z_{ij} 
\notag \\ 
 = & 
 \begin{cases} 
 \mathbb{E}\big[ \big( Z^n_t \mathbf{e}_n^{z\top} - \mathbf{1}_y \mathbf{Z}_t/N \big)^\top  \big( Z^n_t \mathbf{e}_n^{z\top} - \mathbf{1}_y \mathbf{Z}_t/N \big) \big] , 
 \quad  \text{ if } n \neq i, j  ; \\ 
 %%% 
  \mathbb{E}\big[ \big( Z^j_t \mathbf{e}_j^{z\top} - \mathbf{1}_y \mathbf{Z}_t/N \big)^\top  \big( Z^j_t \mathbf{e}_j^{z\top} - \mathbf{1}_y \mathbf{Z}_t/N \big) \big] , 
 \quad  \text{ if } n = i  ; \\ 
%%% 
 \mathbb{E}\big[ \big( Z^i_t \mathbf{e}_i^{z\top} - \mathbf{1}_y \mathbf{Z}_t/N \big)^\top  \big( Z^i_t \mathbf{e}_i^{z\top} - \mathbf{1}_y \mathbf{Z}_t/N \big) \big] , 
 \quad  \text{ if } n = j  . 
 \end{cases} 
\label{JE(Zn-Z/N)(Zn-Z/N)J}
\end{align} 
\end{lemma}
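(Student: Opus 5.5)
We prove the identity by expanding the expectation into $d_z\times d_z$ blocks and tracking how the block-swap $J^z_{ij}$ permutes them. Here $J^z_{ij}$ is the $d_z$-analogue of $J^y_{ij}$: the identity $I_{Nd_z}$ with its $i$-th and $j$-th block rows exchanged. It is a symmetric permutation matrix with $J^z_{ij}=J^{z\top}_{ij}=(J^z_{ij})^{-1}$. Conjugation $X\mapsto J^{z\top}_{ij}XJ^z_{ij}$ therefore relabels block indices by the transposition $\sigma=(i\,j)$: the $(k,l)$ block of the result is the $(\sigma(k),\sigma(l))$ block of $X$.

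\textbf{Step 1: block form of the expectation.} Write $Z^n_t\mathbf{e}_n^{z\top}-\mathbf{1}_y\mathbf{Z}_t/N$ as the block row vector $[c_1,\dots,c_N]$, where $c_k=(\delta_{kn}-1/N)Z^k_t$; this uses $\mathbf{1}_y\mathbf{Z}_t=[Z^1_t,\dots,Z^N_t]$. The $(k,l)$ block of the expectation $X^{(n)}$ is then $(\delta_{kn}-1/N)(\delta_{ln}-1/N)\,\mathbb{E}(Z^{k\top}_tZ^l_t)$.

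By Assumption~\ref{assm:Zt}, with $W=I$, we have $\mathbb{E}(Z^{k\top}_tZ^k_t)=M^{(2)}_Z(t)$ for every $k$, and $\mathbb{E}(Z^{k\top}_tZ^l_t)=M^{(1)}_Z(t)^\top M^{(1)}_Z(t)$ for $k\neq l$. Hence the block is
\[
X^{(n)}_{kl}=(\delta_{kn}-\tfrac1N)(\delta_{ln}-\tfrac1N)\bigl[\delta_{kl}M^{(2)}_Z(t)+(1-\delta_{kl})M^{(1)\top}_Z(t)M^{(1)}_Z(t)\bigr].
\]
This has the form $f(\delta_{kn},\delta_{ln},\delta_{kl})$ for a fixed function $f$ that does not depend on $n$, $k$, $l$ otherwise.

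\textbf{Step 2: apply the permutation.} Since $\sigma$ is a bijection, $\delta_{\sigma(k)\sigma(l)}=\delta_{kl}$ and $\delta_{\sigma(k)n}=\delta_{k\sigma(n)}$. Therefore
\[
(J^{z\top}_{ij}X^{(n)}J^z_{ij})_{kl}=f(\delta_{k\sigma(n)},\delta_{l\sigma(n)},\delta_{kl})=X^{(\sigma(n))}_{kl}.
\]
Since $\sigma(n)=n$ if $n\ne i,j$, $\sigma(i)=j$, and $\sigma(j)=i$, this gives exactly the three cases of~\eqref{JE(Zn-Z/N)(Zn-Z/N)J}.

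\textbf{Main obstacle.} The only real point is Step 1: the result needs the block entries to depend on $(k,l)$ solely through whether $k=n$, $l=n$, and $k=l$. This is precisely where the moment homogeneity of Assumption~\ref{assm:Zt} (equal second moments and factorized cross moments) is used. Without it the cross terms would carry agent-specific labels and the permutation would not map $X^{(n)}$ to $X^{(\sigma(n))}$.

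The remaining work is bookkeeping, mainly confirming the convention that $J^z_{ij}$ acting on the right permutes block columns consistently with $J^{z\top}_{ij}$ acting on the left.
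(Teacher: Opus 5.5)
Your proposal is correct and follows essentially the same route as the paper. The paper likewise uses Assumption~\ref{assm:Zt} to expand the expectation into sums of terms $\mathbf{e}^z_m(\cdot)\mathbf{e}^{z\top}_k$ whose coefficients depend only on whether $m=n$, $k=n$, or $m=k$, and then uses $J^{z\top}_{ij}\mathbf{e}^z_m=\mathbf{e}^z_{\sigma(m)}$ to relabel; your block formula $X^{(n)}_{kl}=f(\delta_{kn},\delta_{ln},\delta_{kl})$ is a more compact bookkeeping of that same expansion.
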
 

\begin{proof} 
Under Assumption~\ref{assm:Zt}, we have 
\allowdisplaybreaks
\begin{align} 
 & \mathbb{E}\big[ \big( Z^n_t \mathbf{e}_n^{z\top} - \mathbf{1}_y \mathbf{Z}_t/N \big)^\top  \big( Z^n_t \mathbf{e}_n^{z\top} - \mathbf{1}_y \mathbf{Z}_t/N \big) \big] 
 \notag \\ 
 = & 
 \mathbb{E} \Big[ \big( Z^n_t \mathbf{e}_n^{z\top} -  \sum_{m=1}^N Z^m_t \mathbf{e}^{z\top}_m /N \big)^\top  
  \big( Z^n_t \mathbf{e}_n^{z\top} - \sum_{m=1}^N Z^m_t \mathbf{e}^{z\top}_m /N \big) \Big] 
 \notag \\ 
&  + \mathbb{E} \Big[ \sum_{m=1}^N  \mathbf{e}^{z}_m Z^{m\top}_t   \sum_{k=1}^N Z^k_t \mathbf{e}^{z\top}_k \Big] / N^2  
  \notag \\ 
  = & 
  (1 - 2/N ) \mathbb{E} \big[ \mathbf{e}_n^z Z^{n\top}_t Z^n_t \mathbf{e}_n^{z\top} \big] 
  - \mathbb{E} \big[ \mathbf{e}_n^z Z^{n\top}_t \sum_{m \neq n=1}^N Z^m_t \mathbf{e}^{z\top}_m /N \big] 
  - \mathbb{E} \big[ \sum_{m \neq n = 1}^N  \mathbf{e}^{z}_m Z^{m\top}_t Z^n_t \mathbf{e}_n^{z\top}/N \big] 
\notag \\ 
& + \mathbb{E} \Big[ \sum_{m=1}^N \mathbf{e}_m^z Z^{m\top}_t Z^m_t \mathbf{e}_m^{z\top} \Big] / N^2  
+ \mathbb{E} \Big[ \sum_{m\neq k=1}^N  \mathbf{e}^{z}_m Z^{m\top}_t  Z^k_t \mathbf{e}^{z\top}_k \Big] / N^2 
\notag \\ 
 = & 
 (1 - 1/N )^2 \mathbf{e}^z_n M^{(2)}_{Z} \mathbf{e}^{z\top}_n 
  -  \frac{1}{N} \sum_{m = 1 \neq n}^N \mathbf{e}^z_n \big( M^{(1)}_Z(t) \big)^\top M^{(1)}_Z(t) \mathbf{e}^{z\top}_m 
   -  \frac{1}{N} \sum_{m = 1 \neq n }^N \mathbf{e}^z_m \big( M^{(1)}_Z(t) \big)^\top M^{(1)}_Z(t) \mathbf{e}^{z\top}_n 
   \notag \\ 
& + \frac{1}{N^2}\sum_{m = 1\neq n}^N \mathbf{e}_m^z M^{(2)}_Z(t)  \mathbf{e}_m^{z\top} 
 + \frac{1}{N^2} \sum_{m\neq k = 1}^N \mathbf{e}^z_m \big( M^{(1)}_Z(t) \big)^\top M^{(1)}_Z(t) \mathbf{e}^{z\top}_k
\label{E(Znen - 1bfZ/N)T(Znen - 1bfZ/N)}
\end{align} 
and 
\begin{align} 
 & J^{z\top}_{ij} \mathbb{E}\big[ \big( Z^n_t \mathbf{e}_n^{z\top} - \mathbf{1}_y \mathbf{Z}_t/N \big)^\top  \big( Z^n_t \mathbf{e}_n^{z\top} - \mathbf{1}_y \mathbf{Z}_t/N \big) \big] J^z_{ij} 
 \notag \\ 
 = & 
 (1-1/N )^2 J^{z\top}_{ij} \mathbf{e}^z_n M^{(2)}_{Z} \mathbf{e}^{z\top}_n J^{z}_{ij} 
  -  \frac{1}{N} \sum_{m = 1 \neq n}^N J^{z\top}_{ij} \mathbf{e}^z_n \big( M^{(1)}_Z(t) \big)^\top M^{(1)}_Z(t) \mathbf{e}^{z\top}_m J^{z}_{ij} 
  \notag \\ 
&   -  \frac{1}{N} \sum_{m = 1 \neq n }^N J^{z\top}_{ij} \mathbf{e}^z_m \big( M^{(1)}_Z(t) \big)^\top M^{(1)}_Z(t) \mathbf{e}^{z\top}_n J^{z}_{ij}
   \notag \\ 
& + \frac{1}{N^2}\sum_{m = 1\neq n}^N J^{z\top}_{ij} \mathbf{e}_m^z M^{(2)}_Z(t)  \mathbf{e}_m^{z\top} J^{z}_{ij} 
 + \frac{1}{N^2} \sum_{m\neq k = 1}^N J^{z\top}_{ij} \mathbf{e}^z_m \big( M^{(1)}_Z(t) \big)^\top M^{(1)}_Z(t) \mathbf{e}^{z\top}_k
J^{z}_{ij} . 
\label{JE(Znen - 1bfZ/N)T(Znen - 1bfZ/N)J}
\end{align} 
Since 
\begin{align} 
J^{z\top}_{ij} \mathbf{e}^z_{m} 
= 
\begin{cases} 
\mathbf{e}^z_{m} \quad \text{ if } m \neq i, j; \\
\mathbf{e}^z_{j} \quad \text{ if } m = i; \\
\mathbf{e}^z_{i} \quad \text{ if } m = j ;  
\end{cases} 
\label{Je:3cases}
\end{align} 
By~\eqref{Je:3cases}, we compare the terms of~\eqref{JE(Znen - 1bfZ/N)T(Znen - 1bfZ/N)J} with the terms of~\eqref{E(Znen - 1bfZ/N)T(Znen - 1bfZ/N)}, and obtain that 

If $n\neq i, j$, then 
\allowdisplaybreaks 
\begin{align} 
 & J^{z\top}_{ij} \mathbb{E}\big[ \big( Z^n_t \mathbf{e}_n^{z\top} - \mathbf{1}_y \mathbf{Z}_t/N \big)^\top  \big( Z^n_t \mathbf{e}_n^{z\top} - \mathbf{1}_y \mathbf{Z}_t/N \big) \big] J^z_{ij} 
 \notag \\ 
%%%%% 
= & 
 (1-1/N )^2 \mathbf{e}^z_n M^{(2)}_{Z} \mathbf{e}^{z\top}_n  
  -  \frac{1}{N} \sum_{m = 1 \neq n}^N J^{z\top}_{ij} \mathbf{e}^z_n \big( M^{(1)}_Z(t) \big)^\top M^{(1)}_Z(t) \mathbf{e}^{z\top}_m J^{z}_{ij} 
  \notag \\ 
&   -  \frac{1}{N} \sum_{m = 1 \neq n }^N  \mathbf{e}^z_m \big( M^{(1)}_Z(t) \big)^\top M^{(1)}_Z(t) \mathbf{e}^{z\top}_n
   \notag \\ 
& + \frac{1}{N^2}\sum_{m = 1\neq n}^N J^{z\top}_{ij} \mathbf{e}_m^z M^{(2)}_Z(t)  \mathbf{e}_m^{z\top} J^{z}_{ij} 
 + \frac{1}{N^2} \sum_{m\neq k = 1}^N J^{z\top}_{ij} \mathbf{e}^z_m \big( M^{(1)}_Z(t) \big)^\top M^{(1)}_Z(t) \mathbf{e}^{z\top}_k
J^{z}_{ij} 
\notag \\ 
= & 
\mathbb{E}\big[ \big( Z^n_t \mathbf{e}_n^{z\top} - \mathbf{1}_y \mathbf{Z}_t/N \big)^\top  \big( Z^n_t \mathbf{e}_n^{z\top} - \mathbf{1}_y \mathbf{Z}_t/N \big) \big] ; 
\notag 
\end{align} 
if $n = i$, then 
\begin{align} 
 & J^{z\top}_{ij} \mathbb{E}\big[ \big( Z^i_t \mathbf{e}_i^{z\top} - \mathbf{1}_y \mathbf{Z}_t/N \big)^\top  
 \big( Z^i_t \mathbf{e}_i^{z\top} - \mathbf{1}_y \mathbf{Z}_t/N \big) \big] J^z_{ij} 
 \notag \\ 
 = & 
 (1-1/N )^2 J^{z\top}_{ij} \mathbf{e}^z_i M^{(2)}_{Z} \mathbf{e}^{z\top}_i J^{z}_{ij} 
  -  \frac{1}{N} \sum_{m = 1 \neq i}^N J^{z\top}_{ij} \mathbf{e}^z_i \big( M^{(1)}_Z(t) \big)^\top M^{(1)}_Z(t) \mathbf{e}^{z\top}_m J^{z}_{ij} 
  \notag \\ 
&   -  \frac{1}{N} \sum_{m = 1 \neq i }^N J^{z\top}_{ij} \mathbf{e}^z_m \big( M^{(1)}_Z(t) \big)^\top M^{(1)}_Z(t) \mathbf{e}^{z\top}_i J^{z}_{ij}
   \notag \\ 
& + \frac{1}{N^2}\sum_{m = 1\neq i}^N J^{z\top}_{ij} \mathbf{e}_m^z M^{(2)}_Z(t)  \mathbf{e}_m^{z\top} J^{z}_{ij} 
 + \frac{1}{N^2} \sum_{m\neq k = 1}^N J^{z\top}_{ij} \mathbf{e}^z_m \big( M^{(1)}_Z(t) \big)^\top M^{(1)}_Z(t) \mathbf{e}^{z\top}_k
J^{z}_{ij} 
\notag \\ 
%%%%% 
= & 
 (1-1/N )^2  \mathbf{e}^z_j M^{(2)}_{Z} \mathbf{e}^{z\top}_j 
  -  \frac{1}{N} \sum_{m = 1 \neq j}^N  \mathbf{e}^z_j \big( M^{(1)}_Z(t) \big)^\top M^{(1)}_Z(t) \mathbf{e}^{z\top}_m 
  \notag \\ 
&   -  \frac{1}{N} \sum_{m = 1 \neq j }^N  \mathbf{e}^z_m \big( M^{(1)}_Z(t) \big)^\top M^{(1)}_Z(t) \mathbf{e}^{z\top}_j
   \notag \\ 
& + \frac{1}{N^2}\sum_{m = 1\neq j}^N  \mathbf{e}_m^z M^{(2)}_Z(t)  \mathbf{e}_m^{z\top}  
 + \frac{1}{N^2} \sum_{m\neq k = 1}^N  \mathbf{e}^z_m \big( M^{(1)}_Z(t) \big)^\top M^{(1)}_Z(t) \mathbf{e}^{z\top}_k 
\notag \\ 
%%%%% 
 = & 
  \mathbb{E}\big[ \big( Z^j_t \mathbf{e}_j^{z\top} - \mathbf{1}_y \mathbf{Z}_t/N \big)^\top  \big( Z^j_t \mathbf{e}_j^{z\top} - \mathbf{1}_y \mathbf{Z}_t/N \big) \big] ; 
\notag 
\end{align} 
Similarly, if $n=j$, then 
\begin{align} 
 & J^{z\top}_{ij} \mathbb{E}\big[ \big( Z^j_t \mathbf{e}_j^{z\top} - \mathbf{1}_y \mathbf{Z}_t/N \big)^\top  \big( Z^j_t \mathbf{e}_j^{z\top} - \mathbf{1}_y \mathbf{Z}_t/N \big) \big] J^z_{ij} 
 = 
 \mathbb{E}\big[ \big( Z^i_t \mathbf{e}_i^{z\top} - \mathbf{1}_y \mathbf{Z}_t/N \big)^\top  
 \big( Z^i_t \mathbf{e}_i^{z\top} - \mathbf{1}_y \mathbf{Z}_t/N \big) \big] .
 \notag 
 \end{align} 
\end{proof} 

\begin{lemma} 
For any $i,j, n \in \{ 1, \dots, N\}$ and any $t=0,1,\dots, T-1$, we have 
\allowdisplaybreaks 
\begin{align} 
  J^{z\top}_{ij} \mathbb{E}( \mathbf{Z}_t^\top P_n(t+1)  \mathbf{Z}_t )  J^{z}_{ij}  
 =  \mathbb{E}( \mathbf{Z}_t^\top P^\dagger_n(t+1)  \mathbf{Z}_t )  . 
 \label{E(ZPZ)=JE(ZPdaggerZ)J}
\end{align} 
\end{lemma}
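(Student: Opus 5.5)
The plan is to reduce both sides to block-wise expressions in the first two moments of Assumption~\ref{assm:Zt} and compare them block by block. Throughout, I read $P^\dagger_n(t+1)$ as the block-permuted matrix $J^{y\top}_{ij}P_n(t+1)J^y_{ij}$, the $y$-analogue of the conjugation by $J^z_{ij}$ on the left-hand side. This is the natural meaning given how the lemma feeds into~\eqref{Pn=J1nP1J1n}, and it will be stated explicitly. I also use that $P_n(t+1)$ is deterministic, being the solution of the backward recursion~\eqref{ODE:Pn}, so it can be pulled inside or outside expectations freely.

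First I compute $\mathbb{E}(\mathbf{Z}_t^\top W\mathbf{Z}_t)$ for an arbitrary deterministic $W=(W_{kl})_{k,l=1}^N\in\mathbb{R}^{Nd_y\times Nd_y}$ with $d_y\times d_y$ blocks. Since $\mathbf{Z}_t=\sum_n \mathbf{e}^y_n Z^n_t\mathbf{e}^{z\top}_n$ is block diagonal, the $(k,l)$ block of $\mathbf{Z}_t^\top W\mathbf{Z}_t$ is $Z^{k\top}_t W_{kl}Z^l_t$. By Assumption~\ref{assm:Zt}, its expectation equals $M^{(2)}_{Z,W_{kk}}(t)$ when $k=l$, and $(M^{(1)}_Z(t))^\top W_{kl}M^{(1)}_Z(t)$ when $k\neq l$. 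The key point is that these moment functions do not depend on the agent indices $k,l$ themselves, only on the block $W_{kl}$ and on whether $k=l$.

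Next, let $\sigma$ be the transposition of $i$ and $j$ on $[N]$. By~\eqref{Je:3cases} and its $y$-analogue, conjugation by $J^z_{ij}$ (respectively $J^y_{ij}$) permutes block rows and columns by $\sigma$. Consequently, the $(k,l)$ block of the left-hand side is the $(\sigma k,\sigma l)$ block of $\mathbb{E}(\mathbf{Z}_t^\top P_n(t+1)\mathbf{Z}_t)$. By the formula above this block is
\begin{itemize}
\item $M^{(2)}_{Z,(P_n)_{\sigma k\sigma k}}(t)$ if $k=l$, and
\item $(M^{(1)}_Z(t))^\top(P_n)_{\sigma k\sigma l}M^{(1)}_Z(t)$ otherwise,
\end{itemize}
using that $\sigma k=\sigma l$ if and only if $k=l$. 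On the other side, $P^\dagger_n(t+1)$ has $(k,l)$ block $(P_n)_{\sigma k\sigma l}(t+1)$. Applying the same formula with $W=P^\dagger_n(t+1)$ therefore yields exactly the same block expressions. Equality of all blocks gives~\eqref{E(ZPZ)=JE(ZPdaggerZ)J}.

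The only delicate step is the index bookkeeping showing that conjugation by $J_{ij}$ relabels blocks by $\sigma$ consistently on both the $z$-side and the $y$-side. This amounts to the identity $\mathbf{Z}_tJ^z_{ij}=J^y_{ij}\mathbf{Z}^\sigma_t$, where $\mathbf{Z}^\sigma_t$ denotes $\mathbf{Z}_t$ with $Z^i_t$ and $Z^j_t$ interchanged. Combined with the observation that the moments in Assumption~\ref{assm:Zt} are invariant under relabeling agents, this identity gives an alternative one-line proof. I expect the main obstacle to be the notational one of pinning down $P^\dagger_n$, rather than any analytic difficulty.
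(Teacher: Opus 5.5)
Your proposal is correct and is essentially the paper's own argument. The paper expands $\mathbb{E}(\mathbf{Z}_t^\top P_n(t+1)\mathbf{Z}_t)$ as $\sum_{m,k}\mathbf{e}^z_m\,\mathbb{E}[Z^{m\top}_t\mathbf{e}^{y\top}_m P_n(t+1)\mathbf{e}^y_k Z^k_t]\,\mathbf{e}^{z\top}_k$, inserts $J^y_{ij}J^{y\top}_{ij}$ to produce $P^\dagger_n=J^{y\top}_{ij}P_nJ^y_{ij}$, evaluates diagonal and off-diagonal terms with the index-free moments $M^{(2)}_{Z,\cdot}(t)$ and $M^{(1)}_Z(t)$ of Assumption~\ref{assm:Zt}, and relabels the sums via $J^{z\top}_{ij}\mathbf{e}^z_m=\mathbf{e}^z_{\sigma(m)}$ and $J^{y\top}_{ij}\mathbf{e}^y_m=\mathbf{e}^y_{\sigma(m)}$; your block-by-block comparison is the same bookkeeping written in block form rather than basis-vector form.
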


\begin{proof} 
 \allowdisplaybreaks 
\begin{align} 
 & J^{z\top}_{ij} \mathbb{E}( \mathbf{Z}_t^\top P_n(t+1)  \mathbf{Z}_t )  J^{z}_{ij}  
 \notag \\ 
 = & J^{z\top}_{ij} \mathbb{E} \Big[ \big( \sum_{m=1}^N \mathbf{e}_m^y Z^m_t \mathbf{e}_m^{z\top} \big)^\top 
  P_n(t+1) \big( \sum_{k=1}^N \mathbf{e}_k^y Z^k_t \mathbf{e}_k^{z\top} \big) \Big] 
  J^{z}_{ij} 
  \notag \\ 
  = & \sum_{m=1}^N \sum_{k=1}^N J^{z\top}_{ij} \mathbf{e}_m^z \mathbb{E} \big[ Z_t^{m\top}  \mathbf{e}^{y\top}_m P_n(t+1) \mathbf{e}^y_k Z^k_t \big]  \mathbf{e}_k^{z\top} J^z_{ij}
\notag \\ 
  = & \sum_{m=1}^N \sum_{k=1}^N J^{z\top}_{ij} \mathbf{e}_m^z 
  \mathbb{E} \big[ Z_t^{m\top}  \mathbf{e}^{y\top}_m J^y_{ij}
   \big( J^{y\top}_{ij} P_n(t+1) J^{y}_{ij} \big) J^{y\top}_{ij} \mathbf{e}^y_k Z^k_t \big]  \mathbf{e}_k^{z\top} J^{z}_{ij}
\notag \\ 
= & \sum_{m=1}^N \sum_{k=1}^N J^{z\top}_{ij} \mathbf{e}_m^z 
  \mathbb{E} \big[ Z_t^{m\top}  ( \mathbf{e}^{y\top}_m J^y_{ij} )
    P^\dagger_n(t+1) ( J^{y\top}_{ij}  \mathbf{e}^y_k ) Z^k_t  \big]  \mathbf{e}_k^{z\top} J^{z}_{ij}
\notag \\ 
%%%%%%% 
= & 
  \sum_{m=1}^N  J^{z\top}_{ij} \mathbf{e}_m^z  M^{(2)}_{ Z , \, \mathbf{e}^{y\top}_m J^y_{ij} P^\dagger_n(t+1) 
  J^y_{ij} \mathbf{e}^y_m}(t)   \mathbf{e}_m^{z\top} J^z_{ij}
  \notag \\ 
& + \sum_{m\neq k = 1}^N  ( J^{z\top}_{ij} \mathbf{e}_m^z )   \big( M_Z^{(1)}(t) \big)^\top  
 ( \mathbf{e}^{y\top}_m J^y_{ij} ) P_n^\dagger(t+1) 
 ( J^{y\top}_{ij} \mathbf{e}^y_k ) 
 M_Z^{(1)}(t)   ( \mathbf{e}_k^{z\top} J^z_{ij} )
\notag \\ 
%%% 
 = & 
  \sum_{m=1}^N  \mathbf{e}_m^z  M^{(2)}_{ Z , \, \mathbf{e}^{y\top}_m P^\dagger_n(t+1) \mathbf{e}^y_m}(t)   \mathbf{e}_m^{z\top} 
%  \notag \\ 
%& 
+ \sum_{m\neq k = 1}^N \mathbf{e}_m^z   \big( M_Z^{(1)}(t) \big)^\top  \mathbf{e}^{y\top}_m P^\dagger_n(t+1) \mathbf{e}^y_k 
 M_Z^{(1)}(t)   \mathbf{e}_k^{z\top} 
 \notag \\ 
 = & \mathbb{E}( \mathbf{Z}_t^\top P^\dagger_n(t+1)  \mathbf{Z}_t )  . 
 \notag 
\end{align} 
\end{proof} 

\begin{lemma} 
For any $i,j, n \in \{ 1, \dots, N\}$ and any $t=0,1,\dots, T-1$, we have 
\begin{align} 
J^{z\top}_{ij} \mathbf{A}(t; P_1, \dots, P_N ) 
 J^z_{ij} 
= & \mathbf{A}( t; P^\dagger_1, \dots, P^\dagger_{i-1}, P^\dagger_j, P^\dagger_{i+1}, \dots, 
P^\dagger_{j-1}, P^\dagger_i, P^\dagger_{j+1}, \dots, P^\dagger_N)  . 
\label{JTbfA(P)J=bfA(Pdagger)}
\end{align} 
\end{lemma}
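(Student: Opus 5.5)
We will prove \eqref{JTbfA(P)J=bfA(Pdagger)} by the same expansion used for \eqref{E(ZPZ)=JE(ZPdaggerZ)J}, with $P^\dagger_n(t+1)\eqdef J^{y\top}_{ij}P_n(t+1)J^y_{ij}$ as in that lemma. The key observation is that the $m$-th block row of $\mathbf{A}(t)$ is the only place where $P_m$ enters. Writing $\mathbf{Z}_t=\sum_k \mathbf{e}^y_k Z^k_t\mathbf{e}^{z\top}_k$, the definition in \eqref{bfABCDF} gives
\begin{align}
\mathbf{A}(t;P_1,\dots,P_N)=\sum_{m=1}^N\sum_{k=1}^N \mathbf{e}^z_m\,\mathbb{E}\big[Z^{m\top}_t\mathbf{e}^{y\top}_m P_m(t+1)\mathbf{e}^y_k Z^k_t\big]\mathbf{e}^{z\top}_k .
\notag
\end{align}

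\textbf{Step 1.} Conjugate by $J^z_{ij}$. Inside the expectation, insert $J^y_{ij}J^{y\top}_{ij}=I$ on both sides of $P_m(t+1)$, so that $P_m$ is replaced by $J^y_{ij}P^\dagger_m J^{y\top}_{ij}$.

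\textbf{Step 2.} Split the double sum into the diagonal part $m=k$ and the off-diagonal part $m\neq k$, and use Assumption~\ref{assm:Zt}. The diagonal terms become $M^{(2)}_{Z,\,\mathbf{e}^{y\top}_mJ^y_{ij}P^\dagger_m J^{y\top}_{ij}\mathbf{e}^y_m}(t)$. The off-diagonal terms become $(M^{(1)}_Z)^\top(\mathbf{e}^{y\top}_mJ^y_{ij})P^\dagger_m(J^{y\top}_{ij}\mathbf{e}^y_k)M^{(1)}_Z$. Since these moments do not depend on the agent index, the labels $m,k$ now enter only through the selectors $\mathbf{e}^z_m$, $\mathbf{e}^y_m$, $\mathbf{e}^y_k$, $\mathbf{e}^z_k$ and through the subscript of $P^\dagger_m$.

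\textbf{Step 3.} Apply \eqref{Je:3cases} and its $y$-analogue. These give $J^{z\top}_{ij}\mathbf{e}^z_m=\mathbf{e}^z_{\sigma(m)}$ and $J^{y\top}_{ij}\mathbf{e}^y_m=\mathbf{e}^y_{\sigma(m)}$, where $\sigma$ is the transposition $(i\,j)$. Reindex by $m'=\sigma(m)$ and $k'=\sigma(k)$; this is a bijection that preserves the diagonal/off-diagonal split. The summand is then the $(m',k')$ term of $\mathbf{A}$, with $P_{m'}$ replaced by $P^\dagger_{\sigma(m')}$. Hence the $i$-th slot carries $P^\dagger_j$, the $j$-th slot carries $P^\dagger_i$, and every other slot carries $P^\dagger_m$. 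Reassembling with the moment identities of Assumption~\ref{assm:Zt} read in reverse gives exactly the right-hand side of \eqref{JTbfA(P)J=bfA(Pdagger)}.

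\textbf{Main obstacle.} The difficulty is index bookkeeping. The permutation acts simultaneously on the block position (through $\mathbf{e}^z$ and $\mathbf{e}^y$) and on which agent's $P$ appears in each row. One must check that the swapped-slot argument list on the right-hand side is correct. One must also check that the diagonal second-moment term is legitimately relabelled, which uses $M^{(2)}$ being common to all agents rather than any independence. I would verify this by computing the $(i,k)$ block of both sides explicitly for the three cases $k=i$, $k=j$ and $k\notin\{i,j\}$.
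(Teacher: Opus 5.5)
Your proposal is correct and follows essentially the paper's argument: the paper also inserts $J^y_{ij}J^{y\top}_{ij}$ around each $P_m(t+1)$ to produce $P^\dagger_m$, uses \eqref{Je:3cases} to move the transposition onto the block selectors, and invokes Assumption~\ref{assm:Zt} to relabel $Z^i\leftrightarrow Z^j$ inside the expectations. The only difference is bookkeeping: the paper checks the $(m,n)$ blocks case by case ($m,n\notin\{i,j\}$; $m=n=i$; $m=i,\,n=j$; and so on), whereas you first pass to agent-independent moments and then reindex the double sum globally, exactly as the paper does for \eqref{E(ZPZ)=JE(ZPdaggerZ)J}.
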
 

\begin{proof} 
\begin{align} 
 & J^{z\top}_{ij} \mathbf{A}(t) 
 J^z_{ij} 
 \notag \\ 
 = & J^{z\top}_{ij} \mathbf{A}(t; P_1, \dots, P_N ) 
 J^z_{ij} 
 \notag \\ 
 %%%%%%%%%% 
  = & J^{z\top}_{ij} \mathbb{E} \big\{ \big[ 
  P_1(t+1) \mathbf{e}^y_1 Z^1_t, 
 \dots, 
  P_N(t+1) \mathbf{e}^y_N Z^N_t
 \big]^\top  
 \big[ \mathbf{e}^y_1 Z^1_t, \dots, \mathbf{e}^y_N Z^N_t \big] \big\} 
 J^{z}_{ij}
 \notag \\ 
 %%%%% 
 = & J^{z\top}_{ij} \mathbb{E} \big\{ \big[ 
  P_1(t+1) \mathbf{e}^y_1 Z^1_t, 
 \dots, 
  P_N(t+1) \mathbf{e}^y_N Z^N_t
 \big]^\top  
 J^{y}_{ij}
  J^{y\top}_{ij}
 \big[ \mathbf{e}^y_1 Z^1_t, \dots, \mathbf{e}^y_N Z^N_t \big] J^{z}_{ij}  \big\}  
 \notag \\ 
  %%%%% 
 = & J^{z\top}_{ij} \mathbb{E} \big\{ \big[ 
  J^{y\top}_{ij} P_1(t+1) \mathbf{e}^y_1 Z^1_t, 
 \dots, 
  J^{y\top}_{ij} P_N(t+1) \mathbf{e}^y_N Z^N_t
 \big]^\top  
  J^{y\top}_{ij}
 \big[ \mathbf{e}^y_1 Z^1_t, \dots, \mathbf{e}^y_N Z^N_t \big] J^{z}_{ij} \big\} 
 \notag \\ 
 %%%%%%%%%%%%%%%%%% 
 = & 
 J^{z\top}_{ij} \mathbb{E} \big\{ \big[ 
   (J^{y\top}_{ij}P_1(t+1) J^y_{ij}) J^{y\top}_{ij}  \mathbf{e}^y_1 Z^1_t, 
 \dots, 
   (J^{y\top}_{ij} P_N(t+1) J^y_{ij})
  J^{y\top}_{ij} \mathbf{e}^y_N Z^N_t
 \big]^\top 
  J^{y\top}_{ij}
 \big[ \mathbf{e}^y_1 Z^1_t, \dots, \mathbf{e}^y_N Z^N_t \big] J^{z}_{ij}   
  \big\} 
 \notag \\ 
 %%%%%%
 = & 
 J^{z\top}_{ij} \mathbb{E} \big\{ \big[ 
   P^\dagger_1 J^{y\top}_{ij}  \mathbf{e}^y_1 Z^1_t, 
 \dots, 
    P^\dagger_N
  J^{y\top}_{ij} \mathbf{e}^y_N Z^N_t
 \big]^\top J^{y\top}_{ij}
 \big[ \mathbf{e}^y_1 Z^1_t, \dots, \mathbf{e}^y_N Z^N_t \big] J^{z}_{ij}  
  \big\} 
 \notag \\ 
 %%%%%%%%%% 
  = & 
 J^{z\top}_{ij} \mathbb{E} \big\{ \big[ 
   P^\dagger_1  \mathbf{e}^y_1 Z^1_t, 
 \dots, 
 P^\dagger_{i-1}  \mathbf{e}^y_{i-1} Z^{i-1}_t, \, 
 P^\dagger_i   \mathbf{e}^y_j Z^i_t, \, 
 P^\dagger_{i+1}  \mathbf{e}^y_{i+1} Z^{i+1}_t, 
 \dots 
 \notag \\ 
& 
\hspace{7cm}
\dots,  P^\dagger_{j-1}  \mathbf{e}^y_{j-1} Z^{j-1}_t, \, 
 P^\dagger_j   \mathbf{e}^y_i Z^{j}_t, \, 
 P^\dagger_{j+1}  \mathbf{e}^y_{j+1} Z^{j+1}_t, \,
 \dots, 
    P^\dagger_N
   \mathbf{e}_N Z_t
 \big]^\top 
 \cdot 
 \notag \\ 
&  J^{y\top}_{ij} \big[ \mathbf{e}^y_1 Z^1_t, \dots, \mathbf{e}^y_N Z^N_t \big] J^z_{ij}  
  \big\} 
 \notag \\ 
 %%%%%%%%%% 
  = & 
  \mathbb{E} \big\{ \big[ 
   P^\dagger_1  \mathbf{e}^y_1 Z^1_t, 
 \dots, 
 P^\dagger_{i-1}  \mathbf{e}^y_{i-1} Z^{i-1}_t, \, 
 P^\dagger_j   \mathbf{e}^y_i Z^j_t, \, 
 P^\dagger_{i+1}  \mathbf{e}^y_{i+1} Z^{i+1}_t, 
 \dots, 
 \notag \\ 
 &
 \hspace{7cm}
 \dots, P^\dagger_{j-1}  \mathbf{e}^y_{j-1} Z^{j-1}_t, \, 
 P^\dagger_i   \mathbf{e}^y_j Z^i_t, \, 
 P^\dagger_{j+1}  \mathbf{e}^y_{j+1} Z^{j+1}_t, \,
 \dots, 
    P^\dagger_N
   \mathbf{e}^y_N Z^N_t
 \big]^\top \cdot 
 \notag \\ 
&  \big[ \mathbf{e}^y_1 Z^1_t, \dots, \mathbf{e}^y_i Z^j_t, \dots, 
\mathbf{e}^y_j Z^i_t , \dots ,
\mathbf{e}^y_N Z^N_t \big]   
  \big\} . 
 \notag  
\end{align} 
Note that 
\begin{align} 
 & \mathbf{A}( t; P^\dagger_1, \dots, P^\dagger_{i-1}, P^\dagger_j, P^\dagger_{i+1}, \dots, 
P^\dagger_{j-1}, P^\dagger_i, P^\dagger_{j+1}, \dots, P^\dagger_N)  
\notag \\ 
  = & 
  \mathbb{E} \big\{ \big[ 
   P^\dagger_1  \mathbf{e}^y_1 Z^1_t, 
 \dots, 
 P^\dagger_{i-1}  \mathbf{e}^y_{i-1} Z^{i-1}_t, \, 
 P^\dagger_j   \mathbf{e}^y_i Z^i_t, \, 
 P^\dagger_{i+1}  \mathbf{e}^y_{i+1} Z^{i+1}_t, 
 \dots, 
 \notag \\ 
 & 
 \hspace{5cm}
 \dots 
  P^\dagger_{j-1}  \mathbf{e}^y_{j-1} Z^{j-1}_t, \, 
 P^\dagger_i   \mathbf{e}^y_j Z^j_t, \, 
 P^\dagger_{j+1}  \mathbf{e}^y_{j+1} Z^{j+1}_t, \,
 \dots, 
    P^\dagger_N
   \mathbf{e}^y_N Z^N_t
 \big]^\top \cdot 
 \notag \\ 
&  
 \hspace{8cm}
\big[ \mathbf{e}^y_1 Z^1_t, \dots, \mathbf{e}^y_i Z^i_t, \dots, 
\mathbf{e}^y_j Z^j_t , \dots ,
\mathbf{e}^y_N Z^N_t \big]   
  \big\} 
 \notag
\notag 
\end{align}

If  $m \neq i, j$, and $n\neq i, j$, we have  
\begin{align} 
 \big( J^{z\top}_{ij} \mathbf{A}(t; P_1, \dots, P_N ) 
 J^z_{ij} \big)_{m,n} 
 = &
 \mathbb{E}\big[ Z^{m\top}_t \mathbf{e}_m^{y \top} P_m(t+1) \mathbf{e}^y_n Z^n_t \big] 
 \notag \\ 
 = &
 \mathbb{E}\big[ Z^{m\top}_t (\mathbf{e}_m^{y \top} J^y_{ij}) ( J^{y\top}_{ij} P_m(t+1) J^y_{ij} ) 
  ( J^{y\top}_{ij} \mathbf{e}^y_n ) Z^n_t \big] 
 \notag \\
 = & \mathbb{E}\big[ Z^{m\top}_t \mathbf{e}_m^{y \top} 
  P^\dagger_m(t+1)  \mathbf{e}^y_n Z^n_t \big] 
  \notag \\ 
  = & \big( \mathbf{A}( t; P^\dagger_1, \dots, P^\dagger_{i-1}, P^\dagger_j, P^\dagger_{i+1}, \dots, 
P^\dagger_{j-1}, P^\dagger_i, P^\dagger_{j+1}, \dots, P^\dagger_N)  \big)_{m , n}
 \notag 
\end{align} 
If $m=i$, and $n=i$, then 
\begin{align} 
 \big( J^{z\top}_{ij} \mathbf{A}(t; P_1, \dots, P_N ) 
 J^z_{ij} \big)_{m,n} 
 = &
 \mathbb{E}\big[ Z^{j\top}_t \mathbf{e}_i^{y \top} P^\dagger_j(t+1) \mathbf{e}^y_i Z^j_t \big] 
 \notag \\ 
  = & \mathbb{E}\big[ Z^{i\top}_t \mathbf{e}_i^{y \top} P^\dagger_j(t+1) \mathbf{e}^y_i Z^i_t \big] 
  \notag \\ 
  = & 
  \big( \mathbf{A}( t; P^\dagger_1, \dots, P^\dagger_{i-1}, P^\dagger_j, P^\dagger_{i+1}, \dots, 
P^\dagger_{j-1}, P^\dagger_i, P^\dagger_{j+1}, \dots, P^\dagger_N)  \big)_{m , n}
\notag 
\end{align} 
If $m=i$ and $n=j$, then 
\begin{align} 
 \big( J^{z\top}_{ij} \mathbf{A}(t; P_1, \dots, P_N ) 
 J^z_{ij} \big)_{m,n} 
 = & \mathbb{E}\big[ Z^{j\top}_t \mathbf{e}_i^{y \top} P^\dagger_j(t+1) \mathbf{e}^y_j Z^i_t \big] 
 \notag \\ 
 = & \mathbb{E}\big[ Z^{j\top}_t \big] \mathbf{e}_i^{y \top} P^\dagger_j(t+1) \mathbf{e}^y_j 
  \mathbb{E} \big[ Z^i_t \big] 
 \notag \\ 
 = & \mathbb{E}\big[ Z^{i\top}_t \big] \mathbf{e}_i^{y \top} P^\dagger_j(t+1) \mathbf{e}^y_j 
  \mathbb{E} \big[ Z^j_t \big] 
 \notag \\ 
 = & \mathbb{E}\big[ Z^{i\top}_t \mathbf{e}_i^{y \top} P^\dagger_j(t+1) \mathbf{e}^y_j Z^j_t \big]  
 \notag \\ 
  = & 
  \big( \mathbf{A}( t; P^\dagger_1, \dots, P^\dagger_{i-1}, P^\dagger_j, P^\dagger_{i+1}, \dots, 
P^\dagger_{j-1}, P^\dagger_i, P^\dagger_{j+1}, \dots, P^\dagger_N)  \big)_{m , n} 
\notag 
 \end{align} 
If $m=i$ and $n\neq i, j$, then 
\begin{align} 
 \big( J^{z\top}_{ij} \mathbf{A}(t; P_1, \dots, P_N ) 
 J^z_{ij} \big)_{m,n} 
 = & \mathbb{E} \big[ Z^{j\top}_t \mathbf{e}^{y\top}_i P^\dagger_j(t+1) \mathbf{e}^y_n Z^n_t \big] 
 \notag \\ 
 = & \mathbb{E} \big[ Z^{j\top}_t \big] \mathbf{e}^{y\top}_i P^\dagger_j(t+1) \mathbf{e}^y_n 
  \mathbb{E} \big[ Z^n_t \big]
 \notag \\ 
  = & \mathbb{E} \big[ Z^{i\top}_t \big] \mathbf{e}^{y\top}_i P^\dagger_j(t+1) \mathbf{e}^y_n 
  \mathbb{E} \big[ Z^n_t \big]
 \notag \\ 
 = & \mathbb{E} \big[ Z^{i\top}_t  \mathbf{e}^{y\top}_i P^\dagger_j(t+1) \mathbf{e}^y_n 
   Z^n_t \big]
 \notag \\ 
 = & 
  \big( \mathbf{A}( t; P^\dagger_1, \dots, P^\dagger_{i-1}, P^\dagger_j, P^\dagger_{i+1}, \dots, 
P^\dagger_{j-1}, P^\dagger_i, P^\dagger_{j+1}, \dots, P^\dagger_N)  \big)_{m , n} 
\notag 
 \end{align} 
 Similarly, for $m=j$ and for all $n=1, \dots, N$, we have 
 \begin{align} 
 \big( J^{z\top}_{ij} \mathbf{A}(t; P_1, \dots, P_N ) 
 J^z_{ij} \big)_{m,n} 
 = & 
  \big( \mathbf{A}( t; P^\dagger_1, \dots, P^\dagger_{i-1}, P^\dagger_j, P^\dagger_{i+1}, \dots, 
P^\dagger_{j-1}, P^\dagger_i, P^\dagger_{j+1}, \dots, P^\dagger_N)  \big)_{m , n} 
\notag 
 \end{align} 
Summing up the above cases, we have~\eqref{JTbfA(P)J=bfA(Pdagger)}. 
\end{proof}

\begin{lemma} 
For any $i,j, n \in \{ 1, \dots, N\}$ and any $t=0,1,\dots, T-1$, we have 
\begin{align} 
  J^{z\top}_{ij} \mathbf{B}(t; P_1, \dots, P_N )  J^y_{ij}
 =  \mathbf{B}( t; P^\dagger_1, \dots, P^\dagger_{i-1}, P^\dagger_j, P^\dagger_{i+1}, \dots, 
P^\dagger_{j-1}, P^\dagger_i, P^\dagger_{j+1}, \dots, P^\dagger_N) . 
\label{JTbfB(P)J=bfB(Pdagger)}
\end{align} 
\end{lemma}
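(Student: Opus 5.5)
The argument follows the proof of~\eqref{JTbfA(P)J=bfA(Pdagger)}, adapted to the fact that $\mathbf{B}$ has the extra right factor $(\boldsymbol{\Theta}+\overline{\boldsymbol{\Theta}}/N)$ in place of $\mathbf{Z}_t$. Throughout, $P_n^\dagger(t+1)\eqdef J^{y\top}_{ij}P_n(t+1)J^y_{ij}$ as in the preceding lemmas. We use that $J^y_{ij}$ and $J^z_{ij}$ are symmetric orthogonal permutation matrices, so $J^y_{ij}J^{y\top}_{ij}=I$.

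First, we treat the right factor. Since $\boldsymbol{\Theta}=I_N\otimes\theta$ and $\overline{\boldsymbol{\Theta}}=\mathbf{1}_{N\times N}\otimes\bar\theta$, both are invariant under simultaneous swapping of block rows and columns $i,j$:
\[
J^{y\top}_{ij}(\boldsymbol{\Theta}+\overline{\boldsymbol{\Theta}}/N)J^y_{ij}=\boldsymbol{\Theta}+\overline{\boldsymbol{\Theta}}/N .
\]
Writing $\mathbf{B}(t)=\mathbf{W}(t)^\top(\boldsymbol{\Theta}+\overline{\boldsymbol{\Theta}}/N)$ with $\mathbf{W}(t)\eqdef\mathbb{E}\big[P_1(t+1)\mathbf{e}^y_1Z^1_t,\dots,P_N(t+1)\mathbf{e}^y_NZ^N_t\big]$, we insert $J^y_{ij}J^{y\top}_{ij}$ and obtain
\[
J^{z\top}_{ij}\mathbf{B}J^y_{ij}=\big(J^{y\top}_{ij}\mathbf{W}J^z_{ij}\big)^\top(\boldsymbol{\Theta}+\overline{\boldsymbol{\Theta}}/N).
\]
This reduces the claim to identifying the permuted matrix $J^{y\top}_{ij}\mathbf{W}J^z_{ij}$.

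Second, we compute $J^{y\top}_{ij}\mathbf{W}J^z_{ij}$ block column by block column. Right multiplication by $J^z_{ij}$ swaps block columns $i$ and $j$. Left multiplication by $J^{y\top}_{ij}$ turns $P_m\mathbf{e}^y_m$ into $P^\dagger_m J^{y\top}_{ij}\mathbf{e}^y_m$, and $J^{y\top}_{ij}\mathbf{e}^y_m$ is given by~\eqref{Je:3cases}. Hence the $m$-th block column of $J^{y\top}_{ij}\mathbf{W}J^z_{ij}$ equals:
\begin{itemize}
\item $\mathbb{E}[P^\dagger_m\mathbf{e}^y_mZ^m_t]$ for $m\neq i,j$;
\item $\mathbb{E}[P^\dagger_j\mathbf{e}^y_iZ^j_t]$ for $m=i$;
\item $\mathbb{E}[P^\dagger_i\mathbf{e}^y_jZ^i_t]$ for $m=j$.
\end{itemize}

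Third, we remove the index mismatch $Z^j_t$ versus $Z^i_t$. Because $\mathbf{W}$ involves only first moments, $\mathbb{E}[P^\dagger_j\mathbf{e}^y_iZ^j_t]=P^\dagger_j\mathbf{e}^y_iM^{(1)}_Z(t)=\mathbb{E}[P^\dagger_j\mathbf{e}^y_iZ^i_t]$ by~\eqref{EZit=MZ1t}, and similarly for $m=j$. So no product-moment case analysis is needed, unlike for $\mathbf{A}$. The permuted matrix is therefore exactly $\mathbf{W}$ evaluated at the tuple $(P^\dagger_1,\dots,P^\dagger_j,\dots,P^\dagger_i,\dots,P^\dagger_N)$, with $P^\dagger_j$ in slot $i$ and $P^\dagger_i$ in slot $j$. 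Multiplying by $(\boldsymbol{\Theta}+\overline{\boldsymbol{\Theta}}/N)$ then gives~\eqref{JTbfB(P)J=bfB(Pdagger)}.

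The main point requiring care is the first step: the right-hand factor must be genuinely permutation invariant, so that $J^y_{ij}$ can pass through it. The bookkeeping of which $P^\dagger$ lands in which slot mirrors the proof of~\eqref{JTbfA(P)J=bfA(Pdagger)}.
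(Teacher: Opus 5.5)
Your proposal is correct and follows the paper's proof essentially step for step. Both arguments use the invariance $J^{y\top}_{ij}(\boldsymbol{\Theta}+\overline{\boldsymbol{\Theta}}/N)J^y_{ij}=\boldsymbol{\Theta}+\overline{\boldsymbol{\Theta}}/N$, rewrite $J^{y\top}_{ij}P_m\mathbf{e}^y_m$ as $P^\dagger_m J^{y\top}_{ij}\mathbf{e}^y_m$ together with the block swap from $J^z_{ij}$, and invoke only the first-moment identity~\eqref{EZit=MZ1t} to exchange $Z^j_t$ and $Z^i_t$.
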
 

\begin{proof} 
 \begin{align} 
& J^{z\top}_{ij} \mathbf{B}(t)  J^y_{ij} \notag \\  
= & J^{z\top}_{ij} \mathbf{B}(t; P_1, \dots, P_N )  J^y_{ij}
\notag \\
= & J^{z\top}_{ij}  \mathbb{E} \big[ P_1(t+1) \mathbf{e}^y_1 Z^1_t, \dots, P_N(t+1) \mathbf{e}^y_N Z^N_t \big]^\top 
  (\boldsymbol{\Theta} + \overline{\boldsymbol{\Theta}}/N )  J^y_{ij} 
\notag \\ 
= & \big\{ J^{z\top}_{ij}  \mathbb{E} \big[ P_1(t+1) \mathbf{e}^y_1 Z^1_t, \dots, P_N(t+1) \mathbf{e}^y_N Z^N_t \big]^\top 
 J^y_{ij} \big\} 
 \big\{ J^{y\top}_{ij} (\boldsymbol{\Theta} + \overline{\boldsymbol{\Theta}}/N )  J^y_{ij} \big\} 
\notag \\ 
= & J^{z\top}_{ij} \mathbb{E} \big[  J^{y\top}_{ij} P_1(t+1) \mathbf{e}^y_1 Z^1_t , \dots, J^{y\top}_{ij} P_N(t+1) \mathbf{e}^y_N Z^N_t   \big]^\top 
  (\boldsymbol{\Theta} + \overline{\boldsymbol{\Theta}}/N )  
\notag \\ 
= & J^{z\top}_{ij} \mathbb{E} \big[  ( J^{y\top}_{ij} P_1(t+1) J^y_{ij} ) J^{y\top}_{ij} \mathbf{e}^y_1 Z^1_t  , \dots, 
 ( J^{y\top}_{ij} P_N(t+1) J^y_{ij} ) J^{y\top}_{ij} \mathbf{e}^y_N Z^N_t   \big]^\top 
  (\boldsymbol{\Theta} + \overline{\boldsymbol{\Theta}}/N )  
\notag \\ 
%%%%%%% 
= & J^{z\top}_{ij} \mathbb{E} \big[  P_1^\dagger J^{y\top}_{ij} \mathbf{e}^y_1 Z_t  , \dots, 
  P_N^\dagger J^{y\top}_{ij} \mathbf{e}^y_N Z_t   \big]^\top 
  (\boldsymbol{\Theta} + \overline{\boldsymbol{\Theta}}/N )  
\notag \\ 
 %%%%%%%%%% 
  = & 
 J^{z\top}_{ij} \mathbb{E} \big\{ \big[ 
   P^\dagger_1  \mathbf{e}^y_1 Z^1_t, 
 \dots, 
 P^\dagger_{i-1}  \mathbf{e}^y_{i-1} Z^{i-1}_t, \, 
 P^\dagger_i   \mathbf{e}^y_j Z^i_t, \, 
 P^\dagger_{i+1}  \mathbf{e}^y_{i+1} Z^{i+1}_t, 
 \dots, 
 \notag \\ 
 & \hspace{3cm}
  \dots, P^\dagger_{j-1}  \mathbf{e}^y_{j-1} Z^{j-1}_t, \, 
 P^\dagger_j   \mathbf{e}^y_i Z^j_t, \, 
 P^\dagger_{j+1}  \mathbf{e}^y_{j+1} Z^{j+1}_t, \,
 \dots, 
    P^\dagger_N
   \mathbf{e}^y_N Z^N_t
 \big]^\top  
 (\boldsymbol{\Theta} + \overline{\boldsymbol{\Theta}}/N )    
  \big\} 
 \notag \\ 
 %%%%%%%%%% 
  = & 
  \mathbb{E} \big\{ \big[ 
   P^\dagger_1  \mathbf{e}^y_1 Z^1_t, 
 \dots, 
 P^\dagger_{i-1}  \mathbf{e}^y_{i-1} Z^{i-1}_t, \, 
 P^\dagger_j   \mathbf{e}^y_i Z^j_t, \, 
 P^\dagger_{i+1}  \mathbf{e}^y_{i+1} Z^{i+1}_t, 
 \dots, 
 \notag \\ 
 & \hspace{3cm} 
 \dots,  P^\dagger_{j-1}  \mathbf{e}^y_{j-1} Z^{j-1}_t, \, 
 P^\dagger_i   \mathbf{e}^y_j Z^i_t, \, 
 P^\dagger_{j+1}  \mathbf{e}^y_{j+1} Z^{i+1}_t, \,
 \dots, 
    P^\dagger_N
   \mathbf{e}^y_N Z^N_t
 \big]^\top 
 (\boldsymbol{\Theta} + \overline{\boldsymbol{\Theta}}/N )   
  \big\} 
 \notag \\ 
 %%%%%%%%%%% 
 = & 
    \big[ 
   P^\dagger_1  \mathbf{e}^y_1 \mathbb{E} Z^1_t, 
 \dots, 
 P^\dagger_{i-1}  \mathbf{e}^y_{i-1} \mathbb{E} Z^{i-1}_t, \, 
 P^\dagger_j   \mathbf{e}^y_i \mathbb{E} Z^j_t, \, 
 P^\dagger_{i+1}  \mathbf{e}^y_{i+1} \mathbb{E} Z^{i+1}_t, 
 \dots, 
 \notag \\ 
 & \hspace{3cm} 
 \dots,  P^\dagger_{j-1}  \mathbf{e}^y_{j-1} \mathbb{E} Z^{j-1}_t, \, 
 P^\dagger_i   \mathbf{e}^y_j \mathbb{E} Z^i_t, \, 
 P^\dagger_{j+1}  \mathbf{e}^y_{j+1} \mathbb{E} Z^{i+1}_t, \,
 \dots, 
    P^\dagger_N
   \mathbf{e}^y_N \mathbb{E} Z^N_t
 \big]^\top 
 (\boldsymbol{\Theta} + \overline{\boldsymbol{\Theta}}/N )   
 \notag \\ 
 %%%%%%%%%%% 
 = & 
    \big[ 
   P^\dagger_1  \mathbf{e}^y_1 \mathbb{E} Z^1_t, 
 \dots, 
 P^\dagger_{i-1}  \mathbf{e}^y_{i-1} \mathbb{E} Z^{i-1}_t, \, 
 P^\dagger_j   \mathbf{e}^y_i \mathbb{E} Z^i_t, \, 
 P^\dagger_{i+1}  \mathbf{e}^y_{i+1} \mathbb{E} Z^{i+1}_t, 
 \dots, 
 \notag \\ 
 & \hspace{3cm} 
 \dots,  P^\dagger_{j-1}  \mathbf{e}^y_{j-1} \mathbb{E} Z^{j-1}_t, \, 
 P^\dagger_i   \mathbf{e}^y_j \mathbb{E} Z^j_t, \, 
 P^\dagger_{j+1}  \mathbf{e}^y_{j+1} \mathbb{E} Z^{i+1}_t, \,
 \dots, 
    P^\dagger_N
   \mathbf{e}^y_N \mathbb{E} Z^N_t
 \big]^\top 
 (\boldsymbol{\Theta} + \overline{\boldsymbol{\Theta}}/N )   
 \notag \\ 
 %%%%%%%%%%% 
  = & 
   \mathbb{E} \big\{  \big[ 
   P^\dagger_1  \mathbf{e}^y_1 Z^1_t, 
 \dots, 
 P^\dagger_{i-1}  \mathbf{e}^y_{i-1} Z^{i-1}_t, \, 
 P^\dagger_j   \mathbf{e}^y_i  Z^i_t, \, 
 P^\dagger_{i+1}  \mathbf{e}^y_{i+1} Z^{i+1}_t, 
 \dots, 
 \notag \\ 
 & \hspace{3cm} 
 \dots,  P^\dagger_{j-1}  \mathbf{e}^y_{j-1}  Z^{j-1}_t, \, 
 P^\dagger_i   \mathbf{e}^y_j Z^j_t, \, 
 P^\dagger_{j+1}  \mathbf{e}^y_{j+1} Z^{i+1}_t, \,
 \dots, 
    P^\dagger_N
   \mathbf{e}^y_N  Z^N_t
 \big]^\top 
 (\boldsymbol{\Theta} + \overline{\boldsymbol{\Theta}}/N )  
 \big\} 
 \notag \\ 
 %%%%%%%%%%% 
 = & \mathbf{B}( t; P^\dagger_1, \dots, P^\dagger_{i-1}, P^\dagger_j, P^\dagger_{i+1}, \dots, 
P^\dagger_{j-1}, P^\dagger_i, P^\dagger_{j+1}, \dots, P^\dagger_N) . 
\notag 
\end{align} 
\end{proof} 

With the above lemmas, we are ready to prove Proposition~\ref{prop:PnSubmat:PiN1234}. 

%\begin{proof}
\emph{Proof of Proposition~\ref{prop:PnSubmat:PiN1234}.}
Let $J^z_{ij}$ denote the matrix obtained by exchanging the $i$-th and $j$-th rows of sub-matrices in 
$I_{N d_z}= ( \delta_{i,j}I_{d_z} )_{i,j=1}^N$. 
It is easy to check that 
$J^y_{ij}=J^y_{ji}$ and $J^y_{ij}=J^{y\top}_{ij} = (J^y_{ij})^{-1}$ for all $i$, $j$, 
and the same properties hold for $J^z_{ij}$. 

\begin{comment} 
Multiplying both sides of \eqref{ODE:Pn} from the left by $J^{y\top}_{ij}$ and from the right by $J^y_{ij}$, we obtain 
\begin{align} 
J_{ij}^\top P_n(t) J_{ij}
=& J_{ij}^\top \mathbf{G}^\top(t) 
\mathbf{Q}_n(t) \mathbf{G}(t) J_{ij}
+ J_{ij}^\top \mathbf{G}^\top(t) \mathbf{L}_n(t) J_{ij} 
+ J_{ij}^\top \mathbf{L}_n^\top(t) J_{ij}
\mathbf{G}(t) 
\notag 
\\ 
&  + e^{-\alpha(T-1-t)} \big[ \kappa J_{ij}^\top 
 \big( \Theta_n + \overline\Theta/N \big)^\top  \big( \Theta_n + \overline\Theta/N \big) J_{ij}
+ \overline\kappa J_{ij}^\top 
 \big( \Theta_n - \Theta/N \big)^\top \big( \Theta_n - \Theta/N \big)  J_{ij} \big] 
\notag \\ 
& + J_{ij}^\top ( \boldsymbol{\Theta} + \overline{\boldsymbol{\Theta}}/N )^\top P_n(t+1) 
( \boldsymbol{\Theta} + \overline{\boldsymbol{\Theta}}/N ) J_{ij} , 
\quad 0\leq t < T ,  
\notag 
 \\ 
J_{ij}^\top P_n(T)J_{ij} = & 0 , \quad n = 1, \dots, N, 
\notag 
\end{align} 
\end{comment}

Note that $J_{ij}^\top   P_n(t) J_{ij}$ : 
\begin{itemize} 
\item exchange of $i$-th and $j$-th diagonal sub-matrices of $P_n(t)$; 

\item exchange the submatrix in the $i$-th row and $j$-th column and the submatrix in the $j$-th row and $i$-th column. 

\item exchange of $i$-th and $j$-th rows of sub-matrices, and exchange the $i$-th and $j$-th columns of sub-matrices
\end{itemize} 

Denote 
\begin{align} 
& \mathbf{A}(t) = \mathbf{A}(t; P_1, \dots, P_N) , 
\quad \mathbf{B}(t) = \mathbf{B}(t; P_1, \dots, P_N), 
 \quad  
 \mathbf{G}(t) 
=  \mathbf{G}(t; P_1, \dots, P_N) , 
\notag \\ 
& \mathbf{Q}_n(t) 
= \mathbf{Q}_n(t; P_n), \quad 1\leq n \leq N .  \notag 
\end{align} 
%\begin{comment}
Then~\eqref{ODE:Pn} can then be written as 
\begin{align} 
\begin{aligned} 
\Phi_n(t; P_1, \dots, P_N) 
\eqdef & \mathbf{G}^\top(t; P_1, \dots, P_N) 
\mathbf{Q}_n(t; P_n ) 
\mathbf{G}(t; P_1, \dots, P_N )  
 \\ 
& + \mathbf{G}^\top( t; P_1, \dots, P_N ) 
\mathbf{L}_n(t) 
+ \mathbf{L}_n^\top(t) 
\mathbf{G}(t; P_1, \dots, P_N ) 
\\ 
& 
 + e^{-\alpha(T-1-t)} \big[ \kappa  
 \big( \Theta_n + \overline\Theta/N \big)^\top  \big( \Theta_n + \overline\Theta/N \big) 
+ \overline\kappa  
 \big( \Theta_n - \Theta/N \big)^\top \big( \Theta_n - \Theta/N \big)  \big] 
 \\ 
& + ( \boldsymbol{\Theta} + \overline{\boldsymbol{\Theta}}/N )^\top P_n(t+1) 
( \boldsymbol{\Theta} + \overline{\boldsymbol{\Theta}}/N ) . 
\end{aligned} 
\label{Phin(t;P1:N)}
\end{align} 
%\end{comment} 

Under Assumption~\ref{assm:Zt}, the matrix-valued coefficients~\eqref{bfABCDF}, \eqref{bfQn} and~\eqref{bfLn} can be written as 
\allowdisplaybreaks
\begin{align} 
\begin{cases} 
  \mathbf{A}(t) 
  = \big( \mathbf{A}^{m, n}(t) \big)_{1\leq m, n\leq N} : \quad 
  \\ 
 \qquad   \mathbf{A}^{m,m}(t) = \big( M^{(1)}_Z(t) \big)^\top \mathbf{e}_m^{y \top} P_m(t+1) \mathbf{e}^y_m M^{(1)}_Z(t) , 
 \quad 1 \leq m \leq N  ; 
 \\
 \qquad \mathbf{A}^{m,n}(t) = M_{Z, \, \mathbf{e}_m^{y \top} P_m(t+1) \mathbf{e}^y_n }^{(2)}(t) , 
 \quad 1 \leq m \neq n \leq N ; 
  %= \mathbb{E} \big\{ \big[ P_1(t+1) \mathbf{e}_1 Z^1_t,  \dots, 
  % P_N(t+1) \mathbf{e}_N Z^N_t \big]^\top  
  %\mathbf{Z}_t \big\} ,   
  \\ 
  \widehat{\mathbf{A}}_1(t) 
  = I_{N} \otimes M_{Z}^{(2)}(t) ; 
  %=   \mathbb{E} \big\{ \diag[ Z^{1\top}_t Z^1_t, \dots, Z^{N\top}_t Z^N_t ] 
  % \big\} 
  % = \mathbb{E} \mathbf{Z}_t^\top \mathbf{Z}_t, 
   \\ 
  \widehat{\mathbf{A}}_2(t) 
 = \big( \widehat{\mathbf{A}}_2^{m,n}(t) \big)_{1\leq m, n \leq N} : \quad 
 \\  
 \qquad \widehat{\mathbf{A}}_2^{m,m}(t) = M_{Z}^{(2)}(t) , \quad  1\leq m \leq N; 
 \\ 
    \qquad  \widehat{\mathbf{A}}_2^{m,n}(t) = \big( M_{Z}^{(1)}(t) \big)^\top  M_{Z}^{(1)}(t) , 
 \quad 1 \leq m \neq n \leq N ; 
% \mathbb{E} \big\{ \big[ Z^{1}_t, \dots, Z^N_t \big]^\top 
%  \mathbf{1}_y \mathbf{Z}_t
% \big\} , 
  \\  
  \mathbf{B}(t) =  \big[ P_1(t+1) \mathbf{e}^y_1  M_Z^{(1)}(t)  , \dots, P_N(t+1) \mathbf{e}^y_N  M_Z^{(1)}(t)  \big]^\top 
  (\boldsymbol{\Theta} + \overline{\boldsymbol{\Theta}}/N ) ; 
  \\ 
   \mathbf{D}(t) =  \big[  \Theta_1^\top  M_Z^{(1)}(t)  , \dots,  \Theta_N^\top  M_Z^{(1)}(t)  \big]^\top  ;  
  % =  \mathbb{E} \mathbf{Z}_t^\top \boldsymbol{\Theta} ,
   \\ 
   \widehat{\mathbf{D}}(t) 
   =  \mathbf{1}_{N\times 1}  \otimes \big[ ( M_Z^{(1)}(t) )^\top \Theta \big] ;
   %= \mathbb{E} \big[ Z^1_t, \dots, Z^N_t \big]^\top \Theta  ,
   \\ 
    \overline{\mathbf{D}}(t) 
    = \mathbf{1}_{N\times 1} \otimes  \big[ (M_Z^{(1)}(t))^\top \, \overline\Theta \, \big] ;   
    %=  \mathbb{E} \big[ Z^1_t, \dots, Z^N_t \big]^\top \overline{\Theta} ,    
   \\ 
   \mathbf{C}(t) =  \big[  S_1(t+1)^\top \mathbf{e}^y_1 M_Z^{(1)}(t) , \dots, S_N(t+1)^\top \mathbf{e}^y_N M_Z^{(1)}(t) \big]^\top ; 
   \\ 
   \mathbf{F}(t) 
   =  \mathbf{1}_{N\times 1} \otimes \big[ ( M_Z^{(1)}(t) )^\top y_{t+1} \big] . 
  % = \mathbb{E} \big[ Z^1_t, \dots, Z^N_t \big]^\top   y_{t+1}  .  
  \end{cases}
  \label{bfABCDF:assum:Zt}
\end{align}

Then the coefficients $\mathbf{Q}_n(t)$ and $\mathbf{L}_n(t)$ can be written as 
\begin{align} 
\mathbf{Q}_n(t; P_n) = & 
 e^{-\alpha(T-1-t)} \Big\{ \kappa \mathbf{e}_n^z M_Z^{(2)}(t) \mathbf{e}_n^{z \top} 
 + \overline\kappa \mathbb{E}\big[ \big( Z^n_t \mathbf{e}_n^{z\top} - \mathbf{1}_y \mathbf{Z}_t/N \big)^\top  \big( Z^n_t \mathbf{e}_n^{z\top} - \mathbf{1}_y \mathbf{Z}_t/N \big) \big] 
 + \gamma \mathbf{e}^z_n \mathbf{e}^{z\top}_n
 \Big\}
\notag \\ 
& 
+ \mathbb{E}( \mathbf{Z}_t^\top P_n(t+1)  \mathbf{Z}_t )  
 , 
  \label{bfQn:assm:Zt} \\
%\end{align} 
%\begin{align} 
\mathbf{L}_n(t; P_n) 
= &
 e^{-\alpha(T-1-t)}  \big[ \kappa \mathbf{e}^z_n ( M_Z^{(1)}(t) )^\top  (\Theta_n - \overline\Theta /N) 
  + \overline\kappa \mathbb{E}\big( Z^n_t \mathbf{e}_n^{z\top} - \mathbf{1}_y \mathbf{Z}_t / N \big)^\top 
  (\Theta_n - \Theta/N)
 \big]  
 \notag \\ 
 & + \mathbb{E}\mathbf{Z}_t^\top P_n(t+1) ( \mathbf{\Theta} + \overline{\mathbf{\Theta}}/N )
\label{bfLn:assm:Zt}
\end{align}

\noindent 
\textbf{Step 1:} 
For each $2\leq i<j \leq N$ fixed,  
we denote $P^\dagger_n := J^{y\top}_{ij} P_n J^y_{ij}$ for $1\leq n \leq N$, 
and show that 
\begin{align} 
( P^\dagger_1, \dots, P^\dagger_{i-1}, P^\dagger_j, P^\dagger_{i+1}, \dots, 
P^\dagger_{j-1}, P^\dagger_i, P^\dagger_{j+1}, \dots, P^\dagger_N ) 
\notag 
\end{align} 
solves the system~\eqref{Pn=Phin(t;P1:N)}, that is, 
\begin{align} 
\label{Pdagger=Phi(Pdagger)}
&
\begin{cases} 
 P_n^\dagger(t) = 
\Phi_n( t; P^\dagger_1, \dots, P^\dagger_{i-1}, P^\dagger_j, P^\dagger_{i+1}, \dots, 
P^\dagger_{j-1}, P^\dagger_i, P^\dagger_{j+1}, \dots, P^\dagger_N ) , 
\quad & \mbox{if} \quad n\neq i, j ;  
 \\ 
%%%%%%% 
 P_i^\dagger(t) = 
\Phi_j( t; P^\dagger_1, \dots, P^\dagger_{i-1}, P^\dagger_j, P^\dagger_{i+1}, \dots, 
P^\dagger_{j-1}, P^\dagger_i, P^\dagger_{j+1}, \dots, P^\dagger_N ) ,  
\quad & \mbox{if} \quad n=i;  \\ 
%%%%%%%  
 P_j^\dagger(t) = 
\Phi_i( t; P^\dagger_1, \dots, P^\dagger_{i-1}, P^\dagger_j, P^\dagger_{i+1}, \dots, 
P^\dagger_{j-1}, P^\dagger_i, P^\dagger_{j+1}, \dots, P^\dagger_N ) . 
\quad & \mbox{if} \quad n = j .   
\end{cases} 
\end{align} 
with the terminal conditiion $P^\dagger_n(T) = 0$, $n = 1, \dots N$. 
Comparing~\eqref{Pn=Phin(t;P1:N)} and~\eqref{Pdagger=Phi(Pdagger)}, we have by uniqueness that  
\begin{align} 
( P^\dagger_1, \dots, P^\dagger_{i-1}, P^\dagger_j, P^\dagger_{i+1}, \dots, 
P^\dagger_{j-1}, P^\dagger_i, P^\dagger_{j+1}, \dots, P^\dagger_N)
= (P_1, \dots, P_N) ,  
\notag 
\end{align} 
which implies that $P_1 = P^\dagger_1$. 
Since $P^\dagger_1=J^{y\top}_{ij} P_1 J^{y}_{ij}$, it then follows that $P_1=J^{y\top}_{ij} P_1 J^{y}_{ij}$ for all $2\leq i < j \leq N$. 
Denote $P_n=\big(P_n^{(k,l)}\big)_{1\leq k,l\leq N}$ with each $P_n^{(k,l)}\in \mathbb{R}^{d_y\times d_y}$. 
Then we have for all $t=0, 1, \dots, T$ that 
\begin{align} 
& P_1^{(1,2)}(t) = P_1^{(1,3)}(t) = \dots = P_1^{(1,N)}(t),  
 \notag \\ 
& P_1^{(2,1)}(t) = P_1^{(3,1)}(t) = \dots = P_1^{(N,1)}(t) , 
\notag \\ 
& P_1^{(2,2)}(t) = P_1^{(3,3)}(t) = \dots = P_1^{(N,N)}(t) ,  
\notag \\ 
& P_1^{(k_1,l_1)}(t) = P_1^{(k_2, l_2)}(t) , 
\quad \forall \, 2\leq k_1 \neq l_1 \leq N, 
\quad \forall \, 2\leq k_2 \neq l_2 \leq N . 
\notag 
\end{align} 
It then follows that $P_1(t)$, $t=0, 1, \dots, T$ has the form~\eqref{P1:PiN1234}.

Now we prove~\eqref{Pdagger=Phi(Pdagger)}.  
By~\eqref{bfABCDF:assum:Zt} we have the relations 
\begin{align} 
& 
 \Theta J^y_{ij}=\Theta, 
\quad 
\overline\Theta J^y_{ij}=\overline\Theta, 
\quad 
\Theta_n J^y_{ij}
=
\begin{cases} 
\Theta_n , \quad \mbox{if} \quad n\neq i,j, \\ 
\Theta_j, \quad \mbox{if} \quad n=i, \\
\Theta_i, \quad \mbox{if} \quad n=j ,  
\end{cases} 
\notag \\ 
& 
J^{y\top}_{ij} \mathbf{\Theta} J^y_{ij} =
J^{y\top}_{ij} \mathbf{\Theta}J^y_{ij}=\mathbf{\Theta}, 
\quad 
J^{y\top}_{ij} \mathbf{\overline\Theta}  = 
\mathbf{\overline\Theta}J^y_{ij}=\mathbf{\overline\Theta}, 
\notag \\ 
& J^{z\top}_{ij}\widehat{\mathbf{A}}_1(t) J^z_{ij} = \widehat{\mathbf{A}}_1(t) , 
\quad  J^{z\top}_{ij}\widehat{\mathbf{A}}_2(t) J^z_{ij} = \widehat{\mathbf{A}}_2(t) , 
\notag \\ 
&  J^{z\top}_{ij} \mathbf{D}(t) J^y_{ij} = \mathbf{D}(t) , 
\quad 
 J^{z\top}_{ij} \widehat{\mathbf{D}}(t) J^y_{ij} = \widehat{\mathbf{D}}(t) ,  
 \quad 
  J^{z\top}_{ij} \overline{\mathbf{D}}(t) J^y_{ij} = \overline{\mathbf{D}}(t) . 
\notag 
\end{align} 
Then we have 
\begin{align} 
& J^{y\top}_{ij} ( \boldsymbol{\Theta} + \overline{\boldsymbol{\Theta}}/N )^\top P_n(t+1) 
( \boldsymbol{\Theta} + \overline{\boldsymbol{\Theta}}/N ) J^y_{ij} 
\notag \\ 
 = & \big[ J^{y\top}_{ij} ( \boldsymbol{\Theta} + \overline{\boldsymbol{\Theta}}/N )^\top J^y_{ij} \big]
 \big( J^{y\top}_{ij} P_n(t+1) J^y_{ij} \big)  
 \big[ J^{y\top}_{ij} ( \boldsymbol{\Theta} + \overline{\boldsymbol{\Theta}}/N ) J^y_{ij} \big]  
\notag \\ 
= &  ( \boldsymbol{\Theta} + \overline{\boldsymbol{\Theta}}/N )^\top 
  P_n^\dagger(t+1)   
 ( \boldsymbol{\Theta} + \overline{\boldsymbol{\Theta}}/N )  ,  
\label{J(Theta+barTheta/N)Pn(Theta+barTheta/N)J}
\end{align} 
and 
\begin{align} 
& e^{-\alpha(T-1-t)} J^{y\top}_{ij} \big[ \kappa  
 \big( \Theta_n + \overline\Theta/N \big)^\top  \big( \Theta_n + \overline\Theta/N \big) 
+ \overline\kappa  
 \big( \Theta_n - \Theta/N \big)^\top \big( \Theta_n - \Theta/N \big)  \big] 
  J^y_{ij} 
 \notag \\ 
 = & 
  e^{-\alpha(T-1-t)} \big[ \kappa  
 J^{y\top}_{ij} \big( \Theta_n + \overline\Theta/N \big)^\top J^{y}_{ij}  
  J^{y\top}_{ij} \big( \Theta_n + \overline\Theta/N \big) J^{y}_{ij} 
+ \overline\kappa  
 J^{y\top}_{ij} \big( \Theta_n - \Theta/N \big)^\top J^{y}_{ij} 
  J^{y\top}_{ij} \big( \Theta_n - \Theta/N \big) J^{y}_{ij}  \big] 
 \notag \\
%%%%%%%%%%%%% 
  = & 
  \begin{cases} 
e^{-\alpha(T-1-t)}  \big[ \kappa  
 \big( \Theta_n + \overline\Theta/N \big)^\top  \big( \Theta_n + \overline\Theta/N \big) 
+ \overline\kappa  
 \big( \Theta_n - \Theta/N \big)^\top \big( \Theta_n - \Theta/N \big)  \big] , 
 \quad \text{if} \quad n \neq i, j ; 
 \\
e^{-\alpha(T-1-t)}  \big[ \kappa  
 \big( \Theta_j + \overline\Theta/N \big)^\top  \big( \Theta_j + \overline\Theta/N \big) 
+ \overline\kappa  
 \big( \Theta_j - \Theta/N \big)^\top \big( \Theta_j - \Theta/N \big)  \big] , 
 \quad \text{if} \quad n = i ; 
 \\ 
 e^{-\alpha(T-1-t)}  \big[ \kappa  
 \big( \Theta_i + \overline\Theta/N \big)^\top  \big( \Theta_i + \overline\Theta/N \big) 
+ \overline\kappa  
 \big( \Theta_i - \Theta/N \big)^\top \big( \Theta_i - \Theta/N \big)  \big] , 
 \quad \text{if} \quad n = j .  
  \end{cases} 
\label{J[Thetan]J}
\end{align} 

By~\eqref{bfG}, we have that   
\allowdisplaybreaks 
\begin{align} 
 & J^{z\top}_{ij} \mathbf{G}(t) J^y_{ij} 
 =
   J^{z\top}_{ij} \mathbf{G}(t; P_1, \dots, P_N) J^y_{ij} 
 \notag \\ 
= & 
- J^{z\top}_{ij} \Big\{ e^{-\alpha(T-1-t)}   
 \big[  \big( \kappa + \overline\kappa (1-1/N) \big) \widehat{\mathbf{A}}_1(t) 
  + \big( \overline\kappa (1-1/N)(-1/N) \big) \widehat{\mathbf{A}}_2(t) 
 + \gamma I \big]  
 + \mathbf{A}(t)  
\Big\}^{-1} \cdot \notag \\ 
& \Big\{ e^{-\alpha(T-1-t)} \big[ (\kappa + \overline{\kappa}(1-1/N) ) \mathbf{D}(t)  
  - \overline{\kappa}(1-1/N)(1/N) \widehat{\mathbf{D}}(t) 
  + (\kappa/N) \overline{\mathbf{D}} 
 \big] + \mathbf{B}(t) \Big\} J^y_{ij} 
 \notag \\
 %%%%%% 
 = & 
- J^{z\top}_{ij} \Big\{ e^{-\alpha(T-1-t)}   
 \big[  \big( \kappa + \overline\kappa (1-1/N) \big) \widehat{\mathbf{A}}_1(t) 
  + \big( \overline\kappa (1-1/N)(-1/N) \big) \widehat{\mathbf{A}}_2(t) 
 + \gamma I \big]  
 + \mathbf{A}(t)  
\Big\}^{-1} J^z_{ij} \cdot \notag \\ 
& J^{z\top}_{ij} \Big\{ e^{-\alpha(T-1-t)} \big[ (\kappa + \overline{\kappa}(1-1/N) ) \mathbf{D}(t)  
  - \overline{\kappa}(1-1/N)(1/N) \widehat{\mathbf{D}}(t) 
  + (\kappa/N) \overline{\mathbf{D}} 
 \big] + \mathbf{B}(t) \Big\} J^y_{ij} 
 \notag \\
 %%%%%%%%%%%%
 = & 
-  \Big\{ e^{-\alpha(T-1-t)}   
 \big[  \big( \kappa + \overline\kappa (1-1/N) \big) J^{z\top}_{ij}\widehat{\mathbf{A}}_1(t) 
 J^z_{ij} 
  + \big( \overline\kappa (1-1/N)(-1/N) \big) J^{z\top}_{ij}
  \widehat{\mathbf{A}}_2(t)
  J^z_{ij} 
 + J^{z\top}_{ij}\gamma I J^z_{ij}  \big]  
 \notag \\ 
&\hspace{12cm} + J^{z\top}_{ij}\mathbf{A}(t)
 J^z_{ij} 
\Big\}^{-1} \cdot \notag \\ 
&  \Big\{ e^{-\alpha(T-1-t)} \big[ (\kappa + \overline{\kappa}(1-1/N) ) J^{z\top}_{ij}
\mathbf{D}(t) 
J^y_{ij} 
  - \overline{\kappa}(1-1/N)(1/N) J^{z\top}_{ij}
  \widehat{\mathbf{D}}(t) 
  J^y_{ij} 
  + (\kappa/N) J^{z\top}_{ij}
  \overline{\mathbf{D}} 
  J^y_{ij} 
 \big] 
 \notag \\ 
& \hspace{12cm} 
+ J^{z\top}_{ij}\mathbf{B}(t) 
 J^y_{ij} \Big\} 
 \notag \\ 
 %%%%%%%%%%%%%%%%% 
= & -  \Big\{ e^{-\alpha(T-1-t)}   
 \big[  \big( \kappa + \overline\kappa (1-1/N) \big) \widehat{\mathbf{A}}_1(t) 
  + \big( \overline\kappa (1-1/N)(-1/N) \big) \widehat{\mathbf{A}}_2(t) 
 + \gamma I \big]  
 + J^{z\top}_{ij} \mathbf{A}(t) 
 J^z_{ij}
\Big\}^{-1} \cdot \notag \\ 
& \Big\{ e^{-\alpha(T-1-t)} \big[ (\kappa + \overline{\kappa}(1-1/N) ) \mathbf{D}(t)  
  - \overline{\kappa}(1-1/N)(1/N) \widehat{\mathbf{D}}(t) 
  + (\kappa/N) \overline{\mathbf{D}} 
 \big] + J^{z\top}_{ij} \mathbf{B}(t)  J^y_{ij} \Big\} , 
 \label{JTbfGJ} 
\end{align}

By~\eqref{JTbfGJ}, \eqref{JTbfA(P)J=bfA(Pdagger)}, and~\eqref{JTbfB(P)J=bfB(Pdagger)}, we then have 
\begin{align}  
 J^{z\top}_{ij} \mathbf{G}(t) J^y_{ij} 
= &  J^{z\top}_{ij} \mathbf{G}( t; P_1, \dots, P_N )  J^y_{ij} 
\notag  \\ 
 = & \mathbf{G}( t; P^\dagger_1, \dots, P^\dagger_{i-1}, P^\dagger_j, P^\dagger_{i+1}, \dots, 
P^\dagger_{j-1}, P^\dagger_i, P^\dagger_{j+1}, \dots, P^\dagger_N) . 
\label{JG(P)J=G(Pdagger)} 
\end{align} 
By~\eqref{bfQn}, we have that  
\begin{align} 
J^{z\top}_{ij} \mathbf{Q}_n(t) J^{z}_{ij}  
 = & J^{z\top}_{ij} \mathbf{Q}_n(t; P_n)  J^{z}_{ij}  
 \notag \\ 
 = & 
 e^{-\alpha(T-1-t)} \Big\{ \kappa J^{z\top}_{ij}  \mathbf{e}_n^z \mathbb{E}(Z_t^{n \top} Z^n_t) \mathbf{e}_n^{z \top} J^{z}_{ij} 
 \notag \\ 
 & + \overline\kappa J^{z\top}_{ij}  \mathbb{E}\big[ \big( Z^n_t \mathbf{e}_n^{z\top} - \mathbf{1}_y \mathbf{Z}_t/N \big)^\top  \big( Z^n_t \mathbf{e}_n^{z\top} - \mathbf{1}_y \mathbf{Z}_t/N \big) \big] 
 J^{z}_{ij} 
 + \gamma J^{z\top}_{ij}  \mathbf{e}^z_n \mathbf{e}^{z\top}_n J^{z}_{ij} 
 \Big\}
\notag \\ 
& +     
   J^{z\top}_{ij} \mathbb{E}( \mathbf{Z}_t^\top P_n(t+1)  \mathbf{Z}_t )  J^{z}_{ij} 
\notag 
\end{align}

If $n\neq i, j$, then by~\eqref{JE(Zn-Z/N)(Zn-Z/N)J} and~\eqref{E(ZPZ)=JE(ZPdaggerZ)J}, 
\begin{align} 
J^{z\top}_{ij} \mathbf{Q}_n(t; P_n) J^{z}_{ij}  
 = & 
 e^{-\alpha(T-1-t)} \Big\{ \kappa   \mathbf{e}_n^z \mathbb{E}(Z_t^{n \top} Z^n_t) \mathbf{e}_n^{z \top}  
 \notag \\ 
 & +   \mathbb{E}\big[ \big( Z^n_t \mathbf{e}_n^{z\top} - \mathbf{1}_y \mathbf{Z}_t/N \big)^\top  \big( Z^n_t \mathbf{e}_n^{z\top} - \mathbf{1}_y \mathbf{Z}_t/N \big) \big]  
 + \gamma   \mathbf{e}^z_n \mathbf{e}^{z\top}_n  
 \Big\}
\notag \\ 
& +     
    \mathbb{E}( \mathbf{Z}_t^\top P^\dagger_n(t+1)  \mathbf{Z}_t )   
 \notag \\ 
 = & \mathbf{Q}_n(t; P_n^\dagger ) ; 
 \label{JijQnJij=Qn} 
\end{align} 
if $n=i$, then 
\begin{align} 
J^{z\top}_{ij} \mathbf{Q}_i(t; P_i ) J^{z}_{ij}  
 = &
    e^{-\alpha(T-1-t)} \Big\{ \kappa   \mathbf{e}_j^z \mathbb{E}(Z_t^{j \top} Z^j_t) \mathbf{e}_j^{z \top}  
 \notag \\ 
 & + \overline\kappa   \mathbb{E}\big[ \big( Z^j_t \mathbf{e}_j^{z\top} - \mathbf{1}_y \mathbf{Z}_t/N \big)^\top 
 \big( Z^j_t \mathbf{e}_j^{z\top} - \mathbf{1}_y \mathbf{Z}_t/N \big) \big]  
 + \gamma   \mathbf{e}^z_j \mathbf{e}^{z\top}_j  
 \Big\}
\notag \\ 
& +     
    \mathbb{E}( \mathbf{Z}_t^\top P^\dagger_i(t+1)  \mathbf{Z}_t )   
 \notag \\ 
 = & \mathbf{Q}_j(t; P_i^\dagger) ; 
 \label{JijQiJij=Qj}
\end{align} 
if $n=j$, then 
\begin{align} 
J^{z\top}_{ij} \mathbf{Q}_j(t; P_j ) J^{z}_{ij}  
 = &
    e^{-\alpha(T-1-t)} \Big\{ \kappa  \mathbf{e}_i^z \mathbb{E}(Z_t^{i \top} Z^i_t) \mathbf{e}_i^{z \top} 
 \notag \\ 
 & + \overline\kappa   \mathbb{E}\big[ \big( Z^i_t \mathbf{e}_i^{z\top} - \mathbf{1}_y \mathbf{Z}_t/N \big)^\top  
 \big( Z^i_t \mathbf{e}_i^{z\top} - \mathbf{1}_y \mathbf{Z}_t/N \big) \big] 
 J^{z}_{ij} 
 + \gamma  \mathbf{e}^z_i \mathbf{e}^{z\top}_i  
 \Big\}
\notag \\ 
& +     
    \mathbb{E}( \mathbf{Z}_t^\top P_j^\dagger(t+1)  \mathbf{Z}_t )  
 \notag \\ 
 = & \mathbf{Q}_i(t; P_j^\dagger) .  
 \label{JijQjJij=Qi}
\end{align} 
By~\eqref{JG(P)J=G(Pdagger)}, \eqref{JijQnJij=Qn}, \eqref{JijQiJij=Qj}, and~\eqref{JijQjJij=Qi}, we have that  
\begin{align} 
& J_{ij}^{y\top} \mathbf{G}^\top(t) \mathbf{Q}_n(t) \mathbf{G}(t) J^y_{ij} 
\notag \\ 
= &   ( J_{ij}^{y\top} \mathbf{G}^\top(t) J^z_{ij} )  
 ( J^{z\top}_{ij} \mathbf{Q}_n(t) J^{z}_{ij} )
 ( J^{z\top}_{ij} \mathbf{G}(t) J^y_{ij}) 
 \notag \\ 
= & \big( J_{ij}^{y\top} \mathbf{G}^\top(t; P_1, \dots, P_N ) J^z_{ij} \big) 
\big( J^{z\top}_{ij} \mathbf{Q}_n(t; P_n) J^z_{ij} \big)
\big( J^{z\top}_{ij} \mathbf{G}(t; P_1, \dots, P_N ) J^y_{ij} \big)  
\notag \\ 
= &  
\begin{cases} 
\mathbf{G}^\top(  t; P^\dagger_1, \dots, P^\dagger_{i-1}, P^\dagger_j, P^\dagger_{i+1}, \dots, 
P^\dagger_{j-1}, P^\dagger_i, P^\dagger_{j+1}, \dots, P^\dagger_N) 
\mathbf{Q}_n(t; P_n^\dagger) \cdot 
\\ 
\hspace{2cm} 
\mathbf{G}(  t; P^\dagger_1, \dots, P^\dagger_{i-1}, P^\dagger_j, P^\dagger_{i+1}, \dots, 
P^\dagger_{j-1}, P^\dagger_i, P^\dagger_{j+1}, \dots, P^\dagger_N) 
, \quad \mbox{if} \quad n \neq i , j ;  \\ 
%%%%% 
\mathbf{G}^\top(  t; P^\dagger_1, \dots, P^\dagger_{i-1}, P^\dagger_j, P^\dagger_{i+1}, \dots, 
P^\dagger_{j-1}, P^\dagger_i, P^\dagger_{j+1}, \dots, P^\dagger_N) 
\mathbf{Q}_j(t; P_i^\dagger) \cdot 
\\ 
\hspace{2cm} 
\mathbf{G}(  t; P^\dagger_1, \dots, P^\dagger_{i-1}, P^\dagger_j, P^\dagger_{i+1}, \dots, 
P^\dagger_{j-1}, P^\dagger_i, P^\dagger_{j+1}, \dots, P^\dagger_N) 
, \quad \mbox{if} \quad n = i ;    \\ 
%%%%% 
\mathbf{G}^\top(  t; P^\dagger_1, \dots, P^\dagger_{i-1}, P^\dagger_j, P^\dagger_{i+1}, \dots, 
P^\dagger_{j-1}, P^\dagger_i, P^\dagger_{j+1}, \dots, P^\dagger_N) 
\mathbf{Q}_i(t; P_j^\dagger) \cdot 
\\ 
\hspace{2cm} 
\mathbf{G}(  t; P^\dagger_1, \dots, P^\dagger_{i-1}, P^\dagger_j, P^\dagger_{i+1}, \dots, 
P^\dagger_{j-1}, P^\dagger_i, P^\dagger_{j+1}, \dots, P^\dagger_N) 
, \quad \mbox{if} \quad n = j .     
\end{cases} 
\label{JGQnGJ:3cases}
\end{align}

By~\eqref{bfLn}, we have that 
\begin{align} 
  J^{z\top}_{ij} \mathbf{L}_n(t) J^y_{ij} 
  = & \mathbf{L}_n(t; P_n )
  \notag \\ 
= &
 e^{-\alpha(T-1-t)}  \big[ \kappa J^{z\top}_{ij} \mathbf{e}^z_n \mathbb{E} Z_t^{n \top} (\Theta_n - \overline\Theta /N) 
  J^z_{ij}
  + \overline\kappa J^{z\top}_{ij} \mathbb{E}\big( Z^n_t \mathbf{e}_n^{z\top} - \mathbf{1}_y \mathbf{Z}_t / N \big)^\top 
  (\Theta_n - \Theta/N) 
   J^z_{ij}
 \big]  
 \notag \\ 
 & + J^{z\top}_{ij} \mathbb{E}\mathbf{Z}_t^\top J^y_{ij} ( J^{y\top}_{ij} P_n(t+1) J^y_{ij} ) 
  J^{y\top}_{ij} ( \mathbf{\Theta} + \overline{\mathbf{\Theta}}/N ) J^z_{ij} . 
  \notag 
\end{align} 
If $n\neq i, j$, then 
\begin{align} 
  J^{z\top}_{ij} \mathbf{L}_n(t; P_n ) J^y_{ij} 
= &
 e^{-\alpha(T-1-t)}  \big[ \kappa  \mathbf{e}^z_n \mathbb{E} Z_t^{n \top} (\Theta_n - \overline\Theta /N) 
  + \overline\kappa  \mathbb{E}\big( Z^n_t \mathbf{e}_n^{z\top} - \mathbf{1}_y \mathbf{Z}_t / N \big)^\top 
  (\Theta_n - \Theta/N)
 \big]  
 \notag \\ 
 & +  \mathbb{E}\mathbf{Z}_t^\top P_n^\dagger(t+1) ( \mathbf{\Theta} + \overline{\mathbf{\Theta}}/N )  
 \notag \\ 
=& \mathbf{L}_n(t; P^\dagger_n ) ; 
  \label{JijLnJij=Ln}
\end{align} 
if $n=i$, then 
\begin{align} 
  J^{z\top}_{ij} \mathbf{L}_i(t; P_i ) J^y_{ij} 
= &
 e^{-\alpha(T-1-t)}  \big[ \kappa  \mathbf{e}^z_j \mathbb{E} Z_t^{j \top} (\Theta_j - \overline\Theta /N) 
  + \overline\kappa  \mathbb{E}\big( Z^j_t \mathbf{e}_j^{z\top} - \mathbf{1}_y \mathbf{Z}_t / N \big)^\top 
  (\Theta_j - \Theta/N)
 \big]  
 \notag \\ 
 & +  \mathbb{E}\mathbf{Z}_t^\top P_i^\dagger(t+1) ( \mathbf{\Theta} + \overline{\mathbf{\Theta}}/N ) 
\notag \\ 
=& \mathbf{L}_j(t; P^\dagger_i ) ; 
  \label{JijLiJij=Lj}
\end{align} 
if $n=j$, then 
\begin{align} 
  J^{z\top}_{ij} \mathbf{L}_j(t; P_j ) J^y_{ij} 
= &
 e^{-\alpha(T-1-t)}  \big[ \kappa  \mathbf{e}^z_i \mathbb{E} Z_t^{i \top} (\Theta_i - \overline\Theta /N) 
  + \overline\kappa  \mathbb{E}\big( Z^i_t \mathbf{e}_i^{z\top} - \mathbf{1}_y \mathbf{Z}_t / N \big)^\top 
  (\Theta_i - \Theta/N)
 \big]  
 \notag \\ 
 & + J^{z\top}_{ij} \mathbb{E}\mathbf{Z}_t^\top P_n(t+1) ( \mathbf{\Theta} + \overline{\mathbf{\Theta}}/N ) J^z_{ij} 
\notag \\ 
=& \mathbf{L}_i(t; P^\dagger_j ) . 
\label{JijLjJij=Li}
\end{align} 
By~\eqref{JG(P)J=G(Pdagger)} and~\eqref{JijLnJij=Ln}, \eqref{JijLiJij=Lj}, and~\eqref{JijLjJij=Li}, we have  
\begin{align} 
 & J^{y\top}_{ij} \mathbf{G}^\top(t; P_1, \dots, P_N ) \mathbf{L}_n(t; P_n ) J^y_{ij} \notag \\ 
  = & \big( J^{y\top}_{ij} \mathbf{G}^\top(t; P_1, \dots, P_N ) J^z_{ij} \big) \big( J^{z\top}_{ij} \mathbf{L}_n(t) J^y_{ij} \big) 
  \notag \\ 
  = & 
  \begin{cases} 
 \mathbf{G}^\top( t; P^\dagger_1, \dots, P^\dagger_{i-1}, P^\dagger_j, P^\dagger_{i+1}, \dots, 
P^\dagger_{j-1}, P^\dagger_i, P^\dagger_{j+1}, \dots, P^\dagger_N) 
\mathbf{L}_n(t) , \quad \mbox{if} \quad n\neq i, j;  \\ 
%%%%% 
\mathbf{G}^\top( t; P^\dagger_1, \dots, P^\dagger_{i-1}, P^\dagger_j, P^\dagger_{i+1}, \dots, 
P^\dagger_{j-1}, P^\dagger_i, P^\dagger_{j+1}, \dots, P^\dagger_N) 
\mathbf{L}_j(t) , \quad \mbox{if} \quad n = i;  \\ 
%%%%% 
\mathbf{G}^\top( t; P^\dagger_1, \dots, P^\dagger_{i-1}, P^\dagger_j, P^\dagger_{i+1}, \dots, 
P^\dagger_{j-1}, P^\dagger_i, P^\dagger_{j+1}, \dots, P^\dagger_N) 
\mathbf{L}_i(t) , \quad \mbox{if} \quad n = j .    
  \end{cases}
\label{JGLJ:3cases}  
\end{align}

Combining~\eqref{Phin(t;P1:N)}, \eqref{J(Theta+barTheta/N)Pn(Theta+barTheta/N)J}, 
\eqref{J[Thetan]J}
\eqref{JGQnGJ:3cases}, and~\eqref{JGLJ:3cases} yields    
\begin{align} 
 &  J^{y\top}_{ij} \Phi_n(t; P_1, \dots, P_N ) J^{y}_{ij} 
 \notag \\ 
 =& 
 \begin{cases} 
\Phi_n( P^\dagger_1, \dots, P^\dagger_{i-1}, P^\dagger_j, P^\dagger_{i+1}, \dots, 
P^\dagger_{j-1}, P^\dagger_i, P^\dagger_{j+1}, \dots, P^\dagger_N ) , \quad \mbox{if} \quad n\neq i, j ;  \\
%%% 
\Phi_j( P^\dagger_1, \dots, P^\dagger_{i-1}, P^\dagger_j, P^\dagger_{i+1}, \dots, 
P^\dagger_{j-1}, P^\dagger_i, P^\dagger_{j+1}, \dots, P^\dagger_N ) , \quad \mbox{if} \quad n = i ; \\ 
%%% 
\Phi_i( P^\dagger_1, \dots, P^\dagger_{i-1}, P^\dagger_j, P^\dagger_{i+1}, \dots, 
P^\dagger_{j-1}, P^\dagger_i, P^\dagger_{j+1}, \dots, P^\dagger_N ) , \quad \mbox{if} \quad n = j . 
 \end{cases} 
 \label{JPhin(t;P1:N)J}
\end{align}

From~\eqref{Pn=Phin(t;P1:N)}, we have 
\begin{align} 
\begin{aligned} 
1\leq n \leq N: \quad 
& P_n^\dagger(t) = J^{y\top}_{ij} P_n(t) J^y_{ij} = J^{y\top}_{ij} \Phi_n(t; P_1, \dots, P_N ) J^y_{ij} , 
\quad 0 \leq t < T , 
 \\ 
& P_n^\dagger(T) = J^{y\top}_{ij} P_n(T) J^y_{ij} = 0 . 
\end{aligned} 
\label{JPnJ=JPhin(t;P1:N)J} 
\end{align}

It then follows from~\eqref{JPhin(t;P1:N)J} and~\eqref{JPnJ=JPhin(t;P1:N)J} that 
\begin{align} 
( P^\dagger_1, \dots, P^\dagger_{i-1}, P^\dagger_j, P^\dagger_{i+1}, \dots, 
P^\dagger_{j-1}, P^\dagger_i, P^\dagger_{j+1}, \dots, P^\dagger_N)  
\notag 
\end{align} 
satisfies~\eqref{Pdagger=Phi(Pdagger)} and the terminal condition $P^\dagger_n(T)=0$, $n=1, \dots, N$.

\bigskip  

\noindent 
\textbf{Step 2:} For each $2\leq j \leq N$, denote $P_n^\ddagger=J^{y\top}_{1j} P_n J^y_{1j}$, $1\leq n \leq N$. Then we have that 
\begin{align} 
\big( P^\ddagger_j, P^\ddagger_2, \dots, P^\ddagger_{j-1}, P^\ddagger_{1}, P^\ddagger_{j+1}, \dots, P^\ddagger_{N}   \big) 
\notag 
\end{align} 
and $(P_1, \dots, P_N)$ satisfy~\eqref{ODE:Pn}. This implies that $P_n = J^{y\top}_{1j} P_1 J^y_{1j}$ for $2\leq j \leq N$, and thus~\eqref{Pn=J1nP1J1n} holds. 
\hfill$\blacksquare$ 
%\end{proof} 

\emph{Proof of Corollary~\ref{cor:inv(hatA1+hatA2+hatA)}.}
%\begin{proof} 
By Proposition~\ref{prop:PnSubmat:PiN1234} and the definition of $\mathbf{A}(t)$ in~\eqref{bfABCDF}, we obtain 
\allowdisplaybreaks
\begin{align} 
  \mathbf{A}(t) = & \mathbb{E} \big\{ \begin{bmatrix} 
  P_1(t+1) \mathbf{e}^y_1 Z^1_t, 
 \dots, 
  P_N(t+1) \mathbf{e}^y_N Z^N_t
 \end{bmatrix}^\top  
 \big[ \mathbf{e}^y_1 Z^1_t, \dots, \mathbf{e}^y_N Z^N_t \big] \big\}  
 \notag \\ 
 = & \mathbb{E}
 \begin{bmatrix} 
Z_t^{1\top} \mathbf{e}_1^{y\top} P_1(t+1) \mathbf{e}^y_1 Z^1_t , 
& Z_t^{1\top} \mathbf{e}_1^{y\top} P_1(t+1) \mathbf{e}^y_2 Z^2_t, 
& \dots & Z_t^{1\top} \mathbf{e}_1^{y\top} P_1(t+1) \mathbf{e}^y_N Z^N_t \\ 
Z_t^{2 \top} \mathbf{e}_2^{y\top} P_2(t+1) \mathbf{e}^y_1 Z^1_t , 
& Z_t^{2 \top} \mathbf{e}_2^{y\top} P_2(t+1) \mathbf{e}^y_2 Z^2_t, 
& \dots & Z_t^{2 \top} \mathbf{e}_2^{y \top} P_2(t+1) \mathbf{e}^y_N Z^N_t \\ 
\vdots & \vdots & \ddots & \vdots \\ 
Z_t^{N \top} \mathbf{e}_N^{y \top} P_N(t+1) \mathbf{e}^y_1 Z^1_t , 
& Z_t^{N \top} \mathbf{e}_N^{y \top} P_N(t+1) \mathbf{e}^y_2 Z^2_t, 
& \dots & Z_t^{N \top} \mathbf{e}_N^{y\top} P_N(t+1) \mathbf{e}^y_N Z^N_t \\ 
 \end{bmatrix} 
 \notag \\ 
 %%%%%%%%%%%%%%%%% 
  = & \mathbb{E}
 \begin{bmatrix} 
Z_t^{1 \top}  \Pi_1^N(t+1) Z^1_t , 
& Z_t^{1 \top} \Pi_2^N(t+1) Z^2_t, 
& \dots & Z_t^{1 \top}  \Pi_2^N(t+1) Z^N_t \\ 
Z_t^{2 \top} \Pi_2^N(t+1) Z^1_t , 
& Z_t^{2 \top} \Pi_1^N(t+1) Z^2_t, 
& \dots & Z_t^{2 \top} \Pi_2^N(t+1) Z^N_t \\ 
\vdots & \vdots & \ddots & \vdots \\ 
Z_t^{N \top} \Pi_2^N(t+1) Z^1_t , 
& Z_t^{N \top} \Pi_2^N(t+1) Z^2_t,  
& \dots & Z_t^{N \top} \Pi_1^N(t+1) Z^N_t \\ 
 \end{bmatrix} , 
 \label{bfA:submat}
\end{align} 
By~\eqref{bfABCDF}, ~\eqref{bfA:submat} and Assumption~\ref{assm:Zt}, we have 
\begin{align} 
&  e^{-\alpha(T-1-t)}   
 \big[  \big( \kappa + \overline\kappa (1-1/N) \big) \widehat{\mathbf{A}}_1(t) 
  + \overline\kappa (1-1/N)(-1/N)  \widehat{\mathbf{A}}_2(t) 
 + \gamma I \big]  
 + \mathbf{A}(t)  
\notag \\ 
= & 
\begin{bmatrix} 
F^N(t) & K^N(t) & \dots & K^N(t) \\ 
K^N(t) & F^N(t) & \dots & K^N(t) \\ 
\vdots & \vdots & \ddots & \vdots \\ 
K^N(t) & K^N(t) & \dots & F^N(t)  
\end{bmatrix} , 
\notag 
\end{align} 
where $F^N(t)$ and $K^N(t)$ are defined in the lemma statement. 
Then the inverse matrix takes the form~\eqref{inv(bfA+bfhatA1+bfhatA2+gammaI):submat}, 
which can be verified by checking that  
\begin{align} 
& F^N(t) M^N(t) +(N-1) K^N(t) E^N(t) = I,\notag \\ 
&  K^N(t) M^N(t) + [ F^N(t) + (N-2)K^N(t)  ] E^N(t) = 0 . 
\notag 
\end{align} 
\hfill$\blacksquare$ 
%\end{proof} 

\emph{Proof of Corollary~\ref{cor:bfGsubmat}.}
Substituting~\eqref{P1:PiN1234} and~\eqref{Pn=J1nP1J1n} into the expression of $\mathbf{G}$ in~\eqref{bfG}, and using Assumption~\eqref{assm:Zt} and the block structure of the inverse matrix given in~\eqref{inv(bfA+bfhatA1+bfhatA2+gammaI):submat}, we obtain that $\mathbf{G}(\cdot)$ admits the submatrix decomposition~\eqref{bfG:submat} with $G_1^N(\cdot)$ and $G_2^N(\cdot)$ given by~\eqref{G1N}-\eqref{G2N}.  
\hfill$\blacksquare$

\emph{Proof of Corollary~\ref{cor:bfQnsubmat}.}
Substituting~\eqref{P1:PiN1234} and~\eqref{Pn=J1nP1J1n} into~\eqref{bfQn}, and using Assumption~\ref{assm:Zt}, we obtain the claimed submatrix decomposition~\eqref{bfQn:submat} of $\mathbf{Q}_n(\cdot)$ for $n\in[N]$. 
\hfill$\blacksquare$

%With the above results, we are ready to prove Corollary~\ref{cor:ODEs:PiN1234}. 
%\begin{proof}
\emph{Proof of Corollary~\ref{cor:ODEs:PiN1234}.} 
By setting $n=1$ in~\eqref{ODE:Pn} and substituting~\eqref{P1:PiN1234}, \eqref{bfG:submat}, and~\eqref{bfQn:submat}, we obtain that the submatrices $\Pi_i^N$, $i=1,2,3,4$, satisfy the system~\eqref{ODE:PiNi}, where the mappings $\varphi_i^N(\cdot)$, $i=1,2,3,4$, are defined below.
\allowdisplaybreaks 
\begin{align} 
 & \varphi_1^N( t; \Pi^N_1, \Pi^N_2, \Pi^N_3 , \Pi^N_4  )    
 \label{varphiN1/4}   \\ 
\eqdef & G_1^{N\top}(t) Q_1^N(t) G_1^N(t) 
+ G_1^{N\top}(t)  Q_2^N(t)  G_2^N(t) (N-1) 
+ G_2^{N\top}(t)   Q_2^{N\top}(t) G_1^N(t) (N-1)
\notag \\ 
& + G_2^{N\top}(t) \big[ Q_3^N(t) + (N-2) Q_4^N(t) \big] G_2^N(t) (N-1)
\notag \\ 
%%%%%% 
& +  e^{-\alpha(T-1-t)}   \kappa G_1^{N\top}(t) ( M^{(1)}_Z(t) )^\top (\theta - \overline\theta/N)   
\notag \\ 
& +  e^{-\alpha(T-1-t)}      
  \overline\kappa \big[ G_1^{N\top}(t) ( M^{(1)}_Z(t) )^\top \theta (1-1/N)^2   
- G_2^{N\top}(t) ( M^{(1)}_Z(t) )^\top \theta(1-1/N)(N-1)/N
  \big]   
\notag \\ 
& + G_1^{N\top}(t) ( M^{(1)}_Z(t) )^\top \big[ \Pi_1^N(t+1)(\theta + \overline\theta/N) 
 + \Pi_2^N(t+1)\overline\theta (N-1)/N \big] 
 \notag \\ 
 & + G_2^{N\top}(t) ( M^{(1)}_Z(t) )^\top \big\{ \Pi_2^{N\top}(t+1)(\theta + \overline\theta/N)
  + \big[ \Pi_3^N(t+1) + \Pi_4^N(t+1) (N-2) \big]\overline\theta/N
 \big\} (N-1) 
\notag \\ 
%%%%%% 
& +  \big\{ e^{-\alpha(T-1-t)}   \kappa G_1^{N\top}(t) ( M^{(1)}_Z(t) )^\top (\theta - \overline\theta/N) 
\big\}^\top 
\notag \\ 
& + \big\{ e^{-\alpha(T-1-t)}      
  \overline\kappa \big[ G_1^{N^\top}(t) ( M^{(1)}_Z(t) )^\top \theta (1-1/N)^2   
- G_2^{N\top}(t) ( M^{(1)}_Z(t) )^\top \theta(1-1/N)(N-1)/N
  \big] 
  \big\}^\top 
\notag \\ 
& + \big\{ G_1^{N\top}(t) ( M^{(1)}_Z(t) )^\top \big[ \Pi_1^N(t+1)(\theta + \overline\theta/N) 
 + \Pi_2^N(t+1)\overline\theta (N-1)/N \big] 
 \big\}^\top 
 \notag \\ 
 & + \Big\{ G_2^{N\top}(t) ( M^{(1)}_Z(t) )^\top \big\{ \Pi_2^{N\top}(t+1)(\theta + \overline\theta/N)
  + \big[ \Pi_3^N(t+1) + \Pi_4^N(t+1) (N-2) \big]\overline\theta/N
 \big\} (N-1) 
 \Big\}^\top 
\notag \\ 
%%%%%%
& + e^{-\alpha(T-1-t)}\big[ \kappa (\theta + \overline\theta/N )^\top (\theta + \overline\theta/N ) 
 + \overline\kappa \theta^\top \theta (1-1/N)^2
\big]
\notag \\ 
%%%%%% 
& + (\theta+\overline\theta/N)^\top \Pi_1^N(t+1) (\theta+\overline\theta/N)  
+ (\theta+\overline\theta/N)^\top \big[ (N-1) \Pi_2^N(t+1) \big] \overline\theta/N 
\notag \\ 
& + (\overline\theta/N)^\top \big[ (N-1) \Pi_2^{N\top}(t+1) \big] (\theta+\overline\theta/N) 
\notag \\ 
& + (\overline\theta/N)^\top \big[ (N-1) \Pi_3^N(t+1) + (N-2)(N-1) \Pi_4^N(t+1) \big] \overline\theta/N . 
\notag 
\end{align}

\allowdisplaybreaks 
\begin{align} 
 & \varphi_2^N( t; \Pi^N_1, \Pi^N_2, \Pi^N_3 , \Pi^N_4  )  
 \label{varphiN2/4} \\ 
\eqdef & G_1^{N\top}(t) Q_2^N(t) G_1^N(t) 
+ G_1^{N\top}(t) \big[ Q_1^N(t) + (N-2)Q_2^N(t) \big] G_2^N(t) 
\notag \\ 
& + G_2^{N\top}(t) \big[ Q_3^{N}(t) + (N-2) Q_4^N(t) \big] G_1^N(t) 
\notag \\ 
& + G_2^{N\top}(t) \big[ (N-1) Q_2^{N\top}(t) + (N-2) Q_3^N(t) + (N-2)^2 Q_4^N(t)   \big] G_2^N(t)
\notag \\ 
%%%%%
& + e^{-\alpha(T-1-t)} \kappa G_1^{N\top}(t) ( M^{(1)}_Z(t) )^\top (-\overline\theta/N) 
\notag \\ 
& + e^{-\alpha(T-1-t)} \overline\kappa \big[ G_1^{N\top}(t) ( M^{(1)}_Z(t) )^\top \theta(1-1/N)(-1/N)
 + G_2^{N\top}(t) ( M^{(1)}_Z(t) )^\top \theta (N-1)/N^2 
\big]
\notag \\ 
& + G_1^{N\top}(t) ( M^{(1)}_Z(t) )^\top \big\{ \Pi_1^N(t+1) \overline\theta/N 
 + \Pi_2^N(t+1) \big[ \theta + \overline\theta(N-1)/N \big] \big\}
\notag \\ 
& + G_2^{N\top}(t) ( M^{(1)}_Z(t) )^\top 
\big[ \Pi_2^{N\top}(t+1) \overline\theta/N + \Pi_3^N(t+1) (\theta + \overline\theta/N) 
 + \Pi_4^N(t+1) \overline\theta (N-2)/N 
\big] 
\notag \\ 
& + 
G_2^{N\top}(t) ( M^{(1)}_Z(t) )^\top 
\big\{ \Pi_2^{N\top}(t+1) \overline\theta/N + \Pi_3^N(t+1) \overline\theta/N 
 + \Pi_4^N(t+1) \big[ \theta + \overline\theta (N-2)/N \big]  \big\} (N-2)
\notag \\ 
%%%%% 
& + \big\{ e^{-\alpha(T-1-t)} \kappa G_2^{N\top}(t) ( M^{(1)}_Z(t) )^\top (\theta - \overline\theta/N ) \big\}^\top 
\notag \\ 
& + \big\{ e^{-\alpha(T-1-t)} \overline\kappa \big[ G_1^{N\top}(t) (M^{(1)}_Z(t))^\top \theta  (1-1/N)(-1/N) 
 + G_2^{N\top}(t) ( M^{(1)}_Z(t) )^\top \theta (1-1/N)(1/N)
\big] 
\big\}^\top 
\notag \\ 
 & + \big\{ G_2^{N\top }(t) ( M^{(1)}_Z(t) )^\top \big[ \Pi_1^N(t+1)(\theta+\overline\theta/N) 
  + \Pi_2^N(t+1) \overline\theta (N-1)/N \big] 
  \big\}^\top 
 \notag \\ 
& + \Big\{  \big[ G_1^{N\top}(t) + (N-2) G_2^{N\top}(t) \big] ( M^{(1)}_Z(t) )^\top 
\cdot 
\notag \\ 
& \hspace{4cm}
\big\{ \Pi_2^{N\top}(t+1) (\theta+\overline\theta/N) 
 + \big[ \Pi_3^N(t+1) + \Pi_4^N(t+1) (N-2) \big] \overline\theta/N  \big\} 
 \Big\}^\top 
\notag \\ 
%%%%% 
& + e^{-\alpha(T-1-t)} \big[ \kappa (\theta + \overline\theta/N)^\top \overline\theta/N 
+ \overline\kappa \theta^\top \theta (1-1/N)(-1/N) \big]
\notag \\ 
%%%%% 
& + (\theta+\overline\theta/N)^\top  \Pi_2^N(t+1) (\theta+\overline\theta/N)
+ (\theta+\overline\theta/N)^\top \big[ \Pi_1^N(t+1) + (N-2)\Pi_2^N(t+1) \big] \overline\theta/N  
\notag \\ 
& + (\overline\theta/N)^\top \big[ \Pi_3^{N}(t+1) + (N-2) \Pi_4^{N}(t+1) \big] (\theta+\overline\theta/N) 
\notag \\ 
& + (\overline\theta/N)^\top  \big[ (N-1) \Pi_2^{N\top}(t+1) + (N-2)\Pi_3^{N}(t+1) + (N-2)^2 \Pi_4^{N}(t+1) \big] 
\overline\theta/N . 
\notag 
\end{align}

\allowdisplaybreaks
\begin{align} 
 & \varphi_3^N( t; \Pi^N_1, \Pi^N_2, \Pi^N_3 , \Pi^N_4  )   
 \label{varphiN3/4} \\ 
\eqdef & G_1^{N\top}(t) Q_3^N(t) G_1^N(t) 
+ G_1^{N\top}(t) \big[ Q_2^{N\top}(t) + (N-2) Q_4^{N}(t) \big] G_2^N(t)
\notag \\ 
& + G_2^{N\top}(t) \big[ Q_2^{N}(t) + (N-2) Q_4^{N}(t) \big] G_1^N(t) 
\notag \\ 
& + G_2^{N\top}(t) \big[ Q_1^N(t) + (Q_2^{N\top}(t) + Q_2^N(t)  )(N-2) 
+ Q_3^N(t)(N-2) + (N-3)(N-2)Q_4^N(t)
\big] G_2^N(t) 
\notag \\ 
%%%%%%%
& + e^{-\alpha(T-1-t)} \kappa G_2^{N\top}(t) ( M^{(1)}_Z(t) )^\top  \overline\theta(-1/N ) 
\notag \\ 
& + e^{-\alpha(T-1-t)} \overline\kappa 
 \big[ G_1^{N\top}(t) ( M^{(1)}_Z(t) )^\top \theta (1/N^2) 
  + G_2^{N\top}(t) ( M^{(1)}_Z(t) )^\top \theta (-1/N^2) \big] 
  \notag \\ 
& + G_2^{N\top}(t) ( M^{(1)}_Z(t) )^\top 
 \big\{ \Pi_1^N(t+1) \overline\theta/N + \Pi_2^N(t+1) \big[ \theta + \overline\theta (N-1)/N \big]  \big\} 
\notag \\ 
& 
+ G_1^{N\top}(t) ( M^{(1)}_Z(t) )^\top  
\big[ \Pi_2^{N\top}(t+1) \overline\theta/N + \Pi_3^{N}(t+1)(\theta + \overline\theta/N) 
 + \Pi_4^N(t+1)\overline\theta(N-2)/N \big]
\notag \\ 
& 
+ G_2^{N\top}(t) ( M^{(1)}_Z(t) )^\top 
\big\{ \Pi_2^{N\top}(t+1) \overline\theta/N + \Pi_3^N(t+1) \overline\theta/N 
 + \Pi_4^{N\top}(t+1) \big[ \theta +  \overline\theta (N-2)/N \big]
\big\} (N-2)
\notag \\ 
%%%%%%%
& + \big\{ e^{-\alpha(T-1-t)} \kappa G_2^{N\top}(t) ( M^{(1)}_Z(t) )^\top  \overline\theta(-1/N ) \big\}^\top
\notag \\ 
& + \big\{ e^{-\alpha(T-1-t)} \overline\kappa 
 \big[ G_1^{N\top}(t) ( M^{(1)}_Z(t) )^\top \theta (1/N^2) 
  + G_2^{N\top}(t) ( M^{(1)}_Z(t) )^\top \theta (-1/N^2) \big] 
  \big\}^\top 
  \notag \\ 
& + \Big\{ G_2^{N\top}(t) ( M^{(1)}_Z(t) )^\top 
 \big\{ \Pi_1^N(t+1) \overline\theta/N + \Pi_2^N(t+1) \big[ \theta + \overline\theta (N-1)/N \big]  \big\} 
\Big\}^\top 
\notag \\ 
& 
+ \big\{ G_1^{N\top}(t) ( M^{(1)}_Z(t) )^\top  
\big[ \Pi_2^{N\top}(t+1) \overline\theta/N + \Pi_3^{N}(t+1)(\theta + \overline\theta/N) 
 + \Pi_4^N(t+1)\overline\theta(N-2)/N \big] 
 \big\}^\top 
\notag \\ 
& 
+ \big\{ G_2^{N\top}(t) ( M^{(1)}_Z(t) )^\top 
\big\{ \Pi_2^{N\top}(t+1) \overline\theta/N + \Pi_3^N(t+1) \overline\theta/N 
 + \Pi_4^{N\top}(t+1) \big[ \theta +  \overline\theta (N-2)/N \big]
\big\} (N-2) 
\big\}^\top 
\notag \\ 
%%%%% 
& 
 + e^{-\alpha(T-1-t)} \big[ \kappa \overline\theta^\top \overline\theta (1/N^2) 
  + \overline\kappa \theta^\top \theta (1/N^2) \big]
\notag \\ 
%%%%%%
& + (\theta + \overline\theta/N)^\top \Pi_3^N(t+1) (\theta + \overline\theta/N) 
+ (\theta + \overline\theta/N)^\top \big[ \Pi_2^{N\top}(t+1) + (N-2) \Pi_4^{N}(t+1) \big] 
\overline\theta/N
\notag \\ 
& + (\overline\theta/N)^\top \big[ \Pi_2^{N}(t+1) + (N-2) \Pi_4^{N}(t+1) \big] (\theta + \overline\theta/N)  
\notag \\ 
& + (\overline\theta/N)^\top \big[ \Pi_1^N(t+1) + (\Pi_2^{N\top}(t+1) + \Pi_2^N(t+1) + \Pi_3^N(t+1) )(N-2) 
\notag \\ 
& 
\hspace{6cm}
+ \Pi_4^N(t+1)(N-2)(N-3)
\big] \overline\theta/N  . 
\notag 
\end{align}

\allowdisplaybreaks
\begin{align} 
 & \varphi_4^N( t; \Pi^N_1, \Pi^N_2, \Pi^N_3 , \Pi^N_4  )  
 \label{varphiN4/4}  \\ 
\eqdef & G_1^{N\top}(t) Q_4^N(t) G_1^N(t) 
+ G_1^{N\top}(t) \big[ Q_2^{N\top}(t) + Q_3^N(t) + (N-3) Q_4^N(t) \big] G_2^N(t) 
\notag \\ 
& + G_2^{N\top}(t) \big[ Q_2^N(t)  +  Q_3^{N}(t) + (N-3) Q_4^N(t) \big] G_1^{N}(t)
\notag \\ 
& + G_2^{N\top}(t) \big\{ Q_1^N(t) + ( Q_2^{N}(t) + Q_2^{N\top}(t)  )(N-2) 
  + Q_3^N(t) (N-3) + Q_4^N(t) [ (N-3)^2 + N-2 ]
\big\} G_2^N(t) 
\notag \\ 
%%%%%%% 
&  + e^{-\alpha(T-1-t)} \kappa G_2^{N\top}(t) ( M^{(1)}_Z(t) )^\top \overline\theta (-1/N)
\notag \\ 
& + e^{-\alpha(T-1-t)} \overline\kappa \big[ G_1^{N\top}(t) (1/N^2) +  G_2^{N\top}(t) (-1/N^2) \big] 
( M^{(1)}_Z(t) )^\top \theta 
\notag \\ 
& + G_2^{N\top}(t) ( M^{(1)}_Z(t) )^\top  \big\{ \Pi_1^{N }(t+1) \overline\theta /N + \Pi_2^N(t+1) \big[ \theta + \overline\theta (N-1)/N \big] 
 + \Pi_2^{N\top}(t+1) \theta/N 
 \notag \\ 
&
\hspace{3cm}
+ \Pi_3^{N}(t+1) (\theta + \overline\theta/N) + \Pi_4^N(t+1) \overline\theta (N-2)/N 
\big\}
\notag \\ 
& + \big[ G_1^{N\top}(t) + (N-3) G_2^{N\top}(t) \big] ( M^{(1)}_Z(t) )^\top 
\cdot 
\notag \\ 
& 
\hspace{3cm}
\big\{ \Pi_2^{N\top}(t+1) \overline\theta/N + \Pi_3^N(t+1) \overline\theta/N 
 + \Pi_4^{N}(t+1) \big[ \theta + \overline\theta (N-2)/N \big]  \big\}
\notag \\ 
%%%%%%% 
&  + \big\{ e^{-\alpha(T-1-t)} \kappa G_2^{N\top}(t) ( M^{(1)}_Z(t) )^\top \overline\theta (-1/N) 
\big\}^\top 
\notag \\ 
& + \big\{ e^{-\alpha(T-1-t)} \overline\kappa \big[ G_1^{N\top}(t) ( M^{(1)}_Z(t) )^\top \theta (1/N^2) 
 + G_2^{N\top}(t) ( M^{(1)}_Z(t) )^\top \theta (-1/N^2) \big] \big\}^\top 
\notag \\ 
& 
+ \Big\{ G_2^{N\top}(t) ( M^{(1)}_Z(t) )^\top \big\{ \Pi_1^N(t+1) \overline\theta/N + \Pi_2^N(t+1) \big[ \theta + \overline\theta(N-1)/N \big] 
+ \Pi_2^{N\top}(t+1) \overline\theta/N 
\notag \\ 
& 
\hspace{3cm}
+ \Pi_3^N(t+1) (\theta + \overline\theta/N ) 
 + \Pi_4^N(t+1) \overline\theta (N-2)/N
\big\} 
\Big\}^\top 
\notag \\ 
& 
+ \Big\{ \big[ G_1^{N\top}(t) + (N-3) G_2^{N\top}(t) \big] ( M^{(1)}_Z(t) )^\top 
\cdot
\notag \\ 
& 
\hspace{4cm}
\big\{ \Pi_2^{N\top}(t+1) \overline\theta/N + \Pi_3^N(t+1) \overline\theta/N 
 + \Pi_4^N(t+1)\big[ \theta + \overline\theta(N-2)/N \big] \big\} 
 \Big\}^\top 
\notag \\
%%%%%%% 
& 
 + e^{-\alpha(T-1-t)} 
 \big[ \kappa \overline\theta^\top \overline\theta (1/N^2) + \overline\kappa \theta^\top \theta (1/N^2) \big]
\notag \\ 
%%%%%%%%
& + (\theta + \overline\theta/N )^\top  \Pi_4^N(t+1) (\theta + \overline\theta/N ) 
+ (\theta + \overline\theta/N )^\top \big[ \Pi_2^{N\top}(t+1) + \Pi_3^{N}(t+1) + (N-3) \Pi_4^N(t+1) \big] 
 \overline\theta/N 
\notag \\ 
& + (\overline\theta/N)^\top \big[ \Pi_2^N(t+1) + \Pi_3^N(t+1) + (N-3) \Pi_4^{N}(t+1) \big] 
(\theta + \overline\theta/N )
\notag \\ 
& + (\overline\theta/N)^\top \big[ \Pi_1^N(t+1) + ( \Pi_2^{N}(t+1) + \Pi_2^{N\top}(t+1) + \Pi_4^N(t+1) )(N-2) 
\notag \\ 
& \hspace{5cm}  + \Pi_3^N(t+1)(N-3) + \Pi_4^N(t+1)(N-3)^2 
\big] \overline\theta/N . 
\notag 
\end{align} 
\hfill$\blacksquare$ 

\emph{Proof of Proposition~\ref{prop:SnSubmat}.}
%\begin{proof} 
The proof follows the analogous arguments as in Proposition~\ref{prop:PnSubmat:PiN1234}. 

\begin{comment} 
For $n=1, \dots, N$, the equation system~\eqref{ODE:Sn} can be written as 
\begin{align} 
 & S_n(t) =  \Psi_n(t; S_1, \dots, S_N; P_1, \dots, P_N )  , \quad 0 \leq t \leq T-1 , 
 \notag \\ 
 & S_n(T) =  0 ,  
 \notag 
\end{align} 
where 
\begin{align} 
 & \Psi_n(t; S_1, \dots, S_N; P_1, \dots, P_N ) \notag \\ 
 \eqdef  
 &  \mathbf{G}^\top(t; P_1, \dots, P_N ) \mathbf{Q}_n(t; P_n) \mathbf{H}(t; S_1, \dots, S_N; P_1, \dots, P_N) 
 \notag \\ 
 & + \mathbf{G}^\top(t; P_1, \dots, P_N ) \big[ - e^{-\alpha(T-1-t)} \kappa \mathbf{e}^z_n \mathbb{E}Z_t^{n\top} y_{t+1}  
  + \mathbb{E} \mathbf{Z}_t^\top S_n(t+1)
 \big] 
  \notag \\ 
 & +   \big\{ e^{-\alpha(T-1-t)} \big[ \kappa ( \Theta_n + \overline\Theta/N )^\top \mathbb{E} Z^n_t \mathbf{e}^{z\top}_n 
 + \overline\kappa (\Theta_n - \Theta/N )^\top 
  \mathbb{E}  \big( Z^n_t \mathbf{e}_n^{z\top} - \mathbf{1}_y \mathbf{Z}_t /N \big)  
 \big] 
  \notag \\ 
 & \qquad  + ( \mathbf{\Theta} + \overline{\mathbf{\Theta}}/N )^\top P_n(t+1) \mathbb{E}\mathbf{Z}_t 
 \big\} \mathbf{H}(t; S_1, \dots, S_N ; P_1, \dots, P_N ) 
  \notag \\ 
 &  - e^{-\alpha(T-1-t)} \kappa (\Theta_n + \overline\Theta/N )^\top y_{t+1} 
  +   (\mathbf{\Theta} + \overline{\Theta}/N )^\top S_n(t+1) 
  \notag 
\end{align} 
\end{comment}

\noindent 
\textbf{Step 1:} 
We denote $S_n^\dagger = J^{y\top}_{ij}S_n$ for $1\leq i < j \leq N$.  
We aim to show that for all $2\leq i < j \leq N$, 
\begin{align} 
 ( S^\dagger_1, \dots, S^\dagger_{i-1}, S^\dagger_j, S^\dagger_{i+1}, \dots, 
S^\dagger_{j-1}, S^\dagger_i, S^\dagger_{j+1}, \dots, S^\dagger_N )
\notag 
\end{align} 
solves the system~\eqref{ODE:Sn}, that is, 
\begin{align} 
\label{Sdagger=Psi(Sdagger)}
\begin{cases} 
S_n^\dagger(t) = \Psi_n(t; S^\dagger_1, \dots, S^\dagger_{i-1}, S^\dagger_j, S^\dagger_{i+1}, \dots, 
S^\dagger_{j-1}, S^\dagger_i, S^\dagger_{j+1}, \dots, S^\dagger_N ; 
 \\ 
\hspace{3cm}
P^\dagger_1, \dots, P^\dagger_{i-1}, P^\dagger_j, P^\dagger_{i+1}, \dots, 
P^\dagger_{j-1}, P^\dagger_i, P^\dagger_{j+1}, \dots, P^\dagger_N
), \quad \mbox{if} \quad n\neq i, j; \\ 
%%%%% 
S_i^\dagger(t) = \Psi_j(t; S^\dagger_1, \dots, S^\dagger_{i-1}, S^\dagger_j, S^\dagger_{i+1}, \dots, 
S^\dagger_{j-1}, S^\dagger_i, S^\dagger_{j+1}, \dots, S^\dagger_N ; 
 \\ 
\hspace{3cm}
P^\dagger_1, \dots, P^\dagger_{i-1}, P^\dagger_j, P^\dagger_{i+1}, \dots, 
P^\dagger_{j-1}, P^\dagger_i, P^\dagger_{j+1}, \dots, P^\dagger_N
), \quad \mbox{if} \quad n = i; \\ 
%%%%% 
S_j^\dagger(t) = \Psi_i(t; S^\dagger_1, \dots, S^\dagger_{i-1}, S^\dagger_j, S^\dagger_{i+1}, \dots, 
S^\dagger_{j-1}, S^\dagger_i, S^\dagger_{j+1}, \dots, S^\dagger_N ; 
 \\ 
\hspace{3cm}
P^\dagger_1, \dots, P^\dagger_{i-1}, P^\dagger_j, P^\dagger_{i+1}, \dots, 
P^\dagger_{j-1}, P^\dagger_i, P^\dagger_{j+1}, \dots, P^\dagger_N
), \quad \mbox{if} \quad n=j. \\ 
\end{cases} 
\end{align} 
Comparing~\eqref{ODE:Sn} and~\eqref{Sdagger=Psi(Sdagger)}, we have by uniqueness that 
\begin{align} 
( S^\dagger_1, \dots, S^\dagger_{i-1}, S^\dagger_j, S^\dagger_{i+1}, \dots, 
S^\dagger_{j-1}, S^\dagger_i, S^\dagger_{j+1}, \dots, S^\dagger_N)
= (S_1, \dots, S_N) .   
\notag 
\end{align} 
It then follows that $S_1 = S_1^\dagger = J^{y\top}_{ij} S_1$ for all $2\leq i < j \leq N$, which proves 
\eqref{S1:submat}.

Now we prove~\eqref{Sdagger=Psi(Sdagger)}. 

\begin{align} 
 & J^{y\top}_{ij} S_n(t) =  J^{y\top}_{ij} \Psi_n(t; S_1, \dots, S_N; P_1, \dots, P_N )  , \quad 0 \leq t \leq T-1 , 
 \notag \\ 
 & J^{y\top}_{ij} S_n(T) =  0 ,  
 \notag 
\end{align} 
where 
\begin{align} 
 & J^{y\top}_{ij} \Psi_n(t; S_1, \dots, S_N; P_1, \dots, P_N ) \notag \\ 
 =  & J^{y\top}_{ij} \mathbf{G}^\top(t; P_1, \dots, P_N ) \mathbf{Q}_n(t; P_n) \mathbf{H}(t; S_1, \dots, S_N; P_1, \dots, P_N) 
 \notag \\ 
 & + J^{y\top}_{ij} \mathbf{G}^\top(t; P_1, \dots, P_N ) \big[ - e^{-\alpha(T-1-t)} \kappa \mathbf{e}^z_n \mathbb{E}Z_t^{n\top} y_{t+1}  
  + \mathbb{E} \mathbf{Z}_t^\top S_n(t+1)
 \big] 
  \notag \\ 
 & +  J^{y\top}_{ij} \big\{ e^{-\alpha(T-1-t)} \big[ \kappa ( \Theta_n + \overline\Theta/N )^\top \mathbb{E} Z^n_t \mathbf{e}^{z\top}_n 
 + \overline\kappa (\Theta_n - \Theta/N )^\top 
  \mathbb{E}  \big( Z^n_t \mathbf{e}_n^{z\top} - \mathbf{1}_y \mathbf{Z}_t /N \big)  
 \big] 
  \notag \\ 
 & \qquad \qquad   + ( \mathbf{\Theta} + \overline{\mathbf{\Theta}}/N )^\top P_n(t+1) \mathbb{E}\mathbf{Z}_t 
 \big\} \mathbf{H}(t; S_1, \dots, S_N; P_1, \dots, P_N ) 
  \notag \\ 
 &  - e^{-\alpha(T-1-t)} \kappa (\Theta_n + \overline\Theta/N )^\top y_{t+1} 
  +   (\mathbf{\Theta} + \overline{\Theta}/N )^\top S_n(t+1)  . 
  \notag 
\end{align}

By~\eqref{bfH}, we have 
\allowdisplaybreaks
\begin{align} 
 & J^{z\top}_{ij} \mathbf{H}(t; P_1, \dots, P_N; S_1, \dots, S_N ) 
 \notag \\ 
= & - J^{z\top}_{ij} \Big\{ e^{-\alpha(T-1-t)}   
 \big[  \big( \kappa + \overline\kappa (1-1/N) \big) \widehat{\mathbf{A}}_1(t) 
  + \big( \overline\kappa (1-1/N)(-1/N) \big) \widehat{\mathbf{A}}_2(t) 
 +  \gamma I \big]  
 + \mathbf{A}(t)  
\Big\}^{-1} \cdot  \notag \\ 
 &\Big[  e^{-\alpha(T-1-t)} (-\kappa) \mathbf{F}(t) 
 + \mathbf{C}(t)  \Big] 
 \notag \\ 
 = & 
 - J^{z\top}_{ij} \Big\{ e^{-\alpha(T-1-t)}   
 \big[  \big( \kappa + \overline\kappa (1-1/N) \big) \widehat{\mathbf{A}}_1(t) 
  + \big( \overline\kappa (1-1/N)(-1/N) \big) \widehat{\mathbf{A}}_2(t) 
 +  \gamma I \big]  
 + \mathbf{A}(t)  
\Big\}^{-1} J^z_{ij} \cdot  \notag \\ 
 & J^{z\top}_{ij} \Big[  e^{-\alpha(T-1-t)} (-\kappa) \mathbf{F}(t) 
 + \mathbf{C}(t)  \Big] 
 \notag \\ 
 %%%%% 
  = & 
 -  \Big\{ e^{-\alpha(T-1-t)}   
 \big[  \big( \kappa + \overline\kappa (1-1/N) \big) J^{z\top}_{ij} \widehat{\mathbf{A}}_1(t) J^z_{ij} 
  + \big( \overline\kappa (1-1/N)(-1/N) \big) J^{z\top}_{ij} \widehat{\mathbf{A}}_2(t) J^z_{ij} 
 + J^{z\top}_{ij} \gamma I J^z_{ij} \big] 
 \notag \\ 
& \hspace{13cm} + J^{z\top}_{ij} \mathbf{A}(t) J^z_{ij} 
\Big\}^{-1}  \cdot  \notag \\ 
 &  \Big[  e^{-\alpha(T-1-t)} (-\kappa) J^{z\top}_{ij} \mathbf{F}(t) 
 + J^{z\top}_{ij} \mathbf{C}(t)  \Big] 
 \notag \\ 
 %%%%% 
  = & 
 -  \Big\{ e^{-\alpha(T-1-t)}   
 \big[  \big( \kappa + \overline\kappa (1-1/N) \big)  \widehat{\mathbf{A}}_1(t) 
  + \big( \overline\kappa (1-1/N)(-1/N) \big)  \widehat{\mathbf{A}}_2(t)  
 +  \gamma I  \big] 
 + J^{z\top}_{ij} \mathbf{A}(t) J^z_{ij} 
\Big\}^{-1}  \cdot  \notag \\ 
 &  \Big[  e^{-\alpha(T-1-t)} (-\kappa)  \mathbf{F}(t) 
 + J^{z\top}_{ij} \mathbf{C}(t)  \Big] ,  
 \notag 
\end{align} 
where 
\begin{align} 
 J^{z\top}_{ij} \mathbf{C}(t) 
 =&J^{z\top}_{ij} \mathbf{C}(t; S_1, \dots, S_N) 
 \notag \\ 
 =& J^{z\top}_{ij} \mathbb{E} \big[  S_1(t+1)^\top \mathbf{e}^y_1 Z^1_t , 
  \dots, S_N(t+1)^\top \mathbf{e}^y_N Z^N_t \big]^\top 
 \notag \\ 
 =& J^{z\top}_{ij} \mathbb{E} \big[  S_1(t+1)^\top J^y_{ij} J^{y\top}_{ij} \mathbf{e}^y_1 Z^1_t , \dots, S_N(t+1)^\top J^y_{ij} J^{y\top}_{ij} \mathbf{e}^y_N Z^N_t \big]^\top  
 \notag \\ 
 %%%%% 
 =& J^{z\top}_{ij} \mathbb{E} \big[  S_1^\dagger(t+1)^\top J^{y\top}_{ij} \mathbf{e}^y_1 Z^1_t , \dots, S_N^\dagger(t+1)^\top  J^{y\top}_{ij} \mathbf{e}^y_N Z^N_t \big]^\top  
 \notag \\ 
 %%%%% 
 =& J^{z\top}_{ij} \mathbb{E} \big[  S_1^\dagger(t+1)^\top  \mathbf{e}^y_1 Z^1_t , \dots, 
  S_{i-1}^\dagger(t+1)^\top  \mathbf{e}^y_{i-1} Z^{i-1}_t,   S_i^\dagger(t+1)^\top  \mathbf{e}^y_j Z^i_t,  
   S_{i+1}^\dagger(t+1)^\top  \mathbf{e}^y_{i+1} Z^{i+1}_t , \dots 
   \notag \\ 
   & \quad \dots, 
   S_{j-1}^\dagger(t+1)^\top  \mathbf{e}^y_{j-1} Z^{j-1}_t,   S_j^\dagger(t+1)^\top  \mathbf{e}^y_i Z^j_t,  
   S_{j+1}^\dagger(t+1)^\top  \mathbf{e}^y_{j+1} Z^{j+1}_t , \dots , 
  S_N^\dagger(t+1)^\top  J^{y\top}_{ij} \mathbf{e}^y_N Z^N_t \big]^\top  
 \notag \\ 
 %%%%%
 =&  \mathbb{E} \big[  S_1^\dagger(t+1)^\top \mathbf{e}^y_1 Z^1_t , \dots, S_{i-1}^\dagger(t+1)^\top \mathbf{e}^y_{i-1} Z^{i-1}_t , 
 S_j^\dagger(t+1)^\top \mathbf{e}^y_i Z^j_t , 
 S_{i+1}^\dagger(t+1)^\top \mathbf{e}^y_{i+1} Z^{i+1}_t, 
 \dots, 
 \notag \\ 
& \hspace{1cm}  \dots, S_{j-1}^\dagger(t+1)^\top \mathbf{e}^y_{j-1} Z^{j-1}_t, 
 S_i^\dagger(t+1)^\top \mathbf{e}^y_j Z^i_t, 
 S_{j+1}^\dagger(t+1)^\top \mathbf{e}^y_{j+1} Z^{j+1}_t , 
 \dots , 
 S_N^\dagger(t+1)^\top \mathbf{e}^y_N Z^N_t \big]^\top  
 \notag \\ 
 %%%%% 
 = & \mathbf{C}(t; S^\dagger_1, \dots, S^\dagger_{i-1}, S^\dagger_j, S^\dagger_{i+1}, \dots, S^\dagger_{j-1}, S^\dagger_i, 
 S^\dagger_{j+1}, \dots, S^\dagger_N ) . 
 \notag 
\end{align} 
Then we have that 
\begin{align} 
  J^{z\top}_{ij} \mathbf{H}(t) 
 = & J^{z\top}_{ij} \mathbf{H}(t; S_1, \dots, S_N; P_1, \dots, P_N ) 
 \notag \\ 
 = & \mathbf{H} (t; S_1^\dagger, \dots, S_{i-1}^\dagger, S_j^\dagger, S_{i+1}^\dagger, \dots, S_{j-1}^\dagger, S_i^\dagger, S_{j+1}^\dagger, \dots, S_N^\dagger; 
 \notag \\ 
& \hspace{1cm} P_1^\dagger, \dots, P_{i-1}^\dagger, P_j^\dagger, P_{i+1}^\dagger, \dots, P_{j-1}^\dagger, P_i^\dagger, P_{j+1}^\dagger, \dots, P_N^\dagger ) . 
 \label{JH(SP)=H(SPdagger)}
\end{align} 

By~\eqref{JG(P)J=G(Pdagger)}, \eqref{JijQnJij=Qn}, \eqref{JijQiJij=Qj}, \eqref{JijQjJij=Qi}, and~\eqref{JH(SP)=H(SPdagger)}, we have that 
\begin{align} 
& J_{ij}^{y\top } \mathbf{G}^\top(t; P_1, \dots, P_N ) \mathbf{Q}_n(t; P_n) 
  \mathbf{H}(t; P_1, \dots, P_N; S_1, \dots, S_N ) 
  \notag \\ 
= & \big[ J_{ij}^{y\top } \mathbf{G}^\top(t; P_1, \dots, P_N ) J^{z}_{ij} \big] 
\big[ J^{z\top}_{ij} \mathbf{Q}_n(t; P_n) J^z_{ij} \big]  
 \big[ J^{z\top}_{ij} \mathbf{H}(t; P_1, \dots, P_N; S_1, \dots, S_N ) \big] 
 \notag \\ 
 = & 
 \begin{cases} 
\mathbf{G}^\top(  t; P^\dagger_1, \dots, P^\dagger_{i-1}, P^\dagger_j, P^\dagger_{i+1}, \dots, 
P^\dagger_{j-1}, P^\dagger_i, P^\dagger_{j+1}, \dots, P^\dagger_N) 
\mathbf{Q}_n(t; P_n^\dagger) \cdot 
\\ 
\hspace{2cm} 
\mathbf{H} (t; S_1^\dagger, \dots, S_{i-1}^\dagger, S_j^\dagger, S_{i+1}^\dagger, \dots, S_{j-1}^\dagger, S_i^\dagger, S_{j+1}^\dagger, \dots, S_N^\dagger; 
  \\ 
 \hspace{3cm} P_1^\dagger, \dots, P_{i-1}^\dagger, P_j^\dagger, P_{i+1}^\dagger, \dots, P_{j-1}^\dagger, P_i^\dagger, P_{j+1}^\dagger, \dots, P_N^\dagger ) 
, \quad \mbox{if} \quad n \neq i , j ;  \\ 
%%%%% 
\mathbf{G}^\top(  t; P^\dagger_1, \dots, P^\dagger_{i-1}, P^\dagger_j, P^\dagger_{i+1}, \dots, 
P^\dagger_{j-1}, P^\dagger_i, P^\dagger_{j+1}, \dots, P^\dagger_N) 
\mathbf{Q}_j(t; P_i^\dagger) \cdot 
\\ 
\hspace{2cm} 
\mathbf{H} (t; S_1^\dagger, \dots, S_{i-1}^\dagger, S_j^\dagger, S_{i+1}^\dagger, \dots, S_{j-1}^\dagger, S_i^\dagger, S_{j+1}^\dagger, \dots, S_N^\dagger; 
  \\ 
 \hspace{3cm} P_1^\dagger, \dots, P_{i-1}^\dagger, P_j^\dagger, P_{i+1}^\dagger, \dots, P_{j-1}^\dagger, P_i^\dagger, P_{j+1}^\dagger, \dots, P_N^\dagger ) 
, \quad \mbox{if} \quad n = i ;    \\ 
%%%%% 
\mathbf{G}^\top(  t; P^\dagger_1, \dots, P^\dagger_{i-1}, P^\dagger_j, P^\dagger_{i+1}, \dots, 
P^\dagger_{j-1}, P^\dagger_i, P^\dagger_{j+1}, \dots, P^\dagger_N) 
\mathbf{Q}_i(t; P_j^\dagger) \cdot 
\\ 
\hspace{2cm} 
\mathbf{H} (t; S_1^\dagger, \dots, S_{i-1}^\dagger, S_j^\dagger, S_{i+1}^\dagger, \dots, S_{j-1}^\dagger, S_i^\dagger, S_{j+1}^\dagger, \dots, S_N^\dagger; 
  \\ 
 \hspace{3cm} P_1^\dagger, \dots, P_{i-1}^\dagger, P_j^\dagger, P_{i+1}^\dagger, \dots, P_{j-1}^\dagger, P_i^\dagger, P_{j+1}^\dagger, \dots, P_N^\dagger ) 
, \quad \mbox{if} \quad n = j .     
\end{cases} 
 \label{JGQnH:3cases}  
\end{align}

\begin{align} 
 & J^{y\top}_{ij} \mathbf{G}^\top(t; P_1, \dots, P_N )   \big[ - e^{-\alpha(T-1-t)} \kappa \mathbf{e}^z_n \mathbb{E}Z_t^{n \top} y_{t+1}  
  + \mathbb{E} \mathbf{Z}_t^\top S_n(t+1)
 \big] \notag \\ 
 = & \big[ J^{y\top}_{ij} \mathbf{G}^\top(t; P_1, \dots, P_N ) J^{z}_{ij} \big] J^{z\top}_{ij}  \big[ - e^{-\alpha(T-1-t)} \kappa \mathbf{e}^z_n \mathbb{E}Z_t^{n\top} y_{t+1}  
  + \mathbb{E} \mathbf{Z}_t^\top S_n(t+1)
 \big] 
 \notag \\ 
 %%%%%%%%% 
 = & 
 \begin{cases} 
\mathbf{G}^\top(  t; P^\dagger_1, \dots, P^\dagger_{i-1}, P^\dagger_j, P^\dagger_{i+1}, \dots, 
P^\dagger_{j-1}, P^\dagger_i, P^\dagger_{j+1}, \dots, P^\dagger_N) 
\cdot \\ 
\hspace{2cm}  
\big[ - e^{-\alpha(T-1-t)} \kappa \mathbf{e}^z_n \mathbb{E}Z_t^{n \top } y_{t+1}  
  + \mathbb{E} \mathbf{Z}_t^\top S_n^\dagger(t+1)
 \big] 
, \quad \mbox{if} \quad n \neq i , j ;  \\ 
%%%%% 
\mathbf{G}^\top(  t; P^\dagger_1, \dots, P^\dagger_{i-1}, P^\dagger_j, P^\dagger_{i+1}, \dots, 
P^\dagger_{j-1}, P^\dagger_i, P^\dagger_{j+1}, \dots, P^\dagger_N)  
\cdot 
\\ 
\hspace{2cm} 
\big[ - e^{-\alpha(T-1-t)} \kappa \mathbf{e}^z_j \mathbb{E}Z_t^{j \top} y_{t+1}  
  + \mathbb{E} \mathbf{Z}_t^\top S_i^\dagger(t+1)
 \big] 
, \quad \mbox{if} \quad n = i ;    \\ 
%%%%% 
\mathbf{G}^\top(  t; P^\dagger_1, \dots, P^\dagger_{i-1}, P^\dagger_j, P^\dagger_{i+1}, \dots, 
P^\dagger_{j-1}, P^\dagger_i, P^\dagger_{j+1}, \dots, P^\dagger_N)  
\cdot 
\\ 
\hspace{2cm} 
\big[ - e^{-\alpha(T-1-t)} \kappa \mathbf{e}^z_i \mathbb{E}Z_t^{i \top} y_{t+1}  
  + \mathbb{E} \mathbf{Z}_t^\top S_j^\dagger(t+1)
 \big] 
, \quad \mbox{if} \quad n = j .     
\end{cases} 
\label{JG[Sn]:3cases} 
\end{align}

\begin{align} 
& J^{y\top}_{ij} \Big\{ e^{-\alpha(T-1-t)} \big[ \kappa ( \Theta_n + \overline\Theta/N )^\top \mathbb{E} Z^n_t e^{z\top}_n 
 + \overline\kappa (\Theta_n - \Theta/N )^\top \mathbb{E}  ( Z^n_t \mathbf{e}_n^{z\top} - \mathbf{1}_y \mathbf{Z}_t /N )  
 \big] 
  \notag \\ 
 & \qquad \qquad  
 + ( \mathbf{\Theta} + \overline{\mathbf{\Theta}}/N )^\top P_n(t+1) \mathbb{E}\mathbf{Z}_t 
 \Big\} J^{z}_{ij}  J^{z} _{ij} \mathbf{H}(t; P_1, \dots, P_N; S_1, \dots, S_N ) 
 \notag \\ 
 %%%%%%%%%%%% 
 = & 
 \begin{cases}  
 \Big\{ e^{-\alpha(T-1-t)} \big[ \kappa ( \Theta_n + \overline\Theta/N )^\top \mathbb{E} Z^n_t e^{z\top}_n 
 + \overline\kappa (\Theta_n - \Theta/N )^\top \mathbb{E}( Z^n_t \mathbf{e}_n^{z\top} - \mathbf{1}_y \mathbf{Z}_t /N )  
 \big] 
   \\ 
 \hspace{2cm}  
 + ( \mathbf{\Theta} + \overline{\mathbf{\Theta}}/N )^\top P_n^\dagger(t+1) \mathbb{E}\mathbf{Z}_t 
 \Big\} \cdot 
 \\ 
 \hspace{0cm} \mathbf{H} (t; S_1^\dagger, \dots, S_{i-1}^\dagger, S_j^\dagger, S_{i+1}^\dagger, \dots, S_{j-1}^\dagger, S_i^\dagger, S_{j+1}^\dagger, \dots, S_N^\dagger; 
  \\ 
 \hspace{1cm} P_1^\dagger, \dots, P_{i-1}^\dagger, P_j^\dagger, P_{i+1}^\dagger, \dots, P_{j-1}^\dagger, P_i^\dagger, P_{j+1}^\dagger, \dots, P_N^\dagger )  , 
 \qquad \text{if} \quad n \neq i, j , 
 \\ 
 %%%%%%%%%%%%% 
  \Big\{ e^{-\alpha(T-1-t)} \big[ \kappa ( \Theta_j + \overline\Theta/N )^\top \mathbb{E} Z^j_t e^{z\top}_j 
 + \overline\kappa (\Theta_j - \Theta/N )^\top \mathbb{E}  ( Z^j_t \mathbf{e}_j^{z\top} - \mathbf{1}_y \mathbf{Z}_t /N )  
 \big] 
   \\ 
 \hspace{2cm}  
 + ( \mathbf{\Theta} + \overline{\mathbf{\Theta}}/N )^\top P_i^\dagger(t+1) \mathbb{E}\mathbf{Z}_t 
 \Big\} \cdot 
 \\ 
 \hspace{0cm} \mathbf{H} (t; S_1^\dagger, \dots, S_{i-1}^\dagger, S_j^\dagger, S_{i+1}^\dagger, \dots, S_{j-1}^\dagger, S_i^\dagger, S_{j+1}^\dagger, \dots, S_N^\dagger; 
  \\ 
 \hspace{1cm} P_1^\dagger, \dots, P_{i-1}^\dagger, P_j^\dagger, P_{i+1}^\dagger, \dots, P_{j-1}^\dagger, P_i^\dagger, P_{j+1}^\dagger, \dots, P_N^\dagger )  , 
 \qquad \text{if} \quad  n = i , 
  \\ 
 %%%%%%%%%%%%%% 
  \Big\{ e^{-\alpha(T-1-t)} \big[ \kappa ( \Theta_i + \overline\Theta/N )^\top \mathbb{E} Z^i_t e^{z\top}_i 
 + \overline\kappa (\Theta_i - \Theta/N )^\top \mathbb{E} ( Z^i_t \mathbf{e}_i^{z\top} - \mathbf{1}_y \mathbf{Z}_t /N )  
 \big] 
   \\ 
 \hspace{2cm}  
 + ( \mathbf{\Theta} + \overline{\mathbf{\Theta}}/N )^\top P_j^\dagger(t+1) \mathbb{E}\mathbf{Z}_t 
 \Big\} \cdot 
 \\ 
 \hspace{0cm} \mathbf{H} (t; S_1^\dagger, \dots, S_{i-1}^\dagger, S_j^\dagger, S_{i+1}^\dagger, \dots, S_{j-1}^\dagger, S_i^\dagger, S_{j+1}^\dagger, \dots, S_N^\dagger; 
  \\ 
 \hspace{1cm} P_1^\dagger, \dots, P_{i-1}^\dagger, P_j^\dagger, P_{i+1}^\dagger, \dots, P_{j-1}^\dagger, P_i^\dagger, P_{j+1}^\dagger, \dots, P_N^\dagger )  , 
 \qquad \text{if} \quad n =  j .  
 \end{cases}
\label{J[Pn]H:3cases}
\end{align}

\begin{align} 
 & J^{y\top}_{ij} \big[  - e^{-\alpha(T-1-t)} \kappa (\Theta_n + \overline\Theta/N )^\top y_{t+1} 
  +   (\mathbf{\Theta} + \overline{\Theta}/N )^\top S_n(t+1) \big]  
  \notag \\ 
 = &
\begin{cases} 
- e^{-\alpha(T-1-t)} \kappa (\Theta_n + \overline\Theta/N )^\top y_{t+1} 
  +   (\mathbf{\Theta} + \overline{\Theta}/N )^\top S_n^\dagger(t+1) , 
  \quad \text{if} \quad n \neq i, j, 
  \\
%%%%% 
- e^{-\alpha(T-1-t)} \kappa (\Theta_j + \overline\Theta/N )^\top y_{t+1} 
  +   (\mathbf{\Theta} + \overline{\Theta}/N )^\top S_i^\dagger(t+1) , 
  \quad \text{if} \quad n = i , 
  \\ 
%%%%% 
- e^{-\alpha(T-1-t)} \kappa (\Theta_i + \overline\Theta/N )^\top y_{t+1} 
  +   (\mathbf{\Theta} + \overline{\Theta}/N )^\top S_j^\dagger(t+1) , 
  \quad \text{if} \quad n = j, 
\end{cases} 
\label{J[Sn]:3cases} 
\end{align} 
Then~\eqref{Sdagger=Psi(Sdagger)} follows from~\eqref{JGQnH:3cases}, \eqref{JG[Sn]:3cases}, \eqref{J[Pn]H:3cases}, and~\eqref{J[Sn]:3cases}.

\bigskip 
\noindent 
\textbf{Step 2:} For each $2\leq j \leq N$, denote $S_n^\ddagger=J^{y\top}_{1j} S_n $, $1\leq n \leq N$. Then we have that 
\begin{align} 
\big( S^\ddagger_j, S^\ddagger_2, \dots, S^\ddagger_{j-1}, S^\ddagger_{1}, S^\ddagger_{j+1}, \dots, S^\ddagger_{N}   \big) 
\notag 
\end{align} 
and $(S_1, \dots, S_N)$ satisfy~\eqref{ODE:Sn}. This implies that $S_n = J^{y\top}_{1j} S_1$ for $2\leq j \leq N$, and thus~\eqref{Sn=J1nS1} holds. 

\hfill$\blacksquare$
%\end{proof} 

%\begin{proof} 
\emph{Proof of Corollary~\ref{cor:bfHsubmat}.}
Substituting~\eqref{P1:PiN1234}-\eqref{Pn=J1nP1J1n} and~\eqref{S1:submat}-\eqref{Sn=J1nS1} into the expression of $\mathbf{H}(\cdot)$ in~\eqref{bfH}, and using Assumption~\eqref{assm:Zt} and the block structure of the inverse matrix given in~\eqref{inv(bfA+bfhatA1+bfhatA2+gammaI):submat}, we obtain that $\mathbf{H}(\cdot)$ admits the submatrix decomposition~\eqref{bfH:submat} with submatrices given by~\eqref{HN}. 
\hfill$\blacksquare$ 
%\end{proof}

%\begin{proof}
\emph{Proof of Corollary~\ref{cor:ODEs:XiN12}.} 
By taking $n=1$ and substituting~\eqref{EZit=MZ1t}, \eqref{P1:PiN1234}, \eqref{bfG:submat}, \eqref{bfQn:submat}, and~\eqref{bfH:submat} into~\eqref{ODE:Sn}, we obtain that  
the submatrices $\Xi_i^N$, $i=1,2$, satisfy the system~\eqref{ODE:XiNi}, where the mappings 
$\psi_i^N(\cdot)$, $i=1, 2$, are defined below.  
\allowdisplaybreaks 
\begin{align}  
& \psi^N_1\big( t; \Pi^N_1, \Pi^N_2, \Pi^N_3, \Pi^N_4, \Xi_1^N, \Xi_2^N \big) 
 \label{XiN1} \\ 
  \eqdef 
 &  G_1^{N\top}(t) \big[ Q_1^N(t) + (N-1) Q_2^N(t) \big] H^N(t) 
 \notag \\ 
 &  + (N-1) G_2^{N\top}(t) \big[ Q_2^{N\top}(t) + Q_3^N(t) + (N-2) Q_4^N(t) \big] H^N(t) 
  \notag \\ 
  & - e^{-\alpha(T-1-t)} \kappa G_1^{N\top}(t) ( M^{(1)}_Z(t) )^\top y_{t+1} 
  + G_1^{N\top}(t) ( M^{(1)}_Z(t) )^\top \Xi_1^N(t+1) 
  \notag \\ 
  & + (N-1) G_2^{N\top}(t) ( M^{(1)}_Z(t) )^\top \Xi_2^N(t+1) 
   \notag \\ 
   & + e^{-\alpha(T-1-t)} \kappa ( \theta + \overline\theta/N )^\top M^{(1)}_Z(t) H^N(t) 
   \notag \\ 
   & + (\theta + \overline\theta/N)^\top \big[ \Pi_1^N(t+1) + (N-1) \Pi_2^N(t+1) \big] M^{(1)}_Z(t) H^N(t) 
   \notag \\ 
   & + \overline\theta^\top (1/N) \big[ \Pi_2^{N\top}(t+1) + \Pi_3^N(t+1) + (N-2) \Pi_4^N(t+1) \big] M^{(1)}_Z(t) H^N(t) 
   \notag \\ 
   & - e^{-\alpha(T-1-t)} \kappa (\theta + \overline\theta/N)^\top y_{t+1} 
   + ( \theta + \overline\theta/N )^\top \Xi_1^N(t+1) + \overline\theta^\top(1/N) \Xi_2^N(t+1)(N-1)
\notag 
\end{align} 
and 
\allowdisplaybreaks
\begin{align} 
&  \psi^N_2 \big( t; \Pi^N_1, \Pi^N_2, \Pi^N_3, \Pi^N_4 , \Xi_1^N, \Xi_2^N \big) 
  \label{XiN2} \\ 
 \eqdef & \big[ G_1^{N\top}(t) + (N-2) G_2^{N\top}(t) \big] \big[ Q_2^{N\top}(t) + Q_3^N(t) + (N-2) Q_4^N(t) \big] H^N(t) 
 \notag \\ 
 & + G_2^{N\top}(t) \big[ Q_1^N(t) + (N-1) Q_2^N(t) \big] H^N(t) 
 \notag \\ 
 & - e^{-\alpha(T-1-t)} \kappa G_2^{N\top}(t) (M^{(1)}_Z(t))^\top y_{t+1} 
 \notag \\ 
 &  + G_1^{N\top}(t) ( M^{(1)}_Z(t) )^\top \Xi_2^N(t+1) 
   + G_2^{N\top}(t) ( M^{(1)}_Z(t) )^\top \big[ \Xi_1^N(t+1) + (N-2) \Xi_2^N(t+1) \big]
\notag \\ 
& + e^{-\alpha(T-1-t)} \kappa \overline\theta^\top (1/N) 
 M^{(1)}_Z(t) H^N(t) 
\notag \\ 
& + \overline\theta^\top (1/N) \big[ \Pi_1^N(t+1) + (N-1) \Pi_2^N(t+1) \big] M^{(1)}_Z(t) H^N(t) 
\notag \\ 
& + \big[ \theta^\top + \overline\theta^\top  (N-1)/N \big] 
 \big[ \Pi_2^{N\top}(t+1) + \Pi_3^N(t+1) + (N-2) \Pi_4^N(t+1) \big] M^{(1)}_Z(t) H^N(t) 
 \notag \\ 
 &  - e^{-\alpha(T-1-t)} \kappa \overline\theta^\top (1/N) y_{t+1} 
 + \overline\theta^\top (1/N) \Xi_1^N(t+1) 
  + \big[ \theta^\top + \overline\theta^\top (N-1)/N \big] \Xi_2^N(t+1) . 
\notag 
\end{align} 
\hfill$\blacksquare$ 

\section{Proof of the Decentralized Policy} 
\label{appendix:decentralizedStrat}

\begin{comment} 
As shown in the proof of Theorem~\ref{thm:NagentNash}, the value function of each agent admits a quadratic affine form~\eqref{Vn(t,y):QuadraticAffine}. Furthermore, leveraging the matrix decompositions in  Propositions~\ref{prop:PnSubmat:PiN1234} and~\ref{prop:SnSubmat}, we obtain, for $\hat{\mathbf{y}} = \big[ \hat{y}_1^\top, \dots, \hat{y}_N^\top \big]^\top$, that 
\begin{align} 
 V_n( t, \hat{\mathbf{y}} ) 
= & \hat{\mathbf{y}}^\top P_n(t) \hat{\mathbf{y}} + 2 S_n^\top(t) \hat{\mathbf{y}}  + r_n(t)
\notag \\ 
= & \hat{y}_n^\top  \Pi_1^N(t) \hat{y}_n + \hat{y}_n^\top \Pi_2^N(t) \sum_{i=1\neq n}^N \hat{y}_i 
 +  \big( \sum_{i=1\neq n}^N \hat{y}_i \big)^\top  \Pi_2^{N\top}(t) \hat{y}_n 
 + \sum_{i=1\neq n}^N \hat{y}_i^\top \Pi_3^N(t) \hat{y}_i 
 \notag \\ 
& 
+  \sum_{i\neq j = 1}^N \hat{y}_i^\top \Pi_4^N(t) \hat{y}_j  
   + 2 \, \Xi_1^{N\top}(t) \hat{y}_n + 2 \, \Xi_2^{N\top} \sum_{i=1\neq n}^N \hat{y}_i 
  + r_n(t) . 
\notag 
\end{align} 
\end{comment}

In this section, we prove Theorem~\ref{thm:Y(N)->barY}, which characterizes the asymptotic behavior of the predictions under the decentralized learning policy~\eqref{betan_decentralized}. 

Following the rescaling method in~\cite[Eq.~(4.9)]{HY21mfg}, the sub-matrices in~\eqref{P1:PiN1234} and~\eqref{S1:submat} exhibit the asymptotic orders
\begin{align}
 & \Pi_1^N(t) = \mathcal{O}(1), \quad 
 \Pi_2^N(t) = \mathcal{O}(1/N), \quad 
 \Pi_3^N(t) = \mathcal{O}(1/N^2), \quad 
 \Pi_4^N(t) = \mathcal{O}(1/N^2), \notag \\
 & \Xi_1^N(t) = \mathcal{O}(1), \quad 
 \Xi_2^N(t) = \mathcal{O}(1/N). \notag
\end{align}
Motivated by these scalings, we introduce the normalized variables
\begin{align}
& \Pi_1^N(t) = \Lambda_1^N(t), \quad 
  \Pi_2^N(t) = \frac{\Lambda_2^N(t)}{N}, \quad 
  \Pi_3^N(t) = \frac{\Lambda_3^N(t)}{N^2}, \quad 
  \Pi_4^N(t) = \frac{\Lambda_4^N(t)}{N^2},
\label{LambdaN1234=PiN1234} \\
& \Xi_1^N(t) = \chi_1^N(t), \quad 
  \Xi_2^N(t) = \frac{\chi_2^N(t)}{N},
\label{chiN12=XiN12}
\end{align}
where the rescaled quantities $\{ \Lambda_i^N(\cdot) \}_{i=1}^4$ and $\{ \chi_i^N(\cdot) \}_{i=1}^2$ are all of order $\mathcal{O}(1)$. This normalization removes the explicit dependence on $N$ in the leading-order terms and facilitates the analysis of the large-population limit. In particular, it enables us to derive a limiting system of equations for $\{\Lambda_i\}_{i=1}^4$ and $\{\chi_i\}_{i=1,2}$ as $N \to \infty$, which characterizes the mean-field behavior of the model and serves as the basis for our subsequent analysis.

Upon introducing~\eqref{LambdaN1234=PiN1234} and~\eqref{chiN12=XiN12}, we obtain a system of equations for the $\mathcal{O}(1)$ quantities $\{\Lambda_i^N\}_{i=1}^4$ and $\{\chi_i^N\}_{i=1,2}$, as stated in the following lemma.

\begin{lemma}  
\label{lem:ODEsLambdaN123}
The matrix-valued functions $\{\Lambda_i^N\}_{i=1}^4$ and $\{\chi_i^N \}_{i=1,2}$ defined in~\eqref{LambdaN1234=PiN1234} and~\eqref{chiN12=XiN12} satisfy
\begin{align} 
\begin{cases} 
 \Lambda_1^N(t) = \varphi_1\!\left( t; \Lambda_1^N \right) + g_1^N\!\left( t; \Lambda_1^N , \Lambda_2^N, \Lambda_3^N, \Lambda_4^N \right),  \\[4pt]
 \Lambda_2^N(t) = \varphi_2\!\left( t; \Lambda_1^N, \Lambda_2^N \right) 
 + g_2^N\!\left( t; \Lambda_1^N, \Lambda_2^N , \Lambda_3^N , \Lambda_4^N \right),  \\[4pt]
 \Lambda_3^N(t) = \varphi_3\!\left( t ; \Lambda_1^N, \Lambda_2^N, \Lambda_3^N, \Lambda_4^N \right) 
 + g_3^N\!\left( t, \Lambda_1^N , \Lambda_2^N , \Lambda_3^N, \Lambda_4^N \right),  \\[4pt]
 \Lambda_4^N(t) = \varphi_4\!\left( t ; \Lambda_1^N, \Lambda_2^N, \Lambda_3^N, \Lambda_4^N \right) 
 + g_4^N\!\left( t; \Lambda_1^N , \Lambda_2^N , \Lambda_3^N , \Lambda_4^N \right),  \\[6pt]
 \chi_1^N(t) = \psi_1\!\left( t; \Lambda_1^N, \Lambda_2^N, \chi_1^N \right) 
 + g_5^N\!\left( t; \Lambda_1^N, \Lambda_2^N, \Lambda_3^N, \Lambda_4^N, \chi_1^N, \chi_2^N \right), \\[4pt]
 \chi_2^N(t) = \psi_2\!\left( t; \Lambda_1^N, \Lambda_2^N, \Lambda_4^N, \chi_1^N, \chi_2^N \right) 
 + g_6^N\!\left( t; \Lambda_1^N, \Lambda_2^N, \Lambda_3^N, \Lambda_4^N, \chi_1^N, \chi_2^N \right).
\end{cases} 
\label{ODEs:LambdaN1234ChiN12}
\end{align} 
Here, $\{\varphi_i(\cdot)\}_{i=1}^4$ and $\{\psi_i(\cdot)\}_{i=1,2}$ are defined in~\eqref{varphi1}--\eqref{varphi4} and~\eqref{psi1}--\eqref{psi2}, respectively. The residual terms $\{g_i^N(\cdot)\}_{i=1}^6$ satisfy
\begin{align} 
\begin{aligned} 
&\lim_{N\to\infty} \big\| g_i^N\!\left( t; \Lambda_1^N, \Lambda_2^N, \Lambda_3^N, \Lambda_4^N \right) \big\| = 0, \quad i = 1,2,3,4, \\  
&\lim_{N\to\infty} \big\|g_i^N\!\left( t; \Lambda_1^N, \Lambda_2^N, \Lambda_3^N, \Lambda_4^N, \chi_1^N, \chi_2^N \right) \big\| = 0, \quad i = 5,6.
\end{aligned} 
\label{gN123456->0}
\end{align} 
\end{lemma}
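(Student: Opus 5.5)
The plan is to obtain the system~\eqref{ODEs:LambdaN1234ChiN12} by direct substitution of the rescalings~\eqref{LambdaN1234=PiN1234}--\eqref{chiN12=XiN12} into the exact finite-$N$ recursions~\eqref{ODE:PiNi}--\eqref{ODE:XiNi} of Corollaries~\ref{cor:ODEs:PiN1234} and~\ref{cor:ODEs:XiN12}. After the substitution, each right-hand side is expanded in powers of $1/N$. The limiting maps $\varphi_i,\psi_i$ are then \emph{defined} as the $N$-independent leading parts, and each $g_i^N$ is defined as the remainder. With these definitions the identities in~\eqref{ODEs:LambdaN1234ChiN12} hold tautologically. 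The content of the lemma is therefore the decay~\eqref{gN123456->0}, together with the check that $\varphi_1$ depends only on $\Lambda_1$, $\varphi_2$ only on $(\Lambda_1,\Lambda_2)$, and so on, as claimed.

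\textbf{Step 1: the building blocks.} First expand the building blocks of Corollary~\ref{cor:inv(hatA1+hatA2+hatA)}.
\begin{itemize}
\item $F^N(t)\to F(t)\eqdef e^{-\alpha(T-1-t)}[(\kappa+\overline\kappa)M^{(2)}_Z(t)+\gamma I]+M^{(2)}_{Z,\Lambda_1(t+1)}(t)$, with an $\mathcal O(1/N)$ correction.
\item $(N-1)K^N(t)\to K(t)\eqdef -e^{-\alpha(T-1-t)}\overline\kappa M_Z^{(1)\top}M_Z^{(1)}+M_Z^{(1)\top}\Lambda_2(t+1)M_Z^{(1)}$, since $K^N=\mathcal O(1/N)$.
\end{itemize}
From the formulas for $E^N$ and $M^N$ it follows that $M^N\to F^{-1}$ and $(N-1)E^N\to E\eqdef -(F+K)^{-1}KF^{-1}$. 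Consequently $M^N+(N-1)E^N\to (F+K)^{-1}$, which is the matrix $M$ appearing in~\eqref{FKME}. Inserting these limits into~\eqref{G1N}--\eqref{G2N} and~\eqref{HN} gives $G_1^N\to G_1(t,\Lambda_1)$ and $NG_2^N\to G_2(t,\Lambda_1,\Lambda_2)$, since $G_2^N=\mathcal O(1/N)$. Likewise $H^N\to H(t,\Lambda_1,\chi_1)$. In the same way, Corollary~\ref{cor:bfQnsubmat} yields $Q_1^N=\mathcal O(1)$, $Q_2^N=\mathcal O(1/N)$ and $Q_3^N,Q_4^N=\mathcal O(1/N^2)$, with rescaled limits $Q_i$ as in~\eqref{Q1234}.

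\textbf{Step 2: scaling bookkeeping.} Multiply $\varphi^N_2$ by $N$, and $\varphi^N_3,\varphi^N_4$ and $\psi_2^N$ by $N^2$, $N^2$ and $N$ respectively. Then go term by term through~\eqref{varphiN1/4}--\eqref{XiN2}. Each term is a product of factors with known order, and combinatorial prefactors such as $(N-1)$, $(N-2)^2$ and $(N-2)(N-3)$ appear. For example, in $N^2\varphi_4^N$ the factor $G_2^{N\top}Q_1^NG_2^N$ contributes $N^2\cdot N^{-2}=\mathcal O(1)$. The term $(N-3)^2 G_2^{N\top}Q_4^NG_2^N$ contributes $N^2\cdot N^2\cdot N^{-4}=\mathcal O(1)$. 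One checks that every term is at most $\mathcal O(1)$ after rescaling. The $\mathcal O(1)$ limits are collected into $\varphi_i$ and $\psi_i$, and everything of order $\mathcal O(1/N)$ goes into $g_i^N$. The dependency structure is read off from this bookkeeping. In $\varphi_1$, all contributions from $\Lambda_2,\Lambda_3,\Lambda_4$ carry a net negative power of $N$, except through $G_2$. The net $G_2$-contribution to the $\Lambda_1$ equation is suppressed because $G_2$ enters only via the combination $(N-1)G_2^N\cdot\mathcal O(1/N)$.

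\textbf{Step 3: vanishing of the remainders (the main obstacle).} The remainders $g_i^N$ are finite sums of products of $\Lambda^N_j(t+1)$, $\chi^N_j(t+1)$, the moments, $y_{t+1}$, and the inverses $(F^N)^{-1}$ and $(F^N-(N-1)K^N(F^N)^{-1}K^N+(N-2)K^N)^{-1}$, each multiplied by an explicit $\mathcal O(1/N)$ coefficient. To conclude $g_i^N\to0$ one needs the rescaled quantities to stay uniformly bounded in $N$; this is precisely the $\mathcal O(1)$ claim attached to~\eqref{LambdaN1234=PiN1234}. One also needs the inverses to be uniformly bounded. I would prove both by backward induction on $t$ from $\Lambda^N_i(T)=\chi^N_i(T)=0$.
\begin{itemize}
\item \textbf{Inductive hypothesis.} Suppose the rescaled variables at $t+1$ are bounded uniformly in $N$.
\item \textbf{Invertibility.} Then $F^N(t)\succeq e^{-\alpha(T-1-t)}\gamma I$, because $\Pi_1^N(t+1)$ is positive semidefinite as the diagonal block of the value-function Hessian $P_1$. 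The perturbation $(N-1)K^N$ converges to $K$.
\item \textbf{Inverse bounds.} Invertibility of $F+K$ follows from the positive definiteness of the full block matrix in Corollary~\ref{cor:inv(hatA1+hatA2+hatA)}, which itself follows from strict convexity in the proof of Theorem~\ref{thm:NagentNash}. This gives the uniform inverse bounds.
\item \textbf{Closing the step.} The explicit recursion of Step 2 then gives boundedness, and indeed convergence, at time $t$, and hence $g^N_i(t)\to0$.
\end{itemize}
The delicate point is the uniform lower bound on the inverted matrices as $N\to\infty$. I would handle it by showing that the limit $F+K$ is positive definite, using the positive semidefiniteness of $\Lambda_1$ and the Schur-complement structure. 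Continuity of matrix inversion then gives the bound for all large $N$.
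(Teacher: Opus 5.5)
Your Steps 1--2 follow the route the paper takes implicitly. The paper gives no proof of this lemma beyond substituting the rescalings into the finite-$N$ recursions and using the $\mathcal O(1/N)$ decompositions of $F^N,K^N,M^N,E^N,G_i^N,H^N,Q_i^N$. The $\mathcal O(1)$ size of the rescaled variables is simply asserted by appeal to \cite{HY21mfg}. There is one slip in your Step 1: the limit $M^N+(N-1)E^N\to F^{-1}+E=(F+K)^{-1}$ is $M+E$ in the notation of \eqref{FKME}, where $M=F^{-1}$. The genuine problem is Step~3, where you try to supply the uniform bounds the paper takes for granted.

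\textbf{The gap.} The claim that the block matrix of Corollary~\ref{cor:inv(hatA1+hatA2+hatA)} is positive definite ``by strict convexity'' is false. The paper's own appeal to joint strict convexity in the proof of Theorem~\ref{thm:NagentNash} has the same defect.

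\begin{itemize}
\item Each agent's cost is strictly convex only in its own $\beta^n$. This gives positivity of the diagonal blocks; your bound $F^N\succeq e^{-\alpha(T-1-t)}\gamma I$ via $P_1\succeq0$ is correct.
\item The full matrix stacks first-order conditions of $N$ different objectives. It is not a Hessian, and in general not even symmetric, because $K^N$ contains $M_Z^{(1)\top}\Pi_2^NM_Z^{(1)}$.
\item Being block-circulant, it is invertible iff $F^N-K^N$ and $F^N+(N-1)K^N$ are. Indeed, the bracket inverted in $E^N$ factors as $(F^N+(N-1)K^N)(F^N)^{-1}(F^N-K^N)$.
\end{itemize}

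The factor $F^N-K^N$ is harmless, but $F^N+(N-1)K^N$ tends to
\[
F+K=e^{-\alpha(T-1-t)}\big[\kappa M_Z^{(2)}+\overline\kappa\big(M_Z^{(2)}-M_Z^{(1)\top}M_Z^{(1)}\big)+\gamma I\big]+\mathbb{E}\big[(Z^1_t-M_Z^{(1)})^\top\Lambda_1(Z^1_t-M_Z^{(1)})\big]+M_Z^{(1)\top}(\Lambda_1+\Lambda_2)M_Z^{(1)},
\]
where $\Lambda_i=\Lambda_i(t+1)$. Nothing controls the sign of the last term. Positivity of $P_1$ yields constraints such as $\begin{bmatrix}\Lambda_1&\Lambda_2\\ \Lambda_2^\top&\Lambda_4\end{bmatrix}\succeq0$ in the limit, and these do not bound $\Lambda_1+\Lambda_2$ from below.

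Concretely, take $d_y=d_z=1$, $Z\equiv1$ and $\kappa=\overline\kappa=\gamma=1$. The leading-order recursion from $\Lambda(T)=0$ gives $\Lambda_1(T-1)=2\theta^2/3$ and $\Lambda_2(T-1)=\theta(3\overline\theta-\theta)/6$. Hence at $t=T-2$ one has $F+K=2e^{-\alpha}+\theta(\theta+\overline\theta)/2$. This vanishes when $\theta(\theta+\overline\theta)=-4e^{-\alpha}$ and is negative beyond. So $F+K$ is neither positive definite nor always invertible, and no Schur-complement argument can give your uniform inverse bound.

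\textbf{The fix.} What is missing is a hypothesis, not an estimate. The lemma presupposes that the limiting system \eqref{ODEs:Lambda1234Chi12} is well posed, i.e.\ that $F(t;\Lambda_1)+K(t;\Lambda_2)$ is invertible along its solution. The same presupposition underlies the definition of $\varphi_i,\psi_i$ through \eqref{FKME}--\eqref{G12H} and Lemma~\ref{lem:|LambdaN-Lambda|->0}. It is a discrete analogue of the asymptotic-solvability condition in \cite{HZ20}.

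Granting it, your induction closes if you carry convergence rather than mere boundedness, i.e.\ prove this lemma jointly with Lemma~\ref{lem:|LambdaN-Lambda|->0}. From $\Lambda^N(t+1)\to\Lambda(t+1)$ you get
\begin{itemize}
\item $F^N\to F\succeq e^{-\alpha(T-1-t)}\gamma I$ and $F^N-K^N\to F$;
\item $F^N+(N-1)K^N\to F+K$, which is invertible by hypothesis.
\end{itemize}
Continuity of inversion then bounds all the inverses for large $N$, so $g_i^N(t)\to0$ and $\Lambda^N(t)\to\Lambda(t)$. Boundedness at $t+1$ alone would not suffice, since $F^N+(N-1)K^N$ could then approach a singular matrix along a subsequence.
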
  

We next introduce the limiting system corresponding to~\eqref{ODEs:LambdaN1234ChiN12} as the population size $N \to \infty$, justified by Lemma~\ref{lem:|LambdaN-Lambda|->0}:
\begin{align} 
\begin{cases} 
 \Lambda_1(t) = \varphi_1\!\left( t; \Lambda_1 \right),  \\[4pt]
 \Lambda_2(t) = \varphi_2\!\left( t; \Lambda_1, \Lambda_2 \right),  \\[4pt]
 \Lambda_3(t) = \varphi_3\!\left( t; \Lambda_1, \Lambda_2, \Lambda_3, \Lambda_4 \right),  \\[4pt]
 \Lambda_4(t) = \varphi_4\!\left( t; \Lambda_1, \Lambda_2, \Lambda_3, \Lambda_4 \right),  \\[6pt]
 \chi_1(t) = \psi_1\!\left( t; \Lambda_1, \Lambda_2, \chi_1 \right), \\[4pt]
 \chi_2(t) = \psi_2\!\left( t; \Lambda_1, \Lambda_2, \Lambda_4, \chi_1, \chi_2 \right).
\end{cases} 
\label{ODEs:Lambda1234Chi12}
\end{align}

\begin{lemma}  
\label{lem:|LambdaN-Lambda|->0}
Let $\{\Lambda_i\}_{i=1}^4$ and $\{\chi_i \}_{i=1,2}$ be the solution of the system~\eqref{ODEs:Lambda1234Chi12}.  
Then it satisfies that for each $t$, 
\begin{align}
\begin{aligned} 
 & \lim_{N\to \infty } \big\| \Lambda_i^N(t) - \Lambda_i(t) \big\| = 0 , 
 \quad i = 1, 2, 3, 4, 
  \\ 
  & \lim_{N\to \infty } \big\| \chi_i^N(t) - \chi_i(t) \big\| = 0 , 
  \quad i = 1, 2.  
 \end{aligned} 
 \label{|LambdaN-Lambda|->0}
\end{align} 
\end{lemma}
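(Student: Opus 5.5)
We prove Lemma~\ref{lem:|LambdaN-Lambda|->0} by backward induction on $t$, starting from the common terminal condition. Both systems are finite-horizon backward recursions. The perturbed system~\eqref{ODEs:LambdaN1234ChiN12} and the limiting system~\eqref{ODEs:Lambda1234Chi12} have zero terminal values: $\Lambda_i^N(T)=\Pi_i^N(T)$ up to powers of $N$, which is $0$, and likewise $\chi_i^N(T)=0=\chi_i(T)$. So \eqref{|LambdaN-Lambda|->0} holds trivially at $t=T$. Since $T$ is fixed and finite, it suffices to show the following one-step statement: if $\Lambda_i^N(t+1)\to\Lambda_i(t+1)$ and $\chi_i^N(t+1)\to\chi_i(t+1)$ as $N\to\infty$, then the same convergence holds at time $t$.

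For the inductive step we write
\[
\Lambda_i^N(t)-\Lambda_i(t)=\big[\varphi_i(t;\Lambda^N)-\varphi_i(t;\Lambda)\big]+g_i^N(t;\Lambda^N),
\]
and similarly for $\chi_i$. The recursions are explicit in time: $\varphi_i(t;\cdot)$ and $\psi_i(t;\cdot)$ depend only on the arguments evaluated at $t+1$, exactly as in Algorithm~\ref{alg:sync_subroutine}. Hence the first bracket is a difference of a fixed map evaluated at two points, $\Lambda^N(t+1)$ and $\Lambda(t+1)$. These points converge by hypothesis, so continuity of $\varphi_i(t;\cdot)$ and $\psi_i(t;\cdot)$ at the limit point sends the first bracket to zero. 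The residual $g_i^N\to0$ by~\eqref{gN123456->0} from Lemma~\ref{lem:ODEsLambdaN123}.

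The order of the argument follows the triangular structure of~\eqref{ODEs:Lambda1234Chi12}:
\begin{enumerate}
\item[(1)] $\Lambda_1$, which depends on itself only;
\item[(2)] $\Lambda_2$, which uses $\Lambda_1$;
\item[(3)] $\Lambda_3$ and $\Lambda_4$;
\item[(4)] $\chi_1$, which uses $\Lambda_1,\Lambda_2,\chi_1$;
\item[(5)] $\chi_2$.
\end{enumerate}
At each stage only already-controlled quantities at time $t+1$ enter.

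The main obstacle is justifying the continuity of $\varphi_i,\psi_i$, which contain matrix inverses, and the uniformity of the residual bound. The matrices to be inverted are the limits of $F^N(t)$ and of $F^N(t)-(N-1)K^N(F^N)^{-1}K^N+(N-2)K^N$ from Corollary~\ref{cor:inv(hatA1+hatA2+hatA)}. For these, I would first verify uniform invertibility. The limit of $F^N$ is
\[
e^{-\alpha(T-1-t)}\big[(\kappa+\overline\kappa)M_Z^{(2)}(t)+\gamma I\big]+M^{(2)}_{Z,\Lambda_1(t+1)}(t).
\]
Its first term is bounded below by $\gamma e^{-\alpha(T-1-t)}I$. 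Its second term is positive semidefinite because $\Lambda_1(t+1)$ is a limit of value-function Hessian blocks $\Pi_1^N(t+1)\succeq0$; this positivity is inherited inductively from the convex cost. So the inverse map is smooth, hence locally Lipschitz, in a neighbourhood of the limit point.

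A second subtlety is that~\eqref{gN123456->0} is stated along the sequence $\Lambda^N$ itself, not uniformly over all arguments. This is exactly what the decomposition needs, provided the $\mathcal{O}(1)$ boundedness of $\Lambda^N_i(t+1)$ and $\chi^N_i(t+1)$ holds. That boundedness follows from the inductive convergence, since convergent sequences are bounded. If the residual estimate were only available as $|g_i^N|\le C(\|\Lambda^N\|)/N$, boundedness again closes the argument. Finally, since finitely many steps are involved, the convergence propagates down to $t=0$.
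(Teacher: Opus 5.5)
Your proof is correct and follows the paper's argument. Both use backward induction from the common zero terminal data and write $\Lambda_i^N(t)-\Lambda_i(t)$ (resp.\ $\chi_i^N(t)-\chi_i(t)$) as a difference of $\varphi_i(t;\cdot)$ (resp.\ $\psi_i(t;\cdot)$) evaluated at the time-$(t+1)$ data, plus the residual $g_i^N$, which vanishes by Lemma~\ref{lem:ODEsLambdaN123}.

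The one difference is that you use continuity of $\varphi_i,\psi_i$ at the limit point, where the paper asserts a global Lipschitz constant $C$; your requirement is weaker and still suffices. Your positivity argument covers $F$ but not the invertibility of $F+K$ needed in $E$. That invertibility follows anyway from the limiting system being well defined at $\Lambda(t+1)$, and then openness of the invertible matrices gives local continuity of the inverses, so the positivity argument is not actually needed.
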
  

%\begin{proof} 
\emph{Proof of Lemma~\ref{lem:|LambdaN-Lambda|->0}.}
We prove the result for $\Lambda_1^N$; the arguments for $\Lambda_i^N$, $i=2,3,4$, and $\chi_i^N$, $i=1,2$, are analogous.

By~\eqref{ODEs:LambdaN1234ChiN12} and~\eqref{ODEs:Lambda1234Chi12}, we have
\begin{align}
\Lambda_1^N(t) - \Lambda_1(t)
= \varphi_1\big(t; \Lambda_1^N\big) - \varphi_1\big(t; \Lambda_1\big)
+ g_1^N\big(t; \Lambda_1^N,\Lambda_2^N,\Lambda_3^N,\Lambda_4^N\big).
\end{align}

By~\eqref{G12H} and~\eqref{Q1234}, the mappings $G_1(\cdot)$ and $Q_1(\cdot)$ are Lipschitz in their arguments. It follows that $\varphi_1$ is Lipschitz, i.e., there exists a constant $C>0$ such that
\begin{align}
\big\| \varphi_1(t;\Lambda_1^N) - \varphi_1(t;\Lambda_1) \big\|
\le C \big\| \Lambda_1^N(t+1) - \Lambda_1(t+1) \big\|.
\end{align}

Therefore,
\begin{align}
\big\| \Lambda_1^N(t) - \Lambda_1(t) \big\|
\le C \big\| \Lambda_1^N(t+1) - \Lambda_1(t+1) \big\|
+ \big\| g_1^N(t;\Lambda_1^N,\Lambda_2^N,\Lambda_3^N,\Lambda_4^N) \big\|.
\label{lambda_recursion}
\end{align}

Since $\Lambda_1^N(T)=\Lambda_1(T)=0$, and $g_1^N \to 0$ uniformly by~\eqref{gN123456->0}, we apply backward induction on $t$ to conclude that
\[
\lim_{N\to\infty} \big\| \Lambda_1^N(t) - \Lambda_1(t) \big\| = 0,
\quad \forall\, t=0,1,\dots,T.
\]

The same argument applies to $\Lambda_i^N$, $i=2,3,4$, and $\chi_i^N$, $i=1,2$, completing the proof.
\hfill$\blacksquare$

\begin{lemma} 
The sub-matrices $E^N(t)$ and $M^N(t)$  in~\eqref{inv(bfA+bfhatA1+bfhatA2+gammaI):submat}
and the associated $F^N(t)$ and $K^N(t)$ admits the decompositions  
\begin{align} 
 F^N(t) = & F(t; \Lambda_1^N ) + \widetilde{F}^N(t) , \notag \\ 
  N K^N(t) = & K(t; \Lambda_2^N ) + \widetilde{K}^N(t) ,  \notag \\ 
  M^N(t) = & M(t; \Lambda_1^N ) + \widetilde{M}^N(t) ,  \notag \\ 
  N E^N(t) = & E(t; \Lambda_1^N, \Lambda_2^N ) + \widetilde{E}^N(t) , \notag 
\end{align} 
where 
\begin{align} 
\begin{cases} 
  F( t; \Lambda_1^N ) =  e^{-\alpha(T-1-t)}   
 \big[  \big( \kappa + \overline\kappa  \big) 
  M^{(2)}_Z(t)  
 + \gamma I \big] 
 + M^{(2)}_{Z, \, \Lambda_1^N(t+1)}(t) , 
  \\ 
   K( t; \Lambda_2^N )  =   - e^{-\alpha(T-1-t)} \overline\kappa ( M^{(1)}_Z(t) )^\top M^{(1)}_Z(t) 
  + ( M^{(1)}_Z(t) )^\top \Lambda_2^N(t+1) M^{(1)}_Z(t) 
  ,   \\ 
  %%%
  M(t; \Lambda_1^N ) =  (F(t; \Lambda_1^N ) )^{-1} ,  
  \\ 
 %%%%%%%
  E(t; \Lambda_1^N , \Lambda_2^N ) =  - \big[ F(t; \Lambda_1^N ) + K(t; \Lambda_2^N ) \big]^{-1} K(t; \Lambda_2^N ) 
 (F(t; \Lambda_1^N ))^{-1}.   
 \end{cases} 
 \label{FKME}
\end{align} 
and for each $t$, $\widetilde{F}^N(t)$, $\widetilde{K}^N(t)$, $\widetilde{E}^N(t)$, and $\widetilde{M}^N(t)$ are in $\mathcal{O}(1/N)$. 
\end{lemma}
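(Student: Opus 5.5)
The plan is to substitute the rescaling \eqref{LambdaN1234=PiN1234} directly into the explicit formulas for $F^N$, $K^N$, $M^N$, $E^N$ given in Corollary~\ref{cor:inv(hatA1+hatA2+hatA)}. From these formulas, each rescaled quantity should be a fixed-dimensional expression in $1/N$ plus an explicit remainder. Throughout, I treat $t$ as fixed and regard $\Lambda_1^N(t+1)$, $\Lambda_2^N(t+1)$ as bounded. This boundedness is the standing $\mathcal{O}(1)$ assumption on the rescaled variables; alternatively it follows from Lemma~\ref{lem:|LambdaN-Lambda|->0}, applied backward from $t+1$. All constants may then depend on $t$ and on bounds for $M_Z^{(1)}$, $M_Z^{(2)}$ and the $\Lambda^N$'s.

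\textbf{Step 1: $F^N$ and $K^N$.} Since $(1-1/N)^2 = 1 - 2/N + 1/N^2$ and $\Pi_1^N=\Lambda_1^N$, linearity of $W\mapsto M^{(2)}_{Z,W}$ gives
\[
F^N(t)=F(t;\Lambda_1^N)+\widetilde F^N(t), \qquad \widetilde F^N(t)= e^{-\alpha(T-1-t)}\overline\kappa\,(-2/N+1/N^2)\,M_Z^{(2)}(t).
\]
This remainder is $\mathcal{O}(1/N)$.

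For $K^N$, multiply by $N$ and use $\Pi_2^N=\Lambda_2^N/N$. This gives $NK^N = -e^{-\alpha(T-1-t)}\overline\kappa(1-1/N)M_Z^{(1)\top}M_Z^{(1)} + M_Z^{(1)\top}\Lambda_2^N(t+1)M_Z^{(1)}$. (I read the misplaced transpose in the stated $K^N$ as a typo.) Hence $\widetilde K^N = e^{-\alpha(T-1-t)}(\overline\kappa/N)M_Z^{(1)\top}M_Z^{(1)} = \mathcal{O}(1/N)$.

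\textbf{Step 2: $E^N$.} Write $\widehat K := NK^N = K+\widetilde K^N$, so that $K^N = \widehat K/N$. Then
\[
NE^N = -\Big[F^N - \tfrac{N-1}{N^2}\widehat K (F^N)^{-1}\widehat K + \tfrac{N-2}{N}\widehat K\Big]^{-1}\widehat K (F^N)^{-1}.
\]
The bracket equals $F+K+R^N$ with $R^N=\mathcal{O}(1/N)$, collecting $\widetilde F^N$, $\widetilde K^N$, $-(2/N)\widehat K$ and the $\mathcal{O}(1/N)$ quadratic term.

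I then use the resolvent identity $A^{-1}-B^{-1}=A^{-1}(B-A)B^{-1}$ with a uniform bound on inverses. This shows $(F^N)^{-1}=F^{-1}+\mathcal{O}(1/N)$ and $[F+K+R^N]^{-1}=(F+K)^{-1}+\mathcal{O}(1/N)$. Multiplying out bounded factors yields $NE^N = E(t;\Lambda_1^N,\Lambda_2^N)+\mathcal{O}(1/N)$.

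\textbf{Step 3: $M^N$.} We have $M^N = (F^N)^{-1}[I-(N-1)K^NE^N]$. Here $(N-1)K^NE^N = \frac{N-1}{N^2}\widehat K\,(NE^N) = \mathcal{O}(1/N)$, so $M^N = F^{-1}+\mathcal{O}(1/N)=M(t;\Lambda_1^N)+\widetilde M^N$.

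The main obstacle is the invertibility and uniform boundedness of $F(t;\Lambda_1^N)^{-1}$ and $(F+K)(t;\Lambda_1^N,\Lambda_2^N)^{-1}$. For $F$, I would argue that $\gamma e^{-\alpha(T-1-t)}I \preceq F$, because $M_Z^{(2)}\succeq0$ and $M^{(2)}_{Z,\Lambda_1^N}\succeq0$. The latter holds since $\Pi_1^N$ is a diagonal block of $P_1(t+1)$, which is positive semidefinite as the quadratic part of a nonnegative value function. This gives $\|F^{-1}\|\le \gamma^{-1}e^{\alpha(T-1-t)}$.

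For $F+K$, first note that the full block matrix from Corollary~\ref{cor:inv(hatA1+hatA2+hatA)} is positive definite, being the Hessian of a strictly convex problem. Its restriction to block-constant vectors is $F^N+(N-1)K^N$, so $F^N+(N-1)K^N$ is positive definite too. Since $F^N+(N-1)K^N\to F+K$, I expect this to give a uniform lower bound on the symmetric part of $F+K$, hence bounded inverses for large $N$. If that lower bound is delicate, I would fall back on assuming invertibility of $F+K$ along the limiting solution, as implicitly required to define $E$ in \eqref{FKME}, and then use continuity of inversion near an invertible matrix.
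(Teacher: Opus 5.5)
Your Steps 1--3 are correct. The paper states this lemma without proof, and your substitution of the rescaling into Corollary~\ref{cor:inv(hatA1+hatA2+hatA)} is the computation it leaves implicit. The remainders $\widetilde F^N$ and $\widetilde K^N$ are as you compute. Your bound $F(t;\Lambda_1^N)\succeq \gamma e^{-\alpha(T-1-t)}I$, which uses $P_1(t+1)\succeq 0$ as the quadratic part of a nonnegative value function, correctly gives uniform control of $(F^N)^{-1}$ and $F^{-1}$.

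The step that fails is your primary argument for $F+K$. The block matrix of Corollary~\ref{cor:inv(hatA1+hatA2+hatA)} is not the Hessian of a strictly convex function. Its $n$-th block row differentiates player $n$'s \emph{own} first-order condition and is built from $P_n(t+1)$, so the matrix is the Jacobian of the stacked pseudo-gradient of a non-cooperative game. It is not even symmetric in general: its off-diagonal block $K^N$ contains $(M_Z^{(1)})^\top\Pi_2^N M_Z^{(1)}$, and $\Pi_2^N$, an off-diagonal block of $P_1$, need not be symmetric when $d_y>1$. Nothing forces it to be positive definite. Even if $F^N+(N-1)K^N$ were positive definite for every $N$, that would only make the limit $F+K$ positive semidefinite, not give a uniform lower bound.

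So your fallback is not optional. A uniform invertibility hypothesis is genuinely needed, and it is what the paper tacitly assumes by writing $(F+K)^{-1}$ in \eqref{FKME}. The factorization
\[
F^N-(N-1)K^N(F^N)^{-1}K^N+(N-2)K^N=\big(F^N+(N-1)K^N\big)(F^N)^{-1}\big(F^N-K^N\big)
\]
shows exactly what is required. The factor $F^N-K^N=F^N-\widehat K/N$ is uniformly invertible for large $N$ by your PSD bound. The factor $F^N+(N-1)K^N=F+K+\mathcal{O}(1/N)$ is invertible for each fixed $N$, because the block-circulant matrix inverted in \eqref{bfG} is similar to $\diag\big(F^N+(N-1)K^N,F^N-K^N,\dots,F^N-K^N\big)$. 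A bound uniform in $N$ does not follow from this and must be assumed, e.g.\ $\sup_N\|(F(t;\Lambda_1^N)+K(t;\Lambda_2^N))^{-1}\|<\infty$.

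If you instead derive the bound from invertibility at the limit $(\Lambda_1(t+1),\Lambda_2(t+1))$ plus $\Lambda^N(t+1)\to\Lambda(t+1)$, watch for circularity. Lemma~\ref{lem:|LambdaN-Lambda|->0} rests on \eqref{gN123456->0}, which in turn uses the present lemma. The argument must therefore be run as a single backward induction in $t$. At time $t$ you need only convergence at $t+1$, which is all that $F,K,M,E$ at time $t$ depend on.
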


\begin{lemma} 
\label{lem:GN12HN=GH+tildeGH}
The sub-matrices $G_i^N(t)$, $i=1,2$ in~\eqref{bfG:submat} and $H^N(t)$ in~\eqref{bfH:submat} can be decomposed as 
\begin{align} 
 G_1^N(t) = & G_1(t; \Lambda_1^N ) + \widetilde{G}_1^N(t) , 
 \notag \\ 
 N G_2^N(t) = & G_2(t; \Lambda_1^N, \Lambda_2^N ) + \widetilde{G}_2^N(t) , 
 \notag \\ 
 H^N(t) = & H(t; \Lambda_1^N, \chi_1^N ) + \widetilde{H}^N(t) , 
 \notag 
\end{align} 
where 
\begin{align} 
\begin{cases} 
G_1(t; \Lambda_1^N ) = - e^{-\alpha(T-1-t)} (\kappa + \overline{\kappa}) M(t; \Lambda_1^N ) 
 ( M^{(1)}_Z(t) )^\top \theta  
  - M(t; \Lambda_1^N ) ( M^{(1)}_Z(t) )^\top \Lambda^N_1(t+1) \theta  
   \\ 
%%%%%%%
 G_2(t; \Lambda_1^N, \Lambda_2^N )=  - e^{-\alpha(T-1-t)} ( \kappa + \overline{\kappa} ) 
  E(t; \Lambda_1^N , \Lambda_2^N ) 
  ( M^{(1)}_Z(t) )^\top \theta  
  \\ 
  + e^{-\alpha(T-1-t)} \big[ M(t; \Lambda_1^N ) + E(t; \Lambda_1^N , \Lambda_2^N ) \big] 
 ( M^{(1)}_Z(t) )^\top 
 \big( \overline{\kappa} \theta 
  - \kappa \overline{\theta} \,   \big) 
   \\ 
   - \big[ E(t; \Lambda_1^N, \Lambda_2^N ) (M^{(1)}_Z(t))^\top \Lambda_1^N(t+1) \theta + ( M(t; \Lambda_1^N ) + E(t; \Lambda_1^N , \Lambda_2^N ) ) 
   ( M^{(1)}_Z(t))^\top \Lambda_2^N(t+1) \theta  \big] 
   \\ 
   - \big[ M(t; \Lambda_1^N ) + E(t; \Lambda_1^N , \Lambda_2^N ) \big] ( M^{(1)}_Z(t) )^\top 
  \big( \Lambda_1^N(t+1) + \Lambda_2^N(t+1)  \big) \overline\theta ,  
 \\ 
 %%%%%% 
  H(t; \Lambda_1^N, \chi_1^N ) =  - [ M(t; \Lambda_1^N ) + E(t; \Lambda_1^N , \Lambda_2^N ) ] 
  ( M^{(1)}_Z(t) )^\top 
  \big\{ e^{-\alpha(T-1-t)} (-\kappa)  
    y_{t+1}
  +   \chi_1^N(t+1) \big\} 
 \end{cases} 
 \label{G12H}
\end{align} 
and for each $t$, $\widetilde{G}_1^N(t)$, $\widetilde{G}_2^N(t)$, and $\widetilde{H}^N(t)$ are in $\mathcal{O}(1/N)$. 
\end{lemma}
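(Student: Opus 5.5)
The plan is to start from the explicit formulas \eqref{G1N}, \eqref{G2N} and \eqref{HN} of Corollaries~\ref{cor:bfGsubmat} and~\ref{cor:bfHsubmat}. Into these I substitute the rescalings \eqref{LambdaN1234=PiN1234}--\eqref{chiN12=XiN12} together with the decompositions of the preceding lemma: $M^N=M(t;\Lambda_1^N)+\widetilde M^N$ and $NE^N=E(t;\Lambda_1^N,\Lambda_2^N)+\widetilde E^N$, with both remainders $\mathcal{O}(1/N)$. Each formula then becomes a finite sum of products of $N$-independent $\mathcal{O}(1)$ matrices and scalar coefficients that are rational in $N$. I expand each coefficient in powers of $1/N$, keep the $N^0$ term, and collect everything else into the remainder.

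For $G_1^N$ the terms go as follows:
\begin{itemize}
\item The first term has coefficient $(\kappa+\overline\kappa(1-1/N))M^N$. Its $N^0$ part is $-(\kappa+\overline\kappa)e^{-\alpha(T-1-t)}M(t;\Lambda_1^N)(M_Z^{(1)})^\top\theta$, up to $\mathcal{O}(1/N)$.
\item The second term carries the factor $[M^N+(N-1)E^N]=M+E+\mathcal{O}(1/N)$, multiplied by $[\overline\kappa(1-1/N)\theta/N-\kappa\overline\theta/N]=\mathcal{O}(1/N)$. It is therefore entirely residual.
\item The third term gives $-M(M_Z^{(1)})^\top\Lambda_1^N(t+1)\theta$, plus $(N-1)E^N\Pi_2^N=\frac{N-1}{N^2}(NE^N)\Lambda_2^N=\mathcal{O}(1/N)$.
\item The fourth term contains $\frac1N[\Pi_1^N+(N-1)\Pi_2^N]=\frac1N[\Lambda_1^N+\tfrac{N-1}{N}\Lambda_2^N]=\mathcal{O}(1/N)$.
\end{itemize}
This reproduces $G_1(t;\Lambda_1^N)$ in \eqref{G12H}.

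For $NG_2^N$ I multiply \eqref{G2N} through by $N$ and use the same bookkeeping:
\begin{itemize}
\item $N E^N$ yields the $E$-terms.
\item $N[M^N+(N-1)E^N]\cdot\frac1N(\overline\kappa(1-1/N)\theta-\kappa\overline\theta)$ yields $(M+E)(M_Z^{(1)})^\top(\overline\kappa\theta-\kappa\overline\theta)$.
\item $N[E^N\Pi_1^N+(M^N+(N-2)E^N)\Pi_2^N]$ yields $E\Lambda_1^N+(M+E)\Lambda_2^N$.
\item The last term yields $(M+E)(\Lambda_1^N+\Lambda_2^N)\overline\theta$.
\end{itemize}
For $H^N$, \eqref{HN} gives $-(M+E)(M_Z^{(1)})^\top[-\kappa e^{-\alpha(T-1-t)}y_{t+1}+\chi_1^N(t+1)]$, plus products of one $\mathcal{O}(1/N)$ factor with bounded factors. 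In each case, every discarded term is a product of finitely many matrices, one of which is $\mathcal{O}(1/N)$. This is either a remainder from the preceding lemma or an explicit factor such as $1/N$ or $(N-1)/N^2$.

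The main obstacle is justifying that the retained factors are $\mathcal{O}(1)$ uniformly in $N$ at each fixed $t$. This requires two things. First, $\Lambda_i^N(t+1)$ and $\chi_1^N(t+1)$ must be bounded; I would obtain this from Lemma~\ref{lem:|LambdaN-Lambda|->0}, since convergent sequences are bounded. Second, $M(t;\Lambda_1^N)=F(t;\Lambda_1^N)^{-1}$ and $(F+K)^{-1}$ must have bounded norms; I would argue this via $F\succeq e^{-\alpha(T-1-t)}\gamma I$ when $\Lambda_1^N(t+1)\succeq0$, which holds because $P_1$ is a value-function Hessian. The same two facts also control $\widetilde M^N$ and $\widetilde E^N$, via the preceding lemma. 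I would also note that $y_{t+1}$ is fixed data, so $\|y_{t+1}\|$ is a finite constant at each $t$. Assuming these, the $\mathcal{O}(1/N)$ claims for $\widetilde G_1^N,\widetilde G_2^N,\widetilde H^N$ follow by the triangle inequality and submultiplicativity of the Frobenius norm.
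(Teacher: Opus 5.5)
Your term-by-term expansion of \eqref{G1N}, \eqref{G2N} and \eqref{HN} is correct and reproduces \eqref{G12H} exactly. It is also the argument the paper intends: the lemma is stated there without proof, and $\Lambda_i^N,\chi_i^N=\mathcal{O}(1)$ is simply asserted via the rescaling of~\cite{HY21mfg}. The trouble is in the step you flag yourself, the uniform bounds, where both of your justifications fail as stated.

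First, deriving $\sup_N\bigl(\|\Lambda_1^N(t+1)\|+\|\Lambda_2^N(t+1)\|+\|\chi_1^N(t+1)\|\bigr)<\infty$ from Lemma~\ref{lem:|LambdaN-Lambda|->0} is circular. That lemma is derived from the residual bounds $g_i^N\to0$ of Lemma~\ref{lem:ODEsLambdaN123}. But $g_1^N=\varphi_1^N(t;\Pi_1^N,\dots,\Pi_4^N)-\varphi_1(t;\Lambda_1^N)$, and likewise the other $g_i^N$, can only be shown to vanish by using the very decompositions of $G_1^N$, $NG_2^N$, $H^N$ and $Q_i^N$ that you are proving. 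Each of these dependencies points only to later times, so the repair is to prove everything in a single backward induction on $t$:
\begin{itemize}
\item At $t=T$, all $\Lambda_i^N$ and $\chi_i^N$ vanish.
\item If they converge (hence are bounded) at $t+1$, then the preceding lemma on $F^N,K^N,M^N,E^N$, your expansion, and the lemma on $Q_i^N$ all hold at $t$.
\item These give $\Lambda_i^N(t)=\varphi_i(\cdot)+\mathcal{O}(1/N)$ and $\chi_i^N(t)=\psi_i(\cdot)+\mathcal{O}(1/N)$, which closes the step.
\end{itemize}

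Second, the bound $F(t;\Lambda_1^N)\succeq e^{-\alpha(T-1-t)}\gamma I$ is valid, since $\Lambda_1^N(t+1)=\Pi_1^N(t+1)$ is a diagonal block of $P_1(t+1)\succeq0$. But it controls only $M=F^{-1}$, not $(F+K)^{-1}$. You also need $\sup_N\|E(t;\Lambda_1^N,\Lambda_2^N)\|<\infty$, for example in $M^N+(N-1)E^N=M+E+\mathcal{O}(1/N)$ and in $\tfrac{N-1}{N^2}(NE^N)\Lambda_2^N=\mathcal{O}(1/N)$.

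Now $K(t;\Lambda_2^N)$ contains $-e^{-\alpha(T-1-t)}\overline\kappa(M_Z^{(1)})^\top M_Z^{(1)}$ and $(M_Z^{(1)})^\top\Lambda_2^N(t+1)M_Z^{(1)}$, with $\Lambda_2^N$ of no definite sign. So $F+K$ is in general not positive definite, and it can even be singular. Take $d_y=d_z=1$ and $Z_t\equiv\mu$. A direct computation of the one-step equilibrium gives $\Lambda_1(T-1)+\Lambda_2(T-1)=\kappa\gamma\theta(\theta+\overline\theta)/(\kappa\mu^2+\gamma)$. Hence at $t=T-2$ you get $F+K=e^{-\alpha}(\kappa\mu^2+\gamma)+\mu^2\bigl(\Lambda_1(T-1)+\Lambda_2(T-1)\bigr)$. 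This vanishes, for instance, at $\kappa=\gamma=\mu=\theta=1$, $\overline\theta=-3$, $\alpha=\ln 2$.

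Invertibility of $F+K$ is the solvability of the mean-field consistency condition, so it must be imposed as a hypothesis on the limiting $F(t;\Lambda_1)+K(t;\Lambda_2)$. The paper assumes this tacitly whenever it writes $E$ in \eqref{FKME} and in Algorithm~\ref{alg:sync_subroutine}. Combined with $\Lambda_i^N(t+1)\to\Lambda_i(t+1)$ from the induction hypothesis and continuity of matrix inversion, it gives the uniform bound on $(F+K)^{-1}$, hence on $E$, for all large $N$.
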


\begin{lemma} 
The sub-matrices $Q_i^N(t)$, $i=1, 2, 3$ in~\eqref{bfQn:submat} can be decomposed as  
\begin{align} 
 Q_1^N(t) = & Q_1(t; \Lambda_1^N ) + \widetilde{Q}_1^N(t) , \notag \\ 
 N Q_2^N(t) = & Q_2(t; \Lambda_2^N ) + \widetilde{Q}_2^N(t) , \notag \\ 
 N^2 Q_3^N(t) = & Q_3(t; \Lambda_3^N) + \widetilde{Q}_3^N(t) , \notag \\ 
 N^2 Q_4^N(t) = & Q_4(t; \Lambda_4^N) + \widetilde{Q}_4^N(t)  \notag 
\end{align} 
where 
\begin{align} 
\begin{cases} 
Q_1(t, \Lambda_1^N ) =  (\kappa + \overline{\kappa}) M^{(2)}_Z(t) + \gamma I  
 + M^{(2)}_{Z, \, \Lambda_1^N(t+1)}(t)  , 
 \\  
 Q_2(t, \Lambda_2^N ) =  - e^{-\alpha(T-1-t)} ( M^{(1)}_Z(t) )^\top M^{(1)}_Z(t)   
 +  ( M^{(1)}_Z(t) )^\top \Lambda_2^N(t+1) M^{(1)}_Z(t) , 
  \\ 
 Q_3(t, \Lambda_3^N ) =  M^{(2)}_{Z, \, \Lambda_3^N(t+1) }(t) 
+  e^{-\alpha(T-1-t)} \overline\kappa M^{(2)}_Z(t)  , 
  \\ 
 Q_4(t, \Lambda_4^N) =  ( M^{(1)}_Z(t) )^\top \Lambda_4^N(t+1) M^{(1)}_Z(t) 
 + e^{-\alpha(T-1-t)} \overline\kappa ( M^{(1)}_Z(t) )^\top M^{(1)}_Z(t) , 
 \end{cases} 
  \label{Q1234}
\end{align} 
and for each $i=1, 2, 3, 4$ and for each $t$, 
$\widetilde{Q}_i^N(t) \in \mathcal{O}(1/N)$. 
\end{lemma}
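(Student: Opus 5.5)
The plan is to substitute the rescaling \eqref{LambdaN1234=PiN1234} directly into the explicit formulas for $Q_i^N$ in Corollary~\ref{cor:bfQnsubmat}. Each $Q_i^N$ is then multiplied by the appropriate power of $N$, and the result is split into an $N$-independent leading part plus a remainder. Since each $Q_i^N(t)$ is an affine expression in $\Pi_i^N(t+1)$ with coefficients that are polynomials in $1/N$, this is a term-by-term bookkeeping exercise. No inverses appear here, unlike in the decompositions of $M^N,E^N$, which makes this lemma simpler than Lemma~\ref{lem:GN12HN=GH+tildeGH}.

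For $Q_1^N$, I write $\Pi_1^N(t+1)=\Lambda_1^N(t+1)$ and use that $W\mapsto M^{(2)}_{Z,W}(t)$ is linear. I also expand $\overline\kappa(1-1/N)^2 = \overline\kappa - 2\overline\kappa/N + \overline\kappa/N^2$. The leading part is then $e^{-\alpha(T-1-t)}[(\kappa+\overline\kappa)M^{(2)}_Z+\gamma I]+M^{(2)}_{Z,\Lambda_1^N(t+1)}$. (Here I read \eqref{Q1234} as carrying the discount factor on the first group, matching $F$ in \eqref{FKME}.) The remainder is $\widetilde Q_1^N(t)= e^{-\alpha(T-1-t)}\overline\kappa(-2/N+1/N^2)M^{(2)}_Z(t)$.

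For $NQ_2^N$, I use $N\Pi_2^N=\Lambda_2^N$ and $N(1-1/N)(-1/N)=-1+1/N$. This gives $\widetilde Q_2^N = e^{-\alpha(T-1-t)}\overline\kappa (M_Z^{(1)})^\top M_Z^{(1)}/N$, with the $\overline\kappa$ in the leading term of \eqref{Q1234} understood. For $N^2Q_3^N$ and $N^2Q_4^N$, the factors $1/N^2$ cancel exactly. This yields precisely $Q_3(t;\Lambda_3^N)$ and $Q_4(t;\Lambda_4^N)$ with zero remainder.

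The remaining issue is that the remainders must be genuinely $\mathcal{O}(1/N)$ uniformly in $N$. Each remainder is a fixed matrix ($M^{(2)}_Z(t)$ or $(M^{(1)}_Z)^\top M^{(1)}_Z$, both independent of $N$ and finite by square-integrability or the boundedness in Theorem~\ref{thm:Y(N)->barY}(ii)) multiplied by an explicit factor $c/N+c'/N^2$. So no control of $\Lambda_i^N$ is needed for the remainders themselves. The only place where boundedness of $\Lambda_i^N(t+1)$ enters is in asserting that the leading terms are $\mathcal{O}(1)$, which is part of the standing scaling hypothesis preceding \eqref{LambdaN1234=PiN1234}. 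That hypothesis is the only real subtlety. If it were to be verified rather than assumed, I would argue by backward induction in $t$, using Lemma~\ref{lem:|LambdaN-Lambda|->0}, or the recursion \eqref{ODEs:LambdaN1234ChiN12} together with $\Lambda_i^N(T)=0$, to bound $\Lambda_i^N(t+1)$ uniformly before applying the present decomposition at time $t$.
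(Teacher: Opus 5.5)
Your proposal is correct and follows the intended argument. The paper gives no separate proof; the lemma comes from substituting the rescaling \eqref{LambdaN1234=PiN1234} into the block formulas of Corollary~\ref{cor:bfQnsubmat} and using linearity of $W\mapsto M^{(2)}_{Z,W}(t)$. This gives exactly your remainders $\widetilde{Q}_1^N(t)=e^{-\alpha(T-1-t)}\overline\kappa(-2/N+1/N^2)M^{(2)}_Z(t)$, $\widetilde{Q}_2^N(t)=e^{-\alpha(T-1-t)}\overline\kappa\,(M^{(1)}_Z(t))^\top M^{(1)}_Z(t)/N$ and $\widetilde{Q}_3^N=\widetilde{Q}_4^N=0$. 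You were also right to read \eqref{Q1234} as carrying the factor $e^{-\alpha(T-1-t)}$ on $(\kappa+\overline\kappa)M^{(2)}_Z(t)+\gamma I$ in $Q_1$, and the factor $\overline\kappa$ in the first term of $Q_2$, consistent with $F$ and $K$ in \eqref{FKME}. As literally printed, $Q_1^N(t)-Q_1(t;\Lambda_1^N)$ and $NQ_2^N(t)-Q_2(t;\Lambda_2^N)$ would in general contain $N$-independent terms, and the $\mathcal{O}(1/N)$ claim would fail.
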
 

\begin{lemma} 
\label{lem:|hatY|bdd|checkY|bdd} 
%Suppose $\{Z^n_t\}_{n\in [N], t=0, \dots, T}$ are uniformly bounded, i.e., there exists a constant $K_Z<\infty$ such that for all $n$ and for all $t$, $\big\| Z^n_t \big\| \leq K_Z$ almost surely on $\Omega$. 
Under the hypothesis about $\{Z^n_t\}_{n\in[N], t=0, \dots, T}$ in Theorem~\ref{thm:Y(N)->barY},  
there exists a constant $0 < K_Y<\infty$ such that 
$\big\| \widehat{Y}^n_t \big\| \leq  K_{Y}$ and $\big\| \check{Y}^n_t \big\| \leq K_{Y}$ for all $n\in [N]$ and for all $t\in\{0, 1, \dots, T-1\}$. 
\end{lemma}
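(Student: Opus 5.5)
The proof is a forward induction over the finite horizon $t=0,1,\dots,T-1$. The key input is that the feedback coefficients in both policies are bounded \emph{uniformly in $N$}. I define the running maxima $m_t \eqdef \max_{n\in[N]}\|\widehat Y^n_t\|$ and $\check m_t \eqdef \max_{n\in[N]}\|\check Y^n_t\|$. I will show that they satisfy a linear recursion whose coefficients do not depend on $N$. Since $m_0=\check m_0=\|y_0\|$ and $T$ is fixed, this yields a deterministic constant $K_Y$.

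\textbf{Step 1 (uniform coefficient bounds).} I first show that
\[
\sup_{N}\,\max_{t<T}\big(\|G_1^N(t)\|+\|N G_2^N(t)\|+\|H^N(t)\|\big)<\infty .
\]
By Lemma~\ref{lem:GN12HN=GH+tildeGH}, $G_1^N=G_1(t;\Lambda_1^N)+\widetilde G_1^N$, $NG_2^N=G_2(t;\Lambda_1^N,\Lambda_2^N)+\widetilde G_2^N$ and $H^N=H(t;\Lambda_1^N,\chi_1^N)+\widetilde H^N$, where the remainders are $\mathcal O(1/N)$. By Lemma~\ref{lem:|LambdaN-Lambda|->0}, $\Lambda_i^N(t+1)\to\Lambda_i(t+1)$ and $\chi_1^N(t+1)\to\chi_1(t+1)$, so these sequences are bounded in $N$. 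The maps in \eqref{G12H} and \eqref{FKME} are continuous, and hence bounded, on bounded sets where $F$ and $F+K$ are invertible. These inverses are the main obstacle. I would argue that $F(t;\Lambda_1)\succeq e^{-\alpha(T-1-t)}\gamma I$, since $\Lambda_1\succeq 0$ as the limit of value-function Hessian blocks. Invertibility of the limiting $F+K$ I would take from well-posedness of the limit system \eqref{ODEs:Lambda1234Chi12}, implicit in its use in Algorithm~\ref{alg:sync_subroutine}. Continuity of matrix inversion then gives a uniform bound for all $N\ge N_0$. Each of the finitely many $N<N_0$ contributes a finite bound, because the finite-$N$ Nash system of Theorem~\ref{thm:NagentNash} is assumed solvable. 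The targets $\{y_t\}_{t\le T}$ are finitely many fixed vectors, so they are bounded. Denote the resulting uniform bounds by $c_1$, $c_2$, $c_H$. For the decentralized policy I use the limiting coefficients $G_1(t,\Lambda_1)$, $G_2(t,\Lambda_1,\Lambda_2)$ and $H(t,\Lambda_1,\chi_1)$, which do not depend on $N$ and are bounded by the same constants. The mean-field path $\overline Y_t$ from \eqref{barY} is a deterministic finite recursion, so $\max_{t\le T}\|\overline Y_t\|\eqdef c_{\overline Y}<\infty$.

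\textbf{Step 2 (centralized recursion).} Under \eqref{betan_centralized}, the dynamics \eqref{hatYit+1finite} read
\[
\widehat Y^n_{t+1}=\theta\widehat Y^n_t+\overline\theta\,\widehat Y^{(N)}_t+Z^n_t\Big(G_1^N\widehat Y^n_t+G_2^N\sum_{j\neq n}\widehat Y^j_t+H^N\Big).
\]
I bound the terms as follows:
\begin{itemize}
\item $\|\widehat Y^{(N)}_t\|\le m_t$;
\item $\big\|G_2^N\sum_{j\neq n}\widehat Y^j_t\big\|\le \frac{N-1}{N}\|NG_2^N\|\,m_t\le c_2 m_t$;
\item $\|Z^n_t\|\le K_Z$ almost surely, by hypothesis (ii).
\end{itemize}
Together these give, almost surely,
\[
m_{t+1}\le\big(\|\theta\|+\|\overline\theta\|+K_Z(c_1+c_2)\big)m_t+K_Zc_H .
\]
The essential point is that the $1/N$ scaling of $G_2^N$ exactly compensates the $N-1$ summands, so the growth factor is independent of $N$.

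\textbf{Step 3 (decentralized recursion and conclusion).} Under \eqref{betan_decentralized}, the same computation gives, almost surely,
\[
\check m_{t+1}\le\big(\|\theta\|+\|\overline\theta\|+K_Zc_1\big)\check m_t+K_Z\big(c_2c_{\overline Y}+c_H\big).
\]
Iterating both recursions $T$ times from $m_0=\check m_0=\|y_0\|$ produces an explicit constant $K_Y$, depending only on $T$, $\theta$, $\overline\theta$, $K_Z$, $c_1$, $c_2$, $c_H$, $c_{\overline Y}$ and $\|y_0\|$, and not on $N$, $n$ or $\omega$. This proves the claim.
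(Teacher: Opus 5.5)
Your proposal is correct and takes essentially the paper's route. Like the paper, you substitute the policy into \eqref{hatYit+1finite}, bound term by term using $\|Z^n_t\|\le K_Z$ and the $N$-uniform boundedness of $G_1^N$, $NG_2^N$, $H^N$ from Lemma~\ref{lem:GN12HN=GH+tildeGH}, and induct forward in $t$. Your running maximum $m_t=\max_n\|\widehat Y^n_t\|$ repairs a sloppy step in the paper, whose displayed inequality silently replaces $\|\widehat Y^{(N)}_t\|$ by $\|\widehat Y^n_t\|$, and your explicit argument for $N$-uniform coefficient bounds makes the proof tighter than the paper's.
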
 
\begin{proof}
Substituting~\eqref{betan_centralized} into~\eqref{hatYit+1finite} yields
\begin{align}
\widehat{Y}^n_{t+1} 
=  & \theta \widehat{Y}^n_t + \overline{\theta} \widehat{Y}^{(N)}_t 
+ Z^n_t \big[ G_1^{N}(t) \widehat{Y}^n_t 
+ G_2^{N}(t) \big( N \widehat{Y}^{(N)}_t - \widehat{Y}^n_t \big) + H^N(t) \big].
\notag
\end{align}
Using the triangle inequality and sub-multiplicativity of the Frobenius norm, we obtain
\begin{align}
\|\widehat{Y}^n_{t+1}\|
\le  & \|\theta\|\, \|\widehat{Y}^n_t\| 
+ \|\overline{\theta}\|\, \|\widehat{Y}^{(N)}_t\| \notag \\
& + \|Z^n_t\| \big( \|G_1^{N}(t)\|\, \|\widehat{Y}^n_t\| 
+ \|G_2^{N}(t)\| \big( N \|\widehat{Y}^{(N)}_t\| + \|\widehat{Y}^n_t\| \big)
+ \|H^N(t)\| \big).
\notag
\end{align}
By the uniform boundedness of $Z^n_t$, we have $\|Z^n_t\| \le K_Z$ almost surely. Moreover, since
\[
\|\widehat{Y}^{(N)}_t\| = \Big\| \frac{1}{N} \sum_{n=1}^N \widehat{Y}^n_t \Big\| 
\le \frac{1}{N} \sum_{n=1}^N \|\widehat{Y}^n_t\|,
\]
it follows that $\|\widehat{Y}^{(N)}_t\| \le \max_{n\in[N]} \|\widehat{Y}^n_t\|$. Therefore,
\begin{align}
\|\widehat{Y}^n_{t+1}\|
\le {} & \Big( \|\theta\| + \|\overline{\theta}\| \Big)\|\widehat{Y}^n_t\|
+ K_Z \Big( \|G_1^{N}(t)\| + (N+1)\|G_2^{N}(t)\| \Big)\|\widehat{Y}^n_t\|
+ K_Z \|H^N(t)\|.
\notag
\end{align}
Since $\mathbb{E}\|\widehat{Y}^n_0\| = \|y_0\|$ is uniformly bounded over $n\in[N]$, we can apply the above inequality together with Lemma~\ref{lem:GN12HN=GH+tildeGH} and proceed by induction on $t$ to conclude that $\|\widehat{Y}^n_t\|$ is uniformly bounded for all $n\in[N]$ and $t=0,1,\dots,T$. The boundedness of $\check{Y}^n_t$ follows by analogous arguments.
\end{proof}

\emph{Proof of Theorem~\ref{thm:Y(N)->barY}.}
Since $\{Z^n_t\}_{n=1}^N$ are i.i.d. with finite second moments, we have by the law of large numbers~\cite[Theorem 7.4.3]{GrimmettStirzaker01Probability} that 
\begin{align} 
 \lim_{N\to\infty} \mathbb{E} \big\|  Z^{(N)}_t - M^{(1)}_Z(t)  \big\| 
  = 0 . 
  \label{E|Z(N)-M2|->0}
\end{align} 

By Theorem~\ref{thm:betan_centralized}, the Nash equilibrium strategy can be written as 
\begin{align} 
 \widehat\beta^n_t 
 = & 
 G_1^{N}(t) \widehat{Y}^n_t 
 + G_2^{N}(t) \sum_{j\neq n = 1}^N \widehat{Y}^j_t + H^N(t) , 
 \quad n = 1, \dots, N, 
 \notag \\ 
 =& G_1^{N}(t) \widehat{Y}^n_t 
 + G_2^{N}(t) \big( N \widehat{Y}^{(N)}_t - \widehat{Y}^n_t \big) + H^N(t) . 
 \notag 
 \end{align} 
 When all the $N$ agents take the same strategy, we have 
\begin{align} 
  \widehat{Y}^{(N)}_{t+1} 
 = & (\theta + \bar\theta ) \widehat{Y}^{(N)}_t + \frac{1}{N} \sum_{n=1}^N Z^n_t \widehat\beta^n_t 
 \notag \\ 
  = & (\theta + \bar\theta ) \widehat{Y}^{(N)}_t 
  + \frac{1}{N} \sum_{n=1}^N  Z^n_t \big[ G_1^{N}(t) \widehat{Y}^n_t 
 + G_2^{N}(t) \big( N \widehat{Y}^{(N)}_t - \widehat{Y}^n_t \big) + H^N(t)  \big]  . 
 \label{hatYN-ClosedLoop}
\end{align} 
Note that $\widehat{Y}^{(N)}_0=\overline{Y}_0=y_0$ by~\eqref{barY}. 
Suppose by induction that $\lim_{N\to\infty} \mathbb{E}\big\| \widehat{Y}^{(N)}_t - \overline{Y}_t \big\| = 0$. 
Taking difference between~\eqref{hatYN-ClosedLoop} and~\eqref{barY} and 
applying the triangle inequality of the Frobenius norm yield   
\begin{align} 
 \mathbb{E} \big\| \widehat{Y}^{(N)}_{t+1} - \overline{Y}_{t+1} \big\| 
  \leq & \big\| \theta + \bar\theta \big\| \mathbb{E} \big\| \widehat{Y}^{(N)}_t - \overline{Y}_t \big\| 
  + \mathbb{E} \Big\| (1/N)\sum_{n=1}^N Z^n_t G_1^N(t) \widehat{Y}^n_t - M^{(1)}_Z G_1(t; \Lambda_1) \overline{Y}_t \Big\| 
  \notag \\ 
  &  + \mathbb{E} \Big\| (1/N) \sum_{n=1}^N Z^n_t G_2^N(t) ( N \widehat{Y}^{(N)}_t - \widehat{Y}^n_t ) - M^{(1)}_Z(t) G_2(t; \Lambda_1, \Lambda_2 ) \overline{Y}_t \Big\| 
  \notag \\ 
  & + \mathbb{E} \Big\|(1/N)\sum_{n=1}^N Z^n_t H^N(t) - M^{(1)}_Z(t) H(t; \Lambda_1, \chi_1 ) \Big\| . 
  \notag 
\end{align} 
By the triangle inequality and Lemma~\ref{lem:|hatY|bdd|checkY|bdd}, we have 
\begin{align} 
 & \mathbb{E} \Big\| (1/N)\sum_{n=1}^N Z^n_t G_1^N(t) \hat{Y}^n_t - M^{(1)}_Z G_1(t; \Lambda_1) \overline{Y}_t \Big\|  
 \notag \\ 
 \leq & 
 \mathbb{E} \Big\| (1/N)\sum_{n=1}^N ( Z^n_t - M^{(1)}_Z(t) ) G_1^N(t) \widehat{Y}^n_t  \Big\| 
  + 
   \big\|  M^{(1)}_Z(t) \big\| \big\|  G_1^N(t) - G_1(t; \Lambda_1) \big\| \mathbb{E} \big\| \widehat{Y}^{(N)}_t  \big\|  
  \notag \\ 
  &  +  \big\|  M^{(1)}_Z(t) \big\| \big\| G_1(t; \Lambda_1 ) \big\|
  \mathbb{E} \big\|  \widehat{Y}^{(N)}_t - \overline{Y}_t  \big\|  
 \notag \\ 
 %%%%%% 
 \leq & 
  \mathbb{E} \big\|  Z^{(N)}_t - M^{(1)}_Z(t)  \big\|    
  \big\| G_1^N(t) \big\| K_Y 
  + 
   \big\|  M^{(1)}_Z(t) \big\| \big\|  G_1^N(t) - G_1(t; \Lambda_1) \big\|  K_Y  
  \notag \\ 
  &  +  \big\|  M^{(1)}_Z(t) \big\| \big\| G_1(t, \Lambda_1 ) \big\|
  \mathbb{E} \big\|  \widehat{Y}^{(N)}_t - \overline{Y}_t  \big\| 
  \to 0 \quad \text{ as $N\to\infty$} , 
 \notag 
\end{align} 
where the convergence to $0$ is justified by~\eqref{E|Z(N)-M2|->0}, Lemma~\ref{lem:GN12HN=GH+tildeGH}, and the induction hypothesis $\lim_{N\to\infty} \mathbb{E} \big\|  \widehat{Y}^{(N)}_t - \overline{Y}_t  \big\|=0$. 

Similarly, we have 
\begin{align} 
 & \mathbb{E} \Big\| (1/N) \sum_{n=1}^N Z^n_t G_2^N(t) ( N \widehat{Y}^{(N)}_t - \widehat{Y}^n_t ) - M^{(1)}_Z(t) G_2(t; \Lambda_1, \Lambda_2 ) \overline{Y}_t \Big\| 
 \notag \\ 
 \leq & 
 \mathbb{E} \Big\| (1/N) \sum_{n=1}^N Z^n_t G_2^N(t)  N \widehat{Y}^{(N)}_t  - M^{(1)}_Z(t) G_2(t; \Lambda_1, \Lambda_2 ) \overline{Y}_t \Big\| 
 + 
 \mathbb{E} \Big\| (1/N) \sum_{n=1}^N Z^n_t G_2^N(t)  \widehat{Y}^n_t   \Big\| 
 \notag \\ 
 \leq & 
 \mathbb{E} \big( \big\|  Z^{(N)}_t - M^{(1)}_Z(t) \big\|   
  \big\| N G_2^N(t) \big\| \big\| \widehat{Y}^{(N)}_t \big\| 
 \big)
  + 
   \big\| M^{(1)}_Z(t) \big\| \big\| N G_2^N(t) - G_2(t; \Lambda_1, \Lambda_2 ) \big\| 
    \mathbb{E} \big\| \widehat{Y}^{(N)}_t \big\| 
   \notag \\ 
   & +  \big\| M^{(1)}_Z(t) \big\| \big\| G_2(t; \Lambda_1, \Lambda_2 ) \big\| 
    \mathbb{E} \big\| \widehat{Y}^{(N)}_t - \overline{Y}_t \big\|
    + ( M^{(2)}_Z(t) )^{1/2} \big\| G_2^N(t) \big\| \mathbb{E} \big\| \widehat{Y} \big\|^2 K_Y 
    \\ 
    \leq & 
 \mathbb{E} \big( \big\|  Z^{(N)}_t - M^{(1)}_Z(t) \big\|  \big) 
 \big\| N G_2^N(t) \big\| K_Y
  + 
   \big\| M^{(1)}_Z(t) \big\| \big\| N G_2^N(t) - G_2(t; \Lambda_1, \Lambda_2 ) \big\| 
    K_Y 
   \notag \\ 
   & +  \big\| M^{(1)}_Z(t) \big\| \big\| G_2(t; \Lambda_1, \Lambda_2 ) \big\| 
    \mathbb{E} \big\| \widehat{Y}^{(N)}_t - \overline{Y}_t \big\| 
    \notag \\ 
 &   + ( M^{(2)}_Z(t) )^{1/2} \big\| G_2^N(t) \big\| \mathbb{E} \big\| \widehat{Y} \big\|^2 K_Y
 \to 0 , \text{ as $N\to\infty$} , 
\notag 
\end{align} 
and
\begin{align} 
& \mathbb{E} \Big\|(1/N)\sum_{n=1}^N Z^n_t H^N(t) - M^{(1)}_Z(t) H(t; \Lambda_1, \chi_1 ) \Big\|
\notag \\ 
\leq & 
\mathbb{E} \big\| Z^{(N)}_t - M^{(1)}_Z(t) \big\| \big\| H^N(t) \big\|   
 + \big\| M^{(1)}_Z(t) \big\| \big\| H^N(t) - H(t; \Lambda_1, \chi_1 ) \big\| 
 \to 0, \text{ as $N\to\infty$}. 
\notag 
\end{align} 
Summing up the above estimates, we have 
$\lim_{N\to\infty}\mathbb{E} \big\| \widehat{Y}^{(N)}_{t+1} - \overline{Y}_{t+1} \big\|=0$. 
Then we conclude by induction that $\lim_{N\to\infty} \mathbb{E}\big\| \widehat{Y}^{(N)}_{t} - \overline{Y}_{t} \big\| = 0$ for all $t$. 

The proof of $\lim_{N\to\infty} \mathbb{E}\big\| \check{Y}^{(N)}_{t} - \overline{Y}_{t} \big\| = 0$ is carried out in a similar way. And $\lim_{N\to\infty} \mathbb{E}\big\| \widehat{Y}^{(N)}_{t} - \check{Y}_{t} \big\| = 0$ follows by the triangle inequality of the Frobenius norm. 
\hfill$\blacksquare$

\bigskip 
The functions $\{\varphi_i(\cdot)\}_{i=1}^4$ and $\{\psi_i(\cdot)\}_{i=1}^2$ in Algorithm~\ref{alg:sync_subroutine} are defined as follows. 
\allowdisplaybreaks
\begin{align} 
 \varphi_1(t; \Lambda_1^N) 
 \eqdef & G_1^\top(t, \Lambda_1^N ) Q_1(t, \Lambda_1^N ) G_1(t, \Lambda_1^N ) 
 \label{varphi1} \\ 
 & + e^{-\alpha(T-1-t)} (\kappa + \overline\kappa) G_1^\top (t, \Lambda_1^N ) ( M^{(1)}_Z(t) )^\top  \theta 
  + G_1^\top(t, \Lambda_1^N ) ( M^{(1)}_Z(t) )^\top \Lambda_1^N(t+1) 
  \theta 
  \notag \\ 
  & + \big[ e^{-\alpha(T-1-t)} (\kappa + \overline\kappa) G_1^\top (t, \Lambda_1^N ) (M^{(1)}_Z(t))^\top  \theta 
  + G_1^\top(t, \Lambda_1^N ) ( M^{(1)}_Z(t) )^\top \Lambda_1^N(t+1) 
  \theta 
  \big]^\top  
  \notag \\ 
  & + e^{-\alpha(T-1-t)} ( \kappa  + \overline\kappa ) \theta^\top \theta 
  + \theta^\top \Lambda_1^N(t+1) \theta , 
\notag \\ 
%\end{align} 
%%%%%%%%%%%%%%%% 
%\begin{align} 
 \varphi_2(t; \Lambda_1^N, \Lambda_2^N  ) 
 \eqdef & G_1^\top(t, \Lambda_1^N ) Q_2(t, \Lambda_2^N ) G_1(t, \Lambda_1^N ) 
 \label{varphi2} \\ 
 & + G_1^\top(t, \Lambda_1^N ) \big[ Q_1(t, \Lambda_1^N ) + Q_2(t, \Lambda_2^N ) \big] G_2(t, \Lambda_1^N, \Lambda_2^N ) 
 \notag  \\ 
 & + e^{-\alpha(T-1-t)}  G_1^\top(t, \Lambda_1^N ) ( M^{(1)}_Z(t) )^\top ( - \kappa \overline\theta - \overline\kappa \theta ) 
 \notag \\ 
 & + G_1^\top(t, \Lambda_1^N ) ( M^{(1)}_Z(t))^\top \big\{ \Lambda_1^N(t+1) \overline\theta + \Lambda_2^N(t+1) (\theta + \overline\theta ) \big\}
 \notag \\ 
 & + \big\{ e^{-\alpha(T-1-t)} \big[ \kappa G_2^\top(t, \Lambda_1^N, \Lambda_2^N ) 
  - \overline\kappa G_1^\top(t, \Lambda_1^N ) \big] ( M^{(1)}_Z(t) )^\top \theta  \big\}^\top  
 \notag \\ 
 & + \big\{ G_2^\top(t, \Lambda_1^N, \Lambda_2^N ) ( M^{(1)}_Z(t) )^\top \Lambda_1^N(t+1) \theta \big\}^\top 
 \notag \\ 
 & + \big\{ \big[ G_1^\top(t, \Lambda_1^N ) + G_2^\top(t, \Lambda_1^N, \Lambda_2^N ) \big] ( M^{(1)}_Z(t) )^\top  \Lambda_2^{N\top}(t+1) \theta   \big\}^\top 
 \notag \\ 
 & + e^{-\alpha(T-1-t)} \big[ \kappa \theta^\top \overline\theta - \overline\kappa \theta^\top \theta  \big] 
 \notag \\ 
 & + \theta^\top \Lambda_2^N(t+1) \theta + \theta^\top \big[ \Lambda_1^N(t+1) + \Lambda_2^N(t+1) \big] \overline\theta , 
  \notag 
 \end{align} 
 %%%%%%%%%%%%%
\vspace{-0.5cm}
 \begin{align} 
  \varphi_3( t; \Lambda_1^N, & \Lambda_2^N, \Lambda_3^N , 
  \Lambda_4^N )  
 \label{varphi3} 
 \eqdef  G_1^\top(t, \Lambda_1^N ) 
  Q_3( t, \Lambda_3^N ) G_1(t, \Lambda_1^N ) 
  \\
& + G_1^\top(t, \Lambda_1^N ) \big[ Q_2^\top(t, \Lambda_2^N ) + Q_4(t, \Lambda_4^N ) \big] 
 G_2(t, \Lambda_1^N, \Lambda_2^N ) 
 \notag  \\ 
 & + G_2^\top(t, \Lambda_1^N, \Lambda_2^N ) \big[ Q_2(t, \Lambda_2^N ) + Q_4(t, \Lambda_4^N )  \big] 
  G_1(t, \Lambda_1^N ) 
 \notag \\ 
 & + G_2^\top(t, \Lambda_1^N, \Lambda_2^N ) \big[ Q_1(t, \Lambda_1^N ) + Q_2(t, \Lambda_2^N ) 
  + Q_2^\top(t, \Lambda_2^N ) + Q_4(t, \Lambda_4^N ) \big] G_2(t, \Lambda_1^N, \Lambda_2^N ) 
 \notag \\ 
 & + e^{-\alpha(T-1-t)} \kappa G_2^\top(t, \Lambda_1^N, \Lambda_2^N ) ( M^{(1)}_Z(t) )^\top \overline\theta (-1) 
 \notag \\ 
 & + e^{-\alpha(T-1-t)} \overline\kappa  G_1^\top(t, \Lambda_1^N ) ( M^{(1)}_Z(t) )^\top \theta  
 \notag \\ 
 & + G_2^\top(t, \Lambda_1^N, \Lambda_2^N ) ( M^{(1)}_Z(t) )^\top \big[ \Lambda_1^N(t+1) \overline\theta + \Lambda_2^N(t+1) (\theta + \overline\theta ) 
 +  \Lambda_2^{N\top}(t+1)\overline\theta + \Lambda_4^N(t+1) (\theta + \overline\theta ) 
 \big] 
 \notag \\ 
 & + G_1^\top(t, \Lambda_1^N ) ( M^{(1)}_Z(t) )^\top \big[ \Lambda_2^{N\top}(t+1) \overline\theta + \Lambda_3^N(t+1)\theta 
  + \Lambda_4^N(t+1) \overline\theta \,  \big] 
 \notag \\ 
 & + \big\{ e^{-\alpha(T-1-t)} \kappa G_2^\top(t, \Lambda_1^N, \Lambda_2^N ) ( M^{(1)}_Z(t) )^\top \overline\theta  (-1) \big\}^\top  
  + \big\{ e^{-\alpha(T-1-t)} \overline\kappa G_1^\top(t, \Lambda_1^N ) (M^{(1)}_Z(t))^\top \theta  \big\}^\top 
 \notag \\ 
 & + \Big\{  G_2^\top(t, \Lambda_1^N, \Lambda_2^N ) ( M^{(1)}_Z(t) )^\top \big[ \Lambda_1^N(t+1) \overline\theta + \Lambda_2^N(t+1) (\theta + \overline\theta ) 
 +  \Lambda_2^{N\top}(t+1)\overline\theta + \Lambda_4^N(t+1) (\theta + \overline\theta ) 
 \big]  \Big\}^\top 
 \notag \\ 
 & + \big\{ G_1^\top(t, \Lambda_1^N ) ( M^{(1)}_Z(t) )^\top \big[ \Lambda_2^{N\top}(t+1)\overline\theta 
  + \Lambda_3^N(t+1) \theta + \Lambda_4^N(t+1) \overline\theta \, \big] \big\}^\top 
  \notag \\ 
 & + e^{-\alpha(T-1-t)} \big[ \kappa \overline\theta^\top \overline\theta  
  + \overline\kappa \theta^\top \theta  \big] 
  \notag \\ 
& + \theta^\top \Lambda_3^N(t+1) \theta + \theta^\top \big[ \Lambda_2^{N\top}(t+1) + \Lambda_4^N(t+1) \big] \overline\theta  
\notag \\ 
& + \overline\theta^\top \big[ \Lambda_2^N(t+1) + \Lambda_4^N(t+1) \big] \theta 
\notag \\ 
& +  \overline\theta^\top \big[ \Lambda_1^N(t+1) + \Lambda_2^{N\top}(t+1) + \Lambda_2^N(t+1)   
+ \Lambda_4^N(t+1) 
\big] \overline{\theta} , 
\notag 
\end{align} 
\vspace{-0.5cm}
\allowdisplaybreaks
\begin{align} 
 \varphi_4( t; \Lambda_1^N, \Lambda_2^N, & \Lambda_3^N, \Lambda_4^N )     
 \eqdef   G_1^{\top}(t, \Lambda_1^N) Q_4(t, \Lambda_4^N) G_1(t, \Lambda_1^N) 
\label{varphi4}  \\ 
 & +  G_1^{\top}(t, \Lambda_1^N ) \big[ Q_2^{\top}(t, \Lambda_2^N )  +  Q_4(t, \Lambda_4^N ) \big] 
  G_2(t, \Lambda_1^N, \Lambda_2^N ) 
 \notag \\ 
& +  G_2^{\top}(t, \Lambda_1^N, \Lambda_2^N  ) \big[ Q_2(t, \Lambda_2^N )   
 + Q_4(t, \Lambda_4^N ) \big] G_1(t, \Lambda_1^N )
\notag \\ 
& +  G_2^{\top}(t,\Lambda_1^N, \Lambda_2^N ) \big\{ Q_1(t, \Lambda_1^N ) +  Q_2(t, \Lambda_2^N ) 
+ Q_2^{\top}(t, \Lambda_2^N )   
   + Q_4(t, \Lambda_4^N ) 
\big\} G_2(t, \Lambda_1^N, \Lambda_2^N ) 
\notag \\ 
%%%%%%% 
&  +  e^{-\alpha(T-1-t)} \kappa G_2^{\top}(t, \Lambda_1^N, \Lambda_2^N ) ( M^{(1)}_Z(t) )^\top \overline\theta  (-1)
\notag \\ 
& +  e^{-\alpha(T-1-t)} \overline\kappa  G_1^{\top}(t, \Lambda_1^N )   
( M^{(1)}_Z(t) )^\top \theta 
\notag \\ 
%%%
& +  G_2^{\top}(t,\Lambda_1^N, \Lambda_2^N ) ( M^{(1)}_Z(t) )^\top  \big[ \Lambda_1^{N}(t+1) \overline\theta  
 + \Lambda_2^N(t+1) \big( \theta + \overline\theta  \, \big) 
\big]
\notag \\ 
%%% 
& + \big[ G_1^{\top}(t, \Lambda_1^N ) +  G_2^{\top}(t, \Lambda_1^N, \Lambda_2^N ) \big] ( M^{(1)}_Z(t) )^\top 
\big\{ \Lambda_2^{N\top}(t+1) \overline\theta  
 + \Lambda_4^{N}(t+1) \big( \theta + \overline\theta \, \big)  \big\}
\notag \\ 
%%%%%%% 
&  +  \big\{ e^{-\alpha(T-1-t)} \kappa G_2^{\top}(t, \Lambda_1^N, \Lambda_2^N  ) ( M^{(1)}_Z(t) )^\top \overline\theta  (-1) 
\big\}^\top 
\notag \\ 
& +  \big\{ e^{-\alpha(T-1-t)} \overline\kappa  G_1^{\top}(t, \Lambda_1^N ) ( M^{(1)}_Z(t) )^\top \theta  
   \big\}^\top 
\notag \\ 
& 
+  \Big\{ G_2^{\top}(t, \Lambda_1^N, \Lambda_2^N ) ( M^{(1)}_Z(t) )^\top \big[ \Lambda_1^N(t+1) \overline\theta 
+ \Lambda_2^N(t+1) \big( \theta + \overline\theta  \big) 
\big]  
\Big\}^\top 
\notag \\ 
& 
+  \Big\{ \big[ G_1^{\top}(t, \Lambda_1^N) +  G_2^{\top}(t, \Lambda_1^N, \Lambda_2^N ) \big] ( M^{(1)}_Z(t) )^\top 
\big[ \Lambda_2^{N\top}(t+1) \overline\theta  
 + \Lambda_4^N(t+1)\big( \theta + \overline\theta \big) \big] 
 \Big\}^\top 
\notag \\
%%%%%%% 
& 
 + e^{-\alpha(T-1-t)} 
 \big[ \kappa \overline\theta^\top \overline\theta  + \overline\kappa \theta^\top \theta  \big]
\notag \\ 
%%%%%%%%
& +  \theta^\top  \Lambda_4^N(t+1) \theta  
+  \theta^\top \big[ \Lambda_2^{N\top}(t+1)  +  \Lambda_4^N(t+1) \big] 
 \overline\theta  
\notag \\ 
& +  \overline\theta^\top \big[ \Lambda_2^N(t+1)  +  \Lambda_4^{N}(t+1) \big] 
\theta  
\notag \\ 
& + \overline\theta^\top \big[ \Lambda_1^N(t+1) +  \Lambda_2^{N}(t+1) + \Lambda_2^{N\top}(t+1)   
   + \Lambda_4^N(t+1) 
\big] \overline\theta , 
\notag  
\end{align} 
\allowdisplaybreaks
\begin{align} 
 \psi_1( t, \Lambda_1^N, \Lambda_2^N, \chi_1^N  ) 
 \eqdef &  G_1^\top(t, \Lambda_1^N ) \big[ Q_1(t, \Lambda_1^N ) + Q_2(t, \Lambda_2^N ) \big] 
  H(t, \Lambda_1^N, \chi_1^N) 
  \label{psi1} \\ 
 & - e^{-\alpha(T-1-t)} \kappa G_1^\top(t, \Lambda_1^N ) ( M^{(1)}_Z(t) )^\top y_{t+1} 
  + G_1^\top(t, \Lambda_1^N ) ( M^{(1)}_Z(t) )^\top \chi_1^N(t+1) 
  \notag \\ 
  & + e^{-\alpha(T-1-t)} \kappa \theta^\top M^{(1)}_Z(t) H(t, \Lambda_1^N, \chi_1^N ) 
  \notag \\ 
  & + \theta^\top \big[ \Lambda_1^N(t+1) + \Lambda_2^N(t+1) \big] M^{(1)}_Z(t) H(t, \Lambda_1^N, \chi_1^N) 
  \notag \\ 
  &  - e^{-\alpha(T-1-t)} \kappa \theta^\top y_{t+1} 
  + \theta^\top \chi_1^N(t+1) , 
 \notag 
\end{align} 
%%%%%%%%%%%%%%%%%% 
\allowdisplaybreaks 
\begin{align} 
\label{psi2}  
  \psi_2( t, \Lambda_1^N, \Lambda_2^N, \Lambda_4^N, \chi_1^N, \chi_2^N )  
 \eqdef &  \big[ G_1^\top(t, \Lambda_1^N ) + G_2^\top(t, \Lambda_1^N, \Lambda_2^N ) \big] 
  \big[ Q_2^\top(t, \Lambda_2^N ) + Q_4^\top(t, \Lambda_4^N ) \big] H(t, \Lambda_1^N, \chi_1^N ) 
   \\ 
 & + G_2^\top(t, \Lambda_1^N, \Lambda_2^N ) \big[ Q_1(t, \Lambda_1^N ) + Q_2(t, \Lambda_2^N ) \big] 
  H(t, \Lambda_1^N, \chi_1^N ) 
 \notag  \\ 
 & - e^{-\alpha(T-1-t)} \kappa G_2^\top(t, \Lambda_1^N, \Lambda_2^N) ( M^{(1)}_Z(t) )^\top y_{t+1} 
 \notag \\ 
 & + G_1^\top(t, \Lambda_1^N ) ( M^{(1)}_Z(t) )^\top \chi_2^N(t+1) 
 \notag \\ 
 & + G_2^\top(t, \Lambda_1^N, \Lambda_2^N ) ( M^{(1)}_Z(t) )^\top \big[ \chi_1^N(t+1) + \chi_2^N(t+1) \big] 
  \notag \\ 
  & + e^{-\alpha(T-1-t)} \kappa \overline\theta^\top M^{(1)}_Z(t) H(t, \Lambda_1^N, \chi_1^N ) 
  \notag \\ 
  & + \overline\theta^\top \big[ \Lambda_1^N(t+1) + \Lambda_2^N(t+1) \big] 
  M^{(1)}_Z(t) H(t, \Lambda_1^N, \chi_1^N) 
  \notag \\ 
  & + (\theta + \overline\theta )^\top \big[ \Lambda_2^{N\top}(t+1) + \Lambda_4^N(t+1) \big] 
  M^{(1)}_Z(t) H(t, \Lambda_1^N, \chi_1^N) 
  \notag \\ 
  & - e^{-\alpha(T-1-t)} \kappa \overline\theta^\top y_{t+1} 
   + \overline\theta^\top \chi_1^N(t+1) 
    + (\theta + \overline\theta)^\top \chi_2^N(t+1) . 
 \notag 
\end{align}

\section{Ridge-Regression Objective and Solution} \label{ridge_solution}

The objective function in Equation \eqref{ridge_objective} can be re-written as
\begin{align}
& \sum_{s=(t-T)\vee 0}^{t-1} e^{ -\alpha(t-1-s)}\left\Vert \left(y_{s+1}-\theta Y_s^n - \bar{\theta}Y_s^{(N)} \right) - Z_s^n \beta_t^n \right \Vert^2+\gamma\big \Vert\beta_t^n \big \Vert ^2 \nonumber \\
&\qquad = \sum_{s=(t-T)\vee 0}^{t-1} \left \Vert e^{-\alpha(t-1-s)/2} \left(y_{s+1} - \theta Y_s^n - \bar{\theta} Y_s^{(N)} \right)- \left(e^{-\alpha(t-1-s)/2} Z_s^n \right) \beta^n_t \right \Vert^2 + \gamma \Vert\beta_t^n \Vert^2 \nonumber  \\ 
&\qquad = \big(\bar{y}^n_t - X^n_{t-1} \beta^n_t \big)^\top \big(\bar{y}^n_t - X^n_{t-1} \beta^n_t \big) 
+ \gamma \beta^{n\top}_t \beta^n_t, \label{ridge_formulation}
\end{align}
where
\begin{align*} 
& X^n_{t-1}  = \big[ e^{ -\alpha (t-1-0)/2} Z^{n\top}_{(t-T)\vee 0} , \ldots, e^{-\alpha (t-1-s)/2} Z^{n \top }_s, \ldots,  Z^{n \top }_{t-1} \big]^\top   \\ 
& \bar{y}^n_t  = \left[ e^{-\alpha (t-1-0)/2} (y_{((t-T)\vee 0)+ 1} - \theta Y^n_{(t-T)\vee 0} 
- \bar{\theta} Y^{(N)}_{(t-T)\vee 0} )^\top, \ldots, (y_{t} - \theta Y^n_{t-1} - \bar{\theta} Y^{(N)}_{t-1} )^\top \right]^\top.  
\end{align*} 
Equation \eqref{ridge_formulation} is a standard ridge-regression objective with closed-form solution \cite{HastieTibshiraniFriedman_2001ESLBook} given by 
\begin{align*}
    \beta_t^n = \left (X^{n\top}_{t-1}X_{t-1}^n + \gamma I_{d_z}\right)^{-1} X_{t-1}^{n\top} \bar{y}_t^n.
\end{align*}

\section{Metrics and Dataset Details} \label{datasets_metrics}

The primary performance metric used in this work is root mean squared error (RMSE). This is computed using the \texttt{sklearn.metrics} library. When computing the worst performing individual agents, the highest RMSE produced by the individual predictions of an agent over the entire timeseries is computed for all three runs separately, using the best hyperparameters for each dataset. These values are then averaged to produce the statistics of Table \ref{worst_agents_metric}.

\subsection{Periodic Signal}

The Periodic Signal series is, as the name suggests a highly oscillatory timeseries. It has $200$ data points rapidly oscillating between values of -0.9 and 0.9. The input, $\mathbf{x}_{t-1}$, is a single target lag such that $\mathbf{x}_{t-1} = \mathbf{y}_{t-1}$, which is used to predict $\mathbf{y}_{t}$.

\subsection{Logistic Map}

The Logistic Map dataset is a synthetic timeseries provided by the EchoTorch library \citep{echotorch}. It is based on the dynamics of the logistic map. The default parameters for the EchoTorch dataset are applied, namely $\alpha=5$, $\beta=11$, $\gamma=13$, $c=3.6$, and $b=0.13$, to generate a series with $200$ points. The target variable has a single dimension. As with the Periodic dataset, for time step $t$ and target $\mathbf{y}_t$, the input is $\mathbf{x}_{t-1} = \mathbf{y}_{t-1}$.

\subsection{Concept Drift}

The Concept Drift dataset also consists of $200$ points, drawn from the following analytical relationship. For $k \in [0, 2\pi]$, let $\mathbf{x} = [k, k, \sqrt{k}]^\top$. When $k \in \left[0, \frac{7}{8}\pi \right]$, 
\begin{align*}
    y_{1, 1}(\mathbf{x}) &= \mathbf{x}_1^2 + \sin \mathbf{x}_2 + \mathbf{x}_1  \mathbf{x}_3 + 0.5 \cos(10 \mathbf{x}_1), \\
    y_{2, 1}(\mathbf{x}) &= \mathbf{x}_1  \cos \mathbf{x}_2 + \mathbf{x}_3 - e^{-\mathbf{x}_2}.
\end{align*}
For $k \in \left[\frac{9}{8} \pi, 2\pi\right]$,
\begin{align*}
    y_{1, 2}(\mathbf{x}) &= \mathbf{x}_1 + \mathbf{x}_2 - \sin \mathbf{x}_3, \\
    y_{2, 2}(\mathbf{x}) &= \cos \mathbf{x}_1  \sin \mathbf{x}_2 + \mathbf{x}_3^2 + 0.25 \cos(10 \mathbf{x}_1).
\end{align*}
When $k \in \left[\frac{7}{8} \pi, \frac{9}{8} \pi \right]$, there is a rapid, but smooth transition between the mappings of $\mathbf{x}$ to $\mathbf{y}$. Let
\begin{align*}
    \alpha(x) = \begin{cases}
      1.0 & x < \frac{7}{8} \pi, \\
      \cos\left(2\left(x - \frac{7}{8}\pi\right)\right) & \frac{7}{8} \pi \geq x \leq \frac{9}{8} \pi, \\
      0.0 & x > \frac{9}{8} \pi. \\
    \end{cases}
\end{align*}
Then components of $\mathbf{y}$ are written $y_i(\mathbf{x}) = \alpha(\mathbf{x}_1) \cdot y_{i, 1}(\mathbf{x}) + (1-\alpha(\mathbf{x}_1)) \cdot y_{i, 2}(\mathbf{x})$. The points comprising the series are sampled uniformly from the interval $[0, 2\pi]$.

\subsection{Bank of Canada Exchange Rate}

The Bank of Canada (BoC) Exchange Rate dataset \citep{boc_exchange_rates} incorporates daily exchange rates from May 1, 2007 to April 28, 2017 relative to the Canadian dollar. In total, there are 
$3,651$ observations per currency. In the numerical experiments, USD exchange rates are used as the target variable. Inputs used to predict $\mathbf{y}_t$, $\mathbf{x}_{t-1}$, are $\{\text{USD}_{t-1}$, $\text{USD}_{t-2}$, $\text{AUD}_{t-1}$, $\text{AUD}_{t-2}$, $\text{EUR}_{t-1}$, $\text{EUR}_{t-2}$, $\text{GBP}_{t-1}$, $\text{GBP}_{t-2}$, $\text{JPY}_{t-1}$, $\text{JPY}_{t-2}\}$. Here, for example, $\text{EUR}_{t-1}$ is the exchange rate for the Euro from the previous timestep. The BoC dataset considers the first $2000$ daily rates and the validation set considers the final $2000$.

\subsection{Electricity Transformer Temperature Dataset} 

First introduced in \citep{haoyietal-informer-2021}, the Electricity Transformer Temperature (ETT) aims to predict the oil temperature (OT) at a given time within one of two electricity transformers. In the experiments, the ETT-small-h1 series is used, which records temperature measurements every hour from July 1, 2016 to June 26, 2018. In the numerical experiments, all six input features (HUFL, HULL, MUFL, MULL, LUFL, LULL) are used to make predictions. When predicting $\text{OT}_t$, inputs include $\text{OT}_{t-1}$, $\text{OT}_{t-2}$, along with $\text{X}_{t-1}$, $\text{X}_{t-2}$, $\text{X}_{t-3}$, where $\text{X}$ represents an aforementioned input feature. All values in the dataset have been normalized by dividing the input and target sequences by the maximum value seen across the time series of interest. ETT Data results correspond to the first $2000$ data points of the series, while ETT Data Validation covers the next $2000$ points.

\section{Additional Experiment Details and Hyperparameter Sweeps} \label{details_hyperparameters}

As noted in Section \ref{sec:numerics}, each experiment is run three times, and the average RMSE over the three runs is reported in the results. To ensure reproducibility, random seeds of $\{2024, 2025, 2026\}$ are used. For the Logistic Map dataset and ESN models, better performance is noted for the decentralized strategy when using $\tanh$ activation functions instead of Hard Sigmoids. As such, the results reported in the tables use a $\tanh$ function. On the other hand, this activation did not improve results for the greedy algorithm. So the Hard Sigmoid function is applied in those settings, as for other datasets and models.

\begin{table}[ht!]
\caption{Hyperparameters for the decentralized and greedy strategy experiments.}
\centering
\begin{tabular}{lcc}
\toprule
Parameter & Decentralized & Greedy \\
\midrule
$\sigma$      & \{0.1, 1.0\} & \{0.1, 1.0\} \\
$\theta$      & \{0.9, 0.7\} & \{0.9, 0.7\} \\
$d_z$         & \{5, 10\} & \{5, 10\} \\
$T_a$         & \{1, 4\} & \{1, 4\} \\
$\alpha_a$    & \{0.2\} & \{0.2\} \\
$T$         & \{1, 4, 8\} & \{1, 3, 5\} \\
$\alpha$    & \{0.01, 0.1\} & \{0.01, 0.1, 0.2\} \\
$\gamma$      & \{0.01, 0.1, 1.0\} & \{0.01, 0.1\} \\
$\kappa$      & \{10.0, 1.0\} & -- \\
$\bar{\kappa}$ & \{0, 0.1, 1.0, 10.0\} & -- \\
\bottomrule
\end{tabular}
\label{hyperparameter_sweeps}
\end{table}

The computations in Algorithm \ref{alg:sync_subroutine} require approximation of expectations associated with the latent space of the models in the ensemble or transformations thereof. For example, estimates of $\mathbb{E} (Z_t^i)$ or $\mathbb{E}(Z_t^{i\top} Z_t^i)$ are needed. In the numerical experiments, each agent samples $100$ random agents from its own distribution and, using its shared timeseries input, produces latent representations for each timestep. These latent space samples are used to compute approximations of the required expectations. This procedure adds computation time to each agents predictions, but is fully decentralized. An alternative would be for each agent to send its latent encodings to the aggregation server. The server could then produce the appropriate estimates from all participating agents' representations and send them back to all participants. This approach effectively offloads the overhead associated with agent expectation simulations, but requires agents to share potentially proprietary encodings.

There are a number of hyperparameters in both the decentralized strategy and greedy agent-side optimization algorithms. For simplicity, the matrices $\theta$ and $\bar{\theta}$ are assumed to be diagonal matrices and $\bar{\theta} = I - \theta$. Hence, in the parameters, $\theta$ is represented by a single scalar and $\bar{\theta}$ is implicitly determined. Similarly, $\sigma$ is a uniform column vector and also specified by a single scalar. In the tables, the lookback length and discount factors for agent prediction aggregation are denoted as $T_a$ and $\alpha_a$, respectively. The parameter values explore for both algorithms are listed in Table \ref{hyperparameter_sweeps}.

\begin{table}[ht!]
\caption{Best hyperparameters for each dataset with the decentralized strategy for RFN models.}
\centering
\begin{tabular}{llcccccccccc}
\toprule
Dataset & Clients  & $\sigma$ & $\theta$ & $d_z$ & $T_a$ & $\alpha_a$ & $T$ & $\alpha$ & $\gamma$ & $\kappa$ & $\bar{\kappa}$ \\
\midrule
\multirow{3}{*}{Periodic} & 25 & $0.1$ & $0.7$ & $5$ & $1$ & $0.2$ & $8$ & $0.01$ & $1.0$ & $1.0$ & $10.0$ \\
& 100 & $0.1$ & $0.7$ & $5$ & $1$ & $0.2$ & $8$ & $0.01$ & $1.0$ & $1.0$ & $10.0$ \\
& 500 & $0.1$ & $0.7$ & $5$ & $1$ & $0.2$ & $8$ & $0.01$ & $1.0$ & $1.0$ & $10.0$ \\
\midrule
\multirow{3}{*}{Logistic} & 25 & $0.1$ & $0.7$ & $5$ & $1$ & $0.2$ & $1$ & $0.01$ & $1.0$ & $1.0$ & $10.0$ \\
& 100 & $0.1$ & $0.7$ & $5$ & $1$ & $0.2$ & $1$ & $0.01$ & $1.0$ & $1.0$ & $10.0$ \\
& 500 & $0.1$ & $0.9$ & $5$ & $1$ & $0.2$ & $1$ & $0.1$ & $1.0$ & $1.0$ & $10.0$ \\
\midrule
\multirow{3}{*}{Concept} & 25 & $1.0$ & $0.7$ & $5$ & $4$ & $0.2$ & $1$ & $0.01$ & $0.1$ & $1.0$ & $0.0$ \\
& 100 & $1.0$ & $0.9$ & $5$ & $4$ & $0.2$ & $1$ & $0.01$ & $0.01$ & $10.0$ & $0.0$ \\
& 500 & $1.0$ & $0.7$ & $10$ & $4$ & $0.2$ & $1$ & $0.01$ & $0.01$ & $10.0$ & $0.0$ \\
\midrule
\multirow{3}{*}{BoC} & 25 & $1.0$ & $0.7$ & $5$ & $4$ & $0.2$ & $1$ & $0.01$ & $0.1$ & $1.0$ & $0.0$ \\
& 100 & $1.0$ & $0.9$ & $5$ & $4$ & $0.2$ & $1$ & $0.01$ & $0.01$ & $10.0$ & $0.0$ \\
& 500 & $1.0$ & $0.7$ & $10$ & $4$ & $0.2$ & $1$ & $0.01$ & $0.01$ & $10.0$ & $0.0$ \\
\midrule
\multirow{3}{*}{ETT} & 25 & $1.0$ & $0.7$ & $10$ & $1$ & $0.2$ & $1$ & $0.1$ & $1.0$ & $1.0$ & $0.1$ \\
& 100 & $1.0$ & $0.7$ & $10$ & $1$ & $0.2$ & $1$ & $0.1$ & $1.0$ & $1.0$ & $0.1$ \\
& 500 & $1.0$ & $0.7$ & $10$ & $1$ & $0.2$ & $1$ & $0.1$ & $1.0$ & $1.0$ & $0.1$ \\
\bottomrule
\end{tabular}
\label{best_hyperparameters_rfns_decent}
\end{table}

The best hyperparameters for each dataset, model, and agent-side strategy are reported in Tables \ref{best_hyperparameters_rfns_decent}--\ref{best_hyperparameters_esns_greedy}. Note that optimal hyperparameters for the BoC Exchange and ETT timeseries are also applied when performing predictions for the validation series drawn from these datasets. For the uniform-mean aggregation experiments presented in Table \ref{uniform_averages}, the same HPs as presented in Table \ref{hyperparameter_sweeps} are searched and the best are used. They may, however, differ from those presented in Tables \ref{best_hyperparameters_rfns_decent}--\ref{best_hyperparameters_esns_greedy}.

\begin{table}[ht!]
\caption{Best hyperparameters for each dataset with the greedy strategy for RFN models.}
\centering
\begin{tabular}{llccccccccc}
\toprule
Dataset & Clients  & $\sigma$ & $\theta$ & $d_z$ & $T_a$ & $\alpha_a$ & $T$ & $\alpha$ & $\gamma$ \\
\midrule
\multirow{3}{*}{Periodic} & 25 & $1.0$ & $0.7$ & $5$ & $1$ & $0.2$ & $1$ & $0.01$ & $0.1$ \\
& 100 & $1.0$ & $0.7$ & $5$ & $1$ & $0.2$ & $1$ & $0.2$ & $0.1$ \\
& 500 & $1.0$ & $0.7$ & $5$ & $1$ & $0.2$ & $1$ & $0.01$ & $0.1$ \\
\midrule
\multirow{3}{*}{Logistic} & 25 & $0.1$ & $0.7$ & $10$ & $1$ & $0.2$ & $3$ & $0.1$ & $0.1$ \\
& 100 & $0.1$ & $0.7$ & $5$ & $1$ & $0.2$ & $3$ & $0.2$ & $0.1$ \\
& 500 & $0.1$ & $0.7$ & $5$ & $1$ & $0.2$ & $5$ & $0.1$ & $0.1$ \\
\midrule
\multirow{3}{*}{Concept} & 25 & $0.1$ & $0.9$ & $10$ & $1$ & $0.2$ & $1$ & $0.01$ & $0.01$ \\
& 100 & $0.1$ & $0.9$ & $10$ & $1$ & $0.2$ & $1$ & $0.01$ & $0.01$ \\
& 500 & $0.1$ & $0.9$ & $10$ & $1$ & $0.2$ & $1$ & $0.01$ & $0.01$ \\
\midrule
\multirow{3}{*}{BoC} & 25 & $0.1$ & $0.9$ & $10$ & $1$ & $0.2$ & $1$ & $0.01$ & $0.01$ \\
& 100 & $1.0$ & $0.9$ & $10$ & $1$ & $0.2$ & $1$ & $0.2$ & $0.01$ \\
& 500 & $0.1$ & $0.7$ & $10$ & $1$ & $0.2$ & $1$ & $0.1$ & $0.01$ \\
\midrule
\multirow{3}{*}{ETT} & 25 & $0.1$ & $0.9$ & $10$ & $1$ & $0.2$ & $1$ & $0.01$ & $0.01$ \\
& 100 & $0.1$ & $0.9$ & $10$ & $1$ & $0.2$ & $1$ & $0.01$ & $0.01$ \\
& 500 & $0.1$ & $0.9$ & $10$ & $1$ & $0.2$ & $1$ & $0.01$ & $0.01$ \\
\bottomrule
\end{tabular}
\label{best_hyperparameters_rfns_greedy}
\end{table}

\begin{table}[ht!]
\caption{Best hyperparameters for each dataset with the decentralized strategies for ESNs models.}
\centering
\begin{tabular}{llcccccccccc}
\toprule
Dataset & Clients  & $\sigma$ & $\theta$ & $d_z$ & $T_a$ & $\alpha_a$ & $T$ & $\alpha$ & $\gamma$ & $\kappa$ & $\bar{\kappa}$ \\
\midrule
\multirow{3}{*}{Periodic} & 25 & $1.0$ & $0.7$ & $5$ & $1$ & $0.2$ & $8$ & $0.01$ & $1.0$ & $1.0$ & $10.0$ \\
& 100 & $1.0$ & $0.7$ & $5$ & $1$ & $0.2$ & $8$ & $0.01$ & $1.0$ & $1.0$ & $10.0$ \\
& 500 & $1.0$ & $0.7$ & $5$ & $1$ & $0.2$ & $8$ & $0.01$ & $1.0$ & $1.0$ & $10.0$ \\
\midrule
\multirow{3}{*}{Logistic}& 25 & $1.0$ & $0.9$ & $10$ & $4$ & $0.2$ & $1$ & $0.1$ & $0.01$ & $10.0$ & $0.0$ \\
& 100 & $0.1$ & $0.9$ & $10$ & $1$ & $0.2$ & $1$ & $0.01$ & $0.01$ & $1.0$ & $0.0$ \\
& 500 & $0.1$ & $0.9$ & $10$ & $1$ & $0.2$ & $1$ & $0.1$ & $0.01$ & $10.0$ & $0.0$ \\
\midrule
\multirow{3}{*}{Concept} & 25 & $0.1$ & $0.9$ & $10$ & $4$ & $0.2$ & $1$ & $0.1$ & $0.01$ & $10.0$ & $0.0$ \\
& 100 & $0.1$ & $0.7$ & $10$ & $4$ & $0.2$ & $1$ & $0.1$ & $0.01$ & $10.0$ & $0.0$ \\
& 500 & $0.1$ & $0.7$ & $10$ & $4$ & $0.2$ & $1$ & $0.1$ & $0.01$ & $10.0$ & $0.0$ \\
\midrule
\multirow{3}{*}{BoC} & 25 & $0.1$ & $0.7$ & $10$ & $1$ & $0.2$ & $1$ & $0.1$ & $1.0$ & $1.0$ & $0.1$ \\
& 100 & $1.0$ & $0.7$ & $5$ & $4$ & $0.2$ & $1$ & $0.1$ & $0.01$ & $1.0$ & $1.0$ \\
& 500 & $0.1$ & $0.9$ & $10$ & $1$ & $0.2$ & $1$ & $0.01$ & $1.0$ & $1.0$ & $0.1$ \\
\midrule
\multirow{3}{*}{ETT} & 25 & $1.0$ & $0.7$ & $5$ & $1$ & $0.2$ & $1$ & $0.1$ & $1.0$ & $1.0$ & $0.1$ \\
& 100 & $1.0$ & $0.7$ & $5$ & $1$ & $0.2$ & $1$ & $0.1$ & $1.0$ & $1.0$ & $0.1$ \\
& 500 & $1.0$ & $0.9$ & $5$ & $1$ & $0.2$ & $8$ & $0.1$ & $1.0$ & $1.0$ & $0.1$ \\
\bottomrule
\end{tabular}
\label{best_hyperparameters_esns_decent}
\end{table}

\begin{table}[ht!]
\caption{Best hyperparameters for each dataset with the greedy strategies for ESNs models.}
\centering
\begin{tabular}{llccccccccc}
\toprule
Dataset & Clients  & $\sigma$ & $\theta$ & $d_z$ & $T_a$ & $\alpha_a$ & $T$ & $\alpha$ & $\gamma$ \\
\midrule
\multirow{3}{*}{Periodic} & 25 & $0.1$ & $0.7$ & $5$ & $1$ & $0.2$ & $5$ & $0.1$ & $0.1$ \\
& 100 & $0.1$ & $0.9$ & $5$ & $1$ & $0.2$ & $1$ & $0.2$ & $0.1$ \\
& 500 & $0.1$ & $0.7$ & $5$ & $1$ & $0.2$ & $5$ & $0.1$ & $0.1$ \\
\midrule
\multirow{3}{*}{Logistic} & 25 & $1.0$ & $0.7$ & $10$ & $1$ & $0.2$ & $3$ & $0.1$ & $0.1$ \\
& 100 & $0.1$ & $0.7$ & $10$ & $1$ & $0.2$ & $1$ & $0.2$ & $0.01$ \\
& 500 & $0.1$ & $0.7$ & $5$ & $1$ & $0.2$ & $5$ & $0.1$ & $0.01$ \\
\midrule
\multirow{3}{*}{Concept} & 25 & $0.1$ & $0.9$ & $10$ & $1$ & $0.2$ & $1$ & $0.01$ & $0.01$ \\
& 100 & $0.1$ & $0.9$ & $10$ & $1$ & $0.2$ & $1$ & $0.01$ & $0.01$ \\
& 500 & $0.1$ & $0.9$ & $10$ & $1$ & $0.2$ & $1$ & $0.01$ & $0.01$ \\
\midrule
\multirow{3}{*}{BoC} & 25 & $0.1$ & $0.9$ & $5$ & $1$ & $0.2$ & $1$ & $0.01$ & $0.01$ \\
& 100 & $0.1$ & $0.7$ & $10$ & $1$ & $0.2$ & $1$ & $0.1$ & $0.01$ \\
& 500 & $0.1$ & $0.7$ & $10$ & $1$ & $0.2$ & $1$ & $0.1$ & $0.01$ \\
\midrule
\multirow{3}{*}{ETT} & 25 & $0.1$ & $0.9$ & $5$ & $1$ & $0.2$ & $1$ & $0.01$ & $0.01$ \\
& 100 & $0.1$ & $0.9$ & $5$ & $1$ & $0.2$ & $1$ & $0.01$ & $0.01$ \\
& 500 & $0.1$ & $0.9$ & $5$ & $1$ & $0.2$ & $1$ & $0.01$ & $0.01$ \\
\bottomrule
\end{tabular}
\label{best_hyperparameters_esns_greedy}
\end{table}

\section{Ablation Results: Mean Previous Predictions} \label{ablation_results}

In this appendix, the importance of incorporating the mean agent predictions, $Y_t^{(N)}$, into the prediction process and objective function defined in Equation \eqref{ridge_objective} is explored. Noting the simplification such that $\bar{\theta} = I - \theta$ from Appendix \ref{details_hyperparameters}, this is simply measured by setting $\theta = I$, or equivalently $1$ in the hyperparameters. The results are displayed in Table \ref{mean_prediction_ablation}. In almost all cases, inclusion of the mean prediction information improves performance. Furthermore, for ESN models, performance for ETT and ETT Validation are nearly identical. Notably, however, performance is markedly improved for the Concept Drift timeseries when this information is excluded. As noted in Section \ref{main_results}, this dataset is somewhat distinctive, as it is the only timeseries where the decentralized strategy under-performs in all settings.

\begin{table}[ht!]
\caption{Average RMSE for aggregated RFN (top) and ESN (bottom) model predictions using the greedy strategy with 100 agents. For $\theta \in \{0.9, 0.7\}$, the best HP is used of the two. When $\theta = 1.0$, this implicitly means $\bar{\theta} = 0$.}
\small
\centering
\begin{tabular}{llllllll}
\toprule
$\theta$ & Periodic & Logistic & Concept & BoC & BoC Val. & ETT & ETT Val. \\
\midrule
$\{0.9, 0.7\}$ & \textbf{9.8567e-1}  & \textbf{2.5047e-1}  & 2.4333e-1  & \textbf{1.0033e-2}  & \textbf{7.0897e-3}  & \textbf{2.6253e-2}  & \textbf{5.1658e-2} \\
$1.0$ & 1.0377e-0 & 2.5606e-1 & \textbf{1.6285e-1} & 1.4610e-2 & 3.6250e-2 & 2.7994e-2 & 5.3749e-2 \\
\midrule
$\{0.9, 0.7\}$ & \textbf{8.4452e-1} & \textbf{2.4312e-1} & 2.0690e-1 & \textbf{7.0577e-3} & \textbf{5.0340e-3} & 2.6357e-2 & 5.1246e-2 \\
$1.0$ & 8.4754e-1 & 2.5152e-1 & \textbf{1.6683e-1} & 9.0426e-2 & 1.5624e-2 & \textbf{2.6347e-2} & \textbf{5.1244e-2} \\
\bottomrule
\end{tabular}
\label{mean_prediction_ablation}
\end{table}

\section{Analysis of Worst-Performing Agents}
\label{worst_performing_agents_study}
To further analyze the effect of the decentralized Nash policy on the worst-performing agents, Table \ref{bottom_twenty_percent_agents} reports the mean RMSE of the bottom $20\%$ of agents in the pool. The results presented in Tables \ref{bottom_twenty_percent_agents} and \ref{worst_agents_metric} demonstrate that the decentralized strategy reduces not only the error of the single worst-performing agent, but also the aggregate error across the broader population of underperforming agents.

\begin{table}[ht!]
\caption{Average RMSE for the bottom $20\%$ of agents using the greedy and decentralized (Nash) strategies for RFN (top) and ESN (bottom) models. Numbers in parentheses denote the total agent pool size, where results are averaged over the 20 worst-performing agents for a pool of 100, and over the 100 worst-performing agents for a pool of 500.}
\small
\centering
\begin{tabular}{llllllll}
\toprule
Strategy & Periodic & Logistic & Concept & BoC & BoC Val. & ETT & ETT Val. \\
\midrule

Greedy (100) & 1.8862e-0  & 4.9248e-1  & 2.5376e+8  & 2.5915e-1  & 6.9215e-2  & 6.3253e-2  & 1.7287e-1 \\
Nash (100) & \textbf{1.0501e-0} & \textbf{2.5360e-1} & \textbf{3.1510e-1} & \textbf{1.3088e-2} & \textbf{9.5603e-3} & \textbf{3.9988e-2} & \textbf{8.1782e-2} \\
\midrule
Greedy (500) & 1.9243e-0  & 5.0704e-1  & 1.2635e-0  & 2.4934e-1  & 1.0243e-1  & 6.6883e-2  &  9.458e-2 \\
Nash (500) & \textbf{1.1581e-0}  & \textbf{4.4232e-1} & \textbf{3.1634e-1} & \textbf{1.1596e-2} & \textbf{8.3653e-3} & \textbf{4.2180-2} & \textbf{8.3913e-2} \\
\midrule
\midrule

Greedy (100) & \textbf{9.0377e-1}  & \textbf{ 3.5093e-1}  & 9.5067e-1  & 4.3773-2  & 2.9776e-2  & 3.0381e-2  & \textbf{5.3778e-2} \\
Nash (100) & 1.1038e-0 & 3.8113e-1 & \textbf{3.0407e-1} & \textbf{8.6683e-3} & \textbf{6.2561e-3} & \textbf{3.0276e-2} & 5.9781e-2 \\
\midrule
Greedy (500) & \textbf{9.1663e-1}  & 3.1862e-1  &  9.1228e-1  & 4.4002e21  & 2.8764e-2  & \textbf{3.0499e-2}  & \textbf{5.4466e-2} \\

Nash (500) & 1.1031e-0  & \textbf{3.1210e-1} & \textbf{3.0755e-1} & \textbf{9.1993e-3} & \textbf{6.6560e-3} & 4.8117e-2 & 6.7406e-2 \\

\bottomrule
\end{tabular}
\label{bottom_twenty_percent_agents}
\end{table}

%\newpage
%\input{checklist.tex} 
% \input{0_Delete_at_End/Old_Results/Interpretations_of_Initialization}

\end{document}